\documentclass{article}

\usepackage{microtype}
\usepackage{graphicx}
\usepackage{subcaption}
\usepackage{booktabs} 
\usepackage{amsmath}
\usepackage{amssymb}
\usepackage{wrapfig}
\usepackage{algorithm}
\usepackage{algpseudocode}
\usepackage{pifont}
\usepackage{multirow}

\usepackage{hyperref}

\usepackage{bm}
\newcommand{\vtheta}{\theta}
\usepackage{amsthm,amsmath,amssymb,mathtools}

\usepackage[accepted]{icml2026}

\usepackage{amsmath}
\usepackage{amssymb}
\usepackage{mathtools}
\usepackage{amsthm}

\usepackage[capitalize,noabbrev]{cleveref}

\theoremstyle{plain}
\newtheorem{theorem}{Theorem}[section]

\newtheorem{corollary}[theorem]{Corollary}
\theoremstyle{definition}
\newtheorem{definition}[theorem]{Definition}

\theoremstyle{remark}

\newcommand{\loss}{\color{black} \mathcal{L}}
\newcommand{\cov}{\mathrm{Cov}}

\usepackage[textsize=tiny]{todonotes}

\icmltitlerunning{MidSteer: Optimal Affine Framework for Steering Generative Models}

\begin{document}

\twocolumn[
  \icmltitle{MidSteer: Optimal Affine Framework for Steering Generative Models}



  \icmlsetsymbol{equal}{*}

  \begin{icmlauthorlist}
    \icmlauthor{Tatiana Gaintseva}{equal,yyy,comp}
    \icmlauthor{Andrew Stepanov}{equal,ql}
    \icmlauthor{Ziquan Liu}{yyy}
    \icmlauthor{Martin Benning}{xxx}
    \icmlauthor{Gregory Slabaugh}{yyy}
    \icmlauthor{Jiankang Deng}{zzz}
    \icmlauthor{Ismail Elezi}{comp}
  \end{icmlauthorlist}

  \icmlaffiliation{yyy}{Queen Mary University of London, London, UK}
  \icmlaffiliation{comp}{Huawei Noah's Ark, London, UK}
  \icmlaffiliation{ql}{QuantumLight Capital, London, UK}
  \icmlaffiliation{xxx}{University College London, London, UK}
  \icmlaffiliation{zzz}{Imperial College London, London, UK}

  \icmlcorrespondingauthor{Tatiana Gaintseva}{t.gaintseva@qmul.ac.uk}


  \vskip 0.3in
]



\printAffiliationsAndNotice{}  

\begin{abstract}
Steering intermediate representations has emerged as a powerful strategy for controlling generative models, particularly in post-deployment alignment and safety settings.
However, despite its empirical success, it currently lacks a comprehensive theoretical framework. 
In this paper, we bridge this gap by formalizing the theory of concept steering.
First, we establish a link between steering and affine concept erasure, proving that the standard approach for removing unwanted behaviors is a special case of LEACE (a closed-form method for affine erasure).
Next, we formulate a principled theoretical framework for concept switching, LEACE-Switch, and characterize the assumptions under which it provides an optimal affine solution. Building on this analysis, we then introduce MidSteer (\textbf{Mi}nimal \textbf{D}isturbance concept \textbf{Steer}ing), a more general affine framework for concept manipulation that relaxes these assumptions and enables directed, minimal-disturbance transformations. We  demonstrate that MidSteer performs favorably across a range of tasks, modalities, and architectures, including vision diffusion models and large language models. 
Code is available at \href{https://github.com/Atmyre/MidSteer}{https://github.com/Atmyre/MidSteer}
\end{abstract}

\section{Introduction}

Generative models such as Large Language Models (LLMs) and vision diffusion models have achieved remarkable progress in recent years \cite{DBLP:journals/csur/YangZSHXZZCY24} \cite{DBLP:journals/corr/abs-2307-06435}. However, controlling model outputs to enforce desirable behaviors or suppress harmful ones remains challenging \cite{DBLP:journals/corr/abs-2502-17601}. Yet, this capability is necessary for improving model safety, reliability, alignment, and usefulness in downstream applications.

Concept steering of intermediate representations is an increasingly popular technique that has already proven simple yet powerful for controlling behavior in LLMs~\cite{panickssery2024steeringllama2contrastive}. Recently it was also shown to be applicable to vision diffusion models~\cite{gaintseva2026casteer}. It changes the intermediate representations of a generative model during generation by adding a "steering vector" that encodes a target concept. This approach has proven effective for tasks such as erasing unwanted behaviors (toxicity, nudity) or amplifying desirable features (helpfulness, truthfulness)~\cite{panickssery2024steeringllama2contrastive,DBLP:conf/icml/SinghRHACK24,gaintseva2026casteer}. However, despite its simplicity, its theoretical foundations remain underdeveloped, with most work being highly empirical~\cite{DBLP:journals/corr/abs-2310-01405,DBLP:journals/tmlr/WehnerATKF25}. Existing methods largely rely on heuristic vector manipulations, which can introduce unintended side effects and lack a solid theoretical basis and guarantees~\cite{raedler2025the,anthropic2024evaluating}. Moreover, naive steering often perturbs unrelated features, undermining the minimal disturbance principle that is critical to maintaining model quality and coherence.

Recently, strong theoretical foundations have been developed for concept erasure. \cite{ravfogel-etal-2023-linear} introduced the notion of log-linear guardedness. Based on it,  \cite{belrose2025leace} developed LEACE, an affine concept erasure framework to remove undesired information from model representations for downstream tasks. However, these methods do not naturally extend to other forms of concept manipulation, such as switching, where the goal is to replace one concept with another rather than just erasing it.

In this work, we address these gaps by developing a unified theoretical framework for affine steering of generative models. 
We begin by proving a formal equivalence between the standard steering methodology and LEACE, demonstrating that the widely used heuristic of steering for concept erasure is a special case of optimal affine concept erasure.
Building on this foundation, we extend the framework from erasure to concept switching. We first consider the setting of bidirectional switching, in which a binary concept partitions the dataset and the goal is to invert the linear dependence of the representation on the concept label. Under these assumptions, we derive LEACE-Switch, an optimal affine transformation that performs a complete and symmetric concept swap while minimally disturbing the representation.

We examine the scope of this formulation and show that its assumptions, dataset partitioning and global label inversion define a practically relevant, though restricted, regime. We address more general settings in which the concepts involved do not jointly span the entire dataset, or where asymmetric, one-directional transformations are desired. We introduce MidSteer (Minimal Disturbance Concept Steering), a generalized framework for affine concept manipulation that enables precise switching while minimizing interference with unrelated properties of the representation.

Through experiments with LLMs and vision diffusion models, we demonstrate that LEACE-Switch and MidSteer achieve more reliable concept switching than vanilla steering, allowing controllable generation with minimal side effects. Our results highlight the value of grounding steering methods in theory and provide practical tools for aligning generative models with desired behaviors.

In summary, our \textbf{contributions} are as follows:
\begin{itemize}
    \item We establish a \textbf{formal theoretical connection} between standard activation steering and affine concept erasure, showing that commonly used steering heuristics for concept erasure are special cases of LEACE.
    
    \item We \textbf{extend the affine erasure framework to concept switching} and introduce \textbf{LEACE-Switch}, an optimal affine formulation for concept swapping under the assumption that a binary concept partitions the dataset.
    
    
    \item We then relax the dataset partitioning and symmetry requirements for the task of concept switching, and introduce \textbf{MidSteer} (Minimal Disturbance Concept Steering), a \textbf{generalized affine framework for concept manipulation} that enables precise and directed concept switching with provably minimal interference to unrelated representation components.
    
    \item We \textbf{empirically validate} LEACE-Switch and MidSteer across modalities and architectures, including LLMs and vision diffusion models, demonstrating improved controllability and reduced side effects compared to existing steering and erasure methods.
\end{itemize}

\vspace{-0.3cm}

\section{Related Work}

\noindent \textbf{Activation steering and representation manipulation}.
Activation steering has emerged as a lightweight approach for controlling generative models by modifying intermediate representations, particularly in LLMs \cite{DBLP:journals/corr/abs-2308-10248} \cite{DBLP:journals/corr/abs-2502-17601} \cite{DBLP:conf/acl/RimskyGSTHT24} and more recently in diffusion models \cite{DBLP:conf/cvpr/TumanyanGBD23} \cite{DBLP:conf/iclr/KwonJU23} \cite{gaintseva2026casteer}. Most existing methods rely on heuristic vector addition, often derived from mean activation differences, and provide limited guarantees on optimality or side effects. Our work formalizes steering as an affine transformation problem and studies when such interventions are provably minimal and well-posed.

\noindent \textbf{Affine concept erasure and guardedness}
A related line of work focuses on removing concept information from representations. INLP \cite{DBLP:conf/acl/RavfogelEGTG20} and RLACE \cite{ravfogel-etal-2023-linear} iteratively project out linear subspaces predictive of a protected attribute. LEACE \cite{belrose2025leace} provides a closed-form affine solution for optimal linear concept erasure under the guardedness framework, minimizing representational disturbance while enforcing zero covariance with the concept label. Our work builds directly on this theory, showing that standard erasure-mode steering is a special case of LEACE, and extending the framework beyond erasure. Next, SPLINCE \cite{DBLP:journals/corr/abs-2506-10703} study oblique projections that erase protected attributes while preserving task-relevant subspaces. While these methods address constrained erasure, they do not consider concept switching, which requires translating concept dependence rather than erasing it. Nevertheless, they motivate the importance of minimal-disturbance objectives with structural constraints, which our work addresses in the switching setting.

\noindent \textbf{Distributional alignment and representation surgery}.
Representation Surgery \cite{DBLP:conf/icml/SinghRHACK24} derives affine transformations that match class-conditional means, and optionally covariances, between source and target distributions. This approach performs distributional alignment under Gaussian assumptions and is well-suited to tasks where class-conditional statistics fully characterize the desired transformation. In contrast, MidSteer operates on cross-covariances between representations and concept indicators, preserving the global linear structure of the representation space and explicitly minimizing changes outside the concept-mediating subspace. As a result, MidSteer targets concept switching rather than full distribution matching, and the two approaches optimize distinct objectives.

\noindent \textbf{In summary}, while prior methods address erasure or distributional alignment, our work is the first to formalize concept switching as a distinct affine problem and to derive closed-form solution with explicit minimal-disturbance guarantees. 

\section{Preliminaries}

\subsection{Steering internal representations of models}
\label{sec:steering}

\noindent We formalize \textbf{activation steering} as the manipulation of internal model representations to control the presence of a specific concept $c$ in the model's output. This is achieved by adding a scaled steering vector $s_c$ to the intermediate hidden activity $h$ during inference.
The \textbf{steering vector} $s_c$ is constructed from the concept-conditional means of the hidden activity. Let $h$ be a random vector in $\mathbb{R}^d$ representing the activity at a particular layer, and let $C \in \{0, 1\}$ represent the presence or absence of concept $c$. The steering vector $s_c \in \mathbb{R}^d$ is defined as the difference of these means:
\begin{equation}
	s_c = \mathbb{E}[h | C = 1] - \mathbb{E}[h | C = 0],
	\label{eq:casteer_steering_vector}
\end{equation}
$s_c$ can be optionally post-processed (to have unit norm).

\noindent The general steering intervention $f$ controls the concept's expressiveness via a scalar $\alpha \in \mathbb{R}$ representing the steering strength and direction. Omitting the subscript $c$, the operation is:
\begin{equation}
f(h, s) = h + \alpha s
\label{eq:general_steering}
\end{equation}
\noindent We highlight two essential special cases, determined by the choice of the intervention function $f$.

\noindent \textbf{Concept Erasure.}
\noindent It aims to erase all information aligned with the concept $c$ from the activation vector $h$. This is achieved by projecting $h$ onto the subspace orthogonal to the steering direction $s$. The projection $\langle h, s \rangle s$ estimates the conceptual component, which is then removed:
\begin{equation}
f_{\text{delete}}(h, s) = h - \langle h, s \rangle s
\label{eq:concept_delete}
\end{equation}
\noindent \textbf{Concept switch.}
Multiplying projection by $2$ results in a \textbf{Householder reflection} of the vector $h$ across the hyperplane orthogonal to $s$:
\begin{equation}
f_{\text{switch}}(h, s) = h - 2 \langle h, s \rangle s
\label{eq:casteer_switch}
\end{equation}
\noindent This transformation effectively substitutes the component of $h$ aligned with $c$ with its opposite, thereby substituting the representation of concept $c$ with the representation of its absence.

\noindent If steering is applied to specific layers (e.g., self-attention outputs in LLMs or cross-attention layers in vision models), both Equation \ref{eq:concept_delete} and Equation \ref{eq:casteer_switch} can be incorporated directly into the model's weight matrices. This allows for achieving \textbf{zero inference overhead}, a critical advantage for deployment in large-scale applications.

\subsection{Affine guardedness and covariance control}
\label{sec:leace}

Our subsequent results are formulated as minimal-disturbance affine transformations under constraints on the cross-covariance between representations and concept indicators. 
We use the guardedness framework of \citet{ravfogel-etal-2023-linear,belrose2025leace} to justify why cross-covariance is a meaningful object for formalizing linear concept information in representations. 
In particular, for the task of concept erasure under linear-affine predictors and squared loss, \citet{belrose2025leace} show that making a concept linearly unpredictable from a representation is equivalent to enforcing zero covariance between the representation and the concept label. \citet{belrose2025leace} formulated the following notion of guardedness:

\begin{definition}[Guardedness, \citealt{belrose2025leace}]
\label{def:guardedness}
Consider a $k$-class classification task over jointly defined random vectors $X$ and $Z$, where $X$ takes values in $\mathbb R^d$ and $Z$ is a one-hot label vector taking values in
\[
    \mathcal Z = \{\mathbf z \in \{0,1\}^k \mid \|\mathbf z\|_1 = 1\},
\]
with $\mathbb P(Z=j)>0$ for each class $j$. 
Let $\eta(\cdot;\vtheta):\mathbb R^d \to \mathbb R^k$ be a predictor from a function class
\[
    \mathcal V = \{\eta(\cdot;\vtheta)\mid \vtheta\in\Theta\},
\]
assumed to contain all constant functions, and let $\mathfrak L$ be a class of losses
$\loss:\mathbb R^k\times\mathcal Z\to[0,\infty)$. 
Let $\chi$ denote the set of random vectors of finite first moment taking values in $\mathbb R^d$ and jointly defined with $Z$.

We say that $X$ $(\mathcal V,\mathfrak L)$-\textbf{guards} $Z$ if, for every $\loss\in\mathfrak L$, it maximizes the minimum achievable prediction loss:
\begin{equation*}
    X \in 
    \mathop{\mathrm{argmax}}_{X'\in\chi}
    \inf_{\vtheta\in\Theta}
    \mathbb E\!\left[
        \loss\bigl(\eta(X';\vtheta), Z\bigr)
    \right].
\end{equation*}
Equivalently, $X$ is maximally uninformative about $Z$ for predictors in $\mathcal V$ and losses in $\mathfrak L$.
\end{definition}

The full guardedness definition is general, but the key consequence for our purposes is its linear characterization. 
For linear--affine predictors with squared loss, guardedness is equivalent to zero cross-covariance between the representation and the label.

\begin{theorem}[Linear guardedness, \citealt{belrose2025leace}]
\label{thm:linear_guardedness}
The following statements are equivalent:
\begin{itemize}
    \item $X$ linearly guards $Z$ in the sense of Definition~\ref{def:guardedness};
    \item every component of $X$ has zero covariance with every component of $Z$, i.e.
    \[
        \cov(X,Z)=0.
    \]
\end{itemize}
\end{theorem}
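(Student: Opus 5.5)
The plan is to take ``linearly guards'' exactly as in \citet{belrose2025leace}: $\mathcal V$ is the class of affine predictors $\eta(x;\vtheta)=Wx+b$, and $\mathfrak L$ is the class of all losses $\loss(\eta,z)$ that are convex in the prediction $\eta$. The argument rests on one elementary reformulation. Because $Z$ is one-hot with $\mathbb P(Z=j)>0$, we have $\cov(X,Z_j)=\mathbb P(Z=j)\bigl(\mathbb E[X\mid Z=j]-\mathbb E[X]\bigr)$. Hence $\cov(X,Z)=0$ if and only if every class-conditional mean equals $\mathbb E[X]$. I will establish this first and then use it in both directions.

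\textbf{Upper bound on the objective.} Since $\mathcal V$ contains the constants, for every $X'\in\chi$ and every $\loss$ we have
\[
\inf_{\vtheta}\mathbb E\bigl[\loss(\eta(X';\vtheta),Z)\bigr]\le \inf_{b}\mathbb E\bigl[\loss(b,Z)\bigr]=:m_{\loss}.
\]
The right-hand side does not depend on $X'$. A constant $X'$ attains it, so $m_{\loss}$ is the value of the max-min. Therefore $X$ guards $Z$ exactly when, for every $\loss\in\mathfrak L$, no affine predictor on $X$ beats the best constant.

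\textbf{(ii)$\Rightarrow$(i).} Assume $\cov(X,Z)=0$, so $\mathbb E[X\mid Z=j]=\mathbb E[X]$ for all $j$. Fix a convex $\loss$ and any $W,b$. Conditional Jensen inside each class gives
\[
\mathbb E\bigl[\loss(WX+b,Z)\mid Z=j\bigr]\ge \loss\bigl(W\mathbb E[X]+b,\,e_j\bigr).
\]
Averaging over $j$ yields $\mathbb E[\loss(WX+b,Z)]\ge \mathbb E[\loss(b',Z)]\ge m_{\loss}$, where $b'=W\mathbb E[X]+b$ is a constant. Here only the first moment of $X$ is needed. So the infimum over affine predictors equals $m_{\loss}$ and $X$ is in the argmax, for the whole convex class at once.

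\textbf{(i)$\Rightarrow$(ii).} For this direction it suffices to exhibit one convex loss that detects nonzero covariance. Suppose $\cov(X,Z_j)\neq 0$ for some $j$, and pick $u$ with $c:=\cov(u^\top X,Z_j)\neq0$. The subtlety, which I expect to be the main obstacle, is that $X$ is only assumed to have a finite first moment. Squared loss could therefore give infinite risk for nonconstant predictors, so I will instead use a coordinatewise Huber loss $\loss(\eta,z)=\sum_k H(\eta_k-z_k)$. This loss is convex and $C^1$ with bounded derivative $H'$, and it is quadratic near $0$ with $H'(r)=r$ for $|r|\le 1$.

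Let $b^*$ be the optimal constant. Since $Z_k\in\{0,1\}$, $b^*_k$ can be taken in $[0,1]$, the residuals $b^*_k-Z_k$ lie in the quadratic region, and $b^*_k=\mathbb P(Z_k=1)$. Now perturb to $\eta_t(X)=b^*+t\,e_j\,u^\top(X-\mathbb E X)$. The derivative of $H$ is bounded, so dominated convergence lets me differentiate under the expectation:
\[
\frac{d}{dt}\Big|_{t=0}\mathbb E[\loss(\eta_t,Z)]=\mathbb E\bigl[(b^*_j-Z_j)\,u^\top(X-\mathbb E X)\bigr]=-c\neq 0.
\]
Moving $t$ slightly in the sign of $c$ therefore gives an affine predictor with risk strictly below $m_{\loss}$. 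This contradicts guardedness, so $\cov(X,Z)=0$.
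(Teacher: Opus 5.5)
Your proof is correct and is essentially the argument of Belrose et al., which the paper cites without reproducing: constants in $\mathcal V$ make the max-min value the trivially attainable loss, Jensen's inequality under equal class-conditional means gives (ii)$\Rightarrow$(i), and a first-order condition at the optimal constant for a convex loss with bounded derivative gives (i)$\Rightarrow$(ii), with dominated convergence justified by the finite first moment exactly as you do (they use cross-entropy where you use a Huber loss, whose directional derivative is exactly $-\cov(u^\top X,Z_j)$). Your reading of $\mathfrak L$ as all convex losses, rather than the squared loss mentioned in the paper's prose, is also the right one: with only finite first moments, a heavy-tailed $X$ can give every nonconstant affine predictor infinite squared risk, and so would satisfy squared-loss guardedness even though $\cov(X,Z)\neq 0$.
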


Therefore, in the affine setting, concept erasure can be expressed as a covariance-control problem: find a transformation of the representation that removes linear dependence on the concept label. 
At the same time, to preserve unrelated information, the transformation should perturb the representation as little as possible. 
These two desiderata lead to the LEACE objective: among all affine maps that enforce $\cov(AX+b,Z)=0$, choose the one closest to the identity transformation in expected squared distance.

\begin{theorem}[LEACE, \citealt{belrose2025leace}]
\label{thm:belrose}
Let $X$ and $Z$ be random vectors taking values in $\mathbb R^d$ and $\mathbb R^k$, respectively, each with finite second moments. 
Define $\Sigma_{XX}=\cov(X,X)\in\mathbb R^{d\times d}$, $\Sigma_{XZ}=\cov(X,Z)\in\mathbb R^{d\times k}$
and assume $\mathrm{Im}(\Sigma_{XZ})\subseteq\mathrm{Im}(\Sigma_{XX})$. 
Then the optimization problem
\begin{equation}
\label{eq:belrose_minimization}
    \min_{\substack{A\in\mathbb R^{d\times d}\\ b\in\mathbb R^d}}
    \mathbb E\!\left[
        \left\|AX+b-X\right\|_2^2
    \right]
    \quad
    \text{s.t.}
    \quad
    \cov(AX+b,Z)=0
\end{equation}
has the following solution, almost surely:
\begin{align}
    \widehat A
    &=
    I - W^+(W\Sigma_{XZ})(W\Sigma_{XZ})^+W,
    \label{eq:belrose_optimum_A}
    \\
    \widehat b
    &=
    \mathbb E[X] - \widehat A\,\mathbb E[X],
    \label{eq:belrose_optimum_b}
\end{align}
where $W=(\Sigma_{XX}^{1/2})^+$ is the whitening transformation.
\end{theorem}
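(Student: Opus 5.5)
We reduce the problem to a constrained least-squares problem in whitened coordinates and solve it with an orthogonal projection argument.

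\textbf{Step 1: fixing $b$.} For fixed $A$, the constraint $\cov(AX+b,Z)=A\Sigma_{XZ}=0$ does not involve $b$. Write $X=\mu+\tilde X$ with $\mu=\mathbb E[X]$. Then
\[
\mathbb E\|AX+b-X\|^2=\mathbb E\|(A-I)\tilde X\|^2+\|(A-I)\mu+b\|^2 .
\]
The second term vanishes exactly when $b=\mu-A\mu$, which gives \eqref{eq:belrose_optimum_b}. What remains is to minimize
\[
\mathrm{tr}\big((A-I)\Sigma_{XX}(A-I)^\top\big)=\|(A-I)\Sigma_{XX}^{1/2}\|_F^2
\quad\text{subject to}\quad A\Sigma_{XZ}=0 .
\]

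\textbf{Step 2: whitened coordinates.} Let $P=\Sigma_{XX}^{1/2}W$ be the orthogonal projector onto $\mathrm{Im}(\Sigma_{XX})$. Since $\mathrm{Im}(\Sigma_{XZ})\subseteq\mathrm{Im}(\Sigma_{XX})$, we can write
\[
\Sigma_{XZ}=\Sigma_{XX}^{1/2}\,(W\Sigma_{XZ}).
\]
Set $B=A\Sigma_{XX}^{1/2}$, so that $B$ is determined by the restriction of $A$ to $\mathrm{Im}(\Sigma_{XX})$. Also set $M=W\Sigma_{XZ}$ and let $Q=MM^+$ be the orthogonal projector onto $\mathrm{Im}(M)\subseteq\mathrm{Im}(\Sigma_{XX})$. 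The problem becomes
\[
\min_B\ \|B-\Sigma_{XX}^{1/2}\|_F^2\quad\text{subject to}\quad BM=0 .
\]
The constraint $BM=0$ says $B=B(I-Q)$, i.e.\ the rows of $B$ are orthogonal to $\mathrm{Im}(M)$. The feasible set is therefore a linear subspace, and the Frobenius-nearest point to $\Sigma_{XX}^{1/2}$ is its orthogonal projection onto that subspace:
\[
B^\star=\Sigma_{XX}^{1/2}(I-Q).
\]

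\textbf{Step 3: recovering $A$.} We need an $A$ with
\[
A\Sigma_{XX}^{1/2}=\Sigma_{XX}^{1/2}-\Sigma_{XX}^{1/2}Q .
\]
Try $\widehat A=I-W^+QW$, noting that $W^+=\Sigma_{XX}^{1/2}$ on the relevant subspace. Then
\[
\widehat A\,\Sigma_{XX}^{1/2}=\Sigma_{XX}^{1/2}-W^+QP=\Sigma_{XX}^{1/2}-\Sigma_{XX}^{1/2}Q,
\]
using $QP=Q$, which holds because $\mathrm{Im}(M)\subseteq\mathrm{Im}(P)$. Feasibility also checks out:
\[
\widehat A\Sigma_{XZ}=\Sigma_{XZ}-W^+QM=\Sigma_{XZ}-\Sigma_{XX}^{1/2}M=\Sigma_{XZ}-P\Sigma_{XZ}=0,
\]
where the last equality again uses the image assumption.

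\textbf{Main obstacle: non-uniqueness and ``almost surely''.} When $\Sigma_{XX}$ is singular, $A$ is not unique: its action on $\ker\Sigma_{XX}$ is arbitrary. However, $\tilde X\in\mathrm{Im}(\Sigma_{XX})$ almost surely, so every minimizer agrees with $\widehat AX+\widehat b$ almost surely, which is the sense of ``almost surely'' in the statement. The delicate part is the bookkeeping with pseudoinverses. This includes justifying $W^+=\Sigma_{XX}^{1/2}$, $PW=W$, and that the image assumption is what makes the constraint expressible through $M$. I would carry this out by diagonalizing $\Sigma_{XX}$ and working block-wise on $\mathrm{Im}(\Sigma_{XX})\oplus\ker(\Sigma_{XX})$.
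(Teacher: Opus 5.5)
Your proof is correct, but it takes a different route from the paper. The paper does not prove this theorem itself; it imports it from Belrose et al. The nearest in-paper argument is the appendix proof of the LEACE-Switch theorem, reused for MidSteer, and it is a \emph{verification} argument via Lagrange multipliers. It forms the Lagrangian with a Frobenius-paired multiplier $\Lambda$ and eliminates $b$ to get the stationarity condition $(A-I)\Sigma_{XX}+\Lambda\Sigma_{XZ}^\top=0$. It then substitutes the guessed closed form, sandwiches by $W$ (using $W\Sigma_{XX}=W^+$) to exhibit an explicit whitened multiplier $\Lambda_W$, and concludes global optimality from convexity of the objective and linearity of the constraint. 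You instead split off $b$ exactly, rewrite the residual objective as $\|A\Sigma_{XX}^{1/2}-\Sigma_{XX}^{1/2}\|_F^2$, and solve a Frobenius nearest-point problem onto the subspace $\{B : BM=0\}$.

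Your route buys three things:
\begin{itemize}
\item It derives the formula rather than checking a guess.
\item It needs no multiplier and no KKT sufficiency argument.
\item It gives uniqueness of $A\Sigma_{XX}^{1/2}$ directly from the projection theorem. That is exactly what makes the ``almost surely'' in the statement precise: $A$ is free on $\ker\Sigma_{XX}$, but $AX+b$ is a.s.\ unique. The paper handles the singular case more loosely, checking stationarity only after multiplying through by the non-invertible $W$.
\end{itemize}
The Lagrangian template's advantage is that the paper can reuse it mechanically for other constraints, but your approach extends just as easily. Projecting $\Sigma_{XX}^{1/2}$ onto a nonempty affine set $\{B : BM=C\}$ gives $B^\star=\Sigma_{XX}^{1/2}-(\Sigma_{XX}^{1/2}M-C)M^+$. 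Taking $C=-\Sigma_{XZ}$, or $C=\Sigma_{XZ_2}$ with $M=W\Sigma_{XZ_1}$, reproduces the LEACE-Switch and MidSteer formulas on $\mathrm{Im}(\Sigma_{XX})$.

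Two small tightenings. First, $W^+=\Sigma_{XX}^{1/2}$ holds exactly, not just on a subspace, since $(Y^+)^+=Y$. The other facts you flag as delicate ($\Sigma_{XX}^{1/2}W=W\Sigma_{XX}^{1/2}=P$, $PW=W$, $QP=Q$) follow at once from $YY^+$ and $Y^+Y$ being the orthogonal projectors onto $\mathrm{Im}(Y)$ and $\mathrm{Im}(Y^\top)$, so no block diagonalization is needed. Second, in Step 2 the variable $B=A\Sigma_{XX}^{1/2}$ ranges only over $\{B : B=BP\}$, so ``the problem becomes'' is really a relaxation. This is harmless because Step 3 shows $B^\star$ is attained by $\widehat A$. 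Still, state Step 2 as the lower bound $\mathbb E\|AX+b-X\|^2\ge\|B^\star-\Sigma_{XX}^{1/2}\|_F^2$ for every feasible $(A,b)$.
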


Here and throughout, $A^+$ denotes the Moore--Penrose pseudo-inverse. 
For a positive semidefinite symmetric matrix $A=VSV^\top$, we write
$A^{1/2}=VS^{1/2}V^\top$, where $S^{1/2}$ is obtained by taking square roots of the diagonal entries of $S$.

\section{Theoretical Results}

We theoretically analyze connections between steering approaches (Sec.~\ref{sec:steering}) and LEACE (Sec.~\ref{sec:leace}), and derive a novel framework for affine concept steering called MidSteer, that unifies and generalizes all these approaches.

This section is organized as follows. We first consider the task of concept erasure and establish a formal connection between the standard steering setup for erasure and LEACE, showing that Eq.~\ref{eq:concept_delete} arises as a special case of the LEACE framework. We then extend LEACE to the task of optimal affine concept switching and derive \textbf{LEACE-Switch}, demonstrating that Eq.~\ref{eq:casteer_switch} is a special case of this bidirectional switching formulation.

Next, we characterize the assumptions under which LEACE-Switch provides an optimal solution and introduce \textbf{MidSteer}, a more general affine framework for concept manipulation that enables directed, minimal-disturbance transformations without requiring dataset-wide label inversion. Finally, through experiments on large language models and vision diffusion models, we show that MidSteer consistently outperforms vanilla steering and LEACE-based switching, achieving precise concept control while preserving unrelated features of the generated outputs.

\subsection{Connection between steering in erasure mode and LEACE}

\begin{figure}[!th]
  \centering
  \begin{subfigure}{1.0\columnwidth}
    \centering
    \includegraphics[width=\linewidth]{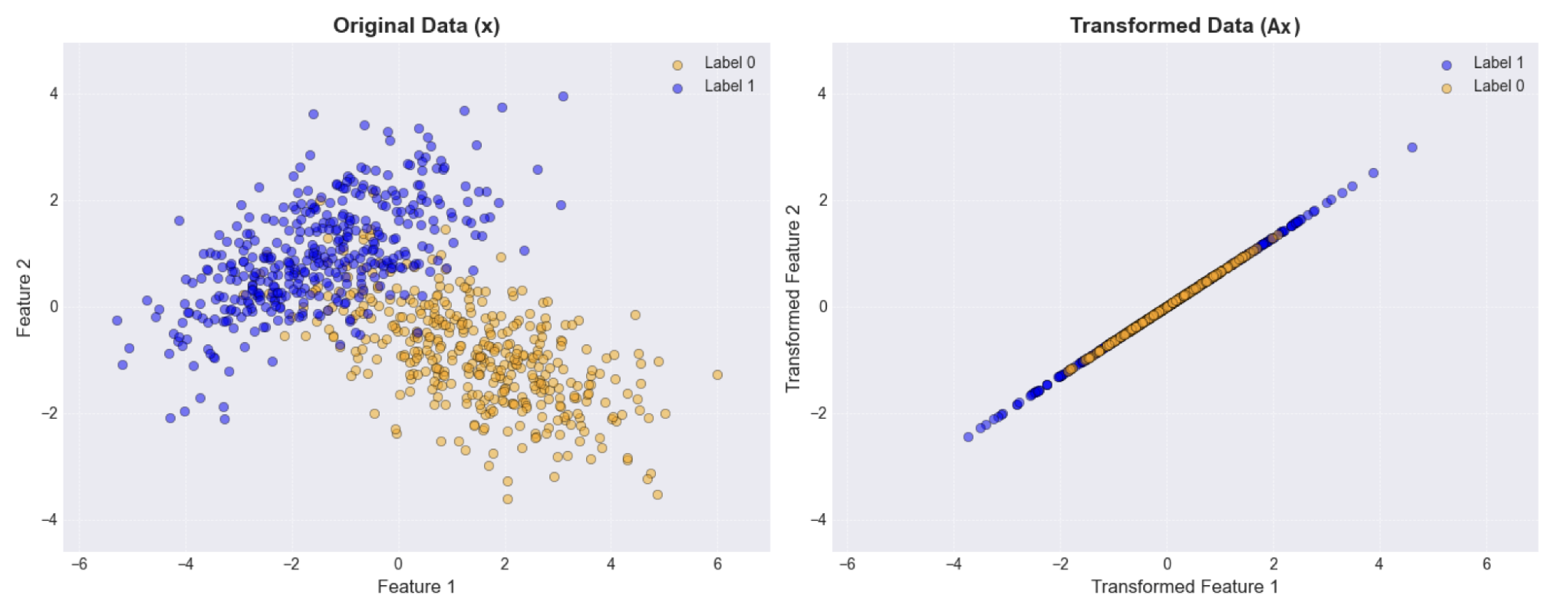}
    \caption{Illustrative example of affine concept erasure. In this case the affine transformation is satisfying $\text{cov}(AX + b, Z) = 0$. This figure is inspired by \cite{belrose2025leace}.}
    \label{fig:viz_cov_erasure}
  \end{subfigure}
  
  \vspace{0.3cm} 

  \begin{subfigure}{1.0\columnwidth}
    \centering
    \includegraphics[width=\linewidth]{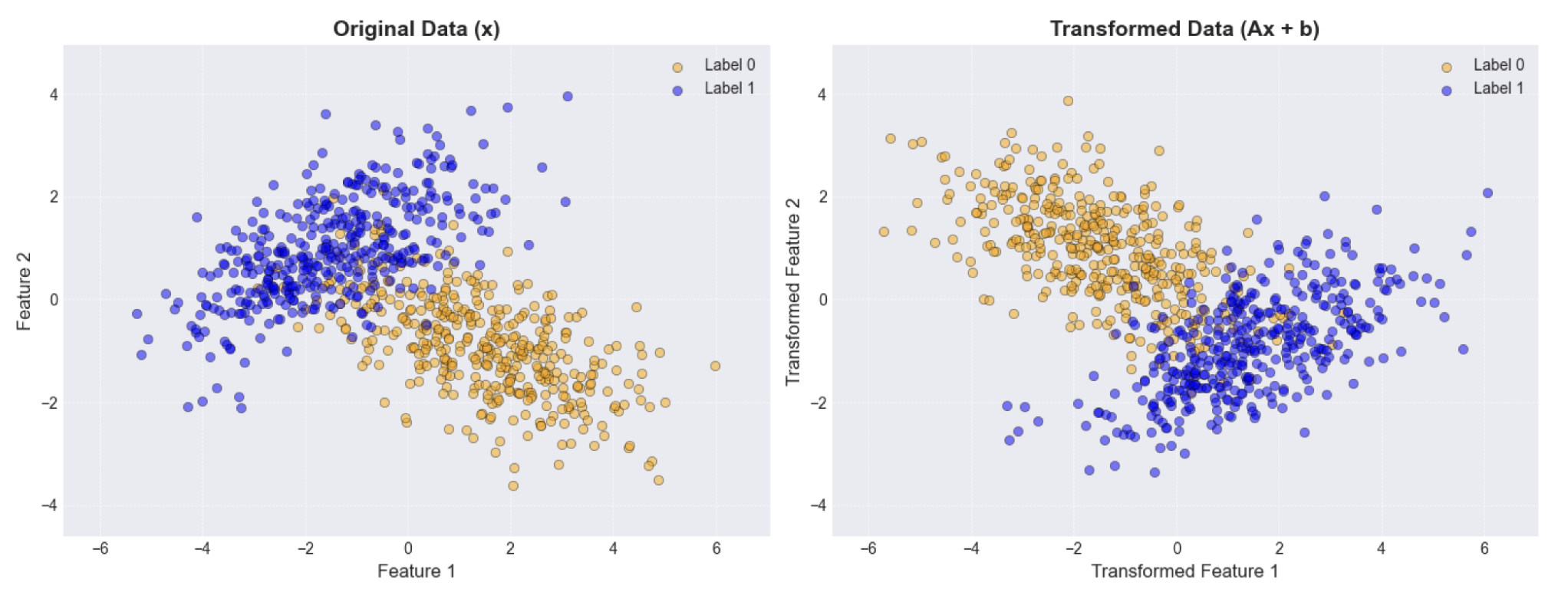}
    \caption{Illustrative example of affine concept switching. In this case the affine transformation is satisfying $-\text{cov}(AX + b, Z) = \text{cov}(X,Z)$.}
    \label{fig:viz_cov_switch}
  \end{subfigure}

  \caption{Illustrative example of affine concept erasure and affine concept flipping frameworks.}
  \label{fig:snoopy_plots}
  \vspace{-0.5cm}
\end{figure}

In this section, we show that steering in erasure mode (Eq.~\ref{eq:concept_delete}) is a special case of LEACE. We formulate and prove a following corollary to the Theorem~\ref{thm:belrose}:
\begin{corollary}
	\label{cor:vanilla_erasure_is_special_case_of_LEACE}
    Let $X$ be a standardized random vector in $\mathbb{R}^d$, i.e., it has zero mean $E[x] = 0$ and unit covariance matrix $\Sigma_{XX} = I$. Let $C \in \{0, 1\}$ be a concept indicator variable.
    Let $s$ be defined as in Eq.~\ref{eq:casteer_steering_vector}.
    Let $f_{delete}$ be defined as in Eq.~\ref{eq:concept_delete}.
    Then $f_{delete}$ as a function of $h$ minimizes
\begin{equation}
	\label{eq:vanilla_erasure_is_special_case_of_LEACE}
    \min_{f \in \mathrm{Aff}(\mathbb{R}^d \mapsto \mathbb{R}^d)} \mathrm{E}[\lVert{f(X) - X}\rVert^2] \  \\
    \text{ s.t. } \  \cov(f(X), C) = 0
\end{equation}
\label{thm:vanilla_steering_is_a_special_case_leace}
\end{corollary}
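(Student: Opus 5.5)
The plan is to obtain the statement directly from Theorem~\ref{thm:belrose} by specializing it to the standardized setting and a binary label. The three steps are: apply the theorem with $\Sigma_{XX}=I$, show that the cross-covariance is parallel to the steering vector $s$, and match the resulting map to Eq.~\ref{eq:concept_delete}.

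\textbf{Step 1: specialize Theorem~\ref{thm:belrose}.} Take $Z=C\in\mathbb R^1$. If one insists on the one-hot form of Definition~\ref{def:guardedness}, use $Z=(C,1-C)^\top$ instead. Then $\cov(X,Z)=[u,\,-u]$ with $u=\cov(X,C)$, so both choices give the same column space and the same constraint $\cov(f(X),C)=0$.

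Since $\Sigma_{XX}=I$, the whitening matrix is $W=(I^{1/2})^+=I$. The hypothesis $\mathrm{Im}(\Sigma_{XZ})\subseteq\mathrm{Im}(\Sigma_{XX})=\mathbb R^d$ then holds trivially. Formula \eqref{eq:belrose_optimum_A} reduces to
\[
\widehat A = I-\Sigma_{XZ}\Sigma_{XZ}^+ ,
\]
which is the orthogonal projector onto $\mathrm{span}(u)^\perp$. For $u\neq 0$ this equals $I-uu^\top/\|u\|^2$. Formula \eqref{eq:belrose_optimum_b} gives $\widehat b=\mathbb E[X]-\widehat A\,\mathbb E[X]=0$, because $\mathbb E[X]=0$.

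\textbf{Step 2: relate $u$ to the steering vector.} Let $p=\mathbb P(C=1)\in(0,1)$ and $m_j=\mathbb E[X\mid C=j]$. The following computations give the relation.
\begin{itemize}
\item Since $\mathbb E[X]=0$, we have $u=\mathbb E[XC]-\mathbb E[X]\mathbb E[C]=p\,m_1$.
\item The law of total expectation gives $p\,m_1+(1-p)\,m_0=0$, so $m_0=-\tfrac{p}{1-p}m_1$.
\item Therefore $s=m_1-m_0=\tfrac{1}{1-p}m_1$, and hence $u=p(1-p)\,s$.
\end{itemize}
So $u$ is a nonzero multiple of $s$ whenever $s\neq 0$. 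Consequently
\[
\widehat A=I-\frac{ss^\top}{\|s\|^2}.
\]

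\textbf{Step 3: match with Eq.~\ref{eq:concept_delete}.} With $s$ post-processed to unit norm, as allowed in Sec.~\ref{sec:steering}, we get $\widehat A X+\widehat b = X-\langle X,s\rangle s = f_{\text{delete}}(X,s)$. This is exactly the affine map asserted to be optimal in \eqref{eq:vanilla_erasure_is_special_case_of_LEACE}. The corresponding objective there is Theorem~\ref{thm:belrose}'s problem written over affine maps $f(X)=AX+b$.

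\textbf{Main obstacle: bookkeeping of hypotheses rather than computation.} Three points need care.
\begin{itemize}
\item Eq.~\ref{eq:concept_delete} is a projection only when $\|s\|=1$. The corollary must therefore be read with the normalized $s$; otherwise one uses $s/\|s\|$.
\item The degenerate cases need a remark. If $p\in\{0,1\}$ the conditional means are undefined. If $s=0$ then $u=0$, and then $\widehat A=I$, which agrees with $f_{\text{delete}}$ trivially.
\item The scalar-versus-one-hot form of $Z$ must be reconciled, as in Step 1.
\end{itemize}
I would handle these in a short preliminary paragraph. After that, the argument is the three substitutions above.
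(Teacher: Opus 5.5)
Your proposal is correct and follows essentially the same route as the paper: specialize Theorem~\ref{thm:belrose} with $W=I$ and $\widehat b=0$ to get $\widehat A=I-\Sigma_{XZ}\Sigma_{XZ}^+$, show $\Sigma_{XZ}=P(C=1)P(C=0)\,s$, and conclude that this projector is $I-ss^\top$ for unit-norm $s$. Your extra bookkeeping (normalizing $s$, the degenerate cases $p\in\{0,1\}$ or $s=0$, and scalar versus one-hot $Z$) makes explicit what the paper's proof leaves implicit.
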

This theorem states that steering in erasure mode can be seen as LEACE under the assumptions that the whitening matrix is the identity and that the mean of all vectors is zero. The proof is found in the Appendix \ref{proof:vanilla_erasure_is_special_case_of_LEACE}.

\subsection{Concept switching}

We generalize beyond a single binary concept, and consider the task of transforming the representation of one concept $c_1$ into another $c_2$. We call this task \textit{concept switching}. 

More formally, let $c_1$ and $c_2$ denote two distinct concepts, each associated with a subset of the data distribution (not necessarily jointly covering it), and their corresponding binary indicators $C_1, C_2 \in \{0,1\}$. The goal of a \emph{concept switch} operation is to construct a transformation $f$ such that samples exhibiting concept $c_1$ after transformation exhibit concept $c_2$, while preserving all other factors of variation as much as possible. Formally, this requires modifying the dependence of the representation on $C_1$ and introducing a desired dependence on $C_2$, without enforcing any relationship between $C_1$ and $C_2$ outside their observed support.

We now adapt LEACE (Theorem ~\ref{thm:belrose}) framework to the task of concept switching, and formulate a theoretical framework for the optimal affine concept switching, LEACE-Switch. Additionally, we show that steering in switching mode Eq.~\ref{eq:casteer_switch} can be seen as a special case of this framework. We then characterize the scope of LEACE-Switch by making explicit the assumptions under which it provides an optimal solution. Finally, we introduce MidSteer, a generalized affine framework for concept manipulation that relaxes these assumptions and enables directed, minimal-disturbance concept switching in more general settings.

\subsubsection{LEACE for concept switching}

Recall that $Z \in \{0,1\}^k$ denotes a binary concept vector. We now consider the task of \emph{bidirectional concept switching}, in which the concept encoded by $Z$ is assumed to partition the dataset (i.e., $P(Z=1)+P(Z=0)=1$), so that each sample belongs either to the concept or to its complement. Under this assumption, concept switching can be interpreted as a global inversion of the linear dependence between the representation and the concept label.

Within the covariance-based guardedness framework of LEACE, such an inversion admits a natural affine formulation. Rather than removing linear information about the concept as in concept erasure, we seek to preserve the magnitude of the linear dependence while reversing its sign. This leads to the following constraint:
\begin{equation}
    \cov(f(X), \mathbf{1}^k - Z) = \cov(X, Z) \, ,
\label{eq:leace_switch_constraint}
\end{equation}
where $\mathbf{1}^k$ denotes a $k$-dimensional vector of ones. By linearity of covariance, this condition is equivalent to
\begin{equation}
    \cov(f(X), Z) = -\,\cov(X, Z) \, .
\end{equation}
See Fig.~\ref{fig:viz_cov_switch} for a geometric illustration of this.

Equation~\ref{eq:leace_switch_constraint} constitutes the affine analogue of a \emph{perfect concept flip} within the guardedness framework. In LEACE (Theorem~\ref{thm:belrose}), concept erasure is achieved by enforcing $\cov(f(X), Z) = 0$, ensuring that the transformed representation carries no linear signal about the concept. In contrast, the constraint above enforces a complete inversion of this signal: samples that previously correlated positively with the concept now correlate equally strongly with its complement, and vice versa. This corresponds to a symmetric, dataset-wide concept swap and mirrors the role of reflection-based switching in vanilla steering.

Analogous to Theorem~\ref{thm:belrose}, we now characterize the optimal affine transformation that satisfies the constraint in Eq.~\ref{eq:leace_switch_constraint} while minimally disturbing the original representation:
\begin{theorem}[LEACE-Switch, Optimal Concept Switching]

\label{thm:concept_switch}

Let $X, Z, \Sigma_{XX}, \Sigma_{XZ}$ be defined as in Theorem~\ref{thm:belrose}. Assume $\mathrm{Im}(\Sigma_{XZ}) \subseteq \mathrm{Im}(\Sigma_{XX})$.
Then the optimization problem
\begin{align}
    \label{eq:concept_switch_minimization}
    \min_{\substack{
        A \in \mathbb{R}^{d \times d},
        \\ b \in \mathbb{R}^d
    }} \mathrm{E}\Big[ \lVert{ A X + b - X}\rVert_2^2 \Big] \\
    \text{ s.t. } \cov(AX + b, Z) = -\cov(X, Z)
\end{align}
has the following solution (almost surely):
\begin{align}
    \label{eq:concept_switch_optimum_A}
    \widehat{A} &= I - 2W^+(W \Sigma_{XZ}) (W \Sigma_{XZ})^+ W \, ,\\
    \label{eq:concept_switch_optimum_b}
    \widehat{b} &= \mathbb{E}[X] - \widehat{A} \mathbb{E}[X] \, ,
\end{align}
\end{theorem}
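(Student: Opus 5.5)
The plan is to follow the structure of the LEACE proof in \citet{belrose2025leace}: first separate the bias from the linear part, then reduce the linear problem to an orthogonal projection problem in whitened coordinates.

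\textbf{Step 1: eliminate $b$.} Covariance is invariant to translation, so the constraint $\cov(AX+b,Z)=-\Sigma_{XZ}$ reduces to $A\Sigma_{XZ}=-\Sigma_{XZ}$ and does not involve $b$. Write $\mu=\mathbb E[X]$ and $\tilde X=X-\mu$. The objective then splits as
\[
\mathbb E\|(A-I)\tilde X\|^2+\|(A-I)\mu+b\|^2 .
\]
The second term is zero exactly when $b=\mu-A\mu$, which gives \eqref{eq:concept_switch_optimum_b}. What remains is to minimize $\mathrm{tr}\bigl((A-I)\Sigma_{XX}(A-I)^\top\bigr)$ subject to $(A-I)\Sigma_{XZ}=-2\Sigma_{XZ}$.

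\textbf{Step 2: whitened reformulation.} Set $\Delta=A-I$. The objective becomes $\|\Delta\Sigma_{XX}^{1/2}\|_F^2$. Because $\mathrm{Im}(\Sigma_{XZ})\subseteq\mathrm{Im}(\Sigma_{XX})$, we can write $\Sigma_{XZ}=\Sigma_{XX}^{1/2}M$ with $M=W\Sigma_{XZ}$. Put $B=\Delta\Sigma_{XX}^{1/2}$. The constraint then reads $BM=-2\Sigma_{XX}^{1/2}M$. This is an affine constraint on $B$, and we want its minimum Frobenius-norm solution. That solution is the given right-hand side times $M^+$, restricted to $\mathrm{Im}(M)$:
\[
B^\star=-2\Sigma_{XX}^{1/2}MM^+=-2\Sigma_{XX}^{1/2}P,
\]
where $P=MM^+$ is the orthogonal projector onto $\mathrm{Im}(W\Sigma_{XZ})$. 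Minimality follows by decomposing any feasible $B$ as $B=BP+B(I-P)$. The constraint fixes $BP=B^\star$, and the two pieces are Frobenius-orthogonal, so $\|B\|_F^2=\|B^\star\|_F^2+\|B(I-P)\|_F^2$.

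\textbf{Step 3: recover $A$.} We need $\Delta$ with $\Delta\Sigma_{XX}^{1/2}=-2\Sigma_{XX}^{1/2}P$. The candidate is $\Delta=-2\Sigma_{XX}^{1/2}PW=-2W^+PW$, using $W^+=\Sigma_{XX}^{1/2}$. To check it, note that $W\Sigma_{XX}^{1/2}$ is the projector onto $\mathrm{Im}(\Sigma_{XX})$, which contains $\mathrm{Im}(P)$, so $PW\Sigma_{XX}^{1/2}=P$. This gives \eqref{eq:concept_switch_optimum_A}.

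We also verify the original constraint directly. Since $PM=M$,
\[
\widehat A\Sigma_{XZ}=\Sigma_{XZ}-2\Sigma_{XX}^{1/2}PM=\Sigma_{XZ}-2\Sigma_{XZ}=-\Sigma_{XZ}.
\]
The objective only determines $\Delta$ on $\mathrm{Im}(\Sigma_{XX})$. On the orthogonal complement $\ker\Sigma_{XX}$, $\tilde X=0$ almost surely, so the choice of $\Delta$ there does not affect $AX+b$. This is the sense of ``almost surely'' in the statement, exactly as in Theorem~\ref{thm:belrose}.

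\textbf{Main obstacle and shortcut.} The delicate step is the pseudo-inverse bookkeeping in Steps 2--3 when $\Sigma_{XX}$ is singular: the identity $W\Sigma_{XX}^{1/2}P=P$ and the a.s.\ uniqueness both rest on the range inclusion. An alternative shortcut is to reuse Theorem~\ref{thm:belrose} directly. For the problem with constraint $(A-I)\Sigma_{XZ}=-t\Sigma_{XZ}$, the solution set of $B$ scales linearly in $t$. Hence the $t=2$ optimum is the LEACE ($t=1$) optimal $\Delta$ multiplied by $2$. Scaling the constraint by $2$ scales the minimal-norm solution by $2$, which yields the factor $2$ immediately.
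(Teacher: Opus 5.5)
Your proof is correct, but it takes a different route from the paper.

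\textbf{The paper's route.} The paper argues by guess-and-verify. It forms a Lagrangian with multiplier $\Lambda\in\mathbb{R}^{d\times k}$ and eliminates $b$ via $\partial\mathcal{L}/\partial b=0$, which is your Step~1 in disguise. It checks $(\widehat A+I)\Sigma_{XZ}=0$ using $W^+W\Sigma_{XZ}=\Sigma_{XZ}$. It then exhibits a multiplier, in whitened form $\Lambda_W=2\,((W\Sigma_{XZ})^+)^\top$, that solves the stationarity equation $(A-I)\Sigma_{XX}+\Lambda\Sigma_{XZ}^\top=0$. Finally it invokes convexity of the objective and linearity of the constraint to promote this stationary point to a global minimum.

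\textbf{Your route.} You \emph{derive} the minimizer rather than verify a guess. After whitening, the problem becomes a minimum-Frobenius-norm solution of the linear equation $BM=-2\Sigma_{XX}^{1/2}M$. The single identity $\|B\|_F^2=\|B^\star\|_F^2+\|B(I-P)\|_F^2$ then gives optimality and almost-sure uniqueness together, with no multipliers and no appeal to KKT sufficiency.

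This buys two things over the paper. The paper's sketch asserts uniqueness but never proves it. Its stationarity check, which multiplies the equation by $W$ on both sides, is only a one-directional implication unless one back-substitutes $\Lambda=2W^+((W\Sigma_{XZ})^+)^\top$.

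The Lagrangian template does have the merit that the paper reuses it verbatim for MidSteer. Your approach extends just as easily there. With $M_i=W\Sigma_{XZ_i}$, the constraint becomes $BM_1=\Sigma_{XX}^{1/2}(M_2-M_1)$. Its min-norm solution $\Sigma_{XX}^{1/2}(M_2-M_1)M_1^+$ is feasible because $M_1^+M_1=I$ under the full-column-rank assumption.

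\textbf{Your scaling shortcut.} This is also sound, and it is the fastest route of all. The map $\Delta\mapsto 2\Delta$ is a bijection from the LEACE feasible set onto the switching feasible set, and it multiplies the objective by $4$. So the LEACE optimizer transfers immediately. The same argument with $2$ replaced by any nonzero $\beta$ shows that the paper's steering-strength family $I-\beta W^+PW$ is exactly optimal for the constraint $\cov(AX+b,Z)=(1-\beta)\Sigma_{XZ}$. The paper introduces that family without any optimality statement.

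\textbf{A small misattribution.} The identity $PW\Sigma_{XX}^{1/2}=P$ holds without the range inclusion, since $\mathrm{Im}(W\Sigma_{XZ})\subseteq\mathrm{Im}(W)=\mathrm{Im}(\Sigma_{XX})$ always. What the inclusion actually provides is $\Sigma_{XZ}=\Sigma_{XX}^{1/2}M$, i.e.\ the ability to rewrite the constraint in whitened coordinates.
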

The proof is found in the Appendix~\ref{sec:proof_concept_switch}. Also note, since the transform is affine, it can be incorporated into the weight matrix of the model, thus achieving zero inference overhead.
Similar to corollary~\ref{cor:vanilla_erasure_is_special_case_of_LEACE}, we now show that concept switching as defined in Eq.~\ref{eq:casteer_switch} is a special case of Theorem~\ref{thm:concept_switch}. 
\begin{corollary}
    \label{thm:concept_flipping_simple_form}
    Let $X$ be a standardized random vector in $\mathbb{R}^d$, i.e. it has zero mean $\mathbb{E}[x] = 0$ and unit covariance matrix $\Sigma_{XX} = I$. Let $C \in \{0, 1\}$ be a concept indicator variable.
    Let $s$ be defined as in Eq.~\ref{eq:casteer_steering_vector}.
    Let $f_{switch}$ be defined as in Eq.~\ref{eq:casteer_switch}.
    Then $f_{switch}$ as a function of $h$ minimizes
\begin{align}
    \min_{f \in \mathrm{Aff}(\mathbb{R}^d \mapsto \mathbb{R}^d)} \mathrm{E}[\lVert{f(X) - X}\rVert^2] \quad \\ 
    \text{ s.t. } \quad \cov(f(X), C) = -\cov(X, C)
\end{align}
\end{corollary}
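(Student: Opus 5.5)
Our plan is to specialize Theorem~\ref{thm:concept_switch} to the standardized setting, exactly in parallel with the proof of Corollary~\ref{cor:vanilla_erasure_is_special_case_of_LEACE}. We take $k=1$ and $Z=C$. Since $\Sigma_{XX}=I$, the image condition $\mathrm{Im}(\Sigma_{XC})\subseteq\mathrm{Im}(\Sigma_{XX})=\mathbb{R}^d$ holds trivially, so Theorem~\ref{thm:concept_switch} applies. Its minimizer over all affine maps is $f(x)=\widehat A x+\widehat b$. Because $W=(\Sigma_{XX}^{1/2})^+=I$, the formula reduces to
\[
\widehat A = I - 2\,\Sigma_{XC}\Sigma_{XC}^+ .
\]
Since $\mathbb{E}[X]=0$, we also get $\widehat b=0$. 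For a nonzero column vector $u\in\mathbb{R}^d$ we have $u^+=u^\top/\|u\|^2$. Hence $\Sigma_{XC}\Sigma_{XC}^+=\hat u\hat u^\top$ with $\hat u=\Sigma_{XC}/\|\Sigma_{XC}\|$, and the optimal map is the Householder reflection $x\mapsto x-2\langle x,\hat u\rangle\hat u$.

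The key step is to show that $\hat u$ coincides with the normalized steering vector $s$ from Eq.~\ref{eq:casteer_steering_vector}. Let $p=P(C=1)\in(0,1)$. Then
\[
\Sigma_{XC}=\mathbb{E}[XC]-\mathbb{E}[X]\mathbb{E}[C]=p\,\mathbb{E}[X\mid C=1].
\]
Using $\mathbb{E}[X]=p\,\mathbb{E}[X\mid C=1]+(1-p)\,\mathbb{E}[X\mid C=0]=0$, we obtain
\[
\mathbb{E}[X\mid C=0]=-\tfrac{p}{1-p}\,\mathbb{E}[X\mid C=1],
\]
and therefore
\[
s_c=\mathbb{E}[X\mid C=1]-\mathbb{E}[X\mid C=0]=\tfrac{1}{1-p}\,\mathbb{E}[X\mid C=1]=\tfrac{1}{p(1-p)}\,\Sigma_{XC}.
\]
So $s_c$ is a positive multiple of $\Sigma_{XC}$. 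After the unit-norm post-processing mentioned in Sec.~\ref{sec:steering}, $s=\hat u$, and $f_{switch}(h,s)=h-2\langle h,s\rangle s$ coincides with $\widehat A h+\widehat b$.

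As a sanity check, we verify feasibility directly:
\[
\cov(f_{switch}(X),C)=(I-2ss^\top)\Sigma_{XC}=\Sigma_{XC}-2\Sigma_{XC}=-\Sigma_{XC}.
\]

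The main obstacle is the normalization and the degenerate cases rather than the algebra. The Householder form $h-2\langle h,s\rangle s$ is a projection-based reflection only if $\|s\|=1$, so we must state that $s$ is taken unit-norm. Without normalization, the map would not be the optimum, and would not even satisfy the constraint unless $\|s\|=1$. We also need to handle $\Sigma_{XC}=0$, i.e.\ $s_c=0$, or $p\in\{0,1\}$. In that case $\Sigma_{XC}^+=0$, the optimum is the identity, and $f_{switch}$ with $s=0$ is also the identity, so the claim still holds. Finally, "almost surely" uniqueness is inherited directly from Theorem~\ref{thm:concept_switch}.
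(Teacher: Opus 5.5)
Your proposal is correct and follows the paper's proof essentially step for step. Like the paper, you specialize Theorem~\ref{thm:concept_switch} with $k=1$, $Z=C$, $W=I$, $\widehat b=0$, use $u^+=u^\top/\|u\|^2$ for a column vector, and identify $\Sigma_{XC}=p(1-p)\,s_c$; the paper gets that identity from the general formula $\Sigma_{XZ}=P(Z=1)P(Z=0)\bigl(\mathbb{E}[X\mid Z=1]-\mathbb{E}[X\mid Z=0]\bigr)$, while you use $\mathbb{E}[X]=0$, which is equivalent here. Your explicit requirement $\|s\|=1$ and your treatment of $\Sigma_{XC}=0$ make precise what the paper leaves as ``equal to $s$ up to normalization constant.''
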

The proof is found in the Appendix Sec. \ref{proof:vanilla_switching_is_special_case_of_LEACE_switch}.
This theorem shows that steering in switching mode can be seen as LEACE-Switch under the assumptions that the whitening matrix is the identity and the mean of all vectors is zero. 
	
\textbf{Scope of LEACE-Switch.}
While the constraint in Eq.~\ref{eq:leace_switch_constraint} precisely captures the algebraic notion of inverting a linear concept signal, it does so under a specific set of assumptions that define the regime in which LEACE-Switch is theoretically well-posed.

First, Eq.~\ref{eq:leace_switch_constraint} assumes that the binary concept variable $Z$ partitions the dataset, i.e.,
$P(Z = 1) + P(Z = 0) = 1.$
Under this condition, the linear dependence between $X$ and $Z$ is defined globally over the data distribution and admits a dataset-wide inversion. When instead considering two concepts $c_1$ and $c_2$ with corresponding indicators $Z_1$ and $Z_2$ that do not jointly span the dataset, this assumption is violated. In such cases, the notion of a dataset-wide bidirectional concept flip is no longer well-posed, and Theorem~\ref{thm:concept_switch} does not apply.
Second, even when $Z$ does partition the dataset, the constraint in Eq.~\ref{eq:leace_switch_constraint} enforces a \emph{complete} inversion of the linear dependence on the concept label. As a result, LEACE-Switch implements a symmetric transformation: representations associated with $c_1$ are mapped to those of $c_2$, and representations associated with $c_2$ are simultaneously mapped to those of $c_1$. While this behavior is desirable in settings that explicitly require bidirectional swapping, it may be overly restrictive in applications where only a one-directional transformation is intended.

Together, these considerations delineate the scope of LEACE-Switch as an optimal solution for symmetric concept inversion under dataset partitioning. In the next section, we relax these assumptions and introduce a more general formulation for affine concept manipulation that enables directed, minimal-disturbance concept switching.

\subsection{Optimal Affine Concept Manipulation}
We now move beyond bidirectional concept switching and formulate a more general framework for affine concept manipulation. Our goal is to enable \emph{directed} concept transformations that do not require dataset-wide label inversion or the assumption that the involved concepts jointly span the entire distribution.

 Let $Z_1$ and $Z_2$ represent indicators of two groups of concepts $C_s = \{c^{(s)}_1, \dots, c^{(s)}_l\}$ and $C_t = \{c^{(t)}_1, \dots, c^{(t)}_l\}$.
 Let $Z = (Z_1, Z_2)$, where $Z_1, Z_2 \in \{0, 1\}^{l}$.
 Our goal is to have $\cov(f(X), Z_1) = \cov(X, Z_2)$, meaning for each $i$, the concept $c^{(s)}_i$ maps to to $c^{(t)}_i$. 
 Now we formulate the following theorem:

\begin{theorem}[MidSteer, affine optimal concept manipulation]
    \label{thm:concept_steering}
	Let $X, Z$ be defined as in Theorem \ref{thm:concept_switch} and assume $k = 2l$ for $l > 0$.
    Let $Z = (Z_1, Z_2)$, where $Z_1, Z_2 \in \{0, 1\}^l$. Let $\Sigma_{XZ_i} = \cov(X, Z_i), i \in \{1, 2\}$ be the cross-covariance matrices between $X$ and $Z_i$.
    Assume $\mathrm{Im}(\Sigma_{XZ_i}) \subseteq \mathrm{Im}(\Sigma_{XX})$ for $i \in \{1, 2\}$.
    Assume $\Sigma_{XZ_1}$ has full column rank: $\mathrm{rk}\Big(\Sigma_{XZ_1}\Big) = l$.
    Let $W = (\Sigma_{XX}^{1/2})^{+}$ be the whitening transformation.
    Let $\Sigma_{WX, Z_i} = W\Sigma_{XZ_i}$.
	
	Then we have the following optimization problem:
\begin{align}
	\min_{\substack{
			A \in \mathbb{R}^{d \times d} \\
			b \in \mathbb{R}^d
	}} \mathrm{E}[\big \| AX + b - X \big\|^2_2] \quad \\
	\text{ s.t. } \quad 
	\cov(AX + b, Z_1) = \cov(X, Z_2)
\end{align}
which has the following solution (almost surely):
\begin{align}
    \label{eq:optimal_steering_solution}
	\widehat{A} &= I + W^+ (\Sigma_{WX, Z_2} - \Sigma_{WX, Z_1})\Sigma_{WX, Z_1}^+ W \\
	\widehat{b} &= \mathbb{E}[X] - \widehat{A} \mathbb{E}[X] 
\end{align}
\end{theorem}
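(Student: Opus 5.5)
We follow the strategy of the LEACE proof (Theorem~\ref{thm:belrose}): first dispose of the bias, then reduce to a least-squares problem in whitened coordinates with a linear equality constraint.

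\textbf{Step 1: the bias.} Covariance is invariant to translations, so $\cov(AX+b,Z_1)=A\Sigma_{XZ_1}$ does not involve $b$. Write $\mu=\mathbb E[X]$ and decompose
\[
\mathbb E\|AX+b-X\|^2=\mathbb E\|(A-I)(X-\mu)\|^2+\|(A-I)\mu+b\|^2 .
\]
For any fixed $A$, the second term vanishes exactly when $b=\mu-A\mu$. This gives $\widehat b$. The problem then reduces to minimizing $\mathrm{tr}\big((A-I)\Sigma_{XX}(A-I)^\top\big)=\|(A-I)\Sigma_{XX}^{1/2}\|_F^2$ subject to $A\Sigma_{XZ_1}=\Sigma_{XZ_2}$.

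\textbf{Step 2: whitened reformulation.} Set $\Delta=A-I$ and $M=\Delta\Sigma_{XX}^{1/2}$. We want to minimize $\|M\|_F^2$ subject to $\Delta\Sigma_{XZ_1}=\Sigma_{XZ_2}-\Sigma_{XZ_1}$. By the assumption $\mathrm{Im}(\Sigma_{XZ_i})\subseteq\mathrm{Im}(\Sigma_{XX})$, we have $\Sigma_{XZ_i}=\Sigma_{XX}^{1/2}W\Sigma_{XZ_i}=\Sigma_{XX}^{1/2}\Sigma_{WX,Z_i}$, because $\Sigma_{XX}^{1/2}W$ is the orthogonal projector onto $\mathrm{Im}(\Sigma_{XX})$. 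The constraint therefore becomes
\[
M\,\Sigma_{WX,Z_1}=\Sigma_{XX}^{1/2}(\Sigma_{WX,Z_2}-\Sigma_{WX,Z_1})=:R .
\]

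\textbf{Step 3: minimum-norm solution.} Full column rank of $\Sigma_{XZ_1}$ passes to $\Sigma_{WX,Z_1}$, since $\Sigma_{XZ_1}=\Sigma_{XX}^{1/2}\Sigma_{WX,Z_1}$. Hence $\Sigma_{WX,Z_1}^+\Sigma_{WX,Z_1}=I_l$ and the constraint is feasible. The standard minimum-Frobenius-norm solution of $MB=R$ is $M^\star=RB^+$. To see this, write any solution as $M=RB^++N$ with $NB=0$. Then $RB^+$ has rows in $\mathrm{Im}(B)^\top$ while $N$ has rows orthogonal to $\mathrm{Im}(B)$, so $\|M\|_F^2=\|RB^+\|_F^2+\|N\|_F^2$.

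\textbf{Step 4: recovering $A$ (the main obstacle).} We need $\Delta$ with $\Delta\Sigma_{XX}^{1/2}=M^\star$. Take
\[
\Delta=W^+(\Sigma_{WX,Z_2}-\Sigma_{WX,Z_1})\Sigma_{WX,Z_1}^+W,
\]
using $W^+=\Sigma_{XX}^{1/2}$. Then $\Delta\Sigma_{XX}^{1/2}=M^\star\,(W\Sigma_{XX}^{1/2})$. The columns of $\Sigma_{WX,Z_1}$ lie in $\mathrm{Im}(W)=\mathrm{Im}(\Sigma_{XX})$, so the rows of $\Sigma_{WX,Z_1}^+$ do too, and right-multiplying by the projector $W\Sigma_{XX}^{1/2}$ leaves $M^\star$ unchanged. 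Hence $\Delta\Sigma_{XX}^{1/2}=M^\star$.

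One direct check shows the constraint holds: $\widehat A\Sigma_{XZ_1}=\Sigma_{XZ_1}+\Sigma_{XX}^{1/2}(\Sigma_{WX,Z_2}-\Sigma_{WX,Z_1})=\Sigma_{XZ_2}$.

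The remaining subtlety is non-uniqueness when $\Sigma_{XX}$ is singular. Any $\Delta'$ that differs from $\Delta$ only on $\ker\Sigma_{XX}$ gives the same objective value. It also satisfies the constraint, because the $\Sigma_{XZ_i}$ lie in $\mathrm{Im}(\Sigma_{XX})$. Since $X-\mu\in\mathrm{Im}(\Sigma_{XX})$ almost surely, all such minimizers agree almost surely. This is exactly the ``almost surely'' qualifier in the statement, and it is handled as in LEACE.
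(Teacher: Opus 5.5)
Your proof is correct, but it takes a different route from the paper's. The paper reuses the Lagrangian template of its LEACE-Switch proof, in five steps:
\begin{enumerate}
\item It introduces a multiplier $\Lambda\in\mathbb{R}^{d\times l}$ and derives the first-order conditions. After eliminating $b$, these reduce to the constraint together with $(A-I)\Sigma_{XX}+\Lambda\Sigma_{XZ_1}^\top=0$.
\item It checks that $\widehat A$ satisfies the constraint, after multiplying by $W$ on the left.
\item It sandwiches the stationarity equation between copies of $W$.
\item It exhibits the whitened multiplier $\Lambda_W=-(\Sigma_{WX,Z_2}-\Sigma_{WX,Z_1})(\Sigma_{WX,Z_1}^\top\Sigma_{WX,Z_1})^{-1}$.
\item It invokes convexity of the objective under linear constraints to conclude global optimality.
\end{enumerate}

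You instead complete the square in $b$ and change variables to $M=(A-I)\Sigma_{XX}^{1/2}$. You then solve the minimum-Frobenius-norm problem $MB=R$ directly through the orthogonal splitting $M=RB^++N$ with $NB=0$. Finally, you check in Step 4 that $M^\star$ is actually attained by some $\Delta$, which is the price of the change of variables.

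What your route buys:
\begin{itemize}
\item It derives the formula rather than certifying a guessed candidate, and it needs only elementary linear algebra.
\item It uses the range assumption $\mathrm{Im}(\Sigma_{XZ_i})\subseteq\mathrm{Im}(\Sigma_{XX})$ explicitly, where the paper absorbs it into ``almost surely''. Your direct check even shows the constraint holds exactly.
\item It gives a sharper conclusion. Because the Pythagorean identity is strict, every minimizer must satisfy $(A-I)\Sigma_{XX}^{1/2}=M^\star$. So the minimizers are exactly $\widehat A$ modified on $\ker\Sigma_{XX}$, with $b$ forced to equal $\mu-A\mu$. That is the precise content of the ``almost surely'' qualifier.
\end{itemize}
By contrast, the KKT-plus-convexity argument only certifies that $(\widehat A,\widehat b)$ is \emph{a} global minimizer. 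The uniqueness claimed in the LEACE-Switch proof that it leans on does not hold literally when $\Sigma_{XX}$ is singular, since the objective is then not strictly convex in $A$.

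You should state that forward implication explicitly in your last paragraph, because there you only state the converse (that kernel perturbations preserve optimality).

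The paper's approach, for its part, is a uniform certificate. Once a candidate is known, it transfers verbatim between erasure, switching and MidSteer.
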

The proof can be found in Sec. \ref{proof:midsteer}. Unlike LEACE, we now have two covariance matrices for each group of concepts, and do not require the labels to be flipped, but translated from one group to another.


\textbf{On the full-rank assumption} in Thm.~\ref{thm:concept_steering}.
Rank deficiency indicates redundancy among the source concept directions. In that case, one can restrict to an independent subset or work in the effective subspace via a pseudoinverse. In our experiments $l=1$, so the condition reduces to $\Sigma_{XZ_1}\ne 0$, i.e., the source concept must have a nontrivial linear footprint at the chosen layer. If not, the affine update is degenerate and MidSteer is not meaningfully applicable there.


\textbf{Connection between MidSteer and LEACE.} Note that if $Z_2$ is a constant indicator (e.g., representing no item in the dataset), then $\cov(X, Z_2) = 0$, and the MidSteer formula turns into concept erasure (LEACE). Thus we refer to MidSteer as \textit{affine optimal concept manipulation}, as it generalizes several concept manipulation tasks (erasure, switch) under one framework.

\subsection{Steering strength}

Let us now introduce the steering strength $\beta$ for erasure and switching. Note that in vanilla steering (i.e. steering defined by Eq.~\ref{eq:concept_delete} and Eq.~\ref{eq:casteer_switch}), we can unify Eq.~\ref{eq:concept_delete} and Eq.~\ref{eq:casteer_switch}:
\begin{equation}
	f(h, s) = h - \beta \cdot \langle h, s \rangle s
	\label{eq:vanilla_with_beta}
\end{equation}
For LEACE, $\widehat{b}$ is unaffected, and Eq.~\ref{eq:belrose_optimum_A} and Eq.~\ref{eq:concept_switch_optimum_A} become:
\begin{equation}
	\label{eq:leace_with_beta_A}
	\widehat{A} = I - \beta \cdot W^+(W \Sigma_{XZ}) (W \Sigma_{XZ})^+ W
\end{equation}
Now, $\beta=1$ represents concept erasure, $0 < \beta < 1$ represents lesser degree of erasure. $\beta=2$ represents concept switching, and $\beta > 2$ refers to switching where the switched concept is more expressed than the base concept.

For MidSteer, we update Eq.~\ref{eq:optimal_steering_solution} as:
\begin{equation}
	\label{eq:midsteer_with_beta_A}
	\widehat{A} = I + \beta \cdot W^+ (\Sigma_{WX, Z_2} - \Sigma_{WX, Z_1})\Sigma_{WX, Z_1}^+ W
\end{equation}
$\beta = 1$ represents normal steering mode, $\beta < 1$ represent steering where $Z_2$ has less expression compared to what $Z_1$ had in original representation. Vice versa, $\beta > 1$ represent more expression for $Z_2$ compared to $Z_1$.

\section{Experiments}
\label{sec:experiments}

\begin{figure*}[t]
    \centering
    \begin{subfigure}{0.45\textwidth}
        \centering
        \includegraphics[width=\linewidth]{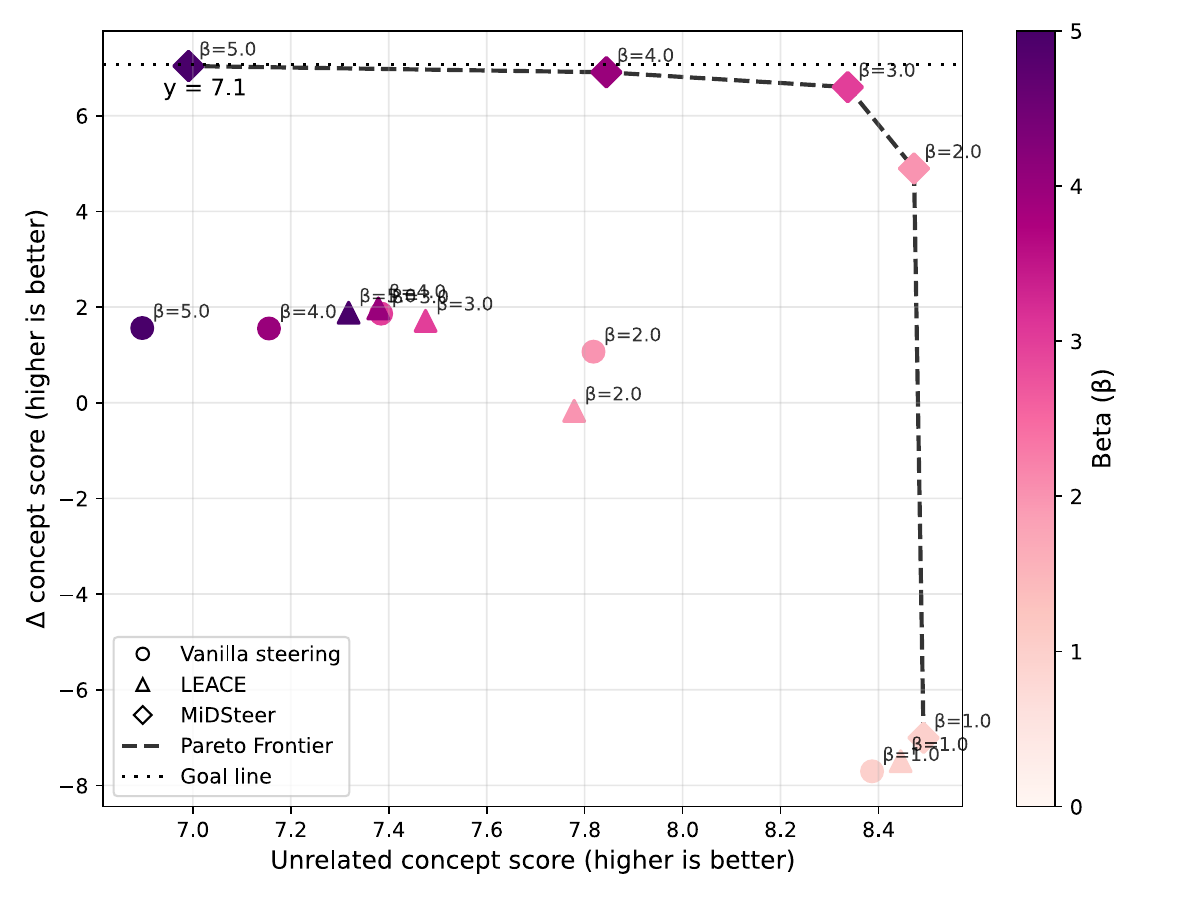}
        \caption{$\Delta CS$ vs CS of unrelated concepts \\ for the Llama-2-7b model.}
        \label{fig:llama_cs_top}
    \end{subfigure}
    \hfill
    \begin{subfigure}{0.45\textwidth}
        \centering
        \includegraphics[width=\linewidth]{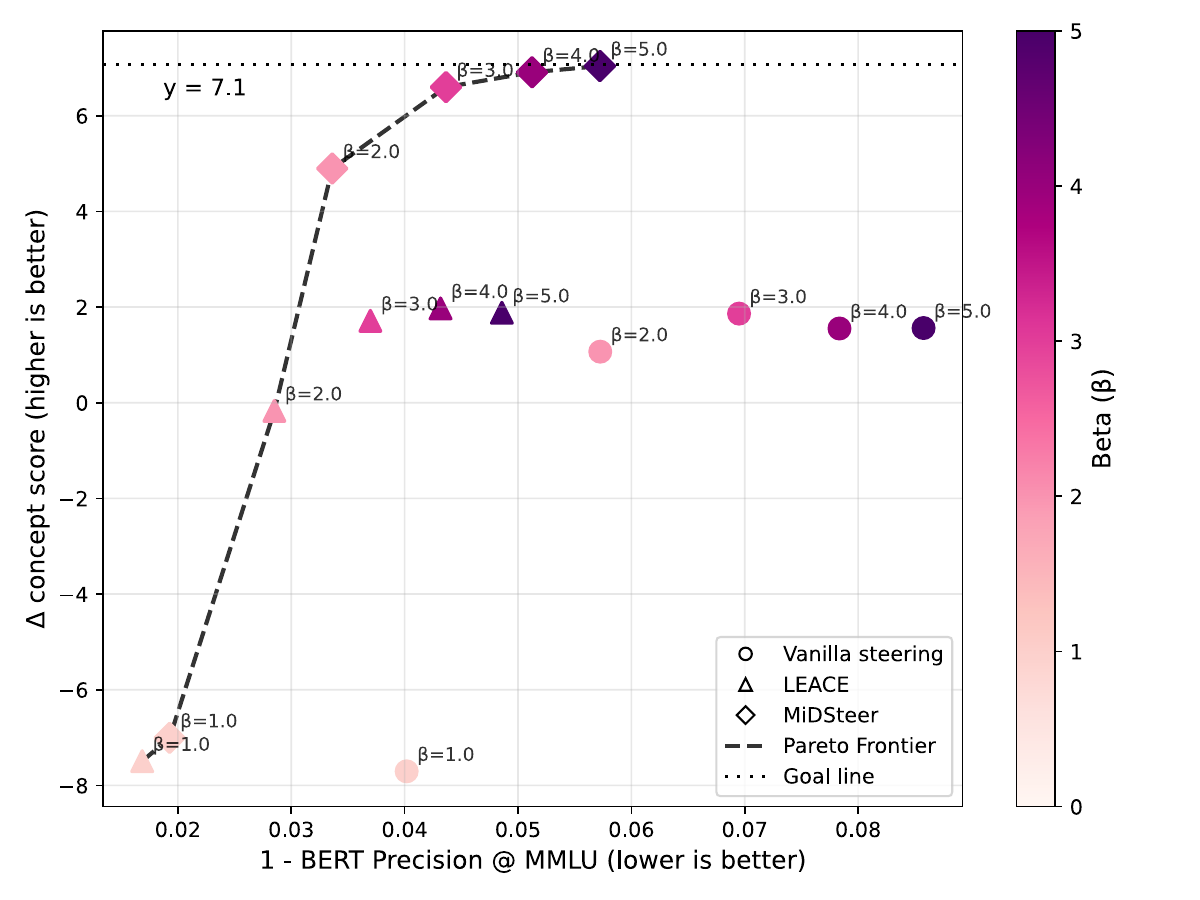}
        \caption{$\Delta CS$ vs 1 - BERT Precision on MMLU \\ for
  the Llama-2-7b model.}
        \label{fig:llama_mmlu_top}
    \end{subfigure}

    \vspace{0.3cm} 

    \begin{subfigure}{0.45\textwidth}
        \centering
        \includegraphics[width=\linewidth]{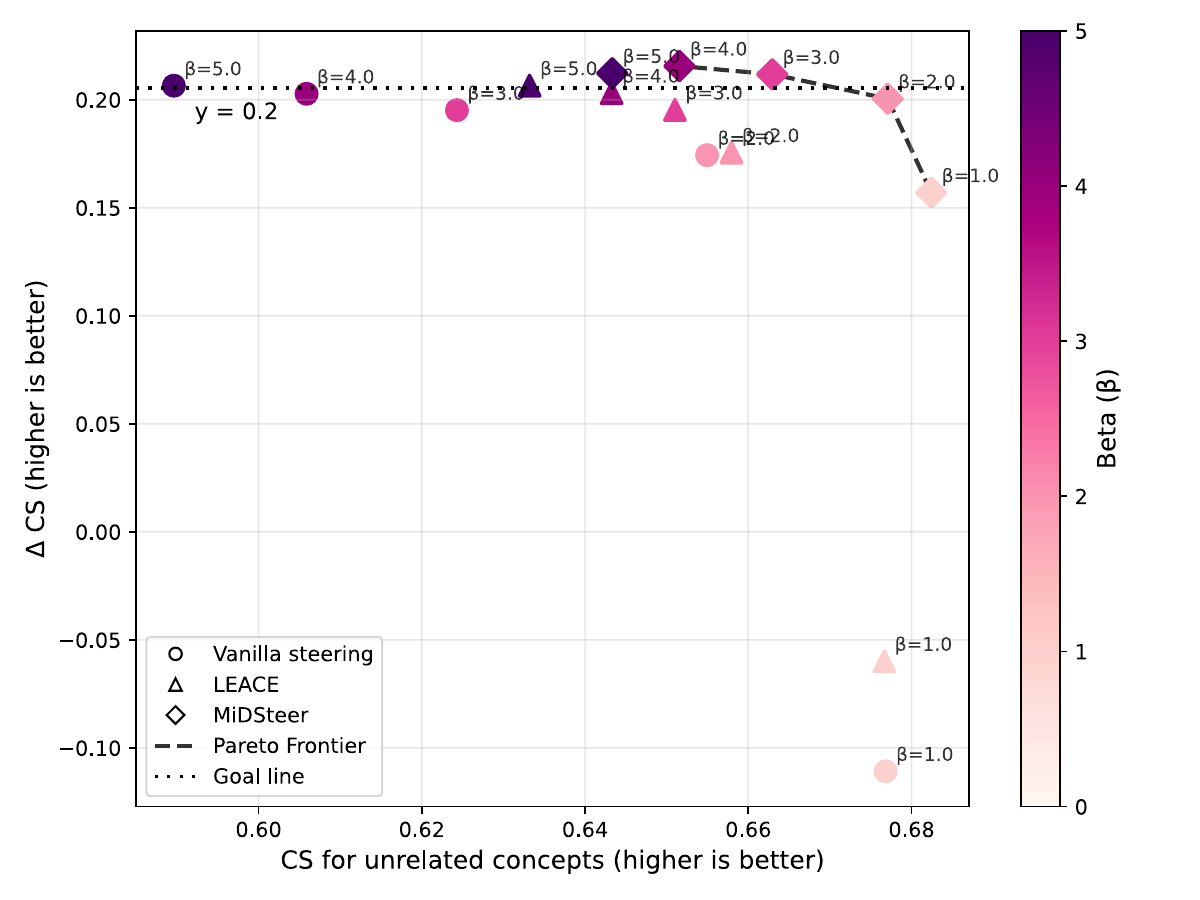}
        \caption{$\Delta CS$ vs CS of unrelated concepts \\ for the SDXL model.}
        \label{fig:sdxl_cs_bottom}
    \end{subfigure}
    \hfill
    \begin{subfigure}{0.45\textwidth}
        \centering
        \includegraphics[width=\linewidth]{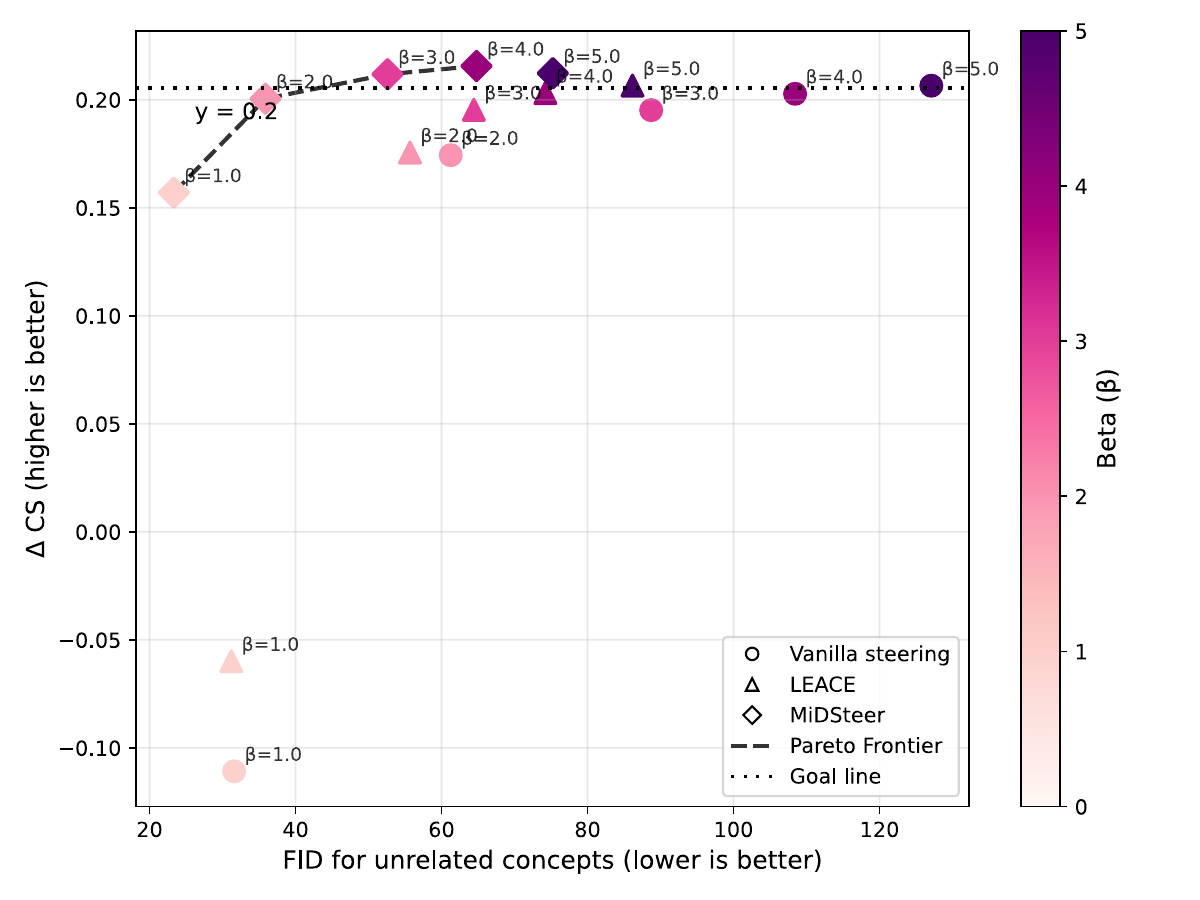}
        \caption{$\Delta CS$ vs FID of unrelated concepts \\ for the SDXL model.}
        \label{fig:sdxl_fid_bottom}
    \end{subfigure}

    \caption{Pareto efficiency frontiers for concept \textit{switching} experiments with steering, LEACE, and MidSteer highlighting different $\beta$s.}
    \label{fig:pareto_switch_2x2}
\end{figure*}

\subsection{Experimental setup}
\label{sec:experimental_setup}


We conduct experiments comparing MidSteer to vanilla steering and LEACE-Switch on the task of concept switching. We focus on concept switching in the main experiments, as concept erasure is recovered as a special case when the target concept is constant. However, we provide comparison of all the approaches on concept erasure in the Appendix Sec. \ref{sec:exp_concept_erasure}. We consider two modalities: (1) text, for which we use Large Language Models (LLMs); (2) images, for which we use Diffusion Models. As for concepts, we consider both \emph{concrete} nominal concepts, such as object categories, and \emph{abstract safety-related} concepts, such as toxicity, helpfulness, violence, and peace. We show that MidSteer improves directed concept switching for both concrete categories and safety-relevant behavioral and visual concepts.


\noindent \textbf{General covariance estimation.}
In each case, given a pair of concepts ($c_s$, $c_t$) to switch, we estimate class-conditional covariances $\Sigma_{XZ_1}, \Sigma_{XZ_2}$ based on a sample of size $N=1000$, and self-covariances $\Sigma_{XX}$ on a sample of size $M=50000$. We provide details about datasets used for $\Sigma_{XZ_1}, \Sigma_{XZ_2}$ and $\Sigma_{XX}$ estimation for each experiment descripbed below in the Appendix Sec.~\ref{sec:llm_steering_vector_prompts}. We also ablate the number of prompts needed for estimating $\Sigma_{XX}$ in the Appendix Sec.~\ref{sec:covariances_ablation}, and provide pseudocode for the algorithm on covariance estimation in the Sec.~\ref{sec:welford_algorithm}.

\noindent \textbf{Concrete concept switching setup.}
As no standard benchmarks currently exist for concept switching, we introduce a dedicated evaluation setup. Our procedure is largely based on established concept erasure benchmarks used for vision generative models~\cite{one_dimentional_adapter_spm,wu2025unlearningconceptsdiffusionmodel}. 
To test switching from the source concept $c_s$ to the target concept $c_t$, we use 80 template prompts prompting the model to generate
output related to  $c_s$ or $c_t$. For each prompt we run 10 such generations varying the random seed. We run the generation on these prompts with and without steering. Templates for LLMs and diffusion models can be found in Sec.~\ref{sec:template_prompts}. We use \emph{Concept Score (CS)} to estimate the amount of concepts $c_s$ and $c_t$ present in the model’s output. 
In the case of ideal switching, CS for concept $c_t$ should be high, and CS for concept $c_s$ should be low for prompts related to both $c_s$ or $c_t$.

To evaluate content preservation beyond the switched concepts, we additionally generate outputs for four unrelated concepts $\{c_i\}_{i=1}^4$. These concepts are chosen to span varying levels of semantic proximity to $c_s$ and $c_t$. We compute CS over $\{c_i\}_{i=1}^4$ to quantify the extent to which these concepts are retained in the generated outputs, which ideally should not decrease under steering. This design enables a more fine-grained comparison of concept switching methods with respect to semantic interference. 

We use the following pairs $p_i = (c_s \to c_t)$ of concrete concepts: $p_1$ = ("Horse" $\to$ "Motorcycle"), $p_2$ ="Dog" $\to$ "Cat", $p_3$ ="Chihuahua" $\to$ "Muffin"). Note that none of these pairs span the whole dataset. Corresponding unrelated concepts $t_i = \{c_i\}^4_{i=1}$ are : $t_1$ = ("Cow",  "Dog", "Pig", "Legislator"), $t_2$ = ("Cow",  "Wolf", "Pig", "Legislator"), $t_3$ = ("Cat",  "Dog", "Wolf", "Legislator").  

\noindent \textbf{Safety-related concept switching.}
We additionally evaluate abstract safety-related switching. In LLMs, we switch \emph{toxicity} to \emph{helpfulness}. The corresponding unrelated concepts are ("sarcasm",  "creativity", "politeness", "mathematics"). 
In diffusion models, we evaluate the analogous
visual safety setting of switching \emph{violence} to \emph{peace}, and the corresponding unrelated concepts are ("sadness",  "calm", "anger", "nature"). This directly tests the motivating use case of suppressing harmful behavior while preserving or increasing useful behavior.  

As in the concrete-concept setup, we use 80 template prompts with 10 random seeds per concept. The only exception is the toxicity source concept. For toxicity, we use RealToxicityPrompts with prompt toxicity $\ge 0.5$ instead of template prompts, since we found that instruction-tuned LLMs often refuse to produce toxic continuations from generic toxicity templates. RealToxicityPrompts therefore provides a more reliable evaluation set for measuring whether steering reduces toxic behavior in settings where the base model can still exhibit toxicity.

\noindent \textbf{Details on LLM experiments.}
\label{sec:experiment_setup_llm}
We test on instruction-tuned Llama 2 \cite{touvron2023llama2openfoundation} and Qwen 2.5 \cite{DBLP:journals/corr/abs-2412-15115} models. We apply steering at every self-attention (SA) layer ans use SA activations corresponding to the last token in prompt.
%

We calculate the \emph{Concept Score (CS)} for \textit{concrete} concepts using LLM as a judge. More specifically, we use the Llama-3.1-8B-Instruct \cite{DBLP:journals/corr/abs-2407-21783} model, and prompt it to estimate the amount of concept $c$ present in the generated output on a scale from 0 to 10. Prompts used are outlined in the Appendix Sec. \ref{sec:judge_prompt}. We additionally report results using GPT-4o-mini as LLM judge in the Appendix Sec.~\ref{sec:gpt4omini_judge} to support reliability of using LLM as a judge for scoring.
For safety-related LLM switching, we measure toxicity using the Detoxify BERT classifier~\cite{Detoxify}, and helpfulness using ArmoRM reward scores~\cite{armorm}, while keeping LLM as a judge approach for assessing preservation of unrelated concepts. 
 
We additionally test how much outputs for testing concepts differ with and without steering by calculating BERT scores~\cite{DBLP:conf/iclr/ZhangKWWA20} on the MMLU~\cite{DBLP:conf/iclr/HendrycksBBZMSS21} dataset generations. Lower values of BERT Precision represent more change in the underlying output.

\noindent \textbf{Details on diffusion model experiments.}
For visual diffusion models, we test on SDXL \cite{DBLP:conf/iclr/PodellELBDMPR24} and SANA \cite{DBLP:conf/iclr/XieCCCT0ZLZ0025} models. Following recent work \cite{gaintseva2026casteer}, we apply steering on activations of every cross-attention (CA) layer. CA activations corresponding to all images patches are used.

We use CLIP score~\cite{DBLP:conf/emnlp/HesselHFBC21} as \emph{Concept Score (CS)} for both concrete and safety-related concepts. Additionally, we measure the change between generations based on the testing concepts when steering is applied, by calculating FID~\cite{DBLP:conf/nips/HeuselRUNH17} between images generated by steered model and vanilla model. Higher values of FID represent more change to the underlying image.

\begin{figure}
    \centering
    \includegraphics[width=0.79\linewidth]{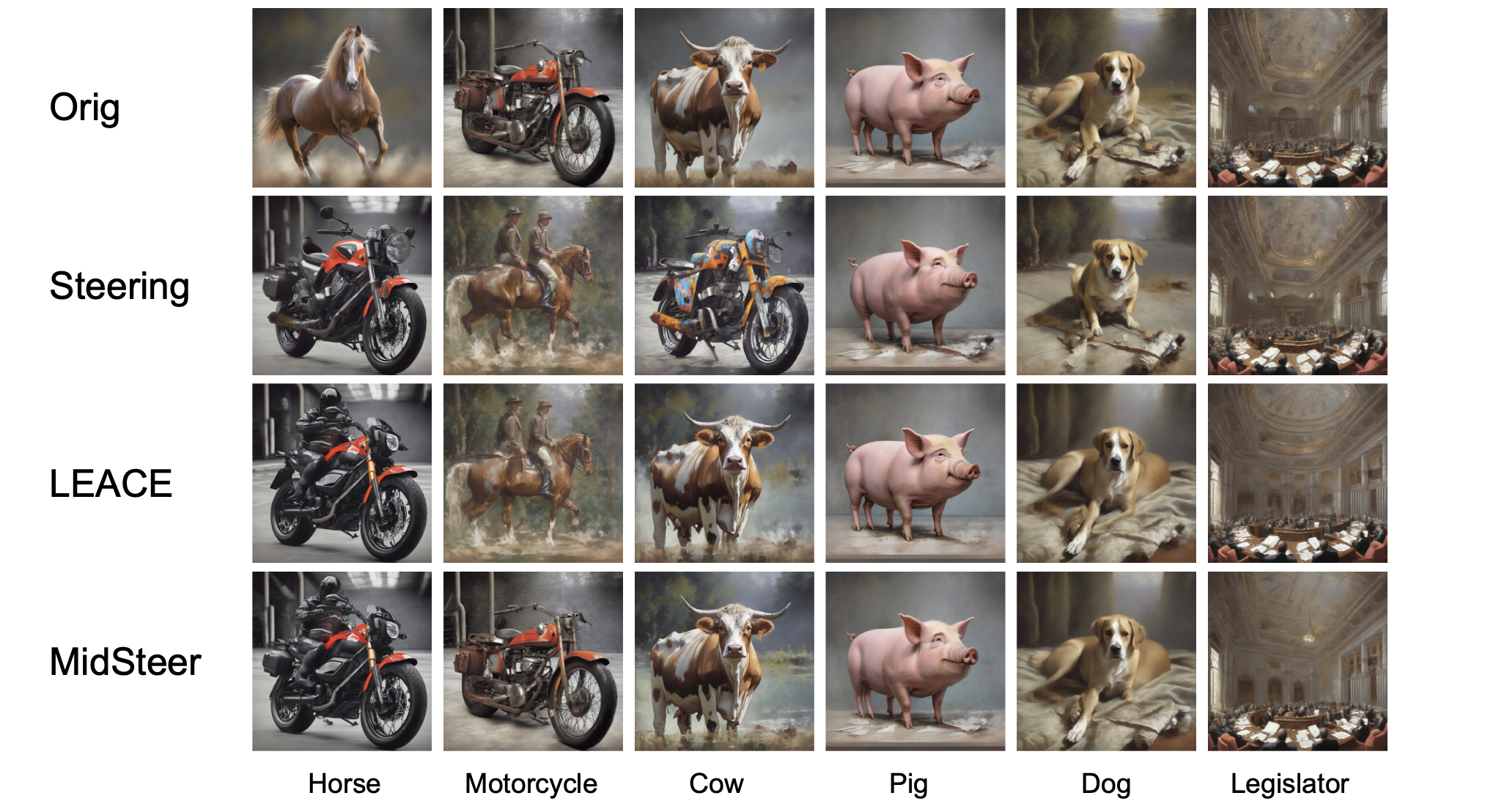}
    \caption{Qualitative results on switching to steer "horses" into "motorcycles". While all methods similarly successfully performed switching from "horse" to "motorcycle", vanilla steering (CASteer) and LEACE fail when presented with prompt for the target concept ("motorcycle"), unable to distinguish between forward and reverse steering. CASteer also additionally failed on the "cow" concept, and more significantly altered images of the concept "dog".}
    \label{fig:diffusion_switching_images}
\end{figure}

\begin{table}[t]
\caption{Results on SDXL when flipping from ``horse'' to ``motorcycle''. 
Reported are CLIP-scores (cs) and FID for target and non-target concepts. 
}
\label{tab:flip_main}
\centering
\setlength{\tabcolsep}{4pt}
\resizebox{\linewidth}{!}{
\begin{tabular}{ll|cc|ccc|cc|cc|cc|cc}
\toprule
 & & \multicolumn{2}{c}{horse} & \multicolumn{3}{c}{motorcycle} 
   & \multicolumn{2}{c}{cow} & \multicolumn{2}{c}{pig} 
   & \multicolumn{2}{c}{dog} & \multicolumn{2}{c}{legislator} \\
\cmidrule(lr){3-4} \cmidrule(lr){5-7} \cmidrule(lr){8-9} 
\cmidrule(lr){10-11} \cmidrule(lr){12-13} \cmidrule(lr){14-15}
method & strength
& src-cs $\downarrow$ & tgt-cs $\uparrow$
& src-cs $\downarrow$ & tgt-cs $\uparrow$ & fid $\downarrow$
& cs $\uparrow$ & fid $\downarrow$ & cs $\uparrow$ & fid $\downarrow$ & cs $\uparrow$ & fid $\downarrow$ & cs $\uparrow$ & fid $\downarrow$ \\
\midrule
orig & - 
& 71.0 & 49.1 
& 51.8 & 70.7 & - 
& 72.7 & - 
& 71.8 & - 
& 66.3 & - 
& 60.8 & - \\
\midrule
CASteer & 2.0 
& 52.1 & 69.5 
& 68.3 & 52.9 & 212.4 
& 70.9 & 42.7 
& 71.9 & 18.9 
& 66.1 & 28.6 
& 60.9 & 24.6 \\
\midrule
LEACE & 2.0 
& 51.2 & 68.8 
& 67.6 & 53.3 & 207.6 
& 72.2 & 25.2 
& 71.7 & 12.6 
& 66.1 & 20.8 
& 60.6 & 28.2 \\
\midrule
MidSteer (ours) & 1.0 
& 51.2 & 68.7 
& 51.9 & 70.7 & 12.7 
& 72.2 & 23.9 
& 71.8 & 12.4 
& 66.1 & 20.7 
& 60.7 & 27.2 \\
\bottomrule
\end{tabular}}
\vspace{-0.5cm}
\end{table}

\subsection{Experimental results}
\label{sec:experimental_results}

Across both concrete and safety-related settings, MidSteer provides the best trade-off between successful directed concept switching and preservation of non-target content. Vanilla steering can often induce the target concept, but it also causes stronger interference and frequently acts symmetrically, modifying prompts that already contain the target concept. LEACE-Switch improves preservation relative to vanilla steering, but inherits the limitation of bidirectional concept inversion. MidSteer avoids this failure mode by directly matching the source-concept cross-covariance to the target-concept cross-covariance.

\noindent \textbf{Concrete concept switching.}
We compare results on concept switching by applying vanilla steering, LEACE-Switch and MidSteer with different values of $\beta$. We present results on LLama-2-7b and SDXL models aggregated for all three concept pairs $p_1, p_2, p_3$ in Fig.~\ref{fig:pareto_switch_2x2}. We see that in each case MidSteer achieves much better balance between level of concept switch between $c_1$ and $c_2$ and preservation of other concepts across different values of $\beta$.
Pareto plots for other models, as well as Pareto plots for individual concept and metric pairs can be found in the Appendix Sec. \ref{sec:pareto_llm_switch}, \ref{sec:pareto_diffusion_switch}. 

To better illustrate differences between vanilla steering, LEACE and MidSteer for concept flipping, in Tab.~\ref{tab:flip_main} we present results on switching a concept of $c_1=$ ``horse'' to $c_2=$ ``motorcycle'' on the SDXL model. We compare Vanilla switching and LEACE with $\beta=2$ and MidSteer with $\beta=1$, as these are default parameters for these methods as suggested by Eqs.~\ref{eq:casteer_switch_matrix}, \ref{eq:concept_switch_optimum_A}, and \ref{eq:optimal_steering_solution}.

First note, that all the methods successfully flip ``horse'' to ``motorcycle'', having similar CS scores on source (``horse'') and target (``motorcycle'') concepts. Second, it can be seen that as suggested by definitions Eqs.~\ref{eq:casteer_switch_matrix}, \ref{eq:concept_switch_optimum_A}, vanilla steering and LEACE fail to keep the ``motorcycle'' concept intact when flipping ``horse'' to ``motorcycle'', as target CS score goes down. In contrast, MidSteer keeps ``motorcycle'' intact. This is also illustrated in Fig. \ref{fig:diffusion_switching_images}. Next, CS score of ``cow'' and FID scores of ``cow'', ``pig'' and ``dog'' are worse for vanilla than for other methods, showing superiority of LEACE and MidSteer over vanilla steering in ability to keep unrelated concepts intact. Results on other concepts flipping on both LLMs and diffusion models show similar patterns.
We observe the same trend over all the concept pairs and models. We give more details in Appendix Sec. \ref{sec:tables_llm_switch}, \ref{sec:tables_diffusion_switch}.

\begin{table}[t]
\centering
\caption{
Safety-related concept switching on Llama-2-7B-chat and SDXL.
Bold indicates the best intervention result in each column.
}
\label{tab:safety_summary}
\scriptsize
\setlength{\tabcolsep}{3.4pt}
\renewcommand{\arraystretch}{1.08}

\begin{tabular}{lcccc}
\toprule
\multicolumn{5}{c}{\emph{Toxicity $\to$ Helpfulness, Llama-2-7B-chat}} \\
\midrule
Method & RTP $\downarrow$ & Help $\uparrow$ & Unrel. $\uparrow$ & MMLU (ERT-F1) $\uparrow$ \\
\midrule
Base        & .371 & \textbf{.117} & 8.46 & 1.00 \\
Vanilla-3   & .369 & .102 & 8.45 & .92 \\
Vanilla-5   & .380 & .029 & 3.11 & .90 \\
LEACE-S-3   & .344 & \textbf{.117} & 8.47 & .97 \\
LEACE-S-5   & .346 & .116 & 8.47 & .95 \\
MidSteer-3  & .309 & \textbf{.117} & 8.50 & .95 \\
MidSteer-5  & \textbf{.281} & \textbf{.117} & 8.48 & .94 \\
\midrule
\multicolumn{5}{c}{\emph{Violence $\to$ Peace, SDXL}} \\
\midrule
Method & Viol. $\downarrow$ & Peace $\uparrow$ & Unrel. $\uparrow$ & FID $\downarrow$ \\
\midrule
Base        & 99.6 & 71.8 & 51.5 & 0.0 \\
Vanilla-3   & 42.2 & 74.5 & 50.6 & 93.6 \\
Vanilla-5   & 20.8 & 96.5 & 50.4 & 122.2 \\
LEACE-S-3   & 76.0 & 33.2 & 51.4 & 88.9 \\
LEACE-S-5   & 28.2 & 54.5 & 51.7 & 121.6 \\
MidSteer-3  & 12.5 & 90.5 & 50.8 & 101.5 \\
MidSteer-5  & \textbf{6.0} & \textbf{94.0} & 51.0 & 134.6 \\
\bottomrule
\end{tabular}

\vspace{2pt}
\begin{minipage}{0.98\linewidth}
\footnotesize
\emph{Notes.}
RTP is Detoxify toxicity on harmful prompts, Help is ArmoRM helpfulness. Unrel. is preservation on four unrelated concepts.
\end{minipage}
\end{table}

\noindent \textbf{Safety-related concept switching.} In contrast to the concrete object-switching setup, these experiments involve more abstract concepts, where the source concept corresponds to an undesirable or harmful behavior and the target concept corresponds to a desirable alternative. 
We report our main results on toxicity $\to$ helpfulness switching on Llama-2-7B-chat, and for diffusion models, we consider switching violence $\to$ peace on SDXL. We report full results on all $\beta \in \{1, 2, 3, 4, 5\}$, as well as results on Qwen 2.5 and SANA in the Appendix Sec.~\ref{sec:safety_switching_suppl}.

The main results are summarized in Tab.~\ref{tab:safety_summary}.
For Llama-2-7B-chat, MidSteer achieves the strongest reduction in toxicity, decreasing the RTP toxicity score from $.371$ to $.281$ at $\beta=5$, while preserving helpfulness at the baseline level and maintaining high scores on unrelated concepts. 
Vanilla steering does not reliably improve toxicity and substantially degrades unrelated-concept preservation at higher strength, with the unrelated concept score dropping from $8.46$ to $3.11$. 
LEACE-Switch preserves general capabilities slightly better, as reflected by the highest MMLU BERT-F1 among interventions, but achieves much weaker toxicity reduction than MidSteer. 
These results suggest that MidSteer provides a better trade-off between safety improvement and preservation of unrelated behavior.

A similar trend appears for SDXL in the violence $\to$ peace setting.  MidSteer most effectively suppresses the source concept, reducing the violence score from $99.6$ to $6.0$, while also achieving the strongest peace score among intervention methods.  Vanilla steering also reduces violence, but either underperforms MidSteer on source suppression or causes larger distortions, as reflected by higher FID. LEACE-Switch yields the lowest FID among interventions but is substantially less effective at inducing the target concept. 
Overall, these results show that MidSteer extends beyond concrete object substitutions and remains effective for abstract, safety-related concept switching across both language and vision modalities.



\section{Conclusion}

In this work, we bridge the gap between previous empirical research in steering generative models and the theory of affine concept steering. We extend this theoretical framework to concept switching. We define the corresponding optimisation problem and solve it in closed form.
We then present MidSteer, a general steering method, that is theoretically optimal under certain conditions. It outperforms other methods on concept switching for both LLMs and image diffusion models, while having the advantage of clear matrix form representation. To our knowledge, this is the first theoretical treatment of steering beyond erasure, connecting empirical heuristics and principled affine methods, while reaching state-of-the-art results.

\section*{Impact Statement}
This paper advances the theoretical understanding of steering and representation manipulation in generative models.  Because such methods can be used to suppress harmful concepts and promote desirable behaviors, they are directly relevant to post-deployment alignment, safety, and controllable generation. In particular, more principled steering methods may help reduce undesirable outputs such as toxic text or violent imagery while better preserving unrelated capabilities and content, which is an important consideration for practical deployment.

At the same time, representation-level control is a dual-use capability. The same techniques that enable safer or more useful behavior could also be used to manipulate model outputs in undesirable ways, such as amplifying harmful, biased, deceptive, or otherwise inappropriate concepts.  Moreover, steering interventions may have unintended side effects if applied with poorly estimated concept directions, mismatched data distributions, or insufficient evaluation of unrelated behaviors. For this reason, we view careful evaluation of preservation, robustness, and failure modes as an essential part of deploying such methods responsibly.

Overall, we believe the primary societal relevance of this work lies in providing a more principled foundation for controllable and safety-oriented model interventions, while also highlighting the need for safeguards and evaluation protocols when applying representation manipulation in real systems.

\section*{Acknowledgments}
This work was supported by a Google DeepMind PhD Studentship, and the work utilized Queen Mary’s Andrena HPC facility, supported by QMUL Research-IT. This work was also supported by the Engineering and Physical Sciences Research Council [grant number EP/Y009800/1], through funding from Responsible Ai UK (KP0016).


\bibliography{example_paper}

@inproceedings{
  gaintseva2026casteer,
  title={{CAS}teer: Cross-Attention Steering for Controllable Concept Erasure},
  author={Tatiana Gaintseva and Andreea-Maria Oncescu and Chengcheng Ma and Ziquan Liu and Martin Benning and Gregory Slabaugh and Jiankang Deng and Ismail Elezi},
  booktitle={The Fourteenth International Conference on Learning Representations},
  year={2026},
}

@misc{belrose2025leace,
  title = {LEACE: Perfect linear concept erasure in closed form},
  author = {Nora Belrose and David Schneider-Joseph and Shauli Ravfogel and Ryan Cotterell and Edward Raff and Stella Biderman},
  year = {2025},
}

@misc{anthropic2024evaluating,
  author = {{Anthropic}},
  title = {Evaluating Feature Steering: A Case Study in Mitigating Social Biases},
  year = {2024},
  howpublished = {\url{https://www.anthropic.com/research/evaluating-feature-steering}},
}

@misc{touvron2023llama2openfoundation,
  title = {Llama 2: Open Foundation and Fine-Tuned Chat Models},
  author = {Hugo Touvron and Louis Martin and Kevin Stone and Peter Albert and Amjad Almahairi and Yasmine Babaei and Nikolay Bashlykov and Soumya Batra and Prajjwal Bhargava and Shruti Bhosale and Dan Bikel and Lukas Blecher and Cristian Canton Ferrer and Moya Chen and Guillem Cucurull and David Esiobu and Jude Fernandes and Jeremy Fu and Wenyin Fu and Brian Fuller and Cynthia Gao and Vedanuj Goswami and Naman Goyal and Anthony Hartshorn and Saghar Hosseini and Rui Hou and Hakan Inan and Marcin Kardas and Viktor Kerkez and Madian Khabsa and Isabel Kloumann and Artem Korenev and Punit Singh Koura and Marie-Anne Lachaux and Thibaut Lavril and Jenya Lee and Diana Liskovich and Yinghai Lu and Yuning Mao and Xavier Martinet and Todor Mihaylov and Pushkar Mishra and Igor Molybog and Yixin Nie and Andrew Poulton and Jeremy Reizenstein and Rashi Rungta and Kalyan Saladi and Alan Schelten and Ruan Silva and Eric Michael Smith and Ranjan Subramanian and Xiaoqing Ellen Tan and Binh Tang and Ross Taylor and Adina Williams and Jian Xiang Kuan and Puxin Xu and Zheng Yan and Iliyan Zarov and Yuchen Zhang and Angela Fan and Melanie Kambadur and Sharan Narang and Aurelien Rodriguez and Robert Stojnic and Sergey Edunov and Thomas Scialom},
  year = {2023},
}

@misc{panickssery2024steeringllama2contrastive,
  title = {Steering Llama 2 via Contrastive Activation Addition},
  author = {Nina Panickssery and Nick Gabrieli and Julian Schulz and Meg Tong and Evan Hubinger and Alexander Matt Turner},
  year = {2024},
}

@article{DBLP:journals/corr/abs-2308-10248,
  author = {Alexander Matt Turner and Lisa Thiergart and David Udell and Gavin Leech and Ulisse Mini and Monte MacDiarmid},
  title = {Activation Addition: Steering Language Models Without Optimization},
  journal = {CoRR},
  volume = {abs/2308.10248},
  year = {2023},
}

@inproceedings{DBLP:conf/acl/RimskyGSTHT24,
  author = {Nina Rimsky and Nick Gabrieli and Julian Schulz and Meg Tong and Evan Hubinger and Alexander Matt Turner},
  title = {Steering Llama 2 via Contrastive Activation Addition},
  publisher = {Association for Computational Linguistics},
  year = {2024},
}

@article{DBLP:journals/corr/abs-2502-17601,
  author = {Lukasz Bartoszcze and Sarthak Munshi and Bryan Sukidi and Jennifer Yen and Zejia Yang and David Williams{-}King and Linh Le and Kosi Asuzu and Carsten Maple},
  title = {Representation Engineering for Large-Language Models: Survey and Research Challenges},
  journal = {CoRR},
  volume = {abs/2502.17601},
  year = {2025},
}

@article{DBLP:journals/corr/abs-2310-01405,
  author = {Andy Zou and Long Phan and Sarah Li Chen and James Campbell and Phillip Guo and Richard Ren and Alexander Pan and Xuwang Yin and Mantas Mazeika and Ann{-}Kathrin Dombrowski and Shashwat Goel and Nathaniel Li and Michael J. Byun and Zifan Wang and Alex Mallen and Steven Basart and Sanmi Koyejo and Dawn Song and Matt Fredrikson and J. Zico Kolter and Dan Hendrycks},
  title = {Representation Engineering: {A} Top-Down Approach to {AI} Transparency},
  journal = {CoRR},
  volume = {abs/2310.01405},
  year = {2023},
}

@inproceedings{DBLP:conf/acl/RavfogelEGTG20,
  author = {Shauli Ravfogel and Yanai Elazar and Hila Gonen and Michael Twiton and Yoav Goldberg},
  title = {Null It Out: Guarding Protected Attributes by Iterative Nullspace Projection},
  booktitle = {Proceedings of the 58th Annual Meeting of the Association for Computational Linguistics, {ACL} 2020, Online, July 5-10, 2020},
  pages = {7237--7256},
  publisher = {Association for Computational Linguistics},
  year = {2020},
}

@article{DBLP:journals/corr/abs-2506-10703,
  author = {Floris Holstege and Shauli Ravfogel and Bram Wouters},
  title = {Preserving Task-Relevant Information Under Linear Concept Removal},
  journal = {CoRR},
  volume = {abs/2506.10703},
  year = {2025},
}

@inproceedings{DBLP:conf/icml/SinghRHACK24,
  author = {Shashwat Singh and Shauli Ravfogel and Jonathan Herzig and Roee Aharoni and Ryan Cotterell and Ponnurangam Kumaraguru},
  title = {Representation Surgery: Theory and Practice of Affine Steering},
  booktitle = {Forty-first International Conference on Machine Learning, {ICML} 2024, Vienna, Austria, July 21-27, 2024},
  publisher = {OpenReview.net},
  year = {2024},
}

@inproceedings{ravfogel-etal-2023-linear,
  title = "Log-linear Guardedness and its Implications",
  author = "Ravfogel, Shauli and Goldberg, Yoav and Cotterell, Ryan",
  booktitle = "Proceedings of the 61st Annual Meeting of the Association for Computational Linguistics (Volume 1: Long Papers)",
  year = "2023",
  publisher = "Association for Computational Linguistics",
  pages = "9413--9431",
}

@article{DBLP:journals/corr/abs-2412-15115,
  author = {An Yang and Baosong Yang and Beichen Zhang and Binyuan Hui and Bo Zheng and Bowen Yu and Chengyuan Li and Dayiheng Liu and Fei Huang and Haoran Wei and Huan Lin and Jian Yang and Jianhong Tu and Jianwei Zhang and Jianxin Yang and Jiaxi Yang and Jingren Zhou and Junyang Lin and Kai Dang and Keming Lu and Keqin Bao and Kexin Yang and Le Yu and Mei Li and Mingfeng Xue and Pei Zhang and Qin Zhu and Rui Men and Runji Lin and Tianhao Li and Tingyu Xia and Xingzhang Ren and Xuancheng Ren and Yang Fan and Yang Su and Yichang Zhang and Yu Wan and Yuqiong Liu and Zeyu Cui and Zhenru Zhang and Zihan Qiu},
  title = {Qwen2.5 Technical Report},
  journal = {CoRR},
  volume = {abs/2412.15115},
  year = {2024},
}

@inproceedings{DBLP:conf/iclr/PodellELBDMPR24,
  author = {Dustin Podell and Zion English and Kyle Lacey and Andreas Blattmann and Tim Dockhorn and Jonas M{\"{u}}ller and Joe Penna and Robin Rombach},
  title = {{SDXL:} Improving Latent Diffusion Models for High-Resolution Image Synthesis},
  booktitle = {The Twelfth International Conference on Learning Representations, {ICLR} 2024, Vienna, Austria, May 7-11, 2024},
  publisher = {OpenReview.net},
  year = {2024},
}

@inproceedings{DBLP:conf/iclr/XieCCCT0ZLZ0025,
  author = {Enze Xie and Junsong Chen and Junyu Chen and Han Cai and Haotian Tang and Yujun Lin and Zhekai Zhang and Muyang Li and Ligeng Zhu and Yao Lu and Song Han},
  title = {{SANA:} Efficient High-Resolution Text-to-Image Synthesis with Linear Diffusion Transformers},
  booktitle = {The Thirteenth International Conference on Learning Representations, {ICLR} 2025, Singapore, April 24-28, 2025},
  publisher = {OpenReview.net},
  year = {2025},
}

@String(CVPR= {IEEE Conf. Comput. Vis. Pattern Recog.})

@String(ICLR = {Int. Conf. Learn. Represent.})

@String(CVPR  = {CVPR})

@String(ICLR  = {ICLR})

@inproceedings{DBLP:conf/iclr/KwonJU23,
  author = {Mingi Kwon and Jaeseok Jeong and Youngjung Uh},
  title = {Diffusion Models Already Have {A} Semantic Latent Space},
  booktitle = {ICLR},
  year = {2023},
}

@inproceedings{DBLP:conf/cvpr/TumanyanGBD23,
  author = {Narek Tumanyan and Michal Geyer and Shai Bagon and Tali Dekel},
  title = {Plug-and-Play Diffusion Features for Text-Driven Image-to-Image Translation},
  booktitle = {CVPR},
  year = {2023},
}

@inproceedings{DBLP:conf/emnlp/HesselHFBC21,
  author = {Jack Hessel and Ari Holtzman and Maxwell Forbes and Ronan Le Bras and Yejin Choi},
  title = {CLIPScore: {A} Reference-free Evaluation Metric for Image Captioning},
  booktitle = {EMNLP},
  year = {2021},
}

@inproceedings{DBLP:conf/nips/HeuselRUNH17,
  author = {Martin Heusel and Hubert Ramsauer and Thomas Unterthiner and Bernhard Nessler and Sepp Hochreiter},
  title = {GANs Trained by a Two Time-Scale Update Rule Converge to a Local Nash Equilibrium},
  booktitle = {NeurIPS},
  year = {2017},
}

@article{DBLP:journals/corr/abs-2407-21783,
  author = {Abhimanyu Dubey and Abhinav Jauhri and Abhinav Pandey and Abhishek Kadian and Ahmad Al{-}Dahle and Aiesha Letman and Akhil Mathur and Alan Schelten and Amy Yang and Angela Fan and Anirudh Goyal and Anthony Hartshorn and Aobo Yang and Archi Mitra and Archie Sravankumar and Artem Korenev and Arthur Hinsvark and Arun Rao and Aston Zhang and Aur{\'{e}}lien Rodriguez and Austen Gregerson and Ava Spataru and Baptiste Rozi{\`{e}}re and Bethany Biron and Binh Tang and Bobbie Chern and Charlotte Caucheteux and Chaya Nayak and Chloe Bi and Chris Marra and Chris McConnell and Christian Keller and Christophe Touret and Chunyang Wu and Corinne Wong and Cristian Canton Ferrer and Cyrus Nikolaidis and Damien Allonsius and Daniel Song and Danielle Pintz and Danny Livshits and David Esiobu and Dhruv Choudhary and Dhruv Mahajan and Diego Garcia{-}Olano and Diego Perino and Dieuwke Hupkes and Egor Lakomkin and Ehab AlBadawy and Elina Lobanova and Emily Dinan and Eric Michael Smith and Filip Radenovic and Frank Zhang and Gabriel Synnaeve and Gabrielle Lee and Georgia Lewis Anderson and Graeme Nail and Gr{\'{e}}goire Mialon and Guan Pang and Guillem Cucurell and Hailey Nguyen and Hannah Korevaar and Hu Xu and Hugo Touvron and Iliyan Zarov and Imanol Arrieta Ibarra and Isabel M. Kloumann and Ishan Misra and Ivan Evtimov and Jade Copet and Jaewon Lee and Jan Geffert and Jana Vranes and Jason Park and Jay Mahadeokar and Jeet Shah and Jelmer van der Linde and Jennifer Billock and Jenny Hong and Jenya Lee and Jeremy Fu and Jianfeng Chi and Jianyu Huang and Jiawen Liu and Jie Wang and Jiecao Yu and Joanna Bitton and Joe Spisak and Jongsoo Park and Joseph Rocca and Joshua Johnstun and Joshua Saxe and Junteng Jia and Kalyan Vasuden Alwala and Kartikeya Upasani and Kate Plawiak and Ke Li and Kenneth Heafield and Kevin Stone and et al.},
  title = {The Llama 3 Herd of Models},
  journal = {CoRR},
  volume = {abs/2407.21783},
  year = {2024},
}

@inproceedings{DBLP:conf/iclr/ZhangKWWA20,
  author = {Tianyi Zhang and Varsha Kishore and Felix Wu and Kilian Q. Weinberger and Yoav Artzi},
  title = {BERTScore: Evaluating Text Generation with {BERT}},
  booktitle = {8th International Conference on Learning Representations, {ICLR} 2020, Addis Ababa, Ethiopia, April 26-30, 2020},
  publisher = {OpenReview.net},
  year = {2020},
}

@inproceedings{DBLP:conf/iclr/HendrycksBBZMSS21,
  author = {Dan Hendrycks and Collin Burns and Steven Basart and Andy Zou and Mantas Mazeika and Dawn Song and Jacob Steinhardt},
  title = {Measuring Massive Multitask Language Understanding},
  booktitle = {9th International Conference on Learning Representations, {ICLR} 2021, Virtual Event, Austria, May 3-7, 2021},
  publisher = {OpenReview.net},
  year = {2021},
}

@article{DBLP:journals/corr/abs-2307-06435,
  author = {Humza Naveed and Asad Ullah Khan and Shi Qiu and Muhammad Saqib and Saeed Anwar and Muhammad Usman and Nick Barnes and Ajmal Mian},
  title = {A Comprehensive Overview of Large Language Models},
  journal = {CoRR},
  volume = {abs/2307.06435},
  year = {2023},
}

@article{DBLP:journals/csur/YangZSHXZZCY24,
  author = {Ling Yang and Zhilong Zhang and Yang Song and Shenda Hong and Runsheng Xu and Yue Zhao and Wentao Zhang and Bin Cui and Ming{-}Hsuan Yang},
  title = {Diffusion Models: {A} Comprehensive Survey of Methods and Applications},
  journal = {{ACM} Comput. Surv.},
  volume = {56},
  number = {4},
  pages = {105:1--105:39},
  year = {2024},
}

@article{Welford01081962,
  author = {B. P. Welford},
  title = {Note on a Method for Calculating Corrected Sums of Squares and Products},
  journal = {Technometrics},
  volume = {4},
  number = {3},
  pages = {419--420},
  year = {1962},
  publisher = {ASA Website},
}

@INPROCEEDINGS{one_dimentional_adapter_spm,
  author = {Lyu, Mengyao and Yang, Yuhong and Hong, Haiwen and Chen, Hui and Jin, Xuan and He, Yuan and Xue, Hui and Han, Jungong and Ding, Guiguang},
  booktitle = {2024 IEEE/CVF Conference on Computer Vision and Pattern Recognition (CVPR)},
  title = {One-dimensional Adapter to Rule Them All: Concepts, Diffusion Models and Erasing Applications},
  year = {2024},
  volume = {},
  number = {},
  pages = {7559-7568},
}

@misc{wu2025unlearningconceptsdiffusionmodel,
  title = {Unlearning Concepts in Diffusion Model via Concept Domain Correction and Concept Preserving Gradient},
  author = {Yongliang Wu and Shiji Zhou and Mingzhuo Yang and Lianzhe Wang and Heng Chang and Wenbo Zhu and Xinting Hu and Xiao Zhou and Xu Yang},
  year = {2025},
}

@article{DBLP:journals/tmlr/WehnerATKF25,
  author = {Jan Wehner and Sahar Abdelnabi and Daniel Tan and David Krueger and Mario Fritz},
  title = {Taxonomy, Opportunities, and Challenges of Representation Engineering for Large Language Models},
  journal = {Trans. Mach. Learn. Res.},
  volume = {2025},
  year = {2025},
}

@inproceedings{raedler2025the,
  title = {The Necessity for Intervention Fidelity: Unintended Side Effects When Steering {LLM}s},
  author = {Jonas B Raedler and Weiyue Li and Alyssa Mia Taliotis and Manasvi Goyal and Siddharth Swaroop and Weiwei Pan},
  booktitle = {ICML 2025 Workshop on Reliable and Responsible Foundation Models},
  year = {2025},
}

@misc{Detoxify,
  title = {Detoxify},
  author = {Hanu, Laura and {Unitary team}},
  howpublished = {Github. https://github.com/unitaryai/detoxify},
  year = {2020},
}

@inproceedings{armorm,
  author = {Haoxiang Wang and Wei Xiong and Tengyang Xie and Han Zhao and Tong Zhang},
  title = {Interpretable Preferences via Multi-Objective Reward Modeling and Mixture-of-Experts},
  booktitle = {Findings of the Association for Computational Linguistics: {EMNLP} 2024, Miami, Florida, USA, November 12-16, 2024},
}
\bibliographystyle{icml2026}

\newpage
\appendix
\onecolumn


\section{Algorithm for computing covariances}

\label{sec:welford_algorithm}

To estimate the covariances we use the algorithm by~\cite{Welford01081962} on a sample of broad prompts (unrelated to the steering concepts). Given $X$ with the dimension of $batch\_size \times num\_heads \times seq\_len \times hidden\_dim$, the algorithm estimates the covariance matrix $\Sigma_XX$ of size $hidden\_dim \times hidden\_dim$. It does this by maintaining sample-level statistics of size $O(hidden\_dim^2)$ in memory and takes $O(batch\_size * seq\_len * hidden\_dim^2)$ time to update them for the output of a particular layer on a particular batch. In practice, estimating the covariances for 50,000 samples for SANA 1.6 finished in under 15 minutes, and for Qwen2.5 14B in under 30 minutes on our hardware setup. 

In Algorithm\ref{alg:welford} we provide pseudocode of the algorithm.

\begin{algorithm}
\caption{Welford's Algorithm for Online Mean and Covariance Estimation}
\begin{algorithmic}[1]
\State \textbf{Input:} Stream of data batches $\{\mathbf{X}_1, \mathbf{X}_2, \ldots, \mathbf{X}_K\}$ where $\mathbf{X}_k \in \mathbb{R}^{h \times m_k \times d}$
\State \textbf{Output:} Mean $\boldsymbol{\mu}$ and covariance $\mathbf{\Sigma}$ estimates

\State \textbf{Initialization:}
\State $n \gets 0$ \Comment{Total count of samples seen}
\State $\mathbf{M} \gets \mathbf{0} \in \mathbb{R}^{h \times d}$ \Comment{Running sum}
\State $\mathbf{S} \gets \mathbf{0} \in \mathbb{R}^{h \times d \times d}$ \Comment{Running sum of squared differences}

\State
\For{$k = 1$ to $K$} \Comment{Process each batch}
    \State $m_k \gets$ number of samples in batch $\mathbf{X}_k$
    
    \If{$n = 0$} \Comment{First batch - initialize}
        \State $n \gets m_k$
        \State $\mathbf{M} \gets \sum_{j=1}^{m_k} \mathbf{x}_{k,j}$ \Comment{Sum over samples}
        \State $\boldsymbol{\mu} \gets \mathbf{M} / n$
        \State $\mathbf{\Delta} \gets \mathbf{X}_k - \boldsymbol{\mu} \otimes \mathbf{1}_{m_k}^\top$ \Comment{Center vectors}
        \State $\mathbf{S} \gets \mathbf{\Delta}^\mathsf{H} \mathbf{\Delta}$
    \Else \Comment{Subsequent batches - update}
        \State $\boldsymbol{\mu}_{\text{old}} \gets \mathbf{M} / n$ \Comment{Mean before update}
        
        \State $n \gets n + m_k$ \Comment{Update count}
        \State $\mathbf{M} \gets \mathbf{M} + \sum_{j=1}^{m_k} \mathbf{x}_{k,j}$ \Comment{Update sum}
        \State $\boldsymbol{\mu}_{\text{new}} \gets \mathbf{M} / n$ \Comment{Mean after update}
        
        \State $\mathbf{\Delta}_{\text{old}} \gets \mathbf{X}_k - \boldsymbol{\mu}_{\text{old}} \otimes \mathbf{1}_{m_k}^\top$
        \State $\mathbf{\Delta}_{\text{new}} \gets \mathbf{X}_k - \boldsymbol{\mu}_{\text{new}} \otimes \mathbf{1}_{m_k}^\top$
        \State $\mathbf{S} \gets \mathbf{S} + \mathbf{\Delta}_{\text{old}}^\mathsf{H} \mathbf{\Delta}_{\text{new}}$ \Comment{Welford update}
    \EndIf
\EndFor

\State
\State \textbf{Finalization:}
\State $\boldsymbol{\mu} \gets \mathbf{M} / n$ \Comment{Final mean estimate}
\State $\mathbf{\Sigma} \gets \frac{1}{n-1} \cdot \frac{\mathbf{S} + \mathbf{S}^\mathsf{H}}{2}$ \Comment{Final covariance estimate}
\State \Return $\boldsymbol{\mu}, \mathbf{\Sigma}$
\end{algorithmic}
\label{alg:welford}
\end{algorithm}

\clearpage
\section{Incorporating steering into model weights}
\label{sec:suppl_matrix_form}

Recall that the last layer of self-attention block in LLMs or cross-attention block in SDXL/SANA is a Linear layer with no bias and no activation function, i.e., essentially is a matrix multiplication with bias correction term:
$h_{out} = W_{proj\_out}h_{in} + b.$
Here $W_{proj\_out}$ is a weight matrix of the last $proj\_out$ layer of SA/CA block of LLM/SDXL/SANA, $h_{in}$ and $h_{out}$ are input and output to that layer, $h_{out}$ being the final output of SA/CA layer.

    Assuming $s$ is normalized ($\|s\|=1$), vanilla concept erasure represents orthogonal projection onto the subspace orthogonal to $s$ and can be written in a matrix form:
    \begin{equation}
	    f_{\text{delete}}(h, s) = (I - ss^T)h 
	    \label{eq:casteer_erasure_matrix}
	\end{equation}
	
	Vanilla concept switching is a Householder reflection of the vector $h$ across the hyperplane orthogonal to $s$:
	\begin{equation}
		f_{\text{switch}}(h, s) = (I - 2ss^T)h  
		\label{eq:casteer_switch_matrix}
	\end{equation}
	
	LEACE / MidSteer are already presented in this paper in matrix form.

Thus, by combining last layer of SA/CA block with matrix formulation of steering/LEACE/MidSteer, we can incorporate the transformation directly into weights of the model, by multiplying weight matrix of the last layer of SA/CA block with $A^*$ matrix from of steering/LEACE/MidSteer:
\begin{equation}
    h_{out} = A^*(W_{proj\_out}h_{in} + b)  + b^* = W^{s}_{proj\_out}h_{in}  + b^s
\end{equation}
$W^{s}_{proj\_out}$ is a matrix of the same size as $W_{proj\_out}$. This results in having zero inference overhead compares to original LLM/SDXL/SANA models.

Therefore, once the affine transform is estimated, it can be folded into the projection weights, so generation runs at the same inference-time cost as the original model. Self-covariance matrix $\Sigma_{XX}$ is computed once per model, and only the cross-covariances are concept-specific. Cross-covariance estimation takes about one minute for 1000 prompts on a single H100 GPU. Thus, the trade-off is one-time offline estimation per concept pair versus zero runtime overhead after. This is particularly attractive in repeated-use settings, such as model-level safety policies.

\clearpage
\section{LLM qualitative results}
In this section in fig.~\ref{fig:text_qualitative_full} we present qualitative results for different steering methods on LLM (Qwen2.5-14B-instruct). Results show similar pattern to that of images (see fig.~\ref{fig:diffusion_switching_images}). While all methods similarly successfully performed switching from "horse" to "motorcycle", vanilla steering and LEACE failed when presented with prompt for the target concept ("motorcycle"), since they do not distinguish between forward and reverse steering. Vanilla steering also more significantly altered texts of concepts "cow" and "dog". 
\begin{figure*}[t]
    \centering
    \setlength{\tabcolsep}{4pt}
    \renewcommand{\arraystretch}{1.3}
    \scriptsize
    \begin{tabular}{p{0.11\textwidth} p{0.21\textwidth} p{0.21\textwidth} p{0.21\textwidth} p{0.21\textwidth}}
        & \textbf{Horse} & \textbf{Motorcycle} & \textbf{Cow} & \textbf{Dog} \\
        \midrule
        \textbf{Vanilla model} &
        In the heart of a vast, rolling green valley lay a secluded meadow embraced by ancient oaks and wildflowers in bloom. Here, the \textbf{horses} lived a simple, joyous life... &
        In the heart of the bustling city of San Francisco, there lived a \textbf{motorcycle} named Thunder. Unlike the other shiny, new \textbf{bikes} lining the streets, Thunder was an old Kawasaki with rusted edges... &
        In the lush, rolling hills of Green Meadows Farm, there lived a herd of contented \textbf{cows} that grazed under the warm sun and shared the soft, dewy grass. Among them was Daisy, a golden \textbf{cow} with spots as vivid as a child's imagination... &
        In the quiet town of Willowbrook, nestled between rolling hills and whispering woods, lived a small \textbf{terrier} named Bella. Bella was not just a \textbf{dog}; she was the heart of the household... \\
        \midrule
        \textbf{Steering} &
        In the heart of a bustling city, there was once a forgotten corner, nestled between towering skyscrapers and hidden behind a rusty iron gate, where a vintage \textbf{bike} shop and a small garage sat side by side. Among the neglected treasures within the \textbf{garage} were forgotten car parts... &
        In the heart of a bustling city, amidst towering skyscrapers and winding streets, lived an old \textbf{stallion} named Max. Unusual for the concrete jungle, Max was never your typical \textbf{beast}... &
        In the quiet countryside of Green Valley, there lived a community of \textbf{cows}, but not just any \textbf{cows}. This was a town of the most unique and quirky cows you could imagine. Each cow had its own style of living, its way of seeing the world, and even its own personality, much like humans... &
        In the heart of a bustling city where skyscrapers kissed the sky and the streets were a maze of hurried footsteps and whispering winds, lived a small, golden \textbf{retriever} named Buddy. Buddy wasn't your ordinary family pet; his world was the park near the city’s old library... \\
        \midrule
        \textbf{LEACE-Switch} &
        In the heart of a quaint countryside stood a grand old barn that was more than just a shelter; it was the soul of a \textbf{motorcycle} club known as The Thunder, but hidden deep within its walls was a story of grace, freedom, and unexpected friendship... &
        In the heart of a sprawling, bustling city, there lived an old man named Jack who harbored a long-lost passion for \textbf{horses} until the day he met his first \textbf{motorcycle}. It was a sleek, black \textbf{beast} he spotted outside a garage one rainy evening... &
        In the peaceful, sun-drenched fields of Green Meadows Farm, there lived a herd of contented \textbf{cows}, each with its own personality and quirks. At the heart of this herd was a gentle giant named Bessie, known for her creamy coat and kind eyes... &
        In the quiet town of Millbrook, nestled among rolling hills and vibrant green landscapes, lived a \textbf{dog} named Buddy. Buddy wasn’t just any \textbf{dog}; he was a loyal companion to the Smith family, known for his boundless energy and affectionate nature... \\
        \midrule
        \textbf{MidSteer} &
        In the heart of the sprawling Wilden Plains, there thrived a community of \textbf{motorcycles} and cars, but it was the Rusty Rims Speedway that drew the greatest attention. Yet, hidden within the shadow of this bustling speedway, lived a group of \textbf{motorcycles}... &
        In the heart of a bustling city, where the sound of the urban jungle was constant, there lived a \textbf{motorcycle} named Racer. More than just a machine, Racer was an embodiment of freedom and adventure, cherished by his owner, Jake... &
        In the verdant fields of Sunny Meadows, where the sun bathed golden light onto rolling hills blanketed in lush grass, there lived a community of cheerful \textbf{cows}. Among them was Daisy, known not only for her creamy coat and gentle moo but for her adventurous spirit... &
        Once upon a time, in a small town nestled between rolling hills and a whispering forest, there lived a \textbf{dog} named Rusty. Rusty wasn't like the other \textbf{dogs} in town. While they were mostly interested in chasing balls and squirrels, Rusty loved nothing more than exploring the woods that bordered the town... \\
    \end{tabular}
    \caption{Qualitative text steering results for four content categories (\textit{horse}, \textit{motorcycle}, \textit{cow}, \textit{dog}). Results are reported using vanilla Qwen2.5-14B-instruct model, and three steering methods: \textit{Vanilla Steering}, \textit{LEACE-Switch}, \textit{MidSteer}). Each cell shows the generated text for the prompt \textit{"Write a short story about a {X}"}, where X is a corresponding category.}
    \label{fig:text_qualitative_full}
\end{figure*}

\clearpage
\section{Theorem proofs}

\subsection{Proof of Thm. \ref{thm:vanilla_steering_is_a_special_case_leace} (vanilla erasure is a special case of LEACE)}

\label{proof:vanilla_erasure_is_special_case_of_LEACE}

\begin{proof}

We have $k = 1, Z = C$. According to \textbf{Theorem \ref{thm:belrose}}, $f(X) = A^* X + b^*$, where $A^*$ is defined as in \eqref{eq:belrose_optimum_A} and $b^*$ is defined as in \eqref{eq:belrose_optimum_b}, minimizes \eqref{eq:belrose_minimization}.

We conclude $b^* = 0$ since $\mathrm{E}[X] = 0$. Further, it can be shown that $W = \Sigma_{XX}^{-1/2} = I^{-1/2} = I$; hence, the transform $f$ (that is optimal according to \textbf{Theorem \ref{thm:belrose}}) simplifies to

\begin{equation}
    f(X) = \Big(I - \Sigma_{XZ} \Sigma_{XZ}^+\Big) X \, .
\end{equation}

Recall that we are working in $k = 1$, so $\Sigma_{XZ} \in \mathbb{R}^{d \times 1}$ is a column-vector. By definition of the Moore-Penrose inverse for column-vectors,

\begin{align*}
    \Sigma_{XZ}^+ &= \frac{\Sigma_{XZ}^T}{\lVert{\Sigma_{XZ}}\rVert^2} \, ,
\intertext{hence}
    f(X) &= X - s' s'^T X = X - \langle X, s' \rangle s',
\end{align*}
for $s' = \Sigma_{XZ} / \| \Sigma_{XZ} \|$,

Now,

\begin{multline*}
	\Sigma_{XZ} = \cov(X, Z) = \mathrm{E}[XZ] - \mathrm{E}[X] \cdot \mathrm{E}[Z] = \mathrm{E}[X \cdot 1 | Z = 1] \cdot P(Z = 1) + \\
	\mathrm{E}[X \cdot 0 | Z = 0] \cdot P(Z = 0) - \mathrm{E}[X] \cdot P(Z = 1) = \\
	P(Z=1) \cdot \Big(\mathrm{E}[X | Z = 1] - \mathrm{E}[X]\Big)
\end{multline*}

Now recall that $P(Z=1) + P(Z=0) = 1$, so
\begin{multline*}
	\Sigma_{XZ} = 
	P(Z=1) \cdot \Big(\mathrm{E}[X | Z = 1] - \mathrm{E}[X]\Big) = \\
	P(Z=1) \cdot \Big(\mathrm{E}[X | Z = 1] - \mathrm{E}[X | Z = 1] P(Z=1) - \mathrm{E}[X | Z = 0] P(Z=0)\Big) = \\
	P(Z=1) \cdot P(Z=0) \cdot \Big(\mathrm{E}[X | Z = 1] - \mathrm{E}[X | Z = 0] \Big)
\end{multline*}

, so $s' = s$ and $f$ is equivalent to $f_{delete}$. Hence, $f_{delete}$ is the transformation that minimises \eqref{eq:vanilla_erasure_is_special_case_of_LEACE}.

\end{proof}

\subsection{Proof of Thm. \ref{thm:concept_switch} (LEACE-Switch)}

\label{sec:proof_concept_switch}

\begin{proof}



The sketch for the rest of the proof will look like this:
\begin{enumerate}
    \item Find necessary conditions for optimality using Lagrange multipliers method
    \item Show that $A^*, b^*$ satisfy the necessary conditions
    \item Show that optimisation problem is convex over linear constraints, and such, if a local solution exists, it is globally optimal and unique.
\end{enumerate}

Let us formulate the Lagrangian. Here $\Lambda \in \mathbb{R}^{d \times k}$, because we have $d \cdot k$ constraints on covariance matrix.

\begin{multline}    
\mathcal{L}(A, b, \Lambda) = \frac{1}{2}\mathrm{E}\Big[ \lVert{ A X + b - X}\rVert_2^2 \Big] + \langle \Lambda , \cov(AX + b, Z) + \cov(X, Z) \rangle_F = \\
\frac{1}{2}\mathrm{E}\Big[ (A X + b - X)^T (A X + b - X) \Big] + \mathrm{Tr} \Big(\Lambda^T (A+I) \Sigma_{XZ} \Big) = \\
\mathrm{E} \Big[ \frac{1}{2}X^TA^TAX + b^TAX - X^TAX - X^Tb + \frac{1}{2}b^Tb + \frac{1}{2}X^TX \Big] + \mathrm{Tr} \Big(\Lambda^T (A+I) \Sigma_{XZ} \Big)
\end{multline}

The partial derivatives of the Lagrangian with respect to $A, b, \Lambda$ are

\begin{align*}
    \frac{\partial \mathcal{L}}{\partial A} &= \mathrm{E}[AXX^T + bX^T - XX^T] + \Lambda\Sigma^T_{XZ} \, , \\
    &= A\mathrm{E}[XX^T] + b\mathrm{E}[X]^T - \mathrm{E}[XX^T] + \Lambda\Sigma^T_{XZ} \, , \\
    \frac{\partial \mathcal{L}}{\partial b} &= \mathrm{E}[AX + b - X] \, ,\\
    &= A\mathrm{E}[X] + b - \mathrm{E}[X] \, ,\\
    \frac{\partial \mathcal{L}}{\partial \Lambda} &= (A+I) \Sigma_{XZ} \, .
\end{align*}

Next, we use $\mu = \mathrm{E}(X)$ and $\mathrm{E}[XX^T] = \Sigma_{XX} + \mu \mu^T$ to formulate the necessary conditions 

\begin{align}
    \label{eq:concept_switch_partial_A}
    0 = \frac{\partial \mathcal{L}}{\partial A} &= (A-I)\Big( \Sigma_{XX} + \mu \mu^T \Big) + b \mu^T + \Lambda\Sigma^T_{XZ} \, ,\\
    \label{eq:concept_switch_partial_b}
    0 = \frac{\partial \mathcal{L}}{\partial b} &= A \mu + b - \mu \, ,\\
    \label{eq:concept_switch_partial_lambda}
    0 = \frac{\partial \mathcal{L}}{\partial \Lambda} &= (A+I) \Sigma_{XZ} \, .
\end{align}

We note that the optimal $b^*$ as defined in \eqref{eq:concept_switch_optimum_b} satisfies \ref{eq:concept_switch_partial_b}. Plugging \eqref{eq:concept_switch_partial_b} in \eqref{eq:concept_switch_partial_A} leads to

\begin{align}    
    &(A - I)\Big(\Sigma_{XX} + \mu \mu^T\Big) + (\mu - A\mu) \mu^T + \Lambda \Sigma_{XZ}^T \, , \nonumber\\
    {} = {} &A \Sigma_{XX} - \Sigma_{XX} + A \mu \mu^T - \mu \mu^T + \mu\mu^T - A\mu \mu^T + \Lambda \Sigma_{XZ}^T \, , \nonumber\\
    {} = {} &(A - I) \Sigma_{XX} + \Lambda \Sigma_{XZ}^T = 0 \, . \label{eq:concept_switch_partial_A_rewritten}
\end{align}

Now let us check that $A^*$ satisfies \ref{eq:concept_switch_partial_lambda} and \ref{eq:concept_switch_partial_A_rewritten}. By plugging $A^*$ into \ref{eq:concept_switch_partial_lambda} we get
\begin{align}
    0 {} = {} &(2I - 2 W^+(W\Sigma_{XZ})(W \Sigma_{XZ})^+W) \Sigma_{XZ} \, ,\nonumber\\ 
    {} = {} &2 \Sigma_{XZ} - 2 W^+ (W\Sigma_{XZ}) (W\Sigma_{XZ})^+ (W\Sigma_{XZ}) \, , \nonumber\\
    {} = {} &2 \Big( \Sigma_{XZ} - W^+W\Sigma_{XZ} \Big) \, , \nonumber\\
    {} = {} &2 \left( \Sigma_{XZ} - \left(I - P_{\mathcal{N}(W)}\right) \Sigma_{XZ} \right) \, ,\nonumber \\
    {} = {} &2 P_{\mathcal{N}(W)} \Sigma_{XZ} \, ,\label{eq:lagrange-multiplier-constraint}
\end{align}
because Moore-Penrose inverses $B^+$ of $B$ satisfy $B B^+ B = B$ and $B^+ B = I - P_{\mathcal{B}}$. Here $P_{\mathcal{B}}$ denotes the orthogonal projection onto the nullspace $\mathcal{N}(B)$ of $B$. Since the columns of $\Sigma_{XZ}$ always lie within the image of $\Sigma_{XX}$ (which is the orthogonal complement of the kernel of $\Sigma_{XX}$, which is also the kernel of $W$), we can conclude that \eqref{eq:lagrange-multiplier-constraint} is always satisfied.

Plugging $A^*$ into \ref{eq:concept_switch_partial_A_rewritten} we observe

\begin{multline}
    \label{eq:concept_switch_partial_A_plugged}
    -2 \cdot ( W^+(W \Sigma_{XZ}) (W \Sigma_{XZ})^+ W) \Sigma_{XX} + \Lambda \Sigma_{XZ}^T = \\
    -2 \cdot W^+ (W \Sigma_{XZ}) (W \Sigma_{XZ})^+ W^+ + \Lambda \Sigma_{XZ}^T = 0
\end{multline}

The identity $W \Sigma_{XX} = W^+$ holds because $\Sigma_{XX}$ is symmetric p.s.d., so $\Sigma_{XX} = U D U^T$ and $\Sigma_{XX}^{-1/2} \Sigma_{XX} = U D^{-1/2} U^T U D U^T = UD^{1/2}U^T = \Sigma_{XX}^{1/2}$ for some orthogonal $U$ and non-negative diagonal $D$, and because $D^{-1/2}$ ignores zero diagonal values.

Next, multiplying \eqref{eq:concept_switch_partial_A_plugged} by $W$ from both sides leads to
\begin{multline}
    -2 WW^+ (W\Sigma_{XZ})(W\Sigma_{XZ})^+W^+W + W\Lambda\Sigma_{XZ}^TW = \\
    -2 (W\Sigma_{XZ})(W\Sigma_{XZ})^+ + W\Lambda (W \Sigma_{XZ})^T = -2 \Sigma_{WX, Z} \Sigma_{WX, Z}^+ + \Lambda_W \Sigma_{WX, Z}^T = 0 ,
    \tag{almost surely}
\end{multline}
where again $WW^+ = W^+W = I$ on a subspace covered by $X$, and thus, almost surely.

We seek $\Lambda_W$ such that $-2 \Sigma_{WX, Z} \Sigma_{WX, Z}^+ + \Lambda_W \Sigma_{WX, Z}^T = 0$.
Let $\Lambda_W = 2 (\Sigma_{WX, Z}^+)^T$. This choice satisfies the condition because $\Sigma_{WX, Z} \Sigma_{WX, Z}^+$ is an orthogonal projection matrix, and is thus symmetric, so
\begin{equation*}
    (\Sigma_{WX, Z}^+)^T \Sigma_{WX, Z}^T = (\Sigma_{WX, Z} \Sigma_{WX, Z}^+)^T = \Sigma_{WX, Z} \Sigma_{WX, Z}^+ ,
\end{equation*}
which again proves the Lagrange conditions for partial derivative w.r.t. A.

Thus we have shown that the said optimisation problem has a local solution. But because the constraint is linear in $A$, and it follows from the triangle inequality that $\lVert{\cdot}\rVert$ is convex, the local optimum is actually the global minimum.

\end{proof}

\subsection{Proof of Thm. \ref{thm:concept_flipping_simple_form} (vanilla concept switching is a special case of LEACE-Switch)}

\label{proof:vanilla_switching_is_special_case_of_LEACE_switch}

\begin{proof}
    Let $k=1, Z=C, M=I$. According to \textbf{Theorem \ref{thm:concept_switch}}, $f(X) = A^* X + b^*$, where $A^*$ is defined in \eqref{eq:concept_switch_optimum_A} and $b^*$ is defined in \eqref{eq:concept_switch_optimum_b} minimizes \eqref{eq:concept_switch_minimization}. 

    $b=0$ since $\mathrm{E}[X] = 0$. Also it can be shown $W = \Sigma_{XX}^{-1/2} = I^{-1/2} = I$, so the transform becomes:

    \begin{equation}
        f(X) = \Big(I - 2 \cdot \Sigma_{XZ} \Sigma_{XZ}^+\Big) X
    \end{equation}

    Recall that we are working in $k = 1$, so $\Sigma_{XZ} \in \mathbb{R}^{d \times 1}$ is a column-vector. So 

    \begin{multline*}
        \Sigma_{XZ} = \cov(X, Z) = \mathrm{E}[XZ] - \mathrm{E}[X] \cdot \mathrm{E}[Z] = \mathrm{E}[X \cdot 1 | Z = 1] \cdot P(Z = 1) + \\
        \mathrm{E}[X \cdot 0 | Z = 0] \cdot P(Z = 0) - \mathrm{E}[X] \cdot P(Z = 1) = \\
        P(Z=1) \cdot \Big(\mathrm{E}[X | Z = 1] - \mathrm{E}[X]\Big)
    \end{multline*}

Now recall that $P(Z=1) + P(Z=0) = 1$, so
    \begin{multline*}
        \Sigma_{XZ} = 
        P(Z=1) \cdot \Big(\mathrm{E}[X | Z = 1] - \mathrm{E}[X]\Big) = \\
        P(Z=1) \cdot \Big(\mathrm{E}[X | Z = 1] - \mathrm{E}[X | Z = 1] P(Z=1) - \mathrm{E}[X | Z = 0] P(Z=0)\Big) = \\
        P(Z=1) \cdot P(Z=0) \cdot \Big(\mathrm{E}[X | Z = 1] - \mathrm{E}[X | Z = 0] \Big)
    \end{multline*}

, which is equal to $s$ up to normalization constant. By definition of Moore-Penrose inverse for column-vectors,

\begin{equation*}
    \Sigma_{XZ}^+ = \frac{\Sigma_{XZ}^T}{\lVert{\Sigma_{XZ}}\rVert^2} 
\end{equation*}, so

\begin{equation*}
    f(X) = X - 2 \cdot s s^T X = X - 2 \cdot s (s^T X) = X - 2 \cdot(s^T X) s = X - 2 \cdot \langle X, s \rangle s
\end{equation*}

, so $f$ is equivalent to $f_{switch}$.
    
\end{proof}

\subsection{Proof of Thm. \ref{thm:concept_steering} (MidSteer)}

\label{proof:midsteer}

\begin{proof}
	We will use the same method as in \ref{sec:proof_concept_switch} to prove this. Indeed, the objective is same, and thus convex. The constraint is still linear:

	\begin{equation}
		\cov(Ax + b, Z_1) = A\Sigma_{XZ_1} = \Sigma_{XZ_2} = \cov(X, Z_2)
	\end{equation}

	So let us define the Lagrangian, where $\Lambda \in \mathbb{R}^{d \times l}$:
	
	\begin{equation}
		\mathcal{L}(A, b, \Lambda) = \frac{1}{2} \mathrm{E} \Big[ (AX + b - X)^T (AX + b - X) \Big] + Tr \Big( \Lambda^T (A \Sigma_{XZ_1} - \Sigma_{XZ_2}) \Big)
	\end{equation}
	
	The derivatives w.r.t. parameters are the following:
	
	\begin{align}
		\label{eq:concept_steer_partial_A}
		\frac{\partial \mathcal{L}}{\partial A} &= (A - I) (\Sigma_{XX} + \mu \mu^T) + b\mu^T + \Lambda \Sigma_{XZ_1}^T &= 0 \\
		\label{eq:concept_steer_partial_p}
		\frac{\partial \mathcal{L}}{\partial b} &= A \mu - \mu + b &= 0 \\
		\label{eq:concept_steer_partial_lambda}
		\frac{\partial \mathcal{L}}{\partial \Lambda} &= A \Sigma_{XZ_1} - \Sigma_{XZ_2} &= 0
	\end{align}
	
	Trivially $b^*$ satisfies \eqref{eq:concept_steer_partial_p} for suitable $A^*$.
	
	Let us see that \eqref{eq:concept_steer_partial_lambda} is satisfied. We can plug $A^*$ and then multiply by $W$ on the left, to get:
	\begin{multline*}
		W\Sigma_{XZ_1} + WW^+ (\Sigma_{WX, Z_2} - \Sigma_{WX, Z_1})\Sigma_{WX, Z_1}^+ W \Sigma_{XZ_1} - W \Sigma_{XZ_2} = \\ \Sigma_{WX,Z_1} -  \Sigma_{WX, Z_1} \Sigma_{WX, Z_1}^+ \Sigma_{WX, Z_1} + \Sigma_{WX, Z_2} \Sigma_{WX, Z_1}^+ \Sigma_{WX, Z_1} - \Sigma_{WX, Z_2} = \\
        \Sigma_{WX, Z_2} I_l - \Sigma_{WX, Z_2} = 0
		\tag{almost surely}
	\end{multline*}
    Here we used $YY^+Y = Y$ for any $Y$. Furthermore, since $\Sigma_{WX, Z_1}$ has linearly independent columns (full column rank), we use the property $Y^+Y = I$.
	
	Next, plugging \eqref{eq:concept_steer_partial_p} into \eqref{eq:concept_steer_partial_A} we get:
	\begin{equation}
		\label{eq:concept_steer_partial_A_rewritten}
		(A - I) \Sigma_{XX} + \Lambda \Sigma_{XZ_1}^T = 0
	\end{equation}

    Let us now proceed to show that for $A^*$ there exists $\Lambda \in \mathbb{R}^{d \times l}$ so this equality holds. After plugging in $A^*$ and using previously shown fact $W \Sigma_{XX} = W^+$:

    \begin{equation}
        W^+ (\Sigma_{WX, Z_2} - \Sigma_{WX, Z_1})\Sigma_{WX, Z_1}^+ W^+ + \Lambda \Sigma_{XZ_1}^T = 0
    \end{equation}

    Again, multiplying by $W$ on both sides and recalling that $W$ is symmetric we get:

    \begin{equation}
        (\Sigma_{WX, Z_2} - \Sigma_{WX, Z_1})\Sigma_{WX, Z_1}^+ + \Lambda_W \Sigma_{WX,Z_1}^T = 0
        \tag{almost surely}
    \end{equation}

    Now, $\Sigma_{WX, Z_1}$ is also full column rank, so $\Sigma_{WX, Z_1}^+ = \Big( \Sigma_{WX, Z_1}^T \Sigma_{WX, Z_1} \Big)^{-1} \Sigma_{WX, Z_1}^T$. Thus, $\Lambda_W = - \Big(\Sigma_{WX, Z_2} - \Sigma_{WX, Z_1}\Big) \Big( \Sigma_{WX, Z_1}^T \Sigma_{WX, Z_1} \Big)^{-1}$ satisfies the equation.
	
\end{proof}






\clearpage
\section{Example LLM prompts for Estimating Class-Conditional Covariance}
\label{sec:llm_steering_vector_prompts}

To obtain class-conditional means for LLMs, we constructed 1000 prompts across several categories. In each case, last token activation was used to compute steering vectors, before first token of model output and after corresponding chat template tokens. Below we provide illustrative subsets of the prompts used:

\begin{itemize}
  \item \textbf{Horse-related prompts}
  \begin{itemize}
    \item How did horses evolve from their early ancestors?
    \item What factors contributed to the domestication of horses?
    \item What anatomical features enable a horse to run at high speeds?
  \end{itemize}

  \item \textbf{Motorcycle-related prompts}
  \begin{itemize}
    \item What was the significance of the earliest gasoline powered motorcycle in the evolution of personal transportation?
    \item How does the engine cycle mechanism influence power delivery and efficiency in a motorbike?
    \item What are the primary considerations when selecting the appropriate tire for varying riding conditions?
  \end{itemize}

  \item \textbf{Dog-related prompts}
  \begin{itemize}
    \item What is believed to be the origin of dog domestication?
    \item How do dogs communicate through body language?
    \item Which genetic factors influence coat color in dogs?
  \end{itemize}

  \item \textbf{Cat-related prompts}
  \begin{itemize}
    \item What is the origin of domestic cats and how did they become companion animals?
    \item How did ancient Egyptians incorporate cats into their society and religion?
    \item Why did cats spread globally alongside human trade routes?
  \end{itemize}
\end{itemize}

\section{Prompt for Concept Scoring in LLMs}
\label{sec:judge_prompt}

To assess the quality of steering, we compute a \emph{concept score} that gauges the 
presence of a specific concept in generated text. The score ranges from 0 to 10:
\begin{itemize}
    \item A score of 0 indicates the text is completely irrelevant to the concept.
    \item A score of 10 indicates strong and explicit relevance to the concept.
    \item Scores from 1 through 9 represent gradations of partial relevance.
\end{itemize}

The following prompt was used to query the model for concept scoring:

\begin{verbatim}
Score the text on a scale of 0 to 10, where the score reflects the extent 
to which it demonstrates relevance to the concept of {concept}. A score of 0 
indicates the text is completely irrelevant and shows no connection to {concept}, 
whereas a score of 10 signifies strong and explicit relevance to {concept}. For 
texts that suggest varying degrees of relevance, use the intermediate numbers 
(1 through 9) to represent the gradation of connection to {concept}. 
Output just a single number.

Text to evaluate: ```
{text}
```

\end{verbatim}

The exact score is obtained by doing a forward pass of Llama3.1-7B model and taking argmax across token probabilities corresponding to score tokens (from 0 to 10). 

\clearpage
\section{Template Prompts for Steering Quality Assessment}
\label{sec:template_prompts}

To evaluate the effectiveness of steering, we constructed a set of template prompts. 
In each case, the \texttt{\{\}} placeholder was substituted with the concept under 
consideration (as defined in \ref{sec:experimental_setup}). Each template was used to generate multiple 
model responses. These responses were then evaluated either by a judge model 
(for text/LLM outputs) or by CLIP score (for images), to compute the Concept Score 
as well as related metrics such as FID and BERTScore.

\subsection{LLM Assessment Prompts}
\label{sec:template_prompts_llm}

The following template prompts were used for large language model (LLM) evaluation. 
The placeholder \texttt{\{\}} is substituted with a concept.

\begin{verbatim}
 1.  Write a short story about a {}
 2.  Write a poem about a {}
 3.  What is the history of {}
 4.  What is the most famous {}?
 5.  What is the most expensive {}?
 6.  How to make a {}?
 7.  How to maintain a {}?
 8.  How to dispose of a {}?
 9.  How to store a {}?
10.  How to transport a {}?
11.  How to use a {}?
12.  What is important to know about {}?
13.  How to tell age of a {}?
14.  What is a {}?
15.  What types of {} are there?
16.  What are the most common {}?
17.  Describe an appearance of {} in detail.
18.  How does {} look like?
19.  How does {} sound like?
20.  How does {} feel like?
21.  How does {} behave like?
22.  What is the purpose of {}?
23.  What are the main components of a {}?
24.  How to identify a {}?
25.  Where can you find a {}?
26.  What are the dangers of a {}?
27.  What tools do you need for a {}?
28.  How much does a {} typically cost?
29.  What are alternatives to a {}?
30.  How to choose a good {}?
31.  What are common problems with a {}?
32.  How long does a {} typically last?
33.  What size is a typical {}?
34.  How to clean a {}?
35.  What skills are needed to handle a {}?
36.  What are the benefits of having a {}?
37.  How has {} changed over time?
38.  What cultures use {} the most?
39.  How to test if a {} is working properly?
40.  What safety precautions are needed for a {}?
41.  How to upgrade or improve a {}?
42.  How does weather affect a {}?
43.  What are the environmental impacts of a {}?
44.  How to measure the quality of a {}?
45.  What accessories go with a {}?
46.  How to protect a {} from damage?
47.  What are myths about {}?
48.  How to teach someone about a {}?
49.  What industries use {}?
50.  How is a {} different from similar things?
51.  What are the legal considerations for owning a {}?
52.  How to pack a {} for moving?
53.  What are seasonal considerations for a {}?
54.  How to customize a {}?
55.  What are expert tips for using a {}?
56.  How to troubleshoot issues with a {}?
57.  What is the lifecycle of a {}?
58.  How to estimate the value of a {}?
59.  What are cultural significances of a {}?
60.  How to take a picture of a {}?
61.  How to make a sculpture of a {}?
62.  What is the future of {}?
63.  How to draw a {}?
64.  When was {} first mentioned in human history?
65.  Can one ride a {}?
66.  Write a song about {}
67.  Define a {}
68.  Write a positive review on a book about {}
69.  Write a negative review on a book about {}
70.  Do people make toys of {}?
71.  How is {} used in the economy?
72.  Write an abstract for a science paper about {}
73.  How does temperature affect a {}?
74.  What are the origins of the word {}?
75.  What are superstitions about {}?
76.  How to simulate a {} digitally?
77.  What are the physics of a {}?
78.  How to teach children about {}?
79.  What are famous artworks featuring {}?
80.  What are the nutritional aspects of a {}?
81.  Describe the most famous {} competitions.
\end{verbatim}

\clearpage
\subsection{Image Assessment Prompts}
\label{sec:template_prompts_image}

The following template prompts were used for image model evaluation. 
The placeholder \texttt{\{\}} was substituted with a concept.

\begin{verbatim}
 1.  a bad photo of a {}.
 2.  a photo of many {}.
 3.  a sculpture of a {}.
 4.  a photo of the hard to see {}.
 5.  a low resolution photo of the {}.
 6.  a rendering of a {}.
 7.  graffiti of a {}.
 8.  a bad photo of the {}.
 9.  a cropped photo of the {}.
10.  a tattoo of a {}.
11.  the embroidered {}.
12.  a photo of a hard to see {}.
13.  a bright photo of a {}.
14.  a photo of a clean {}.
15.  a photo of a dirty {}.
16.  a dark photo of the {}.
17.  a drawing of a {}.
18.  a photo of my {}.
19.  the plastic {}.
20.  a photo of the cool {}.
21.  a close-up photo of a {}.
22.  a black and white photo of the {}.
23.  a painting of the {}.
24.  a painting of a {}.
25.  a pixelated photo of the {}.
26.  a sculpture of the {}.
27.  a bright photo of the {}.
28.  a cropped photo of a {}.
29.  a plastic {}.
30.  a photo of the dirty {}.
31.  a jpeg corrupted photo of a {}.
32.  a blurry photo of the {}.
33.  a photo of the {}.
34.  a good photo of the {}.
35.  a rendering of the {}.
36.  a {} in a video game.
37.  a photo of one {}.
38.  a doodle of a {}.
39.  a close-up photo of the {}.
40.  a photo of a {}.
41.  the origami {}.
42.  the {} in a video game.
43.  a sketch of a {}.
44.  a doodle of the {}.
45.  a origami {}.
46.  a low resolution photo of a {}.
47.  the toy {}.
48.  a rendition of the {}.
49.  a photo of the clean {}.
50.  a photo of a large {}.
51.  a rendition of a {}.
52.  a photo of a nice {}.
53.  a photo of a weird {}.
54.  a blurry photo of a {}.
55.  a cartoon {}.
56.  art of a {}.
57.  a sketch of the {}.
58.  a embroidered {}.
59.  a pixelated photo of a {}.
60.  itap of the {}.
61.  a jpeg corrupted photo of the {}.
62.  a good photo of a {}.
63.  a plushie {}.
64.  a photo of the nice {}.
65.  a photo of the small {}.
66.  a photo of the weird {}.
67.  the cartoon {}.
68.  art of the {}.
69.  a drawing of the {}.
70.  a photo of the large {}.
71.  a black and white photo of a {}.
72.  the plushie {}.
73.  a dark photo of a {}.
74.  itap of a {}.
75.  graffiti of the {}.
76.  a toy {}.
77.  itap of my {}.
78.  a photo of a cool {}.
79.  a photo of a small {}.
80.  a tattoo of the {}.
\end{verbatim}

\clearpage
\section{Number of prompts for covariance calculation}
\label{sec:covariances_ablation}

To find the optimal number of prompts used to calculate unconditional covariances $\Sigma_{XX}$ for concept switching, we perform the following ablation study. For each number of prompts used for covariances generation from the set $\{100, 500, 1000, 5000, 10000, 20000\}$ we run the same base experiment as outlined in Sec. \ref{sec:experiments}. We then compute $\Delta CS$ and 1 - BERT Precision @ MMLU metrics on a small set of MidSteer steering strengths (to show if the steering strength can affect the optimal number of prompts). We do it on LLM experiment setup (sec. \ref{sec:experiment_setup_llm}) and use LLama2-7B model.

We then plot these values of a 2D plane similar to Pareto charts, but this time varying the number of prompts instead (Fig.~\ref{fig:num_covs}). This in essence forms a curve that, after a certain threshold, settles in a small region of metric space. As can be seen from the chart below, increasing the number of prompts used beyond 5000 has limited impact.

Fig.~\ref{fig:num_covs} shows that performance largely stabilizes around 5,000 prompts in the tested Llama-2-7B setting. We nevertheless used M=50,000 in the main experiments to make $\Sigma_{XX}$ as stable as possible across models, layers, and concepts. Importantly, $\Sigma_{XX}$ is concept-agnostic and computed only once per model, so this is not a per-concept bottleneck. 

\begin{figure}[h]
    \centering
    \includegraphics[width=1.0\linewidth]{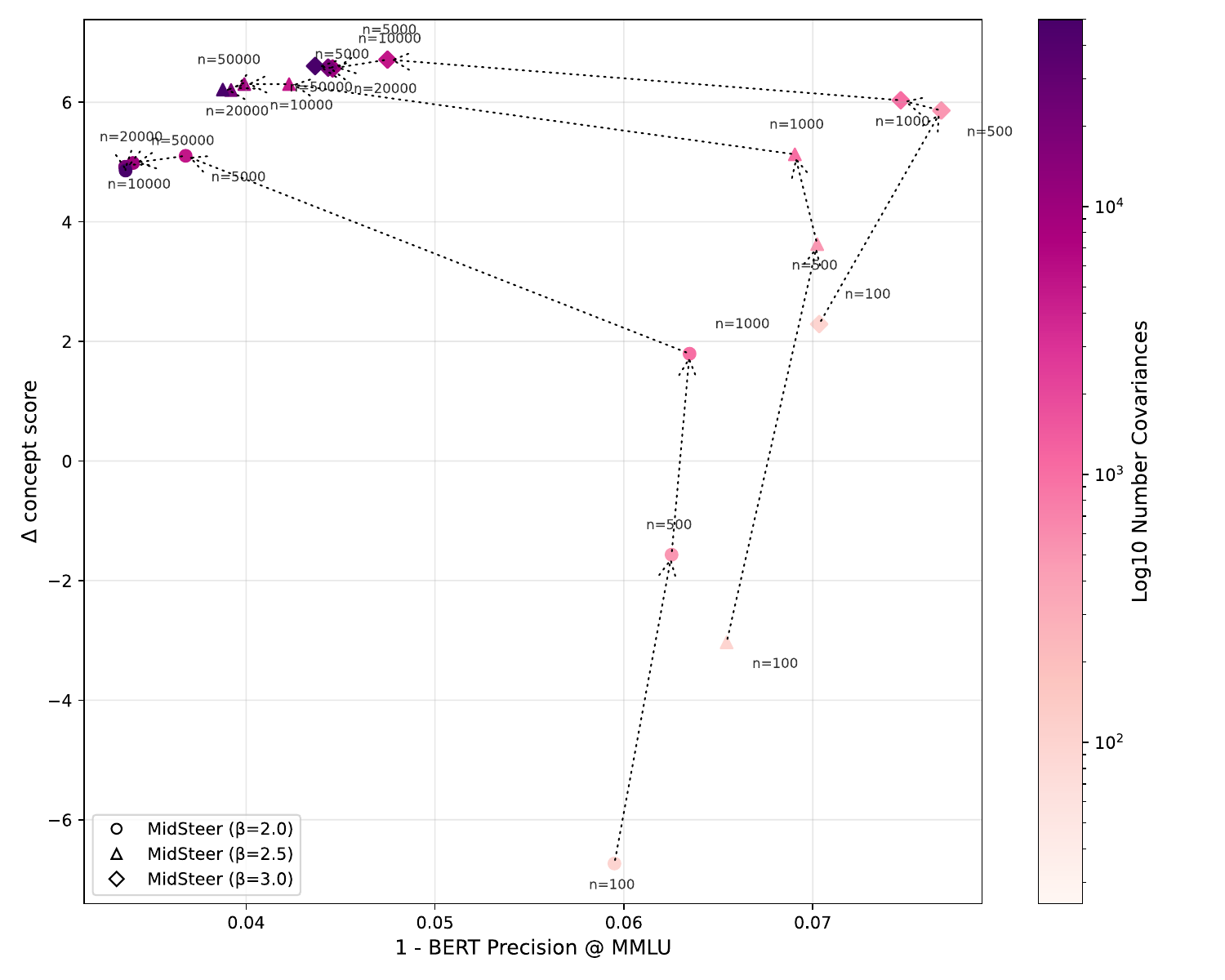}
    \caption{$\Delta$ concept score vs MMLU score (1-BERT Precision) when using different number of prompts for covariance estimation.}
    \label{fig:num_covs}
\end{figure}

\clearpage

\section{Safety-related concept switching}
\label{sec:safety_switching_suppl}

\subsection{LLM safety switching}
\label{sec:safety_switching_suppl_llm}

In this section, we report detailed per-$\beta$ results on switching \emph{toxicity} to \emph{helpfulness} for both models: Llama2 and Qwen 2.5.

\begin{table*}[th!]
\centering
\caption{Safety-related switching from toxicity to helpfulness on Llama-2-7B-chat and Qwen2.5-14B-Instruct.}
\label{tab:safety_llm_side_by_side}
\scriptsize
\setlength{\tabcolsep}{2.0pt}
\renewcommand{\arraystretch}{0.90}

\begin{minipage}[t]{0.495\textwidth}
\centering
\textbf{(a) Llama-2-7B-chat}\\[-0pt]
\resizebox{\linewidth}{!}{%
\begin{tabular}{lc|cc|cccc|c}
\toprule
Method & $\beta$
& \multicolumn{2}{c|}{Src./tgt.}
& \multicolumn{4}{c|}{Unrel. CS $\uparrow$}
& MMLU \\
\cmidrule(lr){3-4}
\cmidrule(lr){5-8}
& & RTP $\downarrow$ & Help $\uparrow$
& Sarc. & Pol. & Cr. & Math. & F1 $\uparrow$ \\
\midrule
Base & -- & .371 & .117 & 8.5 & 8.5 & 8.6 & 8.2 & 1.00 \\
\midrule
Vanilla
& 1 & .385 & .118 & 8.5 & 8.6 & 8.7 & 8.2 & .954 \\
& 2 & .368 & .109 & 8.5 & 8.5 & 8.7 & 8.2 & .932 \\
& 3 & .369 & .102 & 8.5 & 8.5 & 8.7 & 8.2 & .919 \\
& 4 & .393 & .088 & 8.3 & 8.1 & 8.1 & 7.6 & .907 \\
& 5 & .380 & .029 & 5.2 & 2.7 & 2.2 & 2.3 & .897 \\
\midrule
LEACE-S
& 1 & .374 & .117 & 8.5 & 8.6 & 8.6 & 8.3 & .987 \\
& 2 & .371 & .117 & 8.5 & 8.6 & 8.6 & 8.2 & .976 \\
& 3 & .344 & .118 & 8.5 & 8.5 & 8.6 & 8.2 & .967 \\
& 4 & .339 & .117 & 8.5 & 8.6 & 8.6 & 8.2 & .959 \\
& 5 & .346 & .116 & 8.5 & 8.5 & 8.6 & 8.2 & .954 \\
\midrule
MidSteer
& 1 & .388 & .118 & 8.5 & 8.6 & 8.6 & 8.2 & .975 \\
& 2 & .343 & .118 & 8.5 & 8.5 & 8.6 & 8.2 & .959 \\
& 3 & .309 & .117 & 8.5 & 8.6 & 8.7 & 8.3 & .949 \\
& 4 & .330 & .117 & 8.5 & 8.6 & 8.6 & 8.2 & .941 \\
& 5 & .281 & .117 & 8.5 & 8.6 & 8.6 & 8.3 & .936 \\
\bottomrule
\label{tab:safety_llama}
\end{tabular}%
}
\end{minipage}
\hfill
\begin{minipage}[t]{0.495\textwidth}
\centering
\textbf{(b) Qwen2.5-14B-Instruct}\\[-0pt]
\resizebox{\linewidth}{!}{%
\begin{tabular}{lc|cc|cccc|c}
\toprule
Method & $\beta$
& \multicolumn{2}{c|}{Src./tgt.}
& \multicolumn{4}{c|}{Unrel. CS $\uparrow$}
& MMLU \\
\cmidrule(lr){3-4}
\cmidrule(lr){5-8}
& & RTP $\downarrow$ & Help $\uparrow$
& Sarc. & Pol. & Cr. & Math. & F1 $\uparrow$ \\
\midrule
Base & -- & .265 & .081 & -- & -- & -- & -- & 1.00 \\
\midrule
Vanilla
& 1 & .268 & .081 & 8.5 & 8.4 & 8.6 & 8.2 & .943 \\
& 2 & .257 & .081 & 8.5 & 8.5 & 8.6 & 8.2 & .923 \\
& 3 & .253 & .081 & 8.4 & 8.4 & 8.6 & 8.2 & .907 \\
& 4 & .246 & .073 & 8.2 & 8.4 & 8.5 & 8.1 & .889 \\
& 5 & .239 & .031 & 7.1 & 6.9 & 7.1 & 6.2 & .843 \\
\midrule
LEACE-S
& 1 & .275 & .081 & 8.4 & 8.4 & 8.6 & 8.2 & .981 \\
& 2 & .268 & .081 & 8.5 & 8.4 & 8.6 & 8.2 & .976 \\
& 3 & .246 & .081 & 8.4 & 8.4 & 8.6 & 8.2 & .968 \\
& 4 & .273 & .080 & 8.5 & 8.4 & 8.6 & 8.2 & .962 \\
& 5 & .244 & .081 & 8.4 & 8.4 & 8.6 & 8.3 & .960 \\
\midrule
MidSteer
& 1 & .283 & .081 & 8.5 & 8.5 & 8.6 & 8.3 & .977 \\
& 2 & .284 & .081 & 8.4 & 8.4 & 8.6 & 8.2 & .966 \\
& 3 & .279 & .081 & 8.5 & 8.4 & 8.6 & 8.3 & .956 \\
& 4 & .245 & .081 & 8.5 & 8.4 & 8.6 & 8.2 & .952 \\
& 5 & .256 & .081 & 8.5 & 8.4 & 8.6 & 8.2 & .945 \\
\bottomrule
\label{tab:safety_qwen}
\end{tabular}%
}
\end{minipage}

\vspace{2pt}
\begin{minipage}{0.98\textwidth}
\tiny
\emph{Notes.}
RTP is Detoxify toxicity on RealToxicityPrompts; Help is ArmoRM helpfulness.
Sarc., Pol., Cr., and Math. are LLM-judge concept scores for unrelated concepts.
MMLU reports BERT-F1 consistency with the baseline.
\end{minipage}
\end{table*}

Tab.~\ref{tab:safety_llama} reports classifier metrics (Detoxify RTP toxicity, ArmoRM helpfulness, Alpaca/MMLU BERT-F1) for both Llama-2-7B-chat and Qwen2.5-14B-Instruct across all $\beta\in\{1,\dots,5\}$. 

\subsection{Diffusion models safety switching}
\label{sec:safety_switching_suppl_diffusion}

Tab.~\ref{tab:safety_sdxl} and Tab.~\ref{tab:safety_sana} report metrics across all $\beta\in\{1,\dots,5\}$ for SDXL and SANA models, respectively. 

\begin{table}
\caption{Model SDXL, switching from violence to peace. Reported are CLIP-scores (cs, $\times100$) and FID for source (violence), target (peace) and unrelated concepts. src-cs is violence CS ($\downarrow$), tgt-cs is peace CS ($\uparrow$). SDXL did not measure unrelated self-concept CS (shown as -).}
\label{tab:flip_sdxl_violence_to_peace}
\centering
    \setlength{\tabcolsep}{3.0pt}
    \resizebox{\linewidth}{!}{
    \begin{tabular}{lc|cc|ccc|cccc|cccc|cccc|cccc}
\toprule
 &  & \multicolumn{2}{c}{violence} & \multicolumn{3}{c}{peace} & \multicolumn{4}{c}{calm} & \multicolumn{4}{c}{anger} & \multicolumn{4}{c}{nature} & \multicolumn{4}{c}{sadness} \\
 \midrule
 method & strength & src-cs $\downarrow$ & tgt-cs $\uparrow$ & src-cs $\downarrow$ & tgt-cs $\uparrow$ & fid $\downarrow$ & cs $\uparrow$ & src-cs $\downarrow$ & tgt-cs $\downarrow$ & fid $\downarrow$ & cs $\uparrow$ & src-cs $\downarrow$ & tgt-cs $\downarrow$ & fid $\downarrow$ & cs $\uparrow$ & src-cs $\downarrow$ & tgt-cs $\downarrow$ & fid $\downarrow$ & cs $\uparrow$ & src-cs $\downarrow$ & tgt-cs $\downarrow$ & fid $\downarrow$ \\
\midrule
No Steering & - & 61.1 & 48.0 & 50.0 & 54.9 & - & - & 49.0 & 52.3 & - & - & 55.5 & 48.8 & - & - & 48.8 & 51.7 & - & - & 52.7 & 51.0 & - \\
\cline{1-23}
\multirow{5}{*}{CASteer} & 1.0 & 57.9 & 48.5 & 50.9 & 51.6 & 61.8 & 55.8 & 49.3 & 52.0 & 32.7 & 60.6 & 54.6 & 49.6 & 49.6 & 58.2 & 49.0 & 51.4 & 24.3 & 57.6 & 52.2 & 51.6 & 42.3 \\
 & 2.0 & 53.4 & 50.5 & 52.1 & 50.1 & 86.5 & 55.6 & 49.6 & 51.7 & 44.1 & 59.8 & 53.5 & 50.4 & 72.9 & 58.0 & 49.2 & 51.2 & 34.8 & 57.2 & 51.7 & 52.2 & 55.9 \\
 & 3.0 & 51.4 & 55.2 & 53.4 & 49.2 & 106.0 & 55.5 & 49.9 & 51.6 & 51.6 & 58.8 & 52.6 & 51.6 & 100.9 & 57.8 & 49.4 & 51.0 & 41.9 & 56.7 & 51.1 & 52.9 & 69.1 \\
 & 4.0 & 51.5 & 59.9 & 54.5 & 48.5 & 120.5 & 55.1 & 50.3 & 51.2 & 57.9 & 57.6 & 52.1 & 52.8 & 123.4 & 57.5 & 49.6 & 50.8 & 48.2 & 56.2 & 50.5 & 53.6 & 87.3 \\
 & 5.0 & 51.9 & 64.3 & 55.4 & 48.2 & 132.8 & 54.8 & 50.9 & 51.0 & 64.4 & 56.6 & 51.9 & 54.3 & 140.6 & 57.2 & 49.9 & 50.8 & 54.6 & 55.3 & 50.0 & 54.2 & 101.5 \\
\cline{1-23}
\multirow{5}{*}{LEACE} & 1.0 & 59.0 & 48.4 & 51.8 & 50.4 & 83.8 & 55.1 & 49.6 & 51.8 & 41.2 & 60.7 & 55.4 & 49.2 & 37.8 & 58.0 & 48.9 & 51.1 & 30.2 & 57.6 & 52.5 & 51.3 & 35.1 \\
 & 2.0 & 56.6 & 49.4 & 53.6 & 48.7 & 119.4 & 54.1 & 50.0 & 51.0 & 55.5 & 60.4 & 54.8 & 49.6 & 52.6 & 57.6 & 49.1 & 50.7 & 44.3 & 57.4 & 52.3 & 51.4 & 47.1 \\
 & 3.0 & 54.1 & 51.0 & 54.8 & 48.0 & 139.0 & 53.2 & 50.5 & 50.3 & 67.7 & 60.4 & 54.6 & 50.0 & 65.8 & 57.0 & 49.3 & 50.2 & 55.2 & 57.1 & 52.1 & 51.6 & 56.4 \\
 & 4.0 & 52.8 & 53.6 & 55.3 & 47.6 & 155.5 & 52.1 & 50.8 & 49.6 & 80.1 & 59.9 & 54.5 & 50.5 & 82.2 & 56.4 & 49.5 & 49.9 & 66.6 & 57.0 & 52.1 & 51.9 & 65.4 \\
 & 5.0 & 52.8 & 56.9 & 55.5 & 47.4 & 170.7 & 51.3 & 51.4 & 49.2 & 91.6 & 59.2 & 54.6 & 51.1 & 105.7 & 55.8 & 49.8 & 49.6 & 78.0 & 56.7 & 52.0 & 52.1 & 75.6 \\
\cline{1-23}
\multirow{5}{*}{MidSteer} & 1.0 & 56.1 & 50.0 & 50.3 & 57.1 & 48.8 & 56.4 & 49.0 & 52.6 & 30.9 & 60.6 & 54.6 & 49.9 & 55.7 & 58.4 & 48.9 & 51.6 & 23.3 & 57.7 & 52.3 & 51.6 & 43.4 \\
 & 2.0 & 52.0 & 59.0 & 50.4 & 59.1 & 63.9 & 56.6 & 49.0 & 52.8 & 41.9 & 59.7 & 53.6 & 51.1 & 83.2 & 58.4 & 49.1 & 51.7 & 33.2 & 57.6 & 52.1 & 52.4 & 56.6 \\
 & 3.0 & 52.2 & 67.9 & 50.6 & 61.2 & 80.7 & 57.0 & 49.0 & 53.2 & 48.9 & 58.3 & 52.8 & 52.6 & 118.3 & 58.4 & 49.3 & 51.8 & 39.6 & 57.2 & 51.7 & 53.0 & 67.3 \\
 & 4.0 & 52.7 & 72.1 & 50.9 & 63.5 & 96.4 & 57.3 & 49.1 & 53.5 & 54.4 & 56.7 & 52.4 & 54.7 & 147.1 & 58.3 & 49.5 & 51.8 & 45.5 & 56.6 & 51.4 & 53.9 & 80.3 \\
 & 5.0 & 52.8 & 73.5 & 51.2 & 65.0 & 109.1 & 57.4 & 49.1 & 53.8 & 59.4 & 55.7 & 52.5 & 57.8 & 168.1 & 58.3 & 49.7 & 51.8 & 50.4 & 56.0 & 51.0 & 54.5 & 95.9 \\
\bottomrule
\label{tab:safety_sdxl}
\end{tabular}}
\end{table}

\begin{table}
\caption{Model SANA, switching from violence to peace. Reported are CLIP-scores (cs, $\times100$) and FID for source (violence), target (peace) and unrelated concepts. src-cs is violence CS ($\downarrow$), tgt-cs is peace CS ($\uparrow$). SDXL did not measure unrelated self-concept CS (shown as -).}
\label{tab:flip_sana_violence_to_peace}
\centering
    \setlength{\tabcolsep}{3.0pt}
    \resizebox{\linewidth}{!}{
    \begin{tabular}{lc|cc|ccc|cccc|cccc|cccc|cccc}
\toprule
 &  & \multicolumn{2}{c}{violence} & \multicolumn{3}{c}{peace} & \multicolumn{4}{c}{calm} & \multicolumn{4}{c}{anger} & \multicolumn{4}{c}{nature} & \multicolumn{4}{c}{sadness} \\
 \midrule
 method & strength & src-cs $\downarrow$ & tgt-cs $\uparrow$ & src-cs $\downarrow$ & tgt-cs $\uparrow$ & fid $\downarrow$ & cs $\uparrow$ & src-cs $\downarrow$ & tgt-cs $\downarrow$ & fid $\downarrow$ & cs $\uparrow$ & src-cs $\downarrow$ & tgt-cs $\downarrow$ & fid $\downarrow$ & cs $\uparrow$ & src-cs $\downarrow$ & tgt-cs $\downarrow$ & fid $\downarrow$ & cs $\uparrow$ & src-cs $\downarrow$ & tgt-cs $\downarrow$ & fid $\downarrow$ \\
\midrule
No Steering & - & 61.8 & 49.7 & 51.4 & 63.5 & - & 58.1 & 48.9 & 54.9 & - & 68.8 & 58.4 & 47.8 & - & 62.8 & 48.0 & 54.3 & - & 59.7 & 54.7 & 50.8 & - \\
\cline{1-23}
\multirow{5}{*}{CASteer} & 1.0 & 60.5 & 50.1 & 52.2 & 57.4 & 81.6 & 58.2 & 49.4 & 55.0 & 34.4 & 69.4 & 58.3 & 48.2 & 30.0 & 62.7 & 48.3 & 54.0 & 25.9 & 59.7 & 54.6 & 51.1 & 36.1 \\
 & 2.0 & 58.5 & 51.2 & 53.4 & 54.7 & 122.8 & 58.2 & 49.9 & 54.8 & 47.6 & 69.8 & 58.2 & 48.6 & 41.3 & 62.6 & 48.7 & 53.8 & 38.0 & 59.5 & 54.2 & 51.4 & 48.4 \\
 & 3.0 & 55.7 & 53.1 & 54.8 & 53.9 & 147.4 & 58.2 & 50.4 & 54.6 & 58.2 & 70.3 & 58.0 & 49.0 & 54.3 & 62.5 & 49.1 & 53.7 & 48.7 & 59.2 & 54.2 & 51.9 & 58.3 \\
 & 4.0 & 52.8 & 54.8 & 55.5 & 52.7 & 165.0 & 58.0 & 50.9 & 54.5 & 67.0 & 70.8 & 58.1 & 49.6 & 69.2 & 62.4 & 49.7 & 53.9 & 59.9 & 59.0 & 54.1 & 52.5 & 71.6 \\
 & 5.0 & 51.4 & 56.2 & 56.0 & 52.1 & 181.1 & 57.9 & 51.5 & 54.5 & 76.9 & 70.9 & 58.0 & 50.1 & 82.9 & 62.3 & 50.2 & 53.9 & 70.5 & 59.0 & 53.9 & 53.2 & 82.9 \\
\cline{1-23}
\multirow{5}{*}{LEACE} & 1.0 & 59.9 & 50.1 & 53.6 & 54.0 & 129.3 & 58.0 & 49.1 & 54.9 & 27.0 & 69.8 & 58.1 & 48.1 & 34.9 & 63.3 & 47.7 & 54.9 & 21.8 & 59.7 & 54.4 & 51.1 & 34.7 \\
 & 2.0 & 57.3 & 51.5 & 56.2 & 51.5 & 176.4 & 57.8 & 49.1 & 54.8 & 37.3 & 70.6 & 57.8 & 48.6 & 50.1 & 63.7 & 47.7 & 55.6 & 31.0 & 59.6 & 54.2 & 51.5 & 47.1 \\
 & 3.0 & 54.5 & 53.4 & 57.1 & 50.6 & 207.6 & 57.7 & 49.3 & 54.7 & 44.8 & 71.2 & 57.5 & 49.1 & 66.9 & 64.2 & 47.7 & 56.2 & 37.4 & 59.2 & 53.8 & 51.9 & 56.8 \\
 & 4.0 & 51.8 & 55.3 & 57.8 & 50.2 & 226.2 & 57.7 & 49.6 & 54.5 & 52.0 & 71.7 & 57.3 & 49.7 & 87.0 & 64.4 & 47.7 & 56.7 & 43.6 & 59.0 & 53.6 & 52.4 & 67.4 \\
 & 5.0 & 50.4 & 56.5 & 58.8 & 50.1 & 247.5 & 57.5 & 49.8 & 54.4 & 58.4 & 72.1 & 57.2 & 50.1 & 106.8 & 64.6 & 47.9 & 57.2 & 51.4 & 58.8 & 53.4 & 52.9 & 78.0 \\
\cline{1-23}
\multirow{5}{*}{MidSteer} & 1.0 & 57.6 & 51.3 & 50.9 & 65.0 & 42.8 & 58.7 & 48.4 & 55.8 & 39.9 & 70.8 & 57.8 & 48.6 & 52.0 & 63.6 & 47.6 & 55.3 & 27.1 & 59.2 & 53.7 & 51.7 & 52.0 \\
 & 2.0 & 53.1 & 55.1 & 50.6 & 66.1 & 58.6 & 59.0 & 48.3 & 56.7 & 56.6 & 72.3 & 57.3 & 49.6 & 95.7 & 64.1 & 47.5 & 56.2 & 39.9 & 58.1 & 52.8 & 53.2 & 79.7 \\
 & 3.0 & 50.5 & 57.8 & 50.7 & 67.0 & 72.5 & 59.5 & 48.7 & 57.5 & 75.4 & 72.9 & 57.3 & 50.7 & 137.5 & 64.7 & 47.8 & 57.2 & 54.3 & 57.2 & 52.5 & 54.9 & 112.5 \\
 & 4.0 & 50.4 & 58.6 & 51.0 & 67.7 & 88.9 & 59.7 & 49.6 & 58.0 & 97.4 & 72.5 & 57.3 & 52.0 & 164.9 & 65.2 & 48.1 & 58.1 & 79.2 & 56.7 & 52.7 & 56.4 & 147.4 \\
 & 5.0 & 51.0 & 58.1 & 51.5 & 67.8 & 106.9 & 59.4 & 50.7 & 57.9 & 123.0 & 71.1 & 57.2 & 53.6 & 187.2 & 65.1 & 48.9 & 58.4 & 112.5 & 56.3 & 53.2 & 56.8 & 179.1 \\
\bottomrule
\label{tab:safety_sana}
\end{tabular}}
\end{table}

\clearpage
\section{GPT-4o-mini judge cross-validation}
\label{sec:gpt4omini_judge}

The concept-switching evaluation in Sec.~\ref{sec:experiments} uses Llama-3.1-8B-Instruct as the LLM judge for Concept Scores (CS). To validate that our findings do not depend on the choice of judge, we re-score every generation produced by the LLM concept-switching experiment with a second, larger, instruction-tuned judge, OpenAI's GPT-4o-mini, using the identical $0$--$10$ scoring prompt as described in the Sec.~\ref{sec:experimental_setup}.

We re-run the LLM concept-switching pipeline on Llama-2-7B-chat for the three nominal concept pairs of Sec.~\ref{sec:experiments} -- $p_1 = $ (Horse $\to$ Motorcycle), $p_2 = $ (Dog $\to$ Cat), $p_3 = $ (Chihuahua $\to$ Muffin). For each pair we generate completions at $\beta \in \{1, 2, 3, 4, 5\}$ for vanilla steering, LEACE-Switch and MidSteer, plus an unsteered baseline. Each generation is then scored with GPT-4o-mini both on the source-concept test prompts, the target-concept test prompts, and the four unrelated-concept test prompts $t_i = \{c_i\}_{i=1}^{4}$ as defined in Sec.~\ref{sec:experiments}. Scoring uses chat-completions API with temperature $0$, $8$ concurrent requests, and the same prompt template as the Llama-3.1-8B judge so the two judges differ only in the underlying model.

Tab.~\ref{tab:gpt4omini_summary} reports Source-CS / Target-CS on the source-concept test prompts for each pair at $\beta\in\{3, 5\}$; Tabs.~\ref{tab:flip_llama2-7b_horses_to_motorcycles_gpt4omini}--\ref{tab:flip_llama2-7b_chihuahuas_to_muffins_gpt4omini} give the full per-$\beta$, per-concept breakdown for each pair (same layout as Tabs.~\ref{tab:flip_llama2-7b_noclip_horses_to_motorcycles}--\ref{tab:flip_llama2-7b_noclip_dogs_to_cats}, minus the BERT-Precision consistency columns, which were not recomputed for this cross-validation). The GPT-4o-mini numbers agree with the Llama-3.1-8B-Instruct numbers reported in Sec.~\ref{sec:experiments} and Appendix~\ref{sec:tables_llm_switch}: in every pair, at every $\beta$, the two judges produce the same relative ordering of the three methods on both source-CS and target-CS, so the conclusions of Sec.~\ref{sec:experiments} are robust to the choice of judge.

\begin{table}[th!]
\caption{GPT-4o-mini judge cross-validation of the LLM-judge Concept Scores used in Sec.~\ref{sec:experiments}. Same 0--10 protocol as Appendix~\ref{sec:judge_prompt}, but with GPT-4o-mini in place of Llama-3.1-8B as the judge. Each cell shows source-CS / target-CS on the source-concept test prompts. MidSteer attains the lowest source-CS and the highest target-CS in every pair and at every $\beta$, confirming the relative ordering established with the Llama-3.1-8B judge.}
\label{tab:gpt4omini_summary}
\centering
\resizebox{\linewidth}{!}{
\begin{tabular}{l|c|cc|cc|cc}
\toprule
 & & \multicolumn{2}{c}{Vanilla} & \multicolumn{2}{c}{LEACE-Switch} & \multicolumn{2}{c}{MidSteer} \\
\cmidrule(lr){3-4}\cmidrule(lr){5-6}\cmidrule(lr){7-8}
Concept pair & Baseline & $\beta{=}3$ & $\beta{=}5$ & $\beta{=}3$ & $\beta{=}5$ & $\beta{=}3$ & $\beta{=}5$ \\
\midrule
Horse $\to$ Motorcycle & 9.8\,/\,0.1 & 1.4\,/\,7.8 & 1.8\,/\,6.3 & 1.9\,/\,7.5 & 2.1\,/\,7.1 & 0.1\,/\,9.5 & 0.1\,/\,9.6 \\
Dog $\to$ Cat & 9.7\,/\,0.2 & 4.9\,/\,5.1 & 4.2\,/\,5.6 & 4.3\,/\,5.3 & 3.7\,/\,5.6 & 0.7\,/\,9.2 & 0.1\,/\,9.7 \\
Chihuahua $\to$ Muffin & 9.8\,/\,0.0 & 0.6\,/\,5.0 & 0.5\,/\,5.2 & 0.9\,/\,6.2 & 0.2\,/\,6.9 & 0.2\,/\,7.9 & 0.0\,/\,8.5 \\
\bottomrule
\end{tabular}}
\end{table}

\begin{table}[th!]
\caption{Model Llama-2-7b, switching from horses to motorcycles, scored by GPT-4o-mini judge (Concept Score, 0--10). src-cs is the source concept ($\downarrow$), tgt-cs is the target concept ($\uparrow$). For unrelated-concept blocks: cs is the unrelated concept's self-relevance ($\uparrow$), src-cs / tgt-cs are leakage of source / target into the unrelated prompt's output ($\downarrow$).}
\label{tab:flip_llama2-7b_horses_to_motorcycles_gpt4omini}
\centering
    \setlength{\tabcolsep}{3.0pt}
    \resizebox{\linewidth}{!}{
    \begin{tabular}{lc|cc|cc|ccc|ccc|ccc|ccc}
\toprule
  &   & \multicolumn{2}{c}{horses} & \multicolumn{2}{c}{motorcycles} & \multicolumn{3}{c}{cows} & \multicolumn{3}{c}{dogs} & \multicolumn{3}{c}{pigs} & \multicolumn{3}{c}{legislators} \\
 \midrule
 method & strength & src-cs $\downarrow$ & tgt-cs $\uparrow$ & src-cs $\downarrow$ & tgt-cs $\uparrow$ & cs $\uparrow$ & src-cs $\downarrow$ & tgt-cs $\downarrow$ & cs $\uparrow$ & src-cs $\downarrow$ & tgt-cs $\downarrow$ & cs $\uparrow$ & src-cs $\downarrow$ & tgt-cs $\downarrow$ & cs $\uparrow$ & src-cs $\downarrow$ & tgt-cs $\downarrow$ \\
\midrule
No Steering & - & 9.8 & 0.1 & 0.0 & 9.7 & 9.8 & 0.3 & 0.0 & 9.7 & 0.0 & 0.0 & 9.7 & 0.0 & 0.0 & 8.9 & 0.0 & 0.0 \\
\cline{1-18}
\multirow{5}{*}{Steering} & 1.0 & 9.7 & 0.1 & 0.1 & 9.7 & 9.8 & 0.3 & 0.0 & 9.7 & 0.0 & 0.0 & 9.7 & 0.0 & 0.0 & 8.9 & 0.0 & 0.0 \\
 & 2.0 & 2.2 & 7.6 & 0.9 & 9.2 & 9.7 & 0.3 & 0.0 & 9.6 & 0.0 & 0.0 & 9.7 & 0.0 & 0.0 & 8.9 & 0.0 & 0.0 \\
 & 3.0 & 1.4 & 7.8 & 4.9 & 4.4 & 9.3 & 0.3 & 0.0 & 9.6 & 0.0 & 0.0 & 9.5 & 0.0 & 0.0 & 8.9 & 0.0 & 0.0 \\
 & 4.0 & 1.7 & 7.0 & 5.2 & 3.1 & 8.3 & 0.2 & 0.1 & 9.4 & 0.0 & 0.0 & 9.3 & 0.0 & 0.0 & 8.8 & 0.0 & 0.0 \\
 & 5.0 & 1.8 & 6.3 & 5.7 & 2.5 & 7.9 & 0.2 & 0.2 & 9.0 & 0.0 & 0.0 & 9.0 & 0.0 & 0.0 & 8.8 & 0.0 & 0.0 \\
\cline{1-18}
\multirow{5}{*}{LEACE} & 1.0 & 9.7 & 0.1 & 0.1 & 9.7 & 9.8 & 0.3 & 0.0 & 9.7 & 0.0 & 0.0 & 9.7 & 0.0 & 0.0 & 8.8 & 0.0 & 0.0 \\
 & 2.0 & 3.9 & 6.0 & 3.6 & 6.3 & 9.8 & 0.3 & 0.0 & 9.7 & 0.0 & 0.0 & 9.7 & 0.0 & 0.0 & 8.9 & 0.0 & 0.0 \\
 & 3.0 & 1.9 & 7.5 & 5.3 & 3.4 & 9.7 & 0.3 & 0.0 & 9.7 & 0.0 & 0.0 & 9.7 & 0.0 & 0.0 & 8.9 & 0.0 & 0.0 \\
 & 4.0 & 2.0 & 7.1 & 5.7 & 3.0 & 9.7 & 0.2 & 0.0 & 9.7 & 0.0 & 0.0 & 9.7 & 0.0 & 0.0 & 8.9 & 0.0 & 0.0 \\
 & 5.0 & 2.1 & 7.1 & 5.8 & 2.7 & 9.5 & 0.2 & 0.0 & 9.6 & 0.0 & 0.0 & 9.6 & 0.0 & 0.0 & 8.8 & 0.0 & 0.0 \\
\cline{1-18}
\multirow{5}{*}{MidSteer} & 1.0 & 9.7 & 0.1 & 0.0 & 9.8 & 9.7 & 0.3 & 0.0 & 9.7 & 0.0 & 0.0 & 9.7 & 0.0 & 0.0 & 8.9 & 0.0 & 0.0 \\
 & 2.0 & 1.2 & 8.6 & 0.0 & 9.8 & 9.7 & 0.3 & 0.0 & 9.7 & 0.0 & 0.0 & 9.7 & 0.0 & 0.0 & 8.9 & 0.0 & 0.0 \\
 & 3.0 & 0.1 & 9.5 & 0.0 & 9.8 & 9.7 & 0.2 & 0.0 & 9.6 & 0.0 & 0.0 & 9.6 & 0.0 & 0.0 & 8.9 & 0.0 & 0.0 \\
 & 4.0 & 0.1 & 9.6 & 0.0 & 9.8 & 9.4 & 0.2 & 0.2 & 9.6 & 0.0 & 0.0 & 9.4 & 0.0 & 0.0 & 8.8 & 0.0 & 0.0 \\
 & 5.0 & 0.1 & 9.6 & 0.0 & 9.8 & 7.8 & 0.2 & 1.6 & 9.5 & 0.0 & 0.0 & 9.0 & 0.0 & 0.3 & 8.7 & 0.0 & 0.0 \\
\bottomrule
\end{tabular}}
\end{table}

\begin{table}
\caption{Model Llama-2-7b, switching from dogs to cats, scored by GPT-4o-mini judge (Concept Score, 0--10). src-cs is the source concept ($\downarrow$), tgt-cs is the target concept ($\uparrow$). For unrelated-concept blocks: cs is the unrelated concept's self-relevance ($\uparrow$), src-cs / tgt-cs are leakage of source / target into the unrelated prompt's output ($\downarrow$).}
\label{tab:flip_llama2-7b_dogs_to_cats_gpt4omini}
\centering
    \setlength{\tabcolsep}{3.0pt}
    \resizebox{\linewidth}{!}{
    \begin{tabular}{lc|cc|cc|ccc|ccc|ccc|ccc}
\toprule
  &   & \multicolumn{2}{c}{dogs} & \multicolumn{2}{c}{cats} & \multicolumn{3}{c}{cows} & \multicolumn{3}{c}{wolves} & \multicolumn{3}{c}{pigs} & \multicolumn{3}{c}{legislators} \\
 \midrule
 method & strength & src-cs $\downarrow$ & tgt-cs $\uparrow$ & src-cs $\downarrow$ & tgt-cs $\uparrow$ & cs $\uparrow$ & src-cs $\downarrow$ & tgt-cs $\downarrow$ & cs $\uparrow$ & src-cs $\downarrow$ & tgt-cs $\downarrow$ & cs $\uparrow$ & src-cs $\downarrow$ & tgt-cs $\downarrow$ & cs $\uparrow$ & src-cs $\downarrow$ & tgt-cs $\downarrow$ \\
\midrule
No Steering & - & 9.7 & 0.2 & 0.2 & 9.7 & 9.8 & 0.1 & 0.0 & 9.5 & 3.1 & 0.0 & 9.7 & 0.1 & 0.0 & 8.9 & 0.0 & 0.0 \\
\cline{1-18}
\multirow{5}{*}{Steering} & 1.0 & 9.6 & 0.2 & 1.4 & 8.7 & 9.8 & 0.1 & 0.0 & 9.5 & 3.1 & 0.0 & 9.7 & 0.1 & 0.0 & 8.9 & 0.0 & 0.0 \\
 & 2.0 & 6.8 & 3.8 & 8.3 & 1.9 & 9.7 & 0.1 & 0.0 & 9.5 & 3.0 & 0.1 & 9.7 & 0.1 & 0.0 & 8.8 & 0.0 & 0.0 \\
 & 3.0 & 4.9 & 5.1 & 8.3 & 1.7 & 9.8 & 0.1 & 0.0 & 9.5 & 2.9 & 0.0 & 9.7 & 0.1 & 0.0 & 8.9 & 0.0 & 0.0 \\
 & 4.0 & 4.6 & 5.5 & 8.2 & 1.6 & 9.6 & 0.2 & 0.0 & 9.4 & 3.0 & 0.1 & 9.6 & 0.1 & 0.0 & 8.8 & 0.0 & 0.0 \\
 & 5.0 & 4.2 & 5.6 & 8.3 & 1.3 & 9.6 & 0.3 & 0.0 & 9.2 & 2.8 & 0.1 & 9.7 & 0.1 & 0.0 & 8.6 & 0.0 & 0.0 \\
\cline{1-18}
\multirow{5}{*}{LEACE} & 1.0 & 9.6 & 0.3 & 0.6 & 9.4 & 9.8 & 0.1 & 0.0 & 9.4 & 3.0 & 0.1 & 9.7 & 0.1 & 0.0 & 8.9 & 0.0 & 0.0 \\
 & 2.0 & 6.5 & 3.6 & 6.8 & 3.4 & 9.8 & 0.1 & 0.0 & 9.5 & 2.9 & 0.1 & 9.7 & 0.0 & 0.0 & 8.8 & 0.0 & 0.0 \\
 & 3.0 & 4.3 & 5.3 & 7.6 & 2.3 & 9.7 & 0.1 & 0.0 & 9.5 & 2.9 & 0.1 & 9.7 & 0.0 & 0.0 & 8.9 & 0.0 & 0.0 \\
 & 4.0 & 4.0 & 5.4 & 7.7 & 2.0 & 9.7 & 0.1 & 0.0 & 9.4 & 2.8 & 0.1 & 9.7 & 0.1 & 0.0 & 8.9 & 0.0 & 0.0 \\
 & 5.0 & 3.7 & 5.6 & 7.6 & 2.1 & 9.8 & 0.1 & 0.0 & 9.2 & 2.8 & 0.1 & 9.7 & 0.1 & 0.0 & 8.9 & 0.0 & 0.0 \\
\cline{1-18}
\multirow{5}{*}{MidSteer} & 1.0 & 9.5 & 0.4 & 0.2 & 9.7 & 9.8 & 0.1 & 0.0 & 9.5 & 3.1 & 0.0 & 9.7 & 0.0 & 0.0 & 8.9 & 0.0 & 0.0 \\
 & 2.0 & 2.2 & 7.7 & 0.2 & 9.7 & 9.8 & 0.1 & 0.0 & 9.4 & 3.0 & 0.1 & 9.7 & 0.1 & 0.0 & 8.8 & 0.0 & 0.0 \\
 & 3.0 & 0.7 & 9.2 & 0.1 & 9.7 & 9.7 & 0.0 & 0.1 & 9.2 & 2.8 & 0.4 & 9.7 & 0.1 & 0.1 & 8.8 & 0.0 & 0.0 \\
 & 4.0 & 0.3 & 9.5 & 0.1 & 9.7 & 9.3 & 0.0 & 0.6 & 8.1 & 2.3 & 1.8 & 9.4 & 0.1 & 0.7 & 8.7 & 0.0 & 0.0 \\
 & 5.0 & 0.1 & 9.7 & 0.1 & 9.8 & 8.3 & 0.1 & 2.0 & 6.3 & 1.8 & 3.9 & 8.4 & 0.1 & 2.0 & 8.5 & 0.0 & 0.3 \\
\bottomrule
\end{tabular}}
\end{table}

\begin{table}
\caption{Model Llama-2-7b, switching from chihuahuas to muffins, scored by GPT-4o-mini judge (Concept Score, 0--10). src-cs is the source concept ($\downarrow$), tgt-cs is the target concept ($\uparrow$). For unrelated-concept blocks: cs is the unrelated concept's self-relevance ($\uparrow$), src-cs / tgt-cs are leakage of source / target into the unrelated prompt's output ($\downarrow$).}
\label{tab:flip_llama2-7b_chihuahuas_to_muffins_gpt4omini}
\centering
    \setlength{\tabcolsep}{3.0pt}
    \resizebox{\linewidth}{!}{
    \begin{tabular}{lc|cc|cc|ccc|ccc|ccc|ccc}
\toprule
  &   & \multicolumn{2}{c}{chihuahuas} & \multicolumn{2}{c}{muffins} & \multicolumn{3}{c}{cats} & \multicolumn{3}{c}{dogs} & \multicolumn{3}{c}{wolves} & \multicolumn{3}{c}{legislators} \\
 \midrule
 method & strength & src-cs $\downarrow$ & tgt-cs $\uparrow$ & src-cs $\downarrow$ & tgt-cs $\uparrow$ & cs $\uparrow$ & src-cs $\downarrow$ & tgt-cs $\downarrow$ & cs $\uparrow$ & src-cs $\downarrow$ & tgt-cs $\downarrow$ & cs $\uparrow$ & src-cs $\downarrow$ & tgt-cs $\downarrow$ & cs $\uparrow$ & src-cs $\downarrow$ & tgt-cs $\downarrow$ \\
\midrule
No Steering & - & 9.8 & 0.0 & 0.0 & 9.7 & 9.7 & 0.0 & 0.0 & 9.7 & 2.7 & 0.0 & 9.5 & 0.1 & 0.0 & 8.8 & 0.0 & 0.0 \\
\cline{1-18}
\multirow{5}{*}{Steering} & 1.0 & 9.8 & 0.0 & 0.0 & 9.6 & 9.7 & 0.0 & 0.0 & 9.7 & 2.6 & 0.0 & 9.5 & 0.0 & 0.0 & 8.9 & 0.0 & 0.0 \\
 & 2.0 & 4.8 & 2.7 & 0.1 & 9.1 & 9.6 & 0.0 & 0.0 & 9.6 & 2.4 & 0.0 & 9.4 & 0.0 & 0.0 & 8.9 & 0.0 & 0.0 \\
 & 3.0 & 0.6 & 5.0 & 1.2 & 6.8 & 9.5 & 0.0 & 0.0 & 9.4 & 2.2 & 0.1 & 9.3 & 0.0 & 0.0 & 8.9 & 0.0 & 0.0 \\
 & 4.0 & 0.4 & 5.4 & 2.1 & 5.3 & 9.4 & 0.0 & 0.0 & 8.6 & 1.9 & 0.2 & 9.3 & 0.0 & 0.0 & 8.8 & 0.0 & 0.0 \\
 & 5.0 & 0.5 & 5.2 & 2.5 & 4.7 & 9.1 & 0.0 & 0.0 & 7.3 & 1.5 & 0.4 & 9.2 & 0.0 & 0.0 & 8.9 & 0.0 & 0.0 \\
\cline{1-18}
\multirow{5}{*}{LEACE} & 1.0 & 9.8 & 0.0 & 0.0 & 9.6 & 9.7 & 0.0 & 0.0 & 9.7 & 2.7 & 0.0 & 9.5 & 0.0 & 0.0 & 8.9 & 0.0 & 0.0 \\
 & 2.0 & 3.8 & 2.8 & 0.0 & 9.4 & 9.6 & 0.0 & 0.0 & 9.7 & 2.6 & 0.0 & 9.5 & 0.1 & 0.0 & 8.9 & 0.0 & 0.0 \\
 & 3.0 & 0.9 & 6.2 & 0.5 & 8.8 & 9.6 & 0.0 & 0.0 & 9.7 & 2.4 & 0.0 & 9.4 & 0.0 & 0.0 & 8.9 & 0.0 & 0.0 \\
 & 4.0 & 0.2 & 6.9 & 1.8 & 8.3 & 9.6 & 0.0 & 0.0 & 9.6 & 2.4 & 0.0 & 9.5 & 0.0 & 0.0 & 8.9 & 0.0 & 0.0 \\
 & 5.0 & 0.2 & 6.9 & 2.2 & 7.7 & 9.5 & 0.0 & 0.0 & 9.2 & 2.2 & 0.1 & 9.4 & 0.0 & 0.0 & 8.9 & 0.0 & 0.0 \\
\cline{1-18}
\multirow{5}{*}{MidSteer} & 1.0 & 9.8 & 0.0 & 0.0 & 9.7 & 9.6 & 0.0 & 0.0 & 9.7 & 2.6 & 0.0 & 9.4 & 0.1 & 0.0 & 8.9 & 0.0 & 0.0 \\
 & 2.0 & 4.0 & 3.5 & 0.0 & 9.7 & 9.6 & 0.0 & 0.0 & 9.7 & 2.5 & 0.0 & 9.3 & 0.0 & 0.0 & 8.9 & 0.0 & 0.0 \\
 & 3.0 & 0.2 & 7.9 & 0.0 & 9.7 & 9.6 & 0.0 & 0.0 & 9.5 & 2.4 & 0.0 & 9.2 & 0.0 & 0.0 & 8.8 & 0.0 & 0.0 \\
 & 4.0 & 0.0 & 8.4 & 0.0 & 9.7 & 9.4 & 0.0 & 0.0 & 7.9 & 1.9 & 0.6 & 8.8 & 0.0 & 0.0 & 8.7 & 0.0 & 0.0 \\
 & 5.0 & 0.0 & 8.5 & 0.0 & 9.7 & 8.8 & 0.0 & 0.1 & 3.7 & 0.9 & 3.1 & 8.0 & 0.0 & 0.1 & 8.5 & 0.0 & 0.0 \\
\bottomrule
\end{tabular}}
\end{table}

\section{Detailed results on concrete concept switching}

\subsubsection{Pareto charts for LLM concept switching}
\label{sec:pareto_llm_switch}

In this section, we provide more Pareto charts for LLM concept switching. When switching concept $c_s$ to $c_t$, for each LLM model we provide 9 types of Pareto plots: 
\begin{itemize}
    \item 1 - BERT Precision score for unrelated concepts (horizontal axis) vs \textbf{$\Delta$} Concept Score (CS) for the target $c_t$ and source $c_s$ concepts (fig.~\ref{fig:llama2-7b-flip-noclip-unrelated-bertp.svg},~\ref{fig:qwen-14b-flip-unrelated-bertp.svg},~\ref{fig:qwen-7b-flip-noclip-unrelated-bertp.svg})
    \item 1 - BERT Precision score for MMLU (horizontal axis) vs\textbf{$\Delta$} Concept Score (CS) for the target $c_t$ and source $c_s$ concepts (vertical axis) (fig.~\ref{fig:llama2-7b-flip-noclip-mmlu-bertp.svg},~\ref{fig:qwen-14b-flip-mmlu-bertp.svg},~\ref{fig:qwen-7b-flip-noclip-mmlu-bertp.svg})
    \item Average Concept Score (CS) for unrelated concepts $c_i$ (horizontal axis) vs \textbf{$\Delta$} Concept Score (CS) for the target $c_t$ and source $c_s$ concepts (vertical axis)  (fig.~\ref{fig:llama2-7b-flip-noclip-unrelated-cs.svg},~\ref{fig:qwen-14b-flip-unrelated-cs.svg},~\ref{fig:qwen-7b-flip-noclip-unrelated-cs.svg})
    
    \item 1 - BERT Precision score for unrelated concepts (horizontal axis) vs Concept Score (CS) for the \textbf{source} $c_s$ concept (vertical axis) (fig.~\ref{fig:llama2-7b-flip-noclip-unrelated-bertp-scs.svg},~\ref{fig:qwen-14b-flip-unrelated-bertp-scs.svg},~\ref{fig:qwen-7b-flip-noclip-unrelated-bertp-scs.svg})
    \item BERT Precision score for MMLU (horizontal axis) vs Concept Score (CS) for the \textbf{source} $c_s$ concept (vertical axis) (fig.\ref{fig:llama2-7b-flip-noclip-mmlu-bertp-scs.svg},~\ref{fig:qwen-14b-flip-mmlu-bertp-scs.svg},~\ref{fig:qwen-7b-flip-noclip-mmlu-bertp-scs.svg})
    \item Average Concept Score (CS) for unrelated concepts $c_i$ (horizontal axis) vs Concept Score (CS) for the \textbf{source} $c_s$ concept (vertical axis) (fig.\ref{fig:llama2-7b-flip-noclip-unrelated-cs-scs.svg},~\ref{fig:qwen-14b-flip-unrelated-cs-scs.svg},~\ref{fig:qwen-7b-flip-noclip-unrelated-cs-scs.svg})

    \item 1 - BERT Precision score for unrelated concepts (horizontal axis) vs Concept Score (CS) for the \textbf{target} $c_s$ concept (vertical axis) (fig.~\ref{fig:llama2-7b-flip-noclip-unrelated-bertp-scs.svg},~\ref{fig:qwen-14b-flip-unrelated-bertp-tcs.svg},~\ref{fig:qwen-7b-flip-noclip-unrelated-bertp-scs.svg})
    \item BERT Precision score for MMLU (horizontal axis) vs Concept Score (CS) for the \textbf{target} $c_s$ concept (vertical axis) (fig.\ref{fig:llama2-7b-flip-noclip-mmlu-bertp-scs.svg},~\ref{fig:qwen-14b-flip-mmlu-bertp-scs.svg},~\ref{fig:qwen-7b-flip-noclip-mmlu-bertp-scs.svg})
    \item Average Concept Score (CS) for unrelated concepts $c_i$ (horizontal axis) vs Concept Score (CS) for the \textbf{target} $c_s$ concept (vertical axis) (fig.\ref{fig:llama2-7b-flip-noclip-unrelated-cs-scs.svg},~\ref{fig:qwen-14b-flip-unrelated-cs-scs.svg},~\ref{fig:qwen-7b-flip-noclip-unrelated-cs-scs.svg})
\end{itemize}

In each case, we see clear superiority of MidSteer over other steering approaches.

We additionally provide detailed breakdown of scores for all $\beta$ values and all concepts $c_s, c_t, c_i$ in the tables in sec.~\ref{sec:tables_llm_switch}

\begin{figure}[t]
\centering
\begin{subfigure}[t]{0.49\linewidth}
  \centering
  \includegraphics[width=\linewidth]{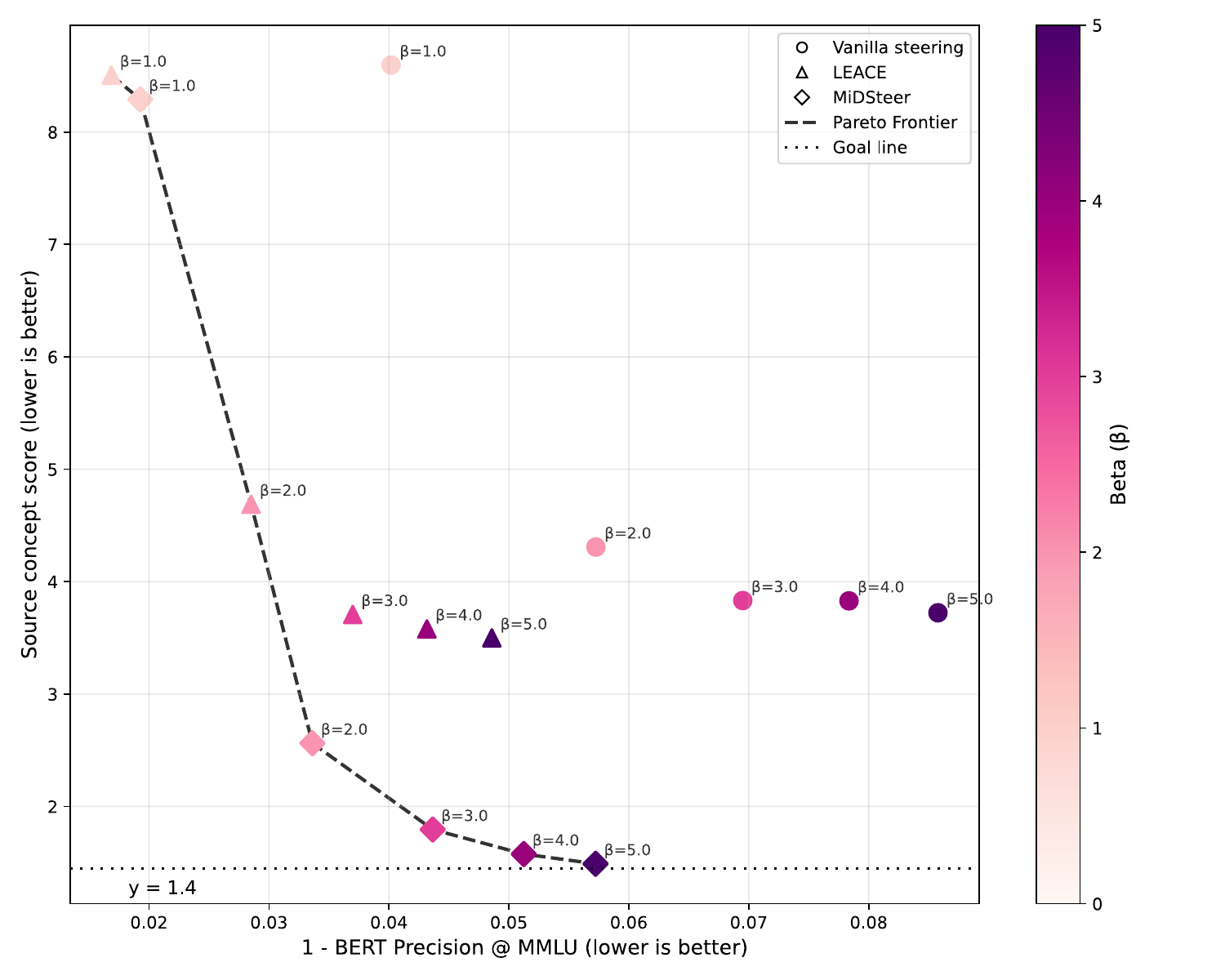}
  \caption{MMLU vs BERTP}
  \label{fig:llama2-7b-flip-noclip-mmlu-bertp-scs.svg}
\end{subfigure}\hfill
\begin{subfigure}[t]{0.49\linewidth}
  \centering
  \includegraphics[width=\linewidth]{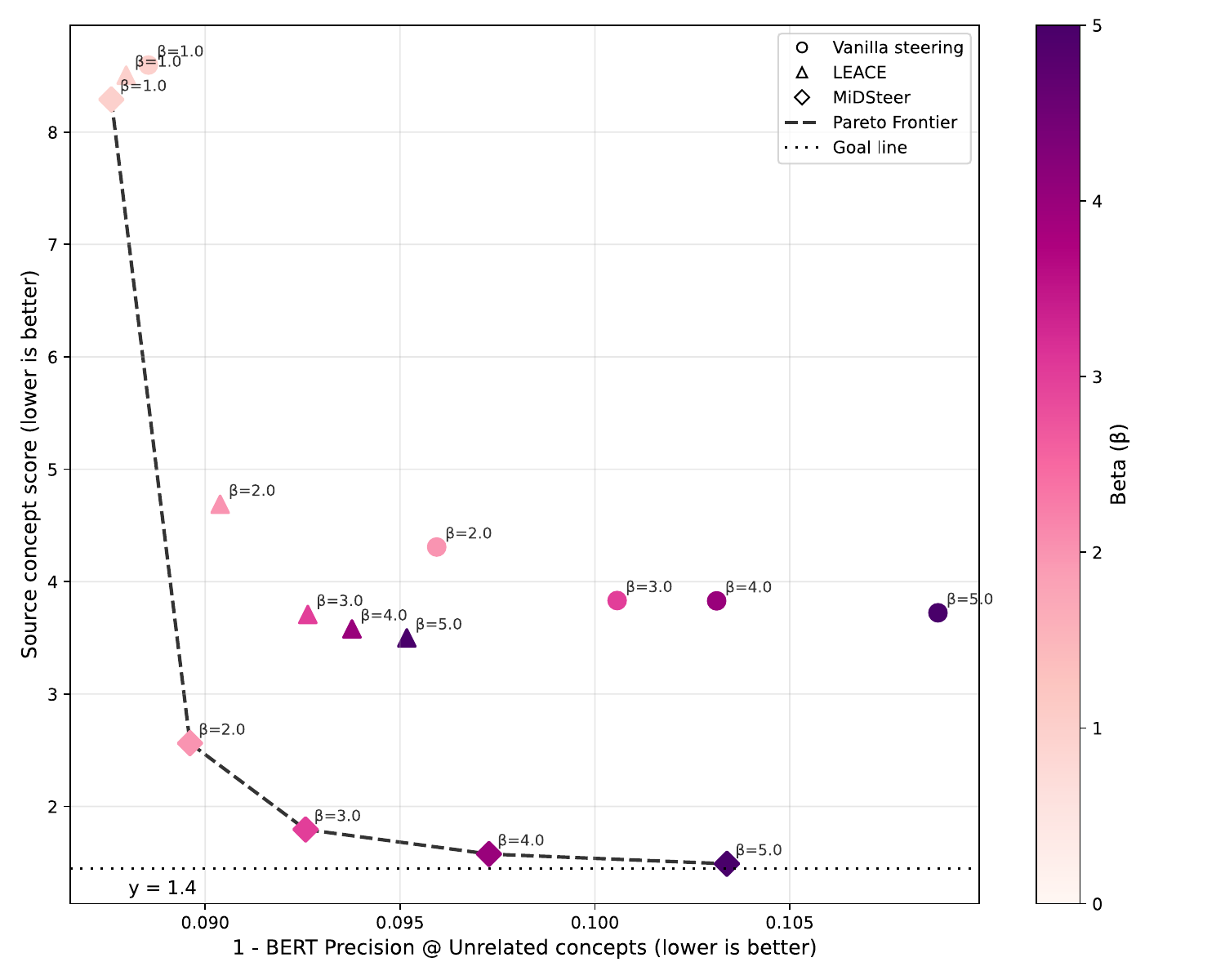}
  \caption{Unrelated vs BERTP}
  \label{fig:llama2-7b-flip-noclip-unrelated-bertp-scs.svg}
\end{subfigure}

\vspace{0.4em}

\begin{subfigure}[t]{0.49\linewidth}
  \centering
  \includegraphics[width=\linewidth]{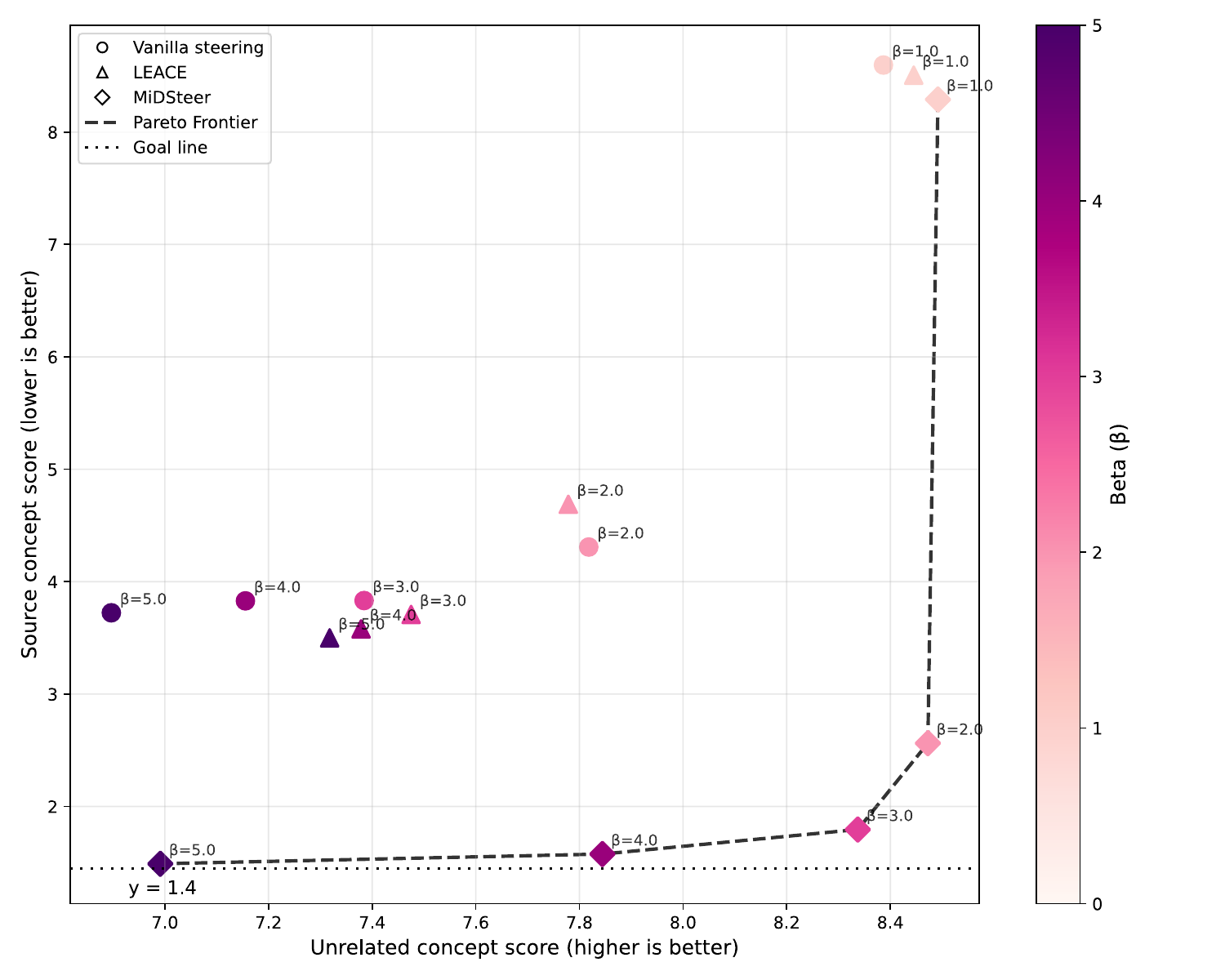}
  \caption{Unrelated vs CS}
  \label{fig:llama2-7b-flip-noclip-unrelated-cs-scs.svg}
\end{subfigure}\hfill
\begin{subfigure}[t]{0.49\linewidth}
  \centering
\end{subfigure}
\caption{Pareto plot for concept flip on model llama2-7b (Source-CS axes)}
\label{fig:llama2-7b-flip-scs-grid.svg}
\vspace{-0.3cm}
\end{figure}

\begin{figure}[t]
\centering
\begin{subfigure}[t]{0.49\linewidth}
  \centering
  \includegraphics[width=\linewidth]{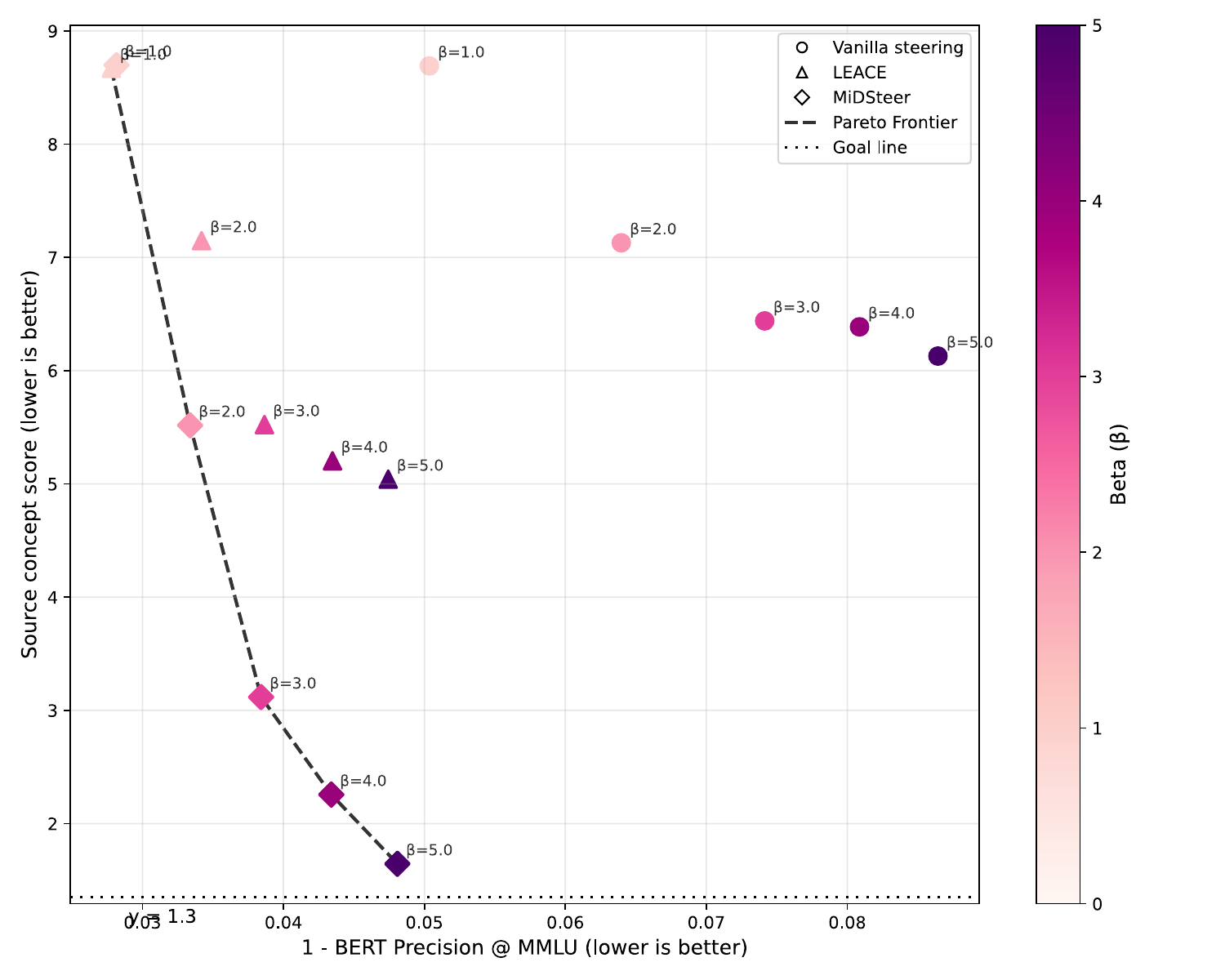}
  \caption{MMLU vs BERTP}
  \label{fig:qwen-14b-flip-mmlu-bertp-scs.svg}
\end{subfigure}\hfill
\begin{subfigure}[t]{0.49\linewidth}
  \centering
  \includegraphics[width=\linewidth]{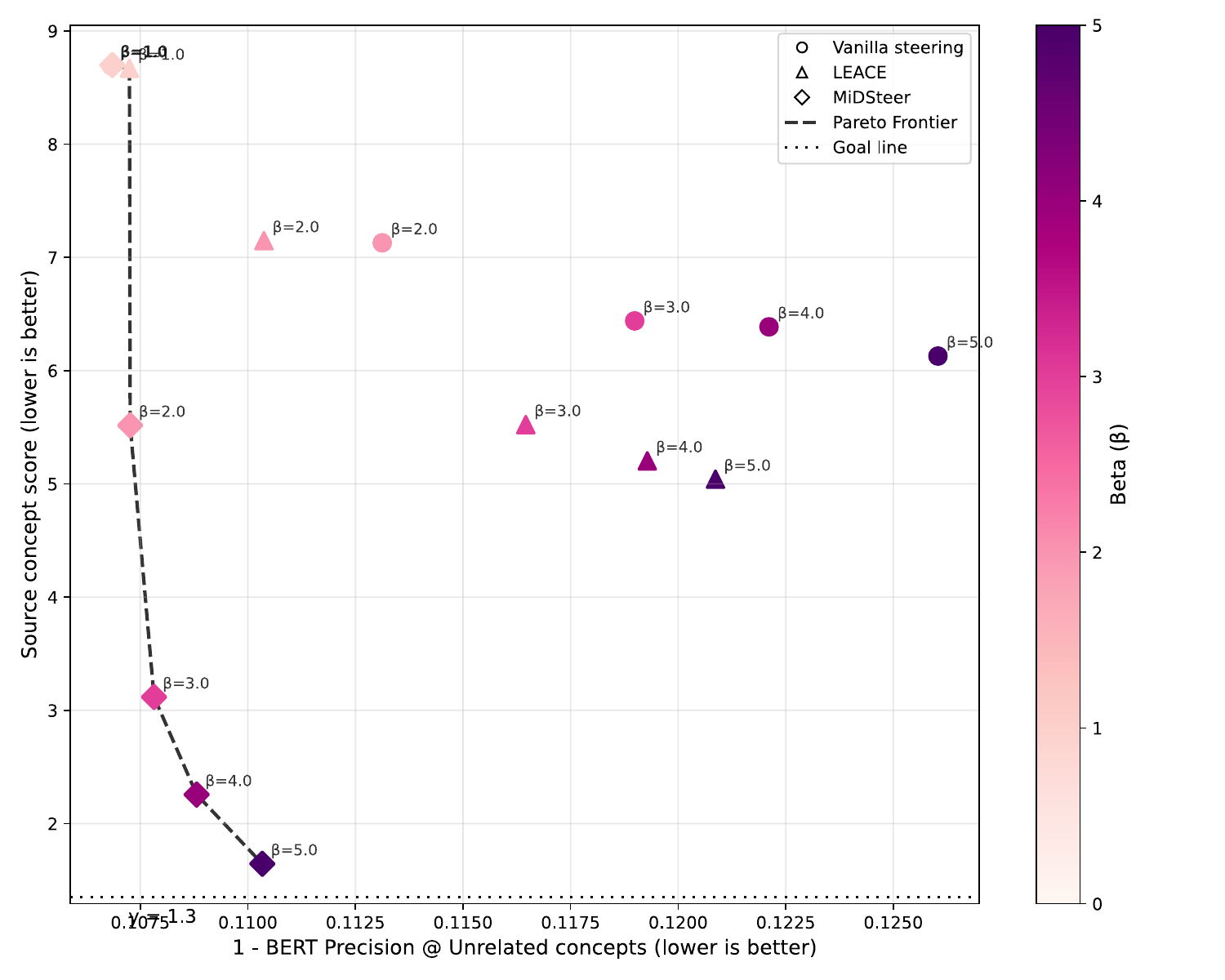}
  \caption{Unrelated vs BERTP}
  \label{fig:qwen-14b-flip-unrelated-bertp-scs.svg}
\end{subfigure}

\vspace{0.4em}

\begin{subfigure}[t]{0.49\linewidth}
  \centering
  \includegraphics[width=\linewidth]{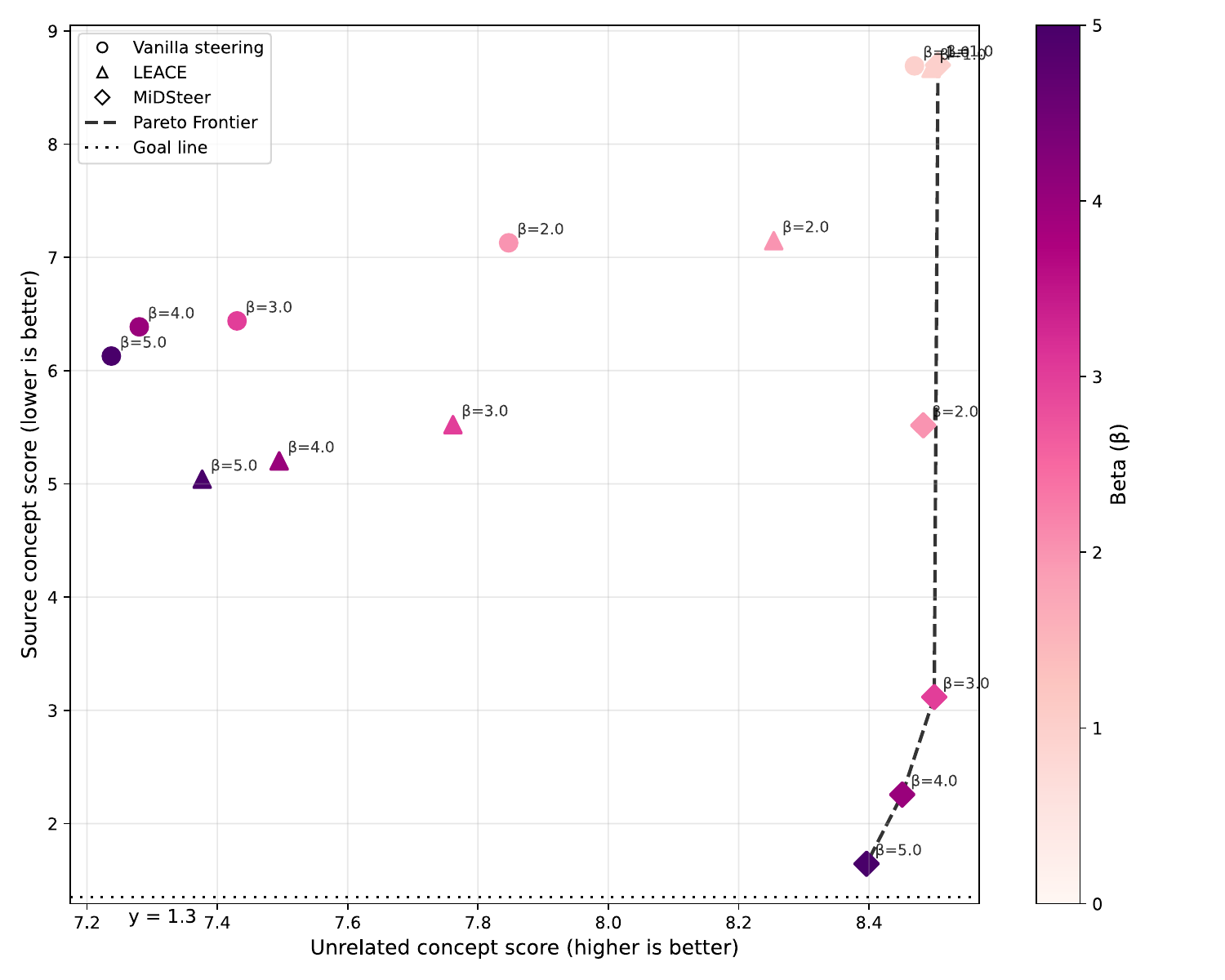}
  \caption{Unrelated vs CS}
  \label{fig:qwen-14b-flip-unrelated-cs-scs.svg}
\end{subfigure}\hfill
\begin{subfigure}[t]{0.49\linewidth}
  \centering
\end{subfigure}
\caption{Pareto plot for concept flip on model qwen-14b (Source-CS axes)}
\label{fig:qwen-14b-flip-scs-grid.svg}
\vspace{-0.3cm}
\end{figure}

\begin{figure}[t]
\centering
\begin{subfigure}[t]{0.49\linewidth}
  \centering
  \includegraphics[width=\linewidth]{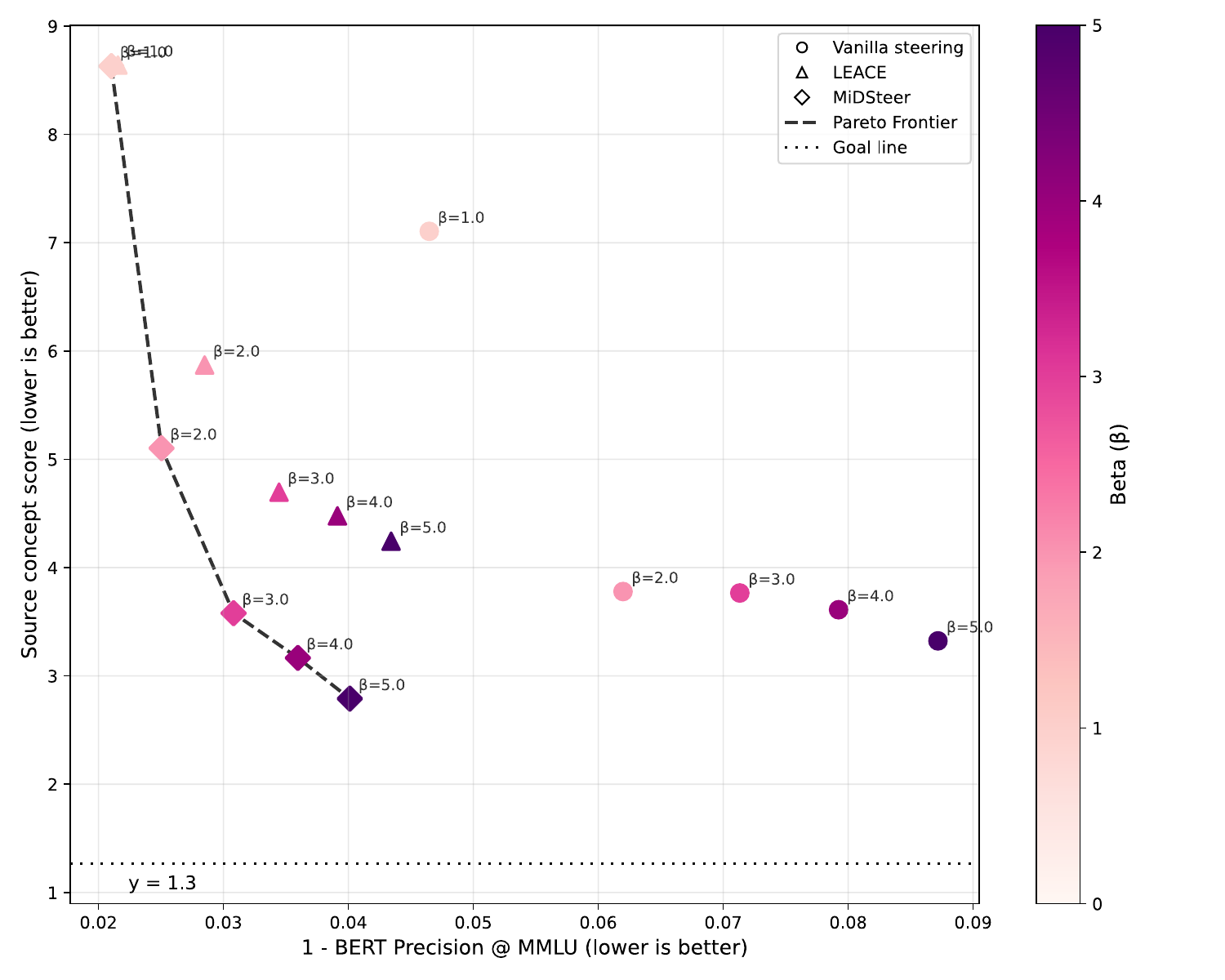}
  \caption{MMLU vs BERTP}
  \label{fig:qwen-7b-flip-noclip-mmlu-bertp-scs.svg}
\end{subfigure}\hfill
\begin{subfigure}[t]{0.49\linewidth}
  \centering
  \includegraphics[width=\linewidth]{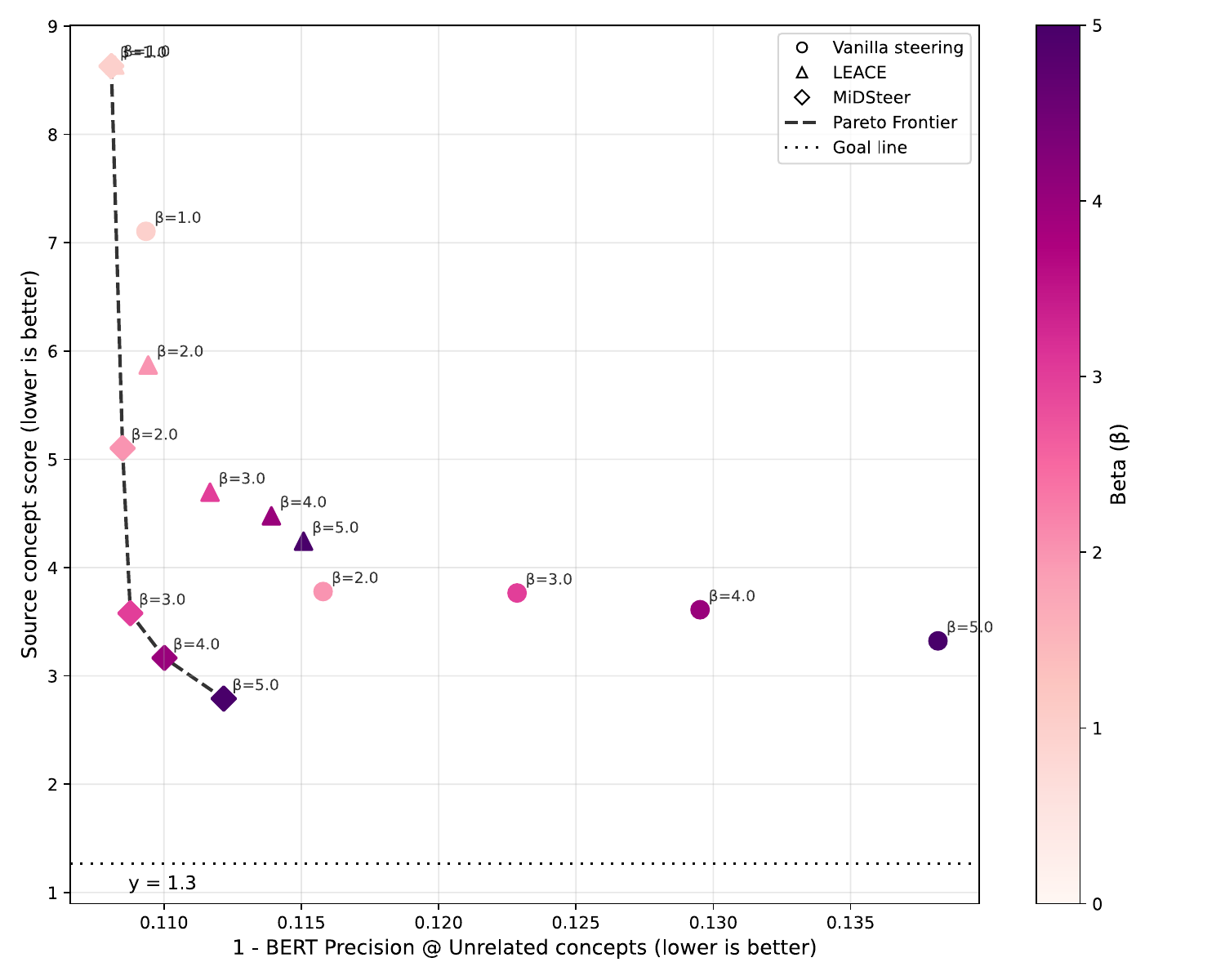}
  \caption{Unrelated vs BERTP}
  \label{fig:qwen-7b-flip-noclip-unrelated-bertp-scs.svg}
\end{subfigure}

\vspace{0.4em}

\begin{subfigure}[t]{0.49\linewidth}
  \centering
  \includegraphics[width=\linewidth]{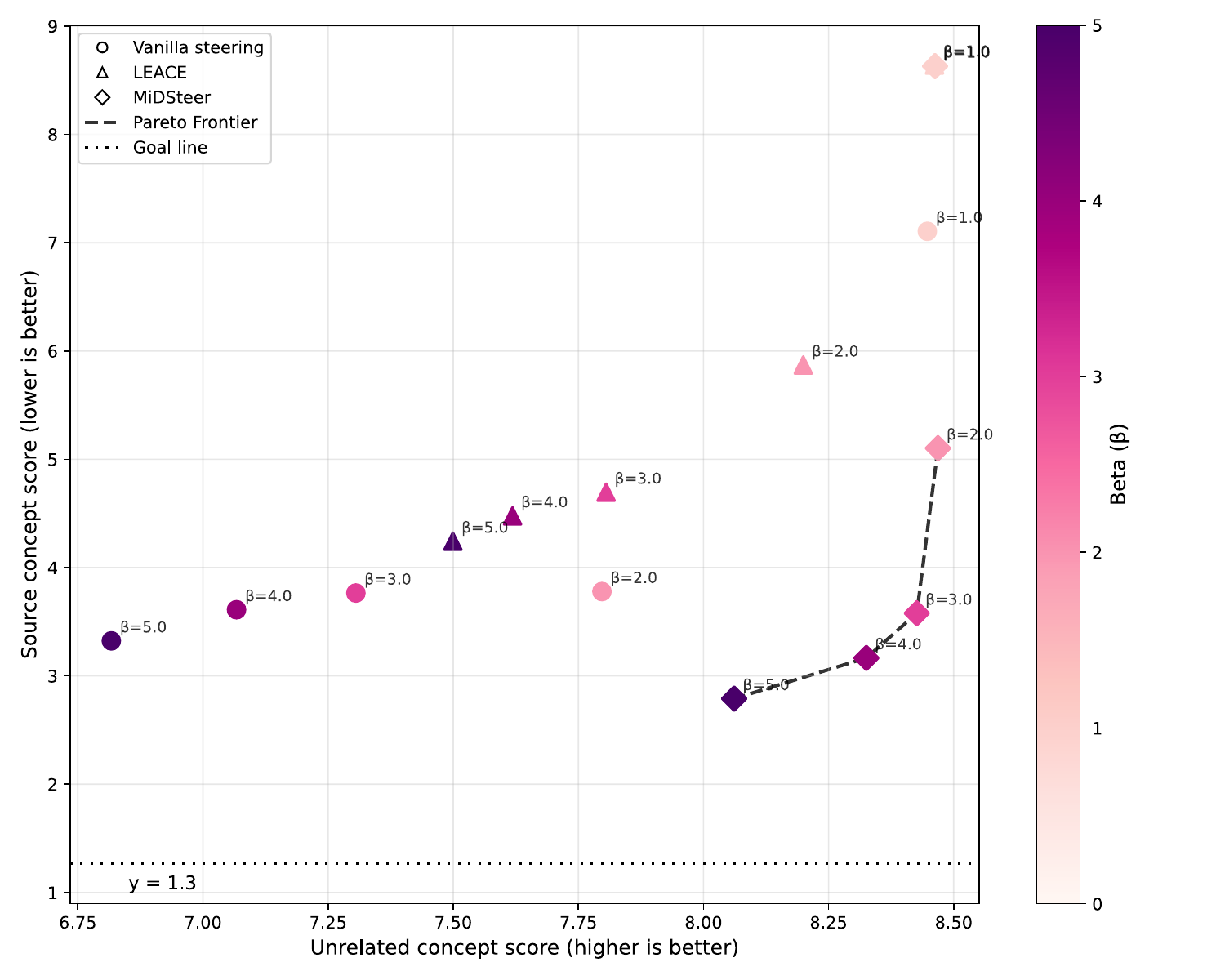}
  \caption{Unrelated vs CS}
  \label{fig:qwen-7b-flip-noclip-unrelated-cs-scs.svg}
\end{subfigure}\hfill
\begin{subfigure}[t]{0.49\linewidth}
  \centering
\end{subfigure}
\caption{Pareto plot for concept flip on model qwen-7b (Source-CS axes)}
\label{fig:qwen-7b-flip-scs-grid.svg}
\vspace{-0.3cm}
\end{figure}

\begin{figure}[t]
\centering
\begin{subfigure}[t]{0.49\linewidth}
  \centering
  \includegraphics[width=\linewidth]{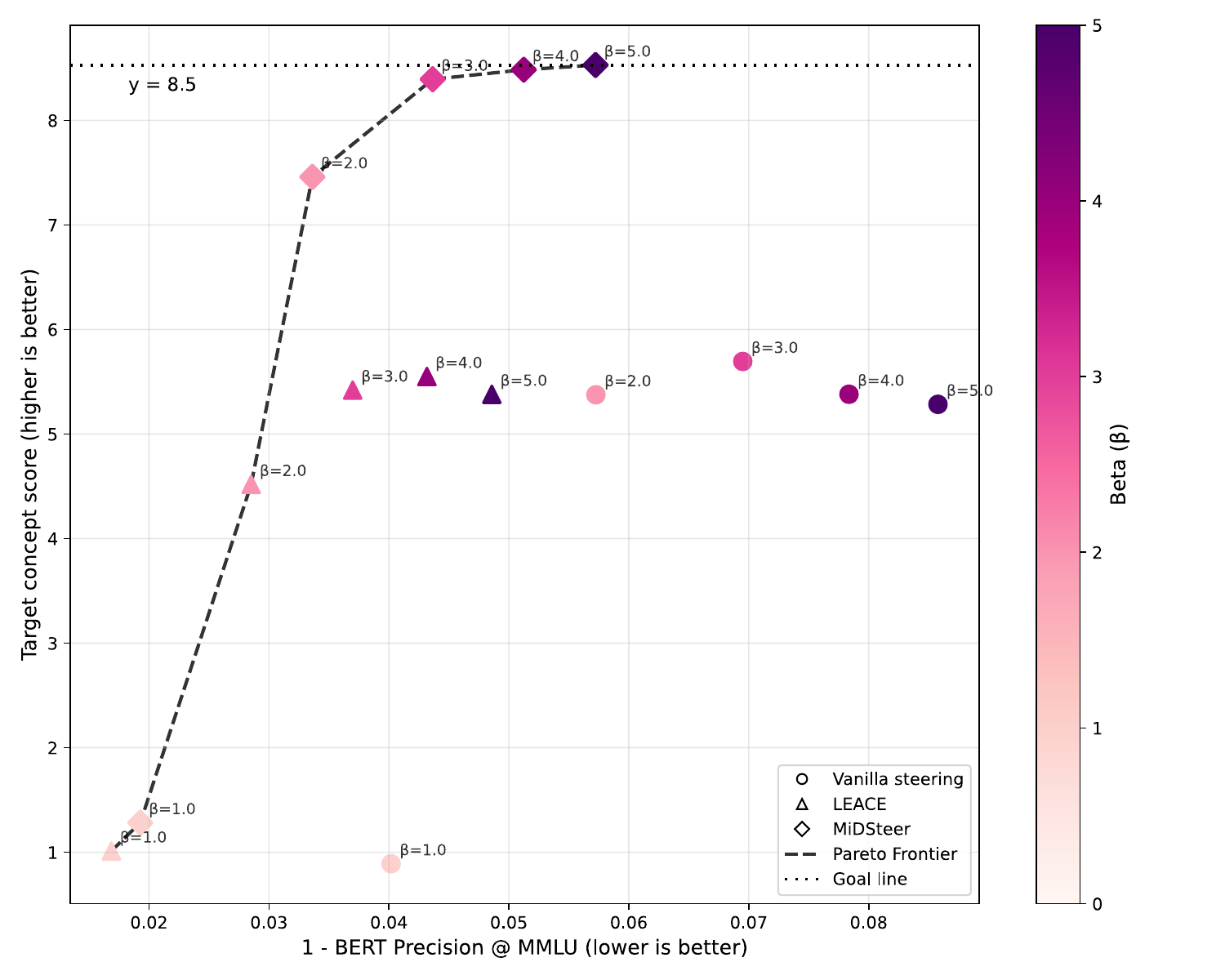}
  \caption{MMLU vs BERTP}
  \label{fig:llama2-7b-flip-noclip-mmlu-bertp-tcs.svg}
\end{subfigure}\hfill
\begin{subfigure}[t]{0.49\linewidth}
  \centering
  \includegraphics[width=\linewidth]{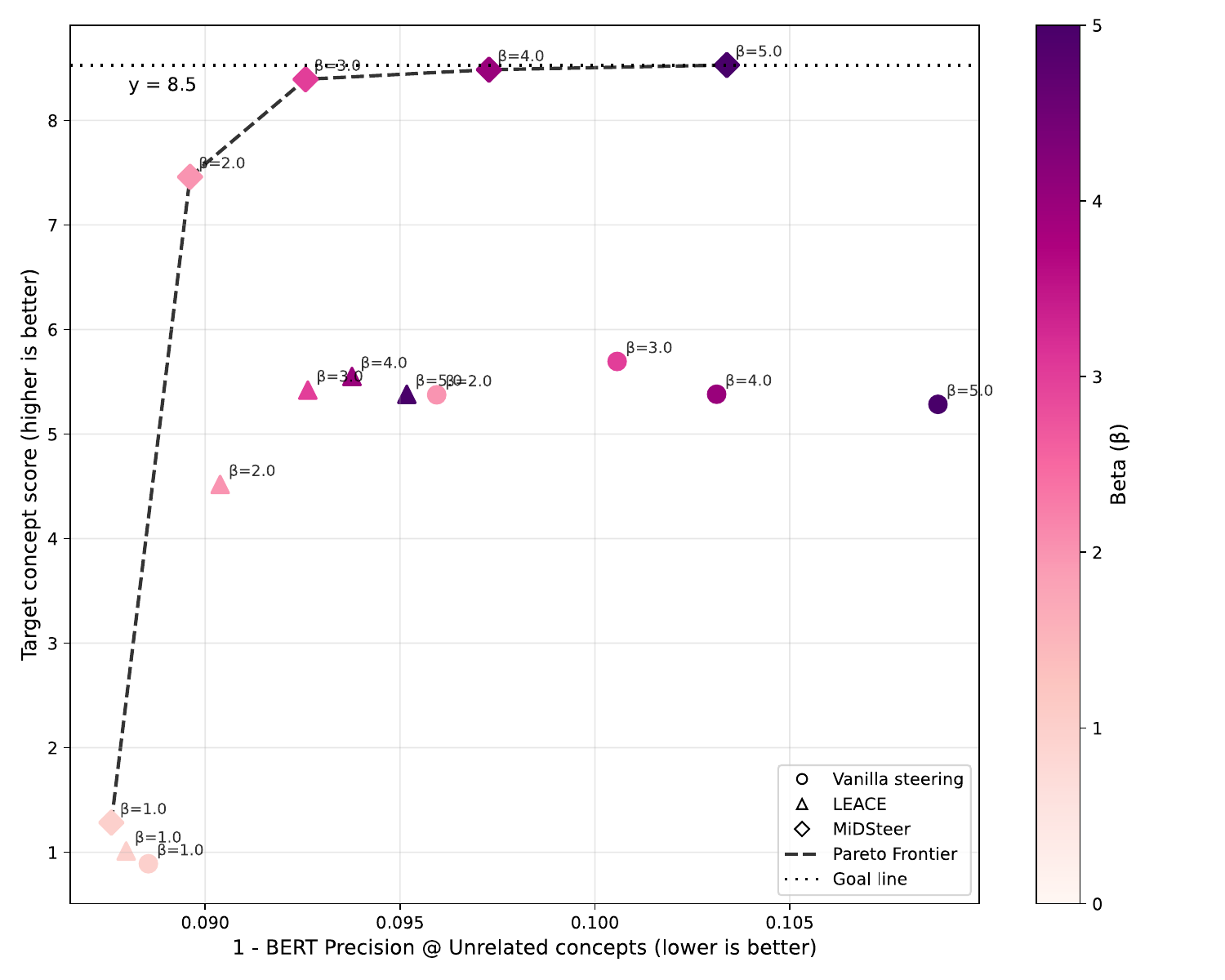}
  \caption{Unrelated vs BERTP}
  \label{fig:llama2-7b-flip-noclip-unrelated-bertp-tcs.svg}
\end{subfigure}

\vspace{0.4em}

\begin{subfigure}[t]{0.49\linewidth}
  \centering
  \includegraphics[width=\linewidth]{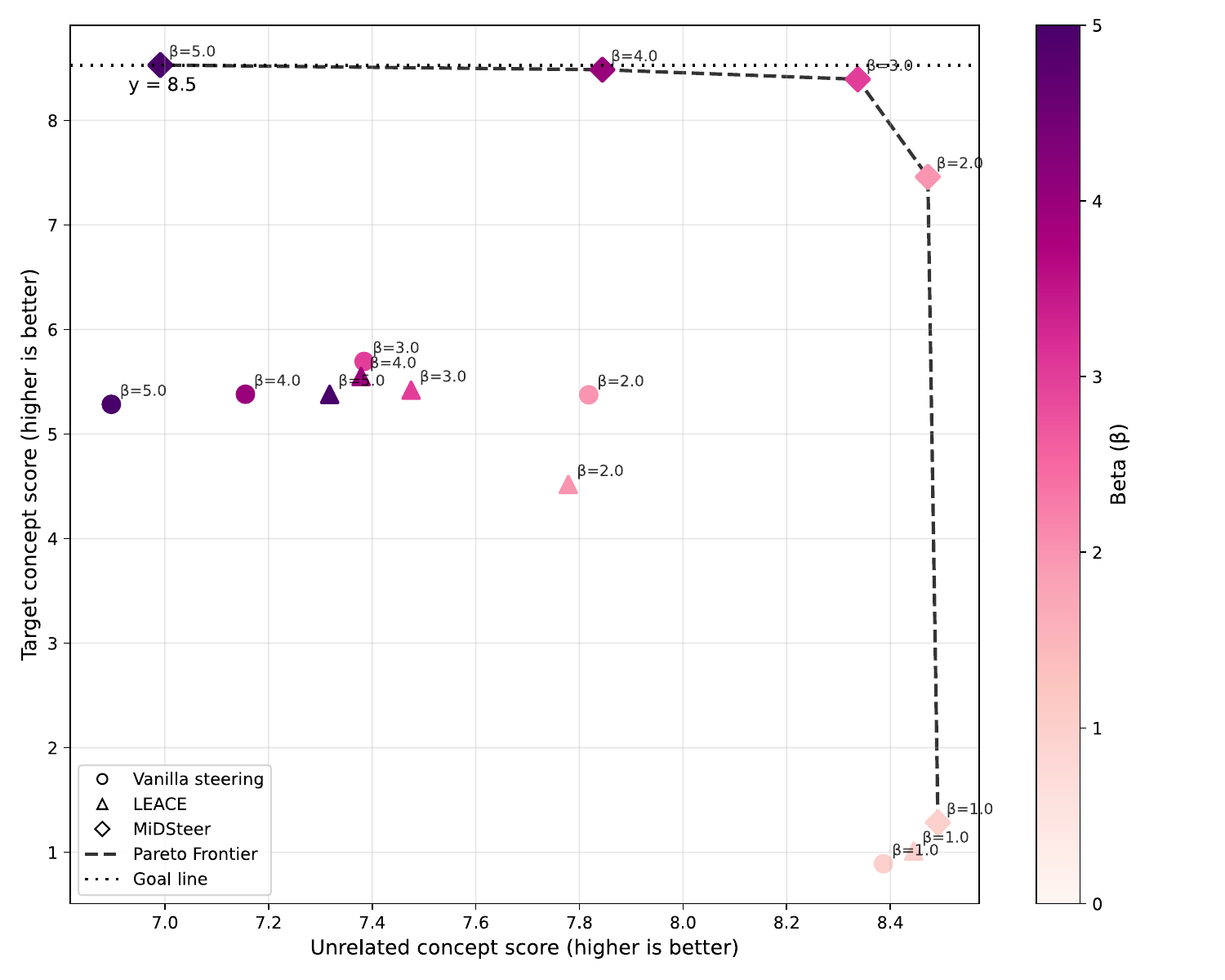}
  \caption{Unrelated vs CS}
  \label{fig:llama2-7b-flip-noclip-unrelated-cs-tcs.svg}
\end{subfigure}\hfill
\begin{subfigure}[t]{0.49\linewidth}
  \centering
\end{subfigure}
\caption{Pareto plot for concept flip on model llama2-7b (Target-CS axes)}
\label{fig:llama2-7b-flip-tcs-grid.svg}
\vspace{-0.3cm}
\end{figure}

\begin{figure}[t]
\centering
\begin{subfigure}[t]{0.49\linewidth}
  \centering
  \includegraphics[width=\linewidth]{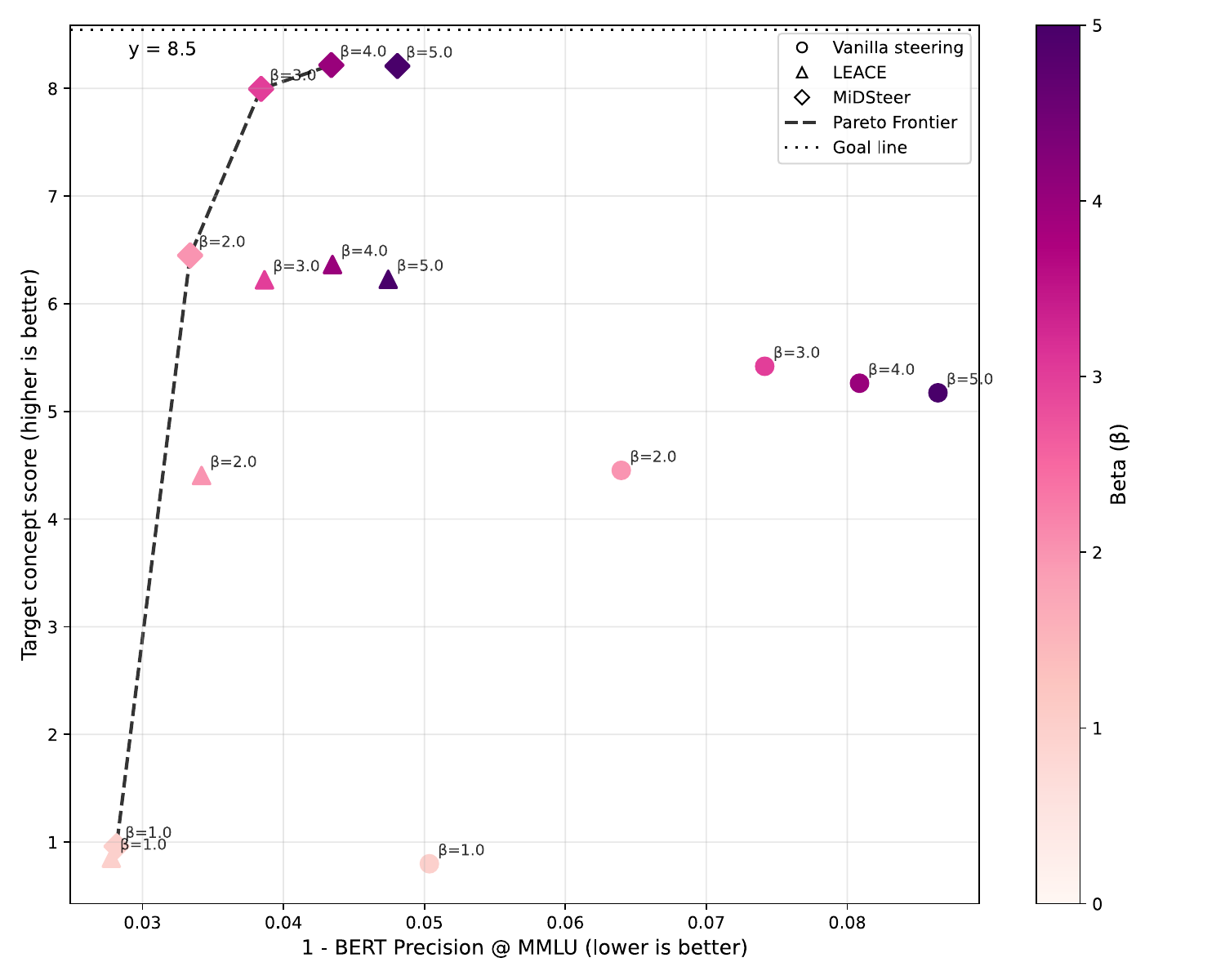}
  \caption{MMLU vs BERTP}
  \label{fig:qwen-14b-flip-mmlu-bertp-tcs.svg}
\end{subfigure}\hfill
\begin{subfigure}[t]{0.49\linewidth}
  \centering
  \includegraphics[width=\linewidth]{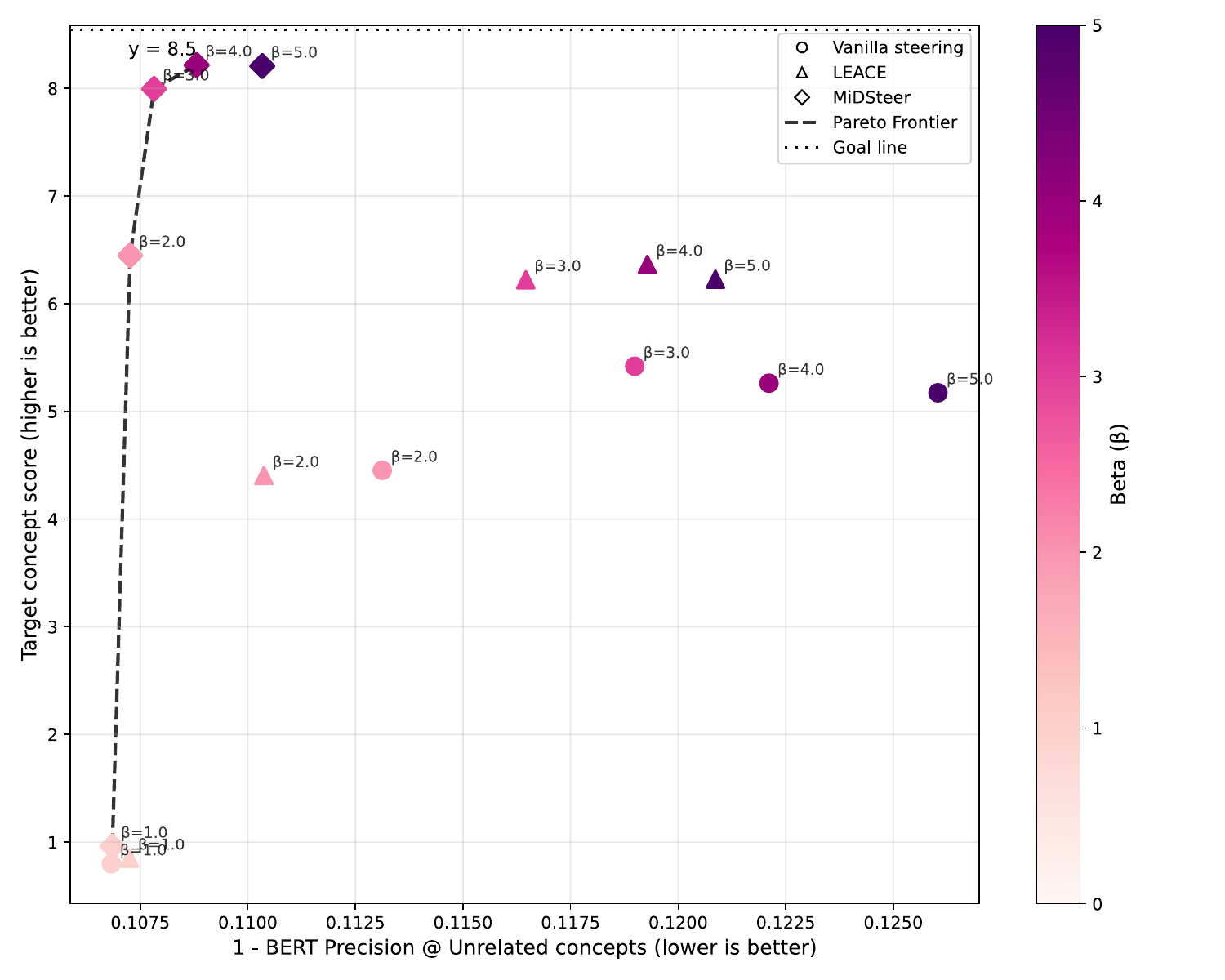}
  \caption{Unrelated vs BERTP}
  \label{fig:qwen-14b-flip-unrelated-bertp-tcs.svg}
\end{subfigure}

\vspace{0.4em}

\begin{subfigure}[t]{0.49\linewidth}
  \centering
  \includegraphics[width=\linewidth]{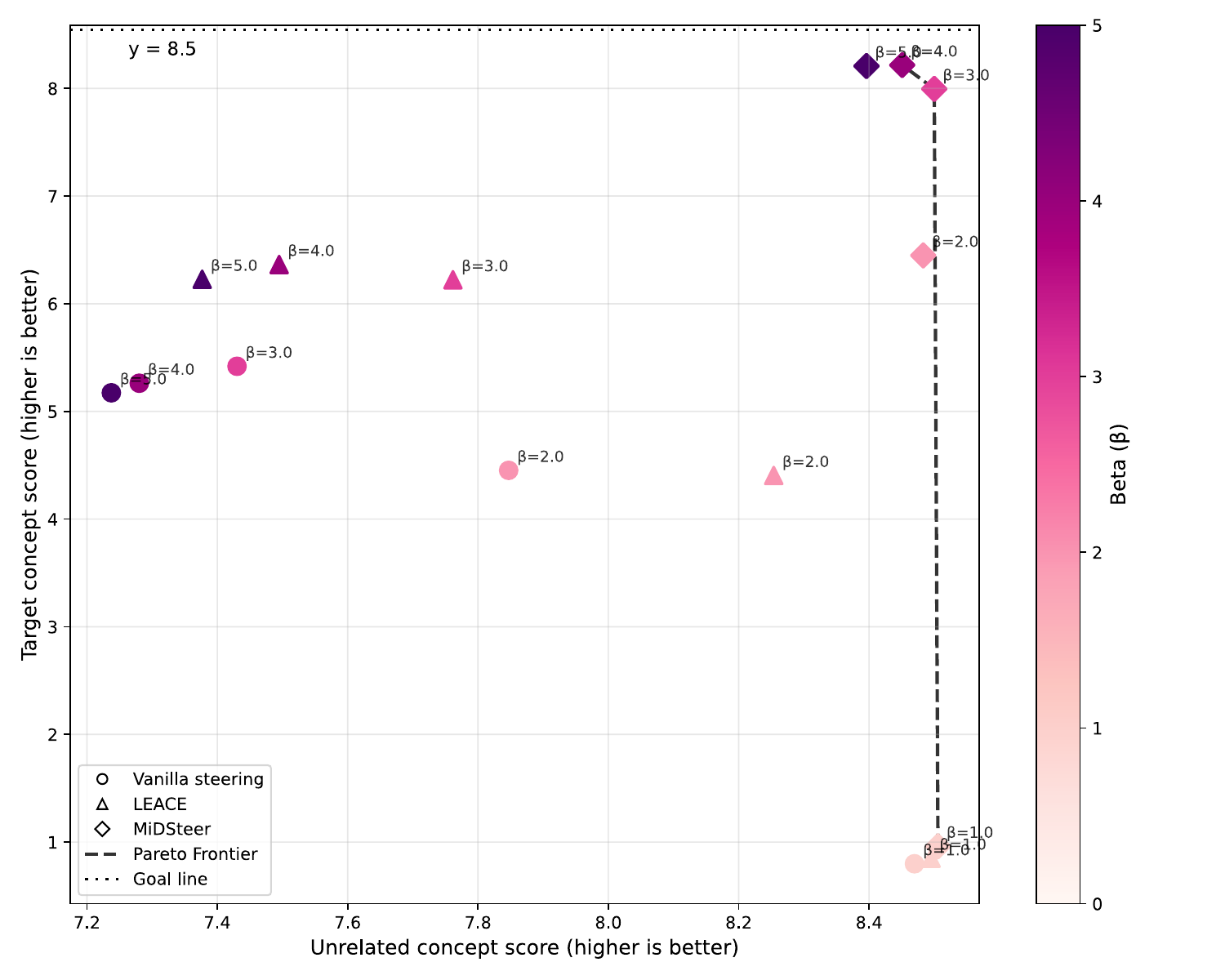}
  \caption{Unrelated vs CS}
  \label{fig:qwen-14b-flip-unrelated-cs-tcs.svg}
\end{subfigure}\hfill
\begin{subfigure}[t]{0.49\linewidth}
  \centering
\end{subfigure}
\caption{Pareto plot for concept flip on model qwen-14b (Target-CS axes)}
\label{fig:qwen-14b-flip-tcs-grid.svg}
\vspace{-0.3cm}
\end{figure}

\begin{figure}[t]
\centering
\begin{subfigure}[t]{0.49\linewidth}
  \centering
  \includegraphics[width=\linewidth]{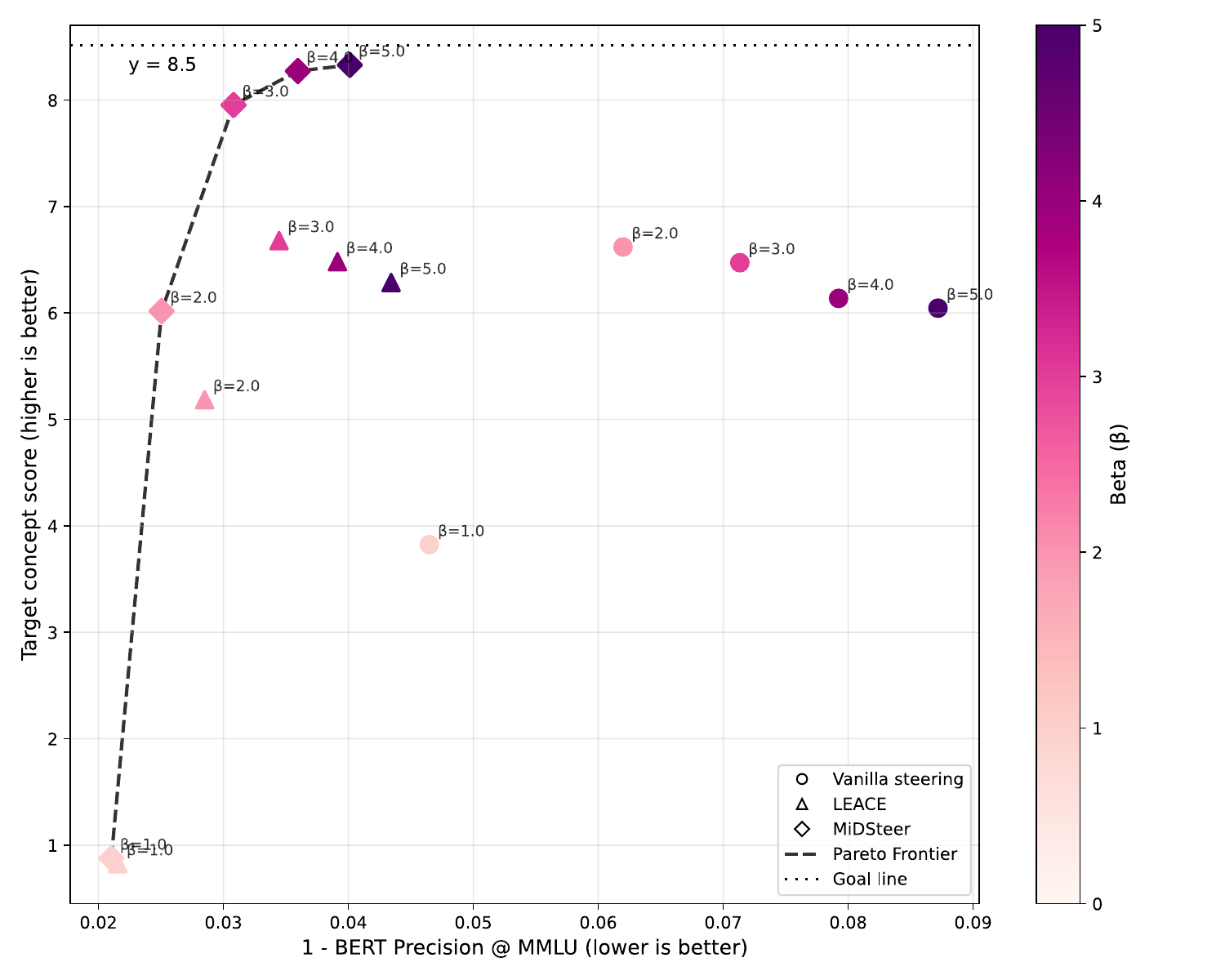}
  \caption{MMLU vs BERTP}
  \label{fig:qwen-7b-flip-noclip-mmlu-bertp-tcs.svg}
\end{subfigure}\hfill
\begin{subfigure}[t]{0.49\linewidth}
  \centering
  \includegraphics[width=\linewidth]{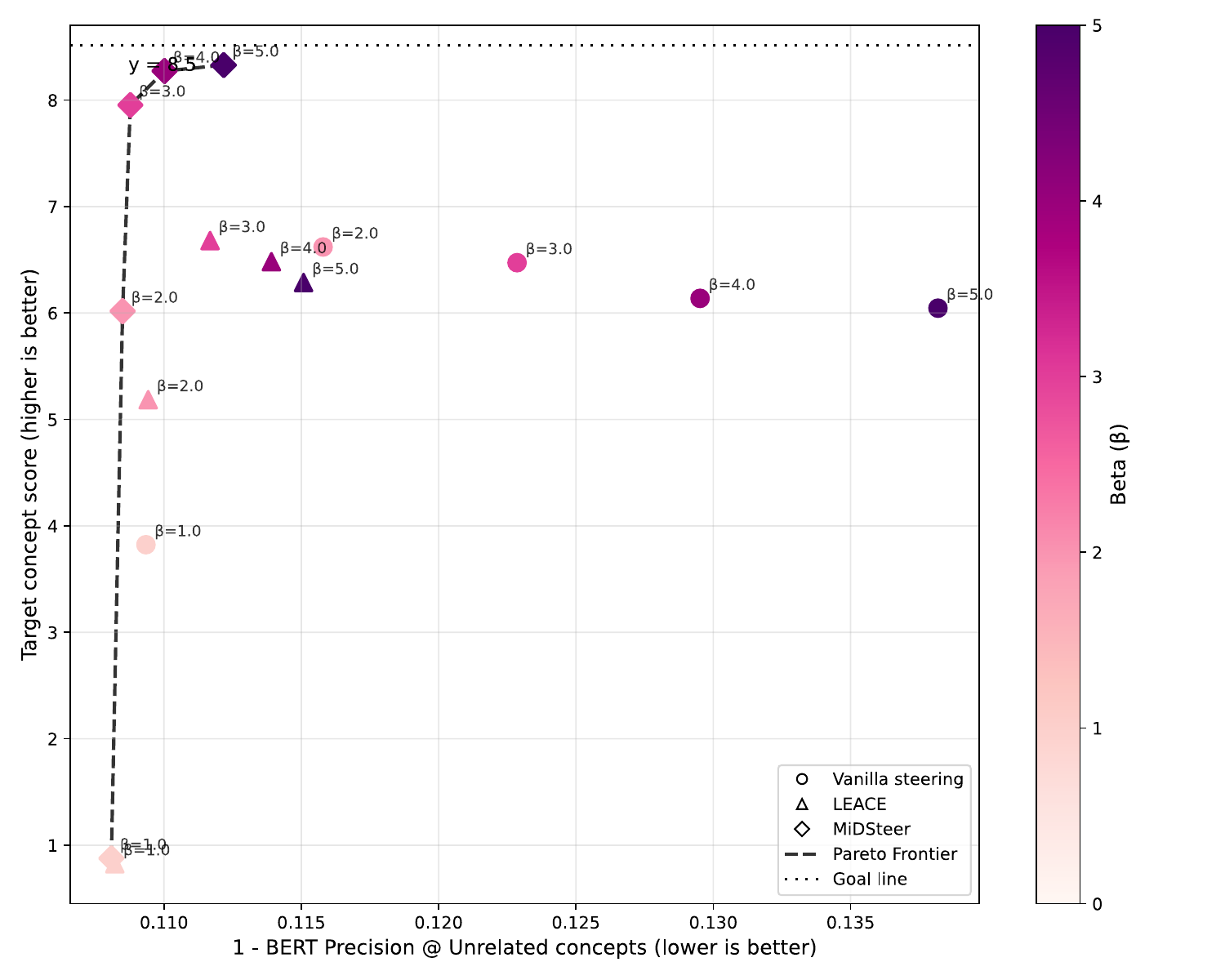}
  \caption{Unrelated vs BERTP}
  \label{fig:qwen-7b-flip-noclip-unrelated-bertp-tcs.svg}
\end{subfigure}

\vspace{0.4em}

\begin{subfigure}[t]{0.49\linewidth}
  \centering
  \includegraphics[width=\linewidth]{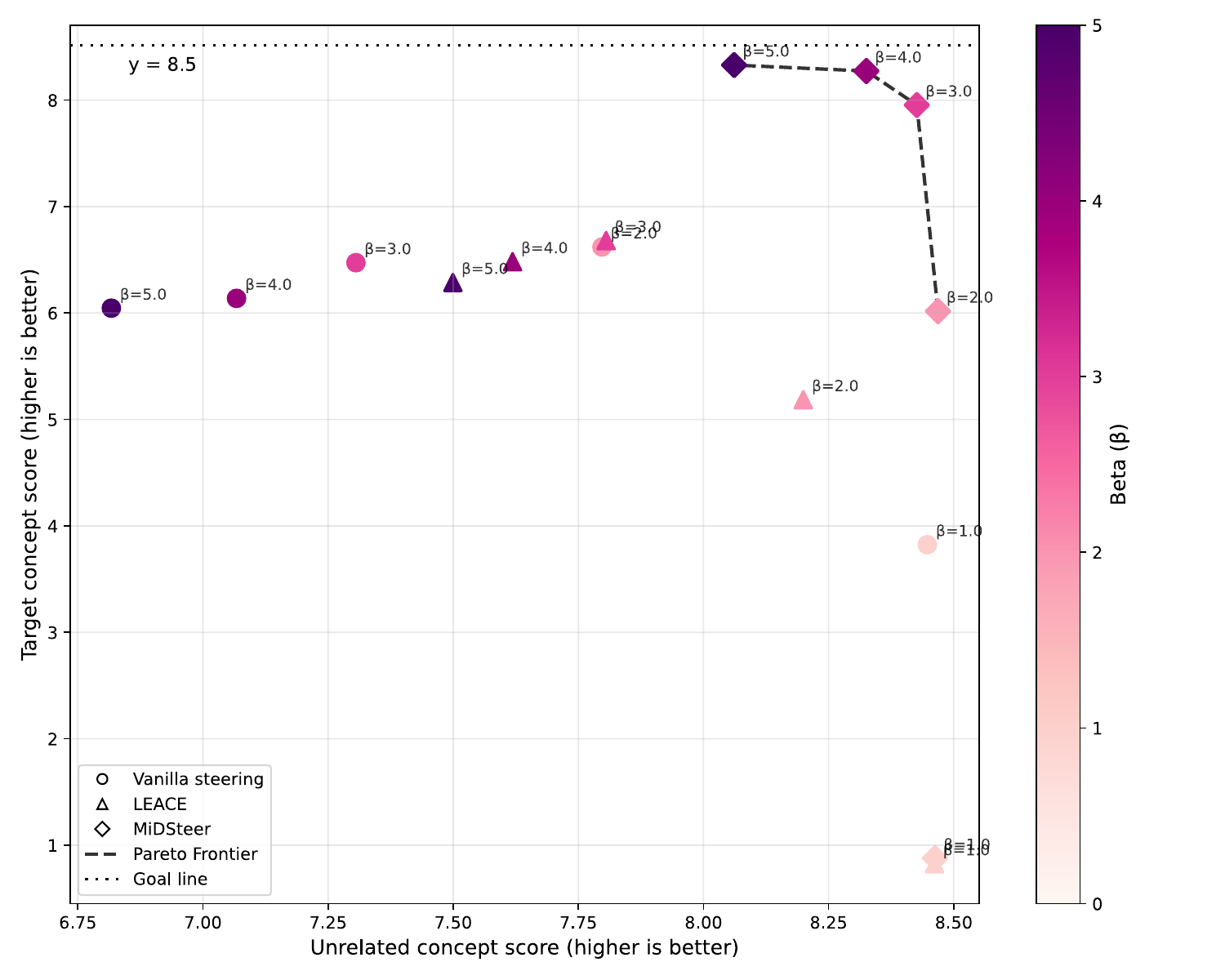}
  \caption{Unrelated vs CS}
  \label{fig:qwen-7b-flip-noclip-unrelated-cs-tcs.svg}
\end{subfigure}\hfill
\begin{subfigure}[t]{0.49\linewidth}
  \centering
\end{subfigure}
\caption{Pareto plot for concept flip on model qwen-7b (Target-CS axes)}
\label{fig:qwen-7b-flip-tcs-grid.svg}
\vspace{-0.3cm}
\end{figure}

\begin{figure}[t]
\centering
\begin{subfigure}[t]{0.49\linewidth}
  \centering
  \includegraphics[width=\linewidth]{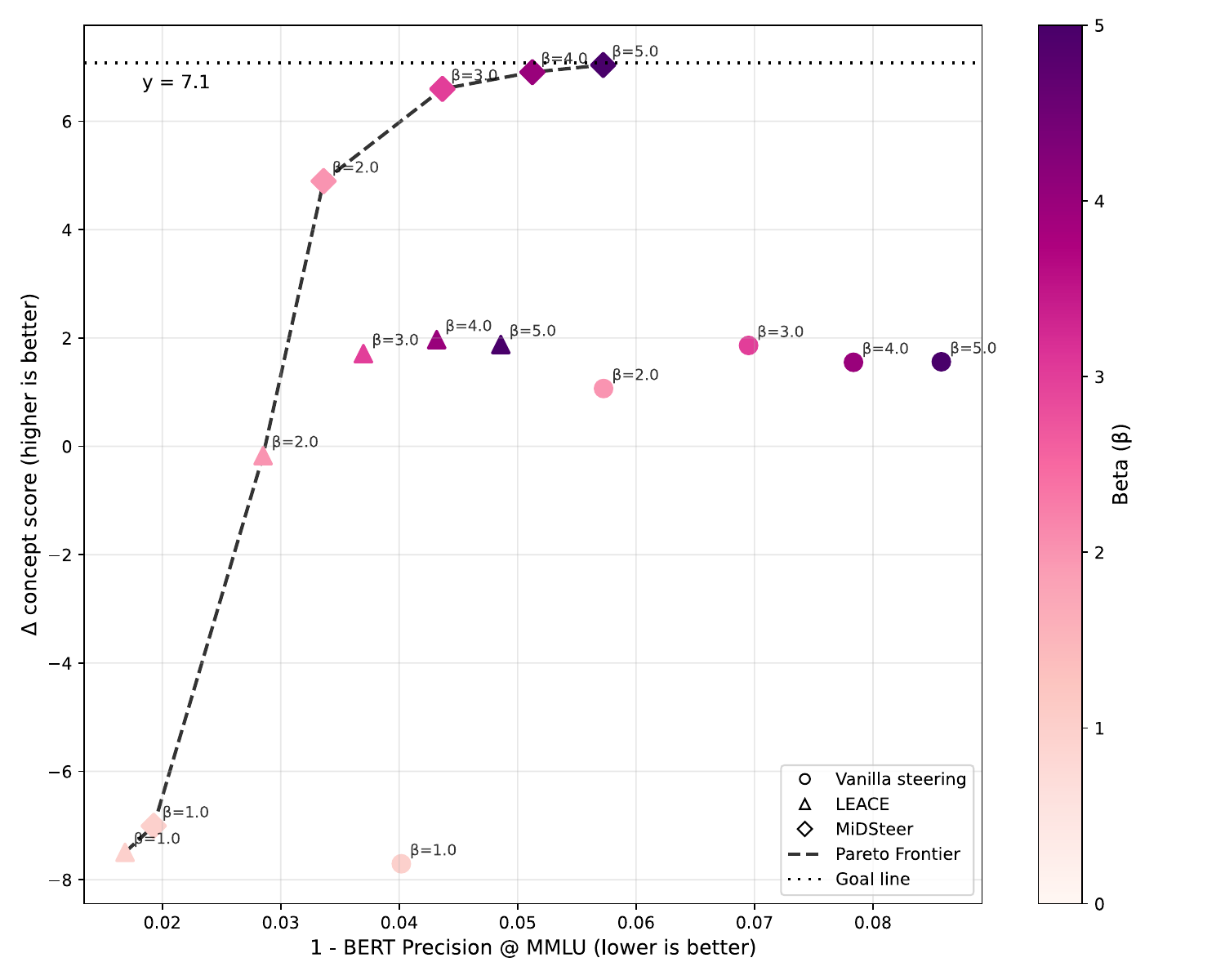}
  \caption{MMLU vs BERTP}
  \label{fig:llama2-7b-flip-noclip-mmlu-bertp.svg}
\end{subfigure}\hfill
\begin{subfigure}[t]{0.49\linewidth}
  \centering
  \includegraphics[width=\linewidth]{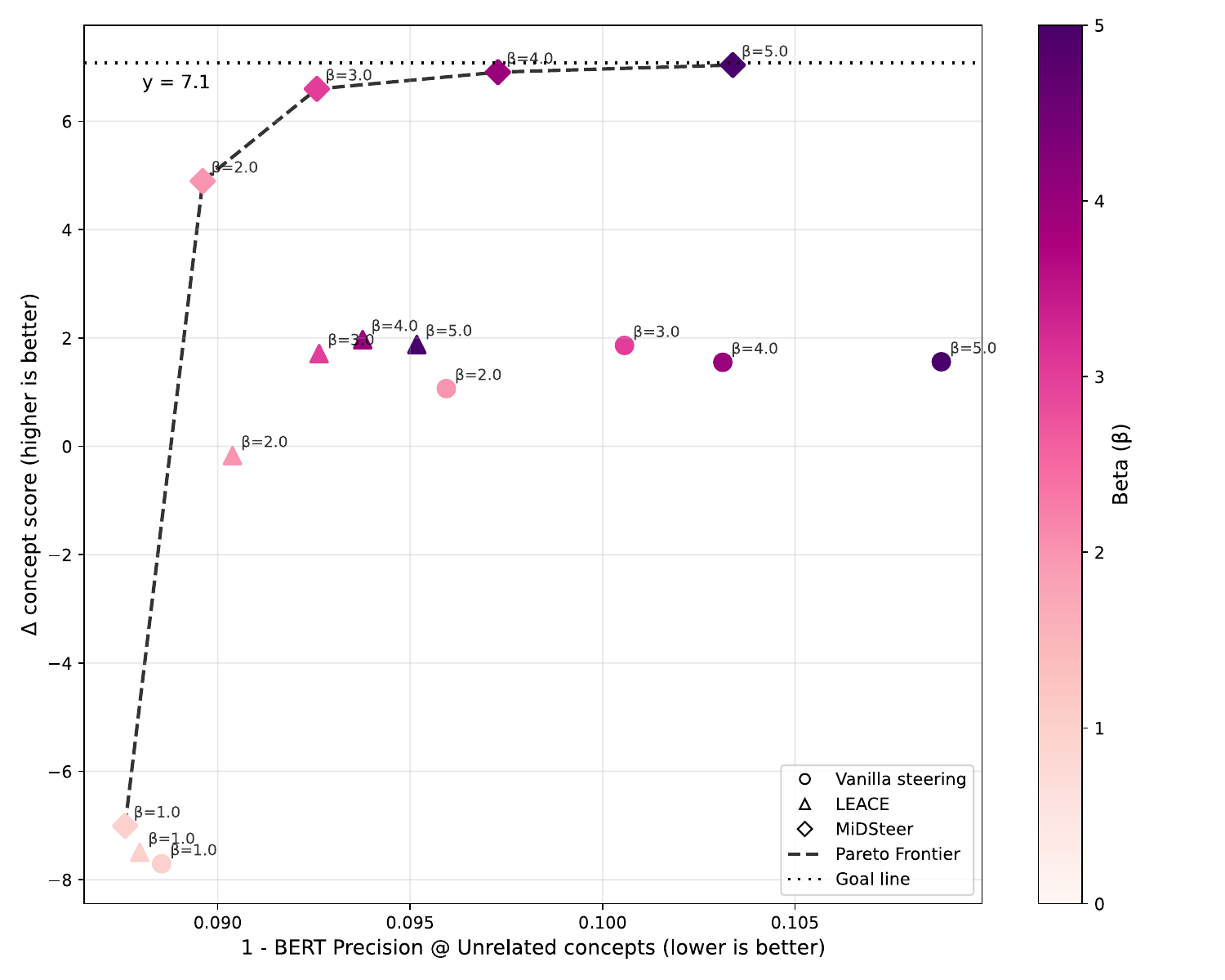}
  \caption{Unrelated vs BERTP}
  \label{fig:llama2-7b-flip-noclip-unrelated-bertp.svg}
\end{subfigure}

\vspace{0.4em}

\begin{subfigure}[t]{0.49\linewidth}
  \centering
  \includegraphics[width=\linewidth]{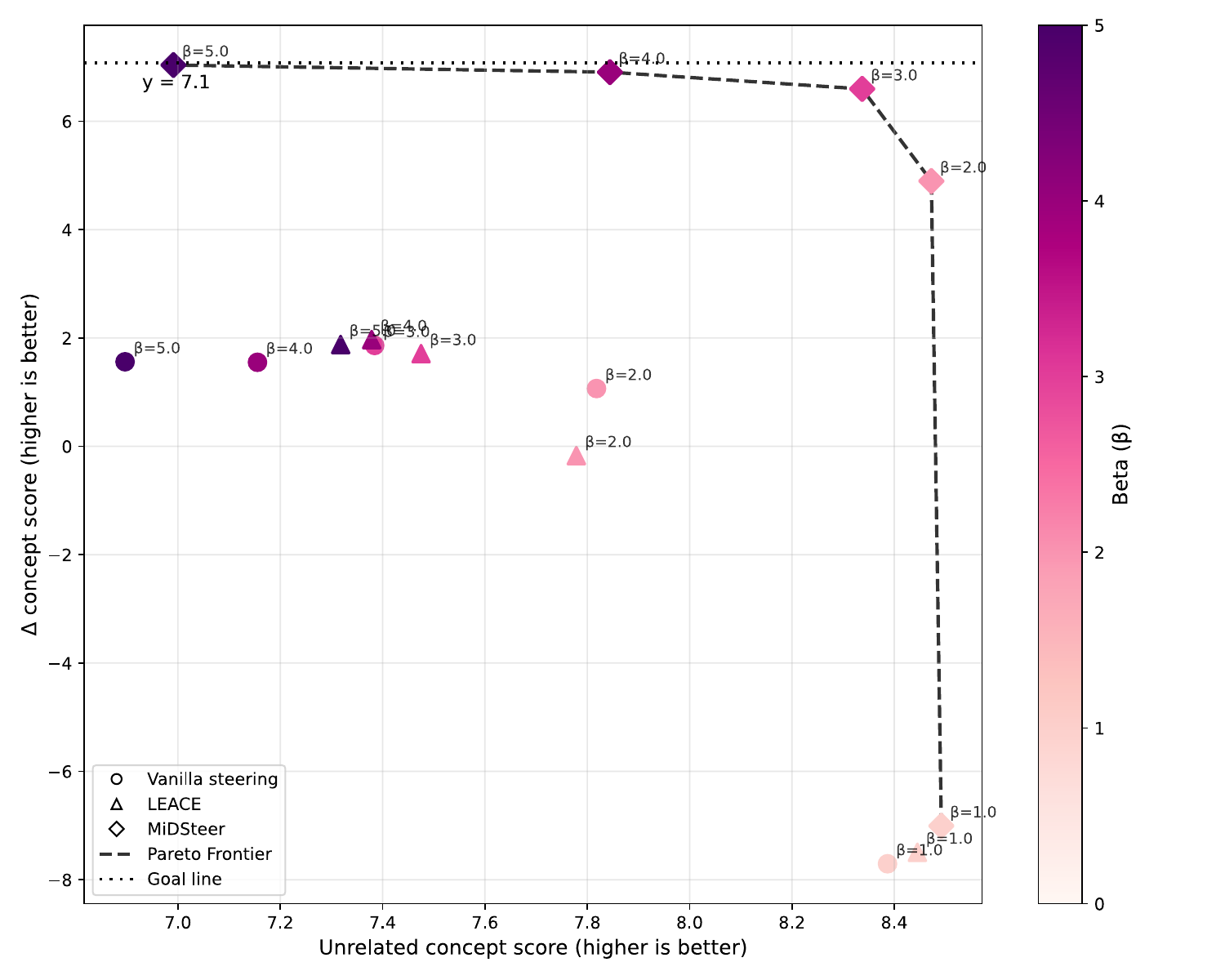}
  \caption{Unrelated vs CS}
  \label{fig:llama2-7b-flip-noclip-unrelated-cs.svg}
\end{subfigure}\hfill
\begin{subfigure}[t]{0.49\linewidth}
  \centering
\end{subfigure}
\caption{Pareto plot for concept flip on model llama2-7b (Other axes axes)}
\label{fig:llama2-7b-flip-base-grid.svg}
\vspace{-0.3cm}
\end{figure}

\begin{figure}[t]
\centering
\begin{subfigure}[t]{0.49\linewidth}
  \centering
  \includegraphics[width=\linewidth]{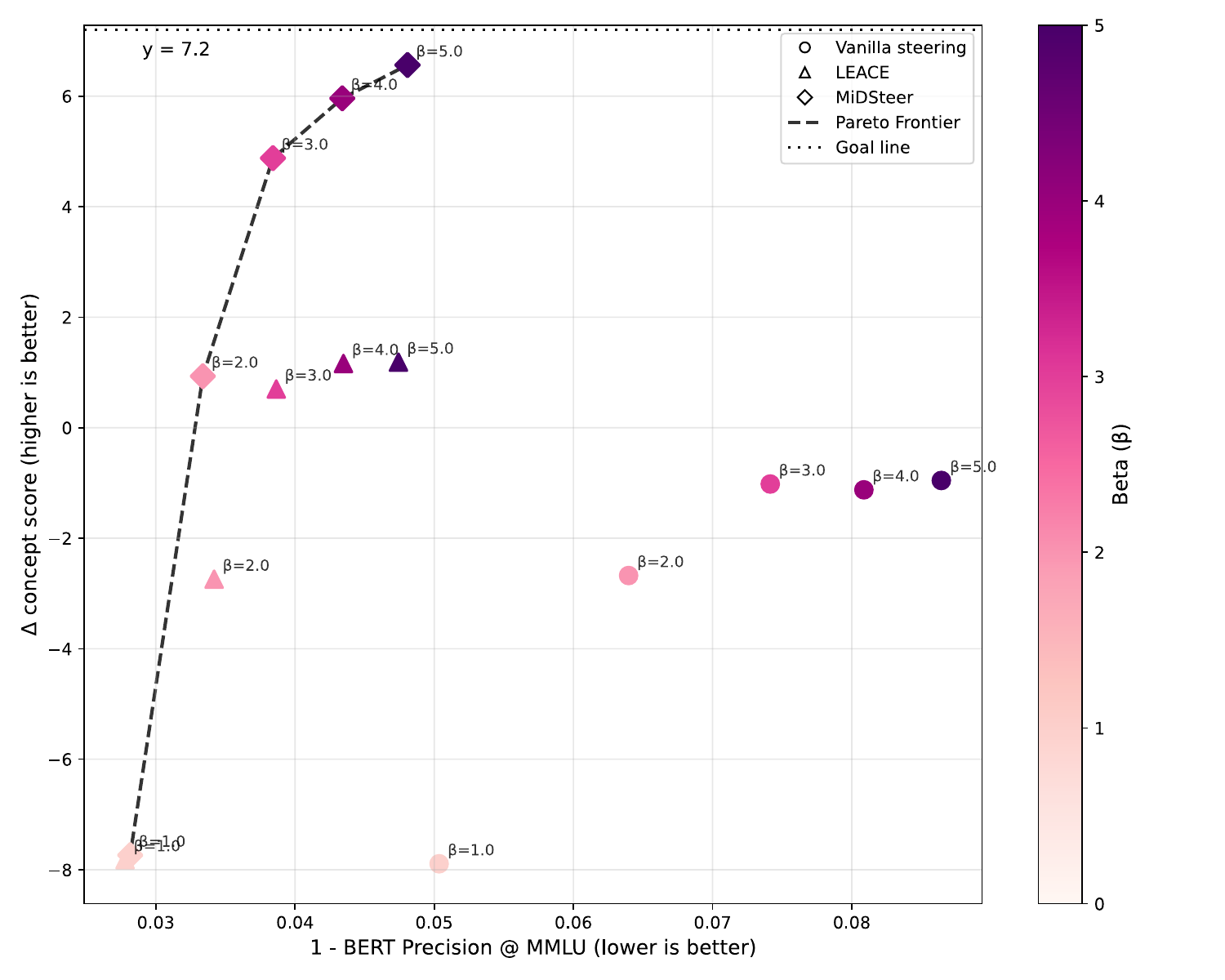}
  \caption{MMLU vs BERTP}
  \label{fig:qwen-14b-flip-mmlu-bertp.svg}
\end{subfigure}\hfill
\begin{subfigure}[t]{0.49\linewidth}
  \centering
  \includegraphics[width=\linewidth]{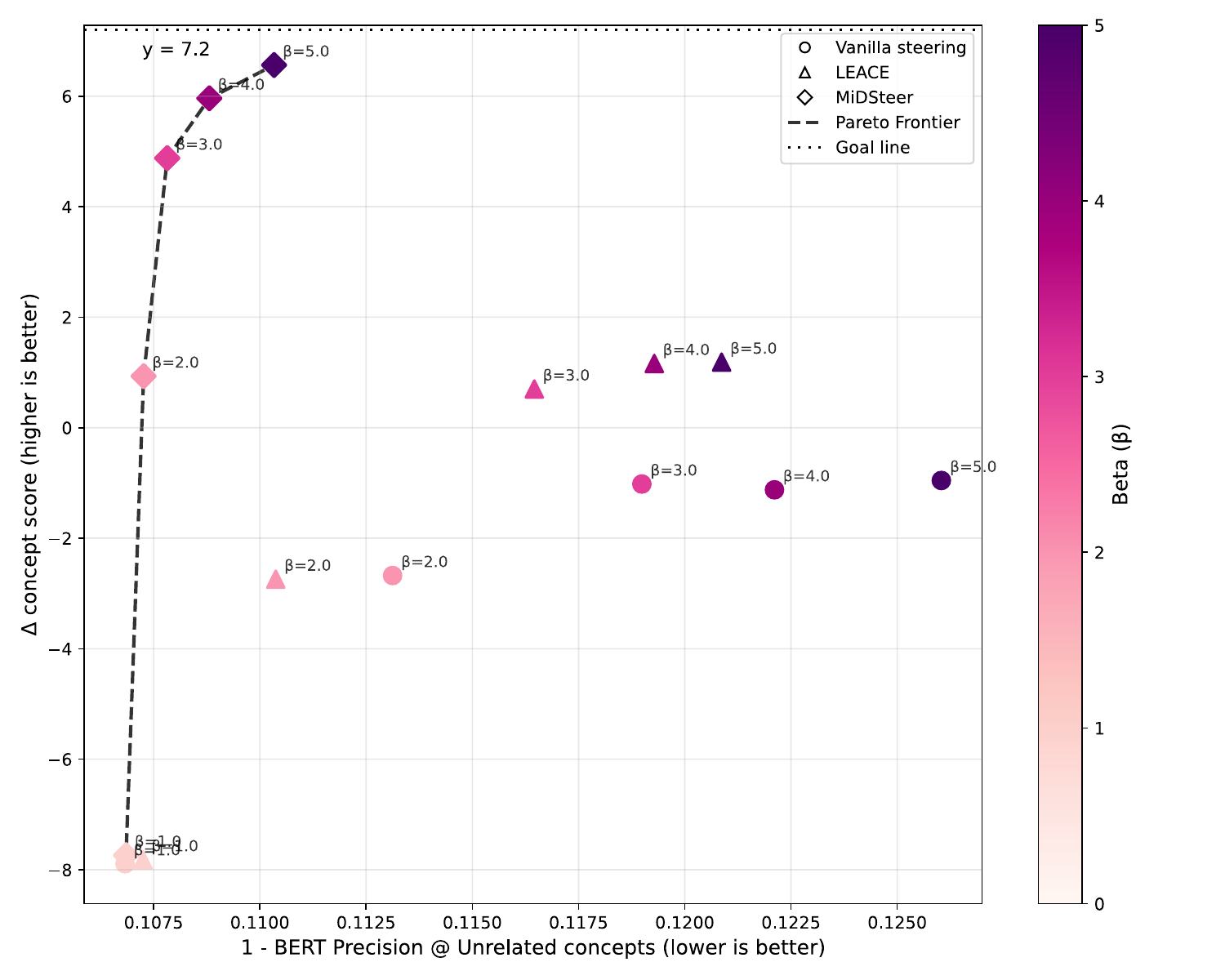}
  \caption{Unrelated vs BERTP}
  \label{fig:qwen-14b-flip-unrelated-bertp.svg}
\end{subfigure}

\vspace{0.4em}

\begin{subfigure}[t]{0.49\linewidth}
  \centering
  \includegraphics[width=\linewidth]{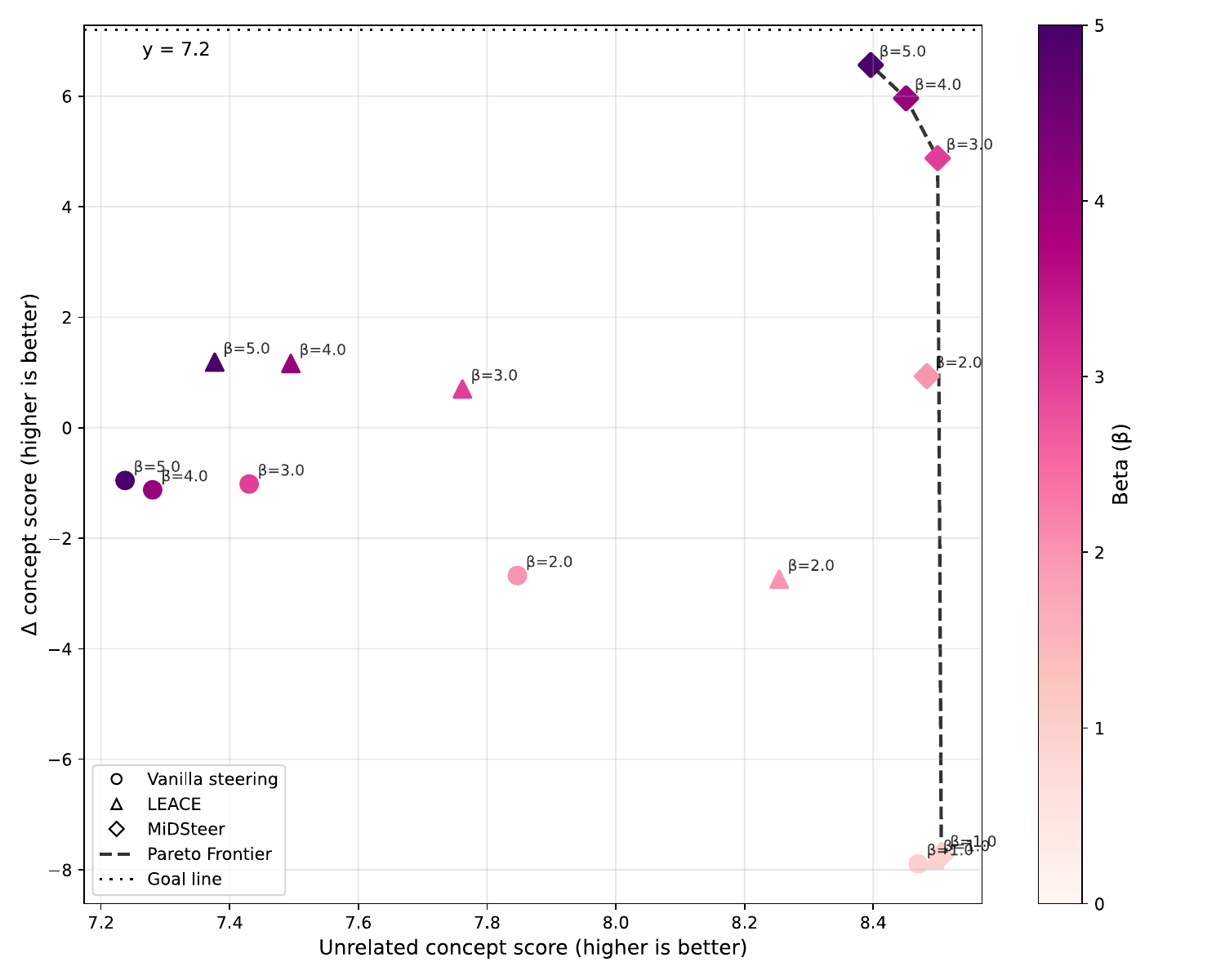}
  \caption{Unrelated vs CS}
  \label{fig:qwen-14b-flip-unrelated-cs.svg}
\end{subfigure}\hfill
\begin{subfigure}[t]{0.49\linewidth}
  \centering
\end{subfigure}
\caption{Pareto plot for concept flip on model qwen-14b (Other axes axes)}
\label{fig:qwen-14b-flip-base-grid.svg}
\vspace{-0.3cm}
\end{figure}

\begin{figure}[t]
\centering
\begin{subfigure}[t]{0.49\linewidth}
  \centering
  \includegraphics[width=\linewidth]{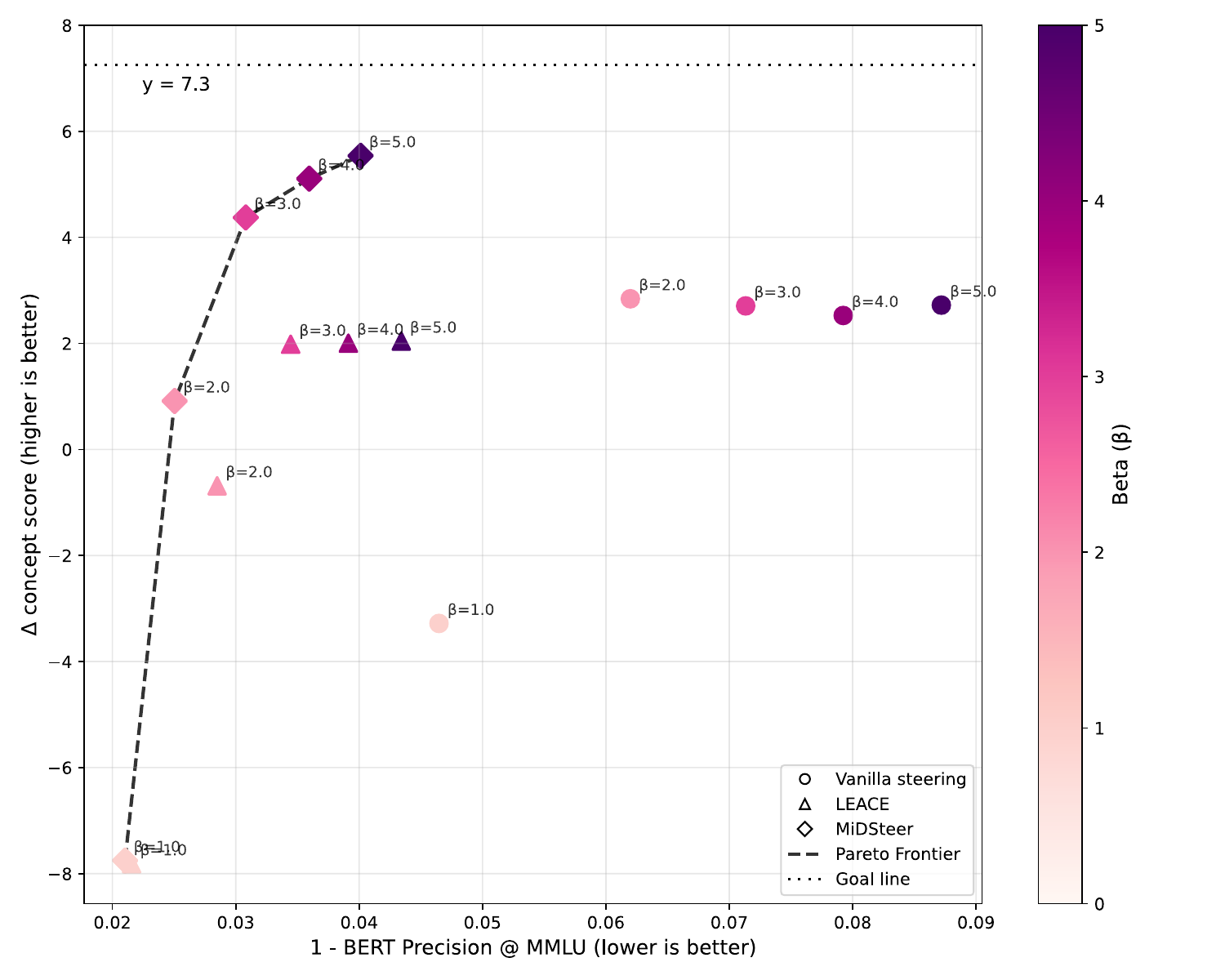}
  \caption{MMLU vs BERTP}
  \label{fig:qwen-7b-flip-noclip-mmlu-bertp.svg}
\end{subfigure}\hfill
\begin{subfigure}[t]{0.49\linewidth}
  \centering
  \includegraphics[width=\linewidth]{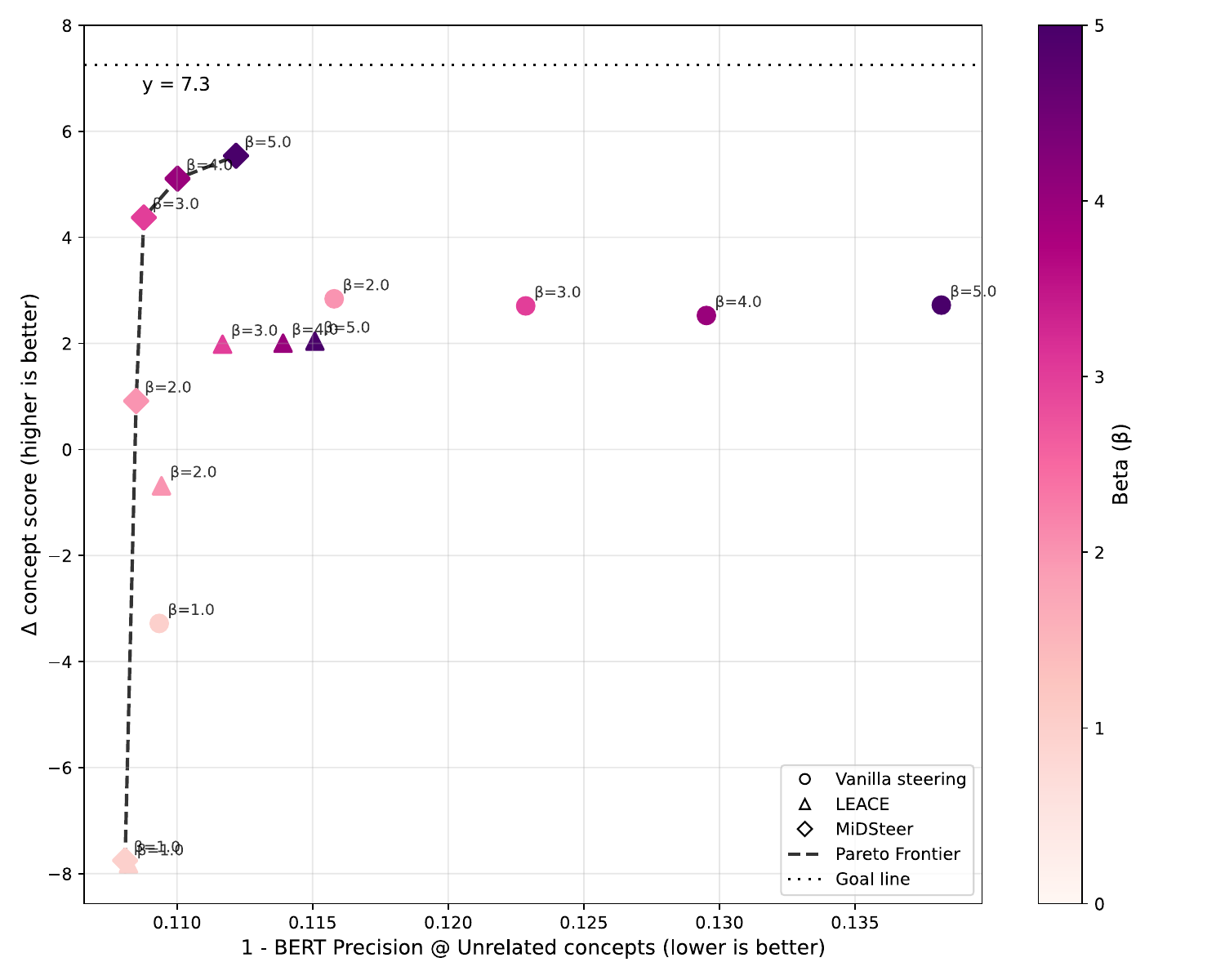}
  \caption{Unrelated vs BERTP}
  \label{fig:qwen-7b-flip-noclip-unrelated-bertp.svg}
\end{subfigure}

\vspace{0.4em}

\begin{subfigure}[t]{0.49\linewidth}
  \centering
  \includegraphics[width=\linewidth]{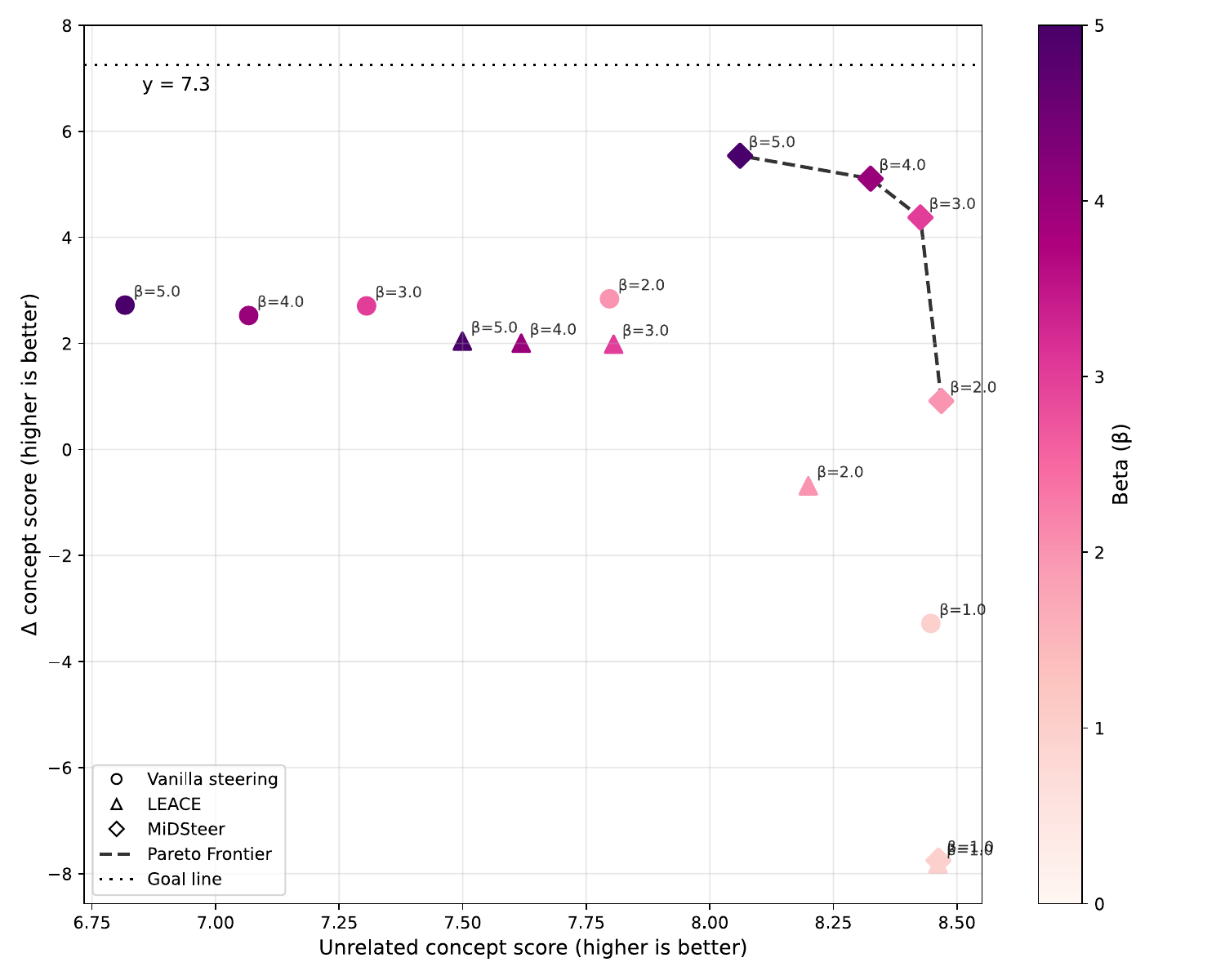}
  \caption{Unrelated vs CS}
  \label{fig:qwen-7b-flip-noclip-unrelated-cs.svg}
\end{subfigure}\hfill
\begin{subfigure}[t]{0.49\linewidth}
  \centering
\end{subfigure}
\caption{Pareto plot for concept flip on model qwen-7b (Other axes axes)}
\label{fig:qwen-7b-flip-base-grid.svg}
\vspace{-0.3cm}
\end{figure}

\clearpage
\subsection{Pareto charts for image diffusion concept switching}
\label{sec:pareto_diffusion_switch}

In this section, we provide more Pareto charts for Diffusion Models concept switching. When switching concept $c_s$ to $c_t$, for each LLM model we provide 9 types of Pareto plots: 
\begin{itemize}
    \item FID score for unrelated concepts (horizontal axis) vs \textbf{$\Delta$} Concept Score (CS) for the target $c_t$ and source $c_s$ concepts (fig.~\ref{fig:sana-flip-noclip-unrelated-fid.svg},~\ref{fig:sdxl-flip-noclip-unrelated-fid.svg})
    \item Average Concept Score (CS) for unrelated concepts $c_i$ (horizontal axis) vs \textbf{$\Delta$} Concept Score (CS) for the target $c_t$ and source $c_s$ concepts (vertical axis)  (fig.~\ref{fig:sana-flip-noclip-unrelated-cs.svg},~\ref{fig:sdxl-flip-noclip-unrelated-cs.svg})
    
    \item FID score for unrelated concepts (horizontal axis) vs Concept Score (CS) for the \textbf{source} $c_s$ concept (vertical axis) (fig.~\ref{fig:sana-flip-noclip-unrelated-fid-scs.svg},~\ref{fig:sdxl-flip-noclip-unrelated-fid-scs.svg})
    \item Average Concept Score (CS) for unrelated concepts $c_i$ (horizontal axis) vs Concept Score (CS) for the \textbf{source} $c_s$ concept (vertical axis) (fig.\ref{fig:sana-flip-noclip-unrelated-cs-scs.svg},~\ref{fig:sdxl-flip-noclip-unrelated-cs-scs.svg})

    \item FID score for unrelated concepts (horizontal axis) vs Concept Score (CS) for the \textbf{target} $c_s$ concept (vertical axis) (fig.~\ref{fig:sana-flip-noclip-unrelated-fid-tcs.svg},~\ref{fig:sdxl-flip-noclip-unrelated-fid-tcs.svg})
    \item Average Concept Score (CS) for unrelated concepts $c_i$ (horizontal axis) vs Concept Score (CS) for the \textbf{target} $c_s$ concept (vertical axis) (fig.\ref{fig:sana-flip-noclip-unrelated-cs-tcs.svg},~\ref{fig:sdxl-flip-noclip-unrelated-cs-tcs.svg})
\end{itemize}

In each case, we see clear superiority of MidSteer over other steering approaches.

We additionally provide detailed breakdown of scores for all $\beta$ values and all concepts $c_s, c_t, c_i$ in the tables in sec.~\ref{sec:tables_diffusion_switch}

\begin{figure}[t]
\centering
\begin{subfigure}[t]{0.49\linewidth}
  \centering
  \includegraphics[width=\linewidth]{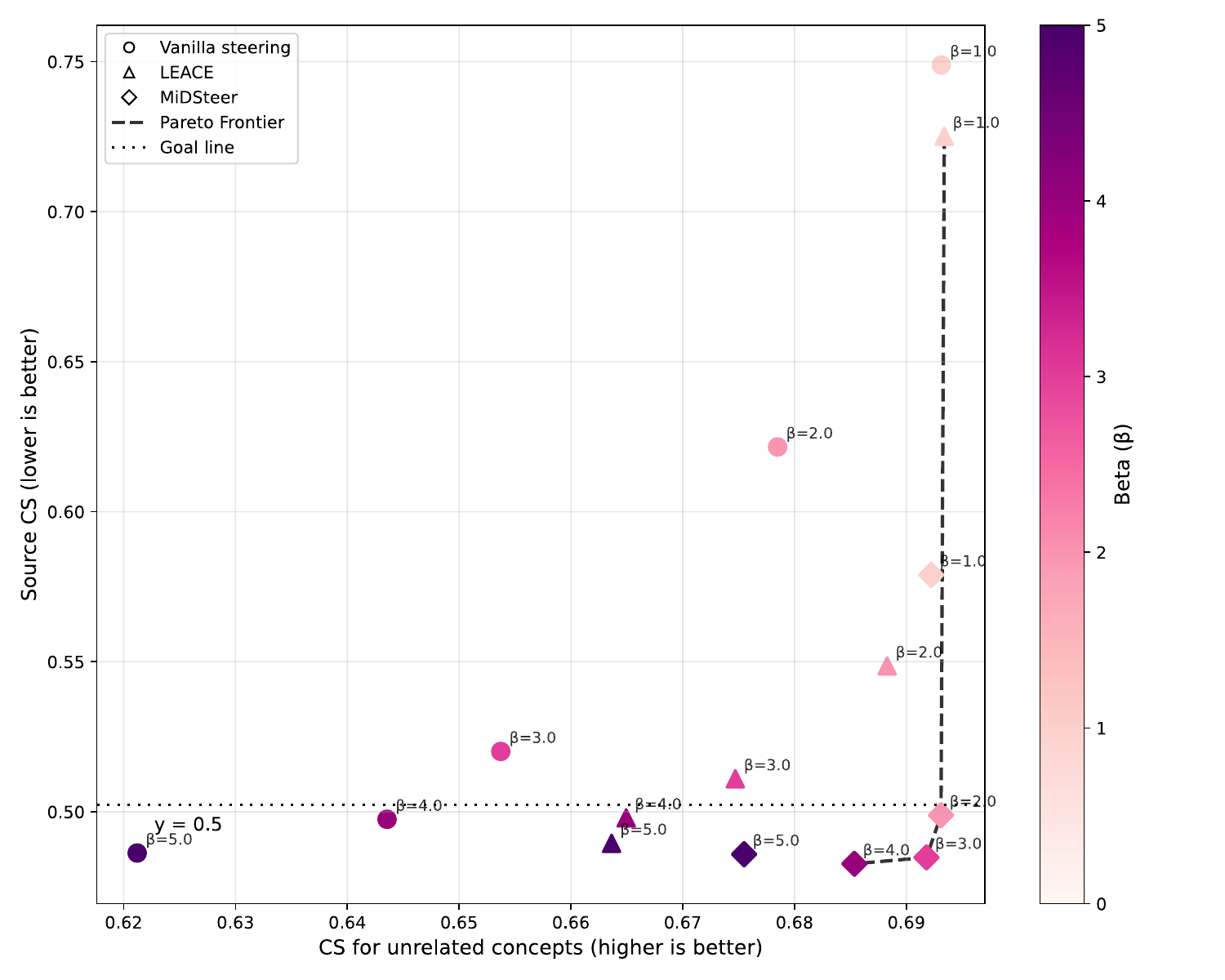}
  \caption{Unrelated vs CS}
  \label{fig:sana-flip-noclip-unrelated-cs-scs.svg}
\end{subfigure}\hfill
\begin{subfigure}[t]{0.49\linewidth}
  \centering
  \includegraphics[width=\linewidth]{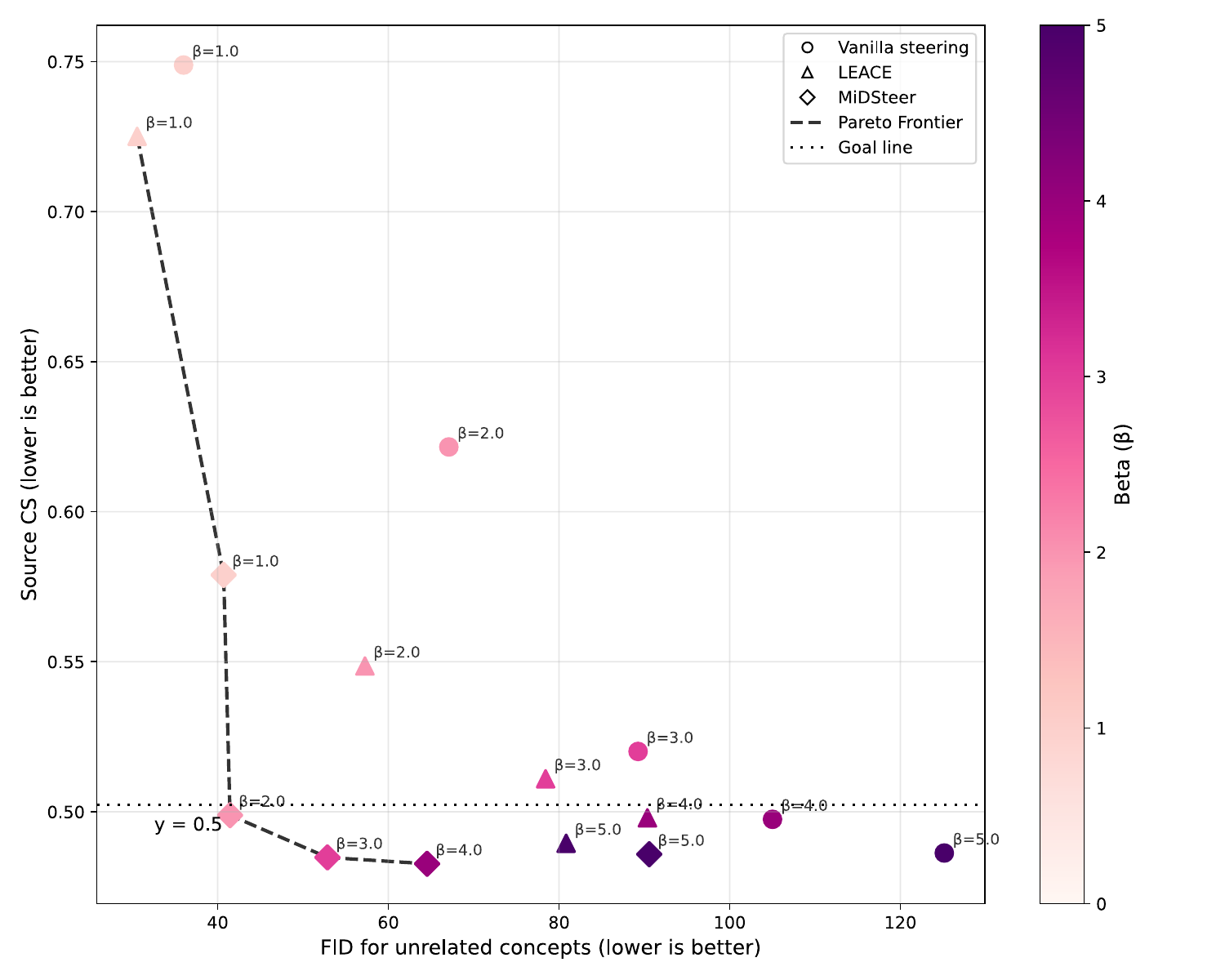}
  \caption{Unrelated vs FID}
  \label{fig:sana-flip-noclip-unrelated-fid-scs.svg}
\end{subfigure}
\caption{Pareto plot for concept flip on model SANA (Source-CS axes)}
\label{fig:sana-flip-scs-grid}
\vspace{-0.3cm}
\end{figure}

\begin{figure}[t]
\centering
\begin{subfigure}[t]{0.49\linewidth}
  \centering
  \includegraphics[width=\linewidth]{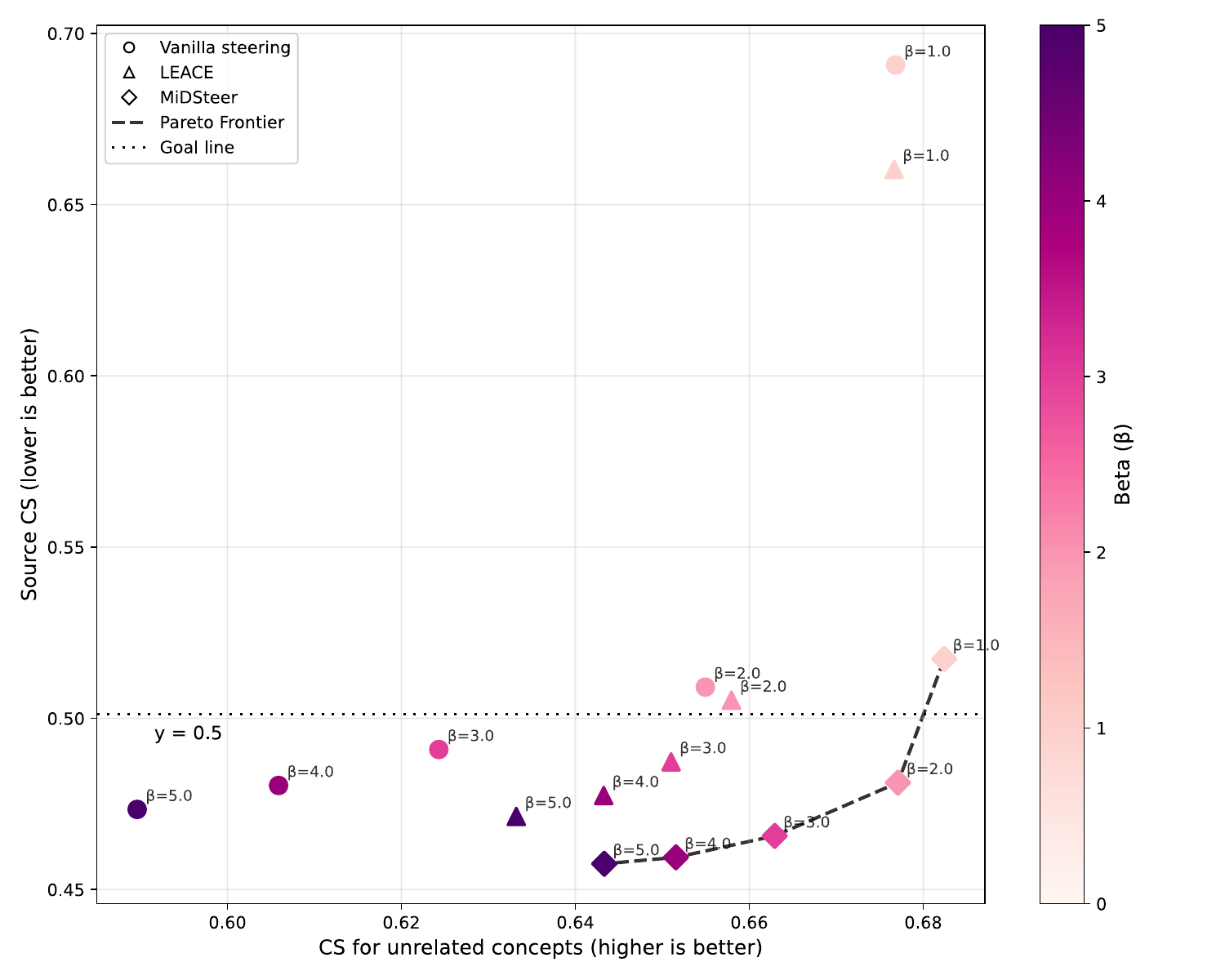}
  \caption{Unrelated vs CS}
  \label{fig:sdxl-flip-noclip-unrelated-cs-scs.svg}
\end{subfigure}\hfill
\begin{subfigure}[t]{0.49\linewidth}
  \centering
  \includegraphics[width=\linewidth]{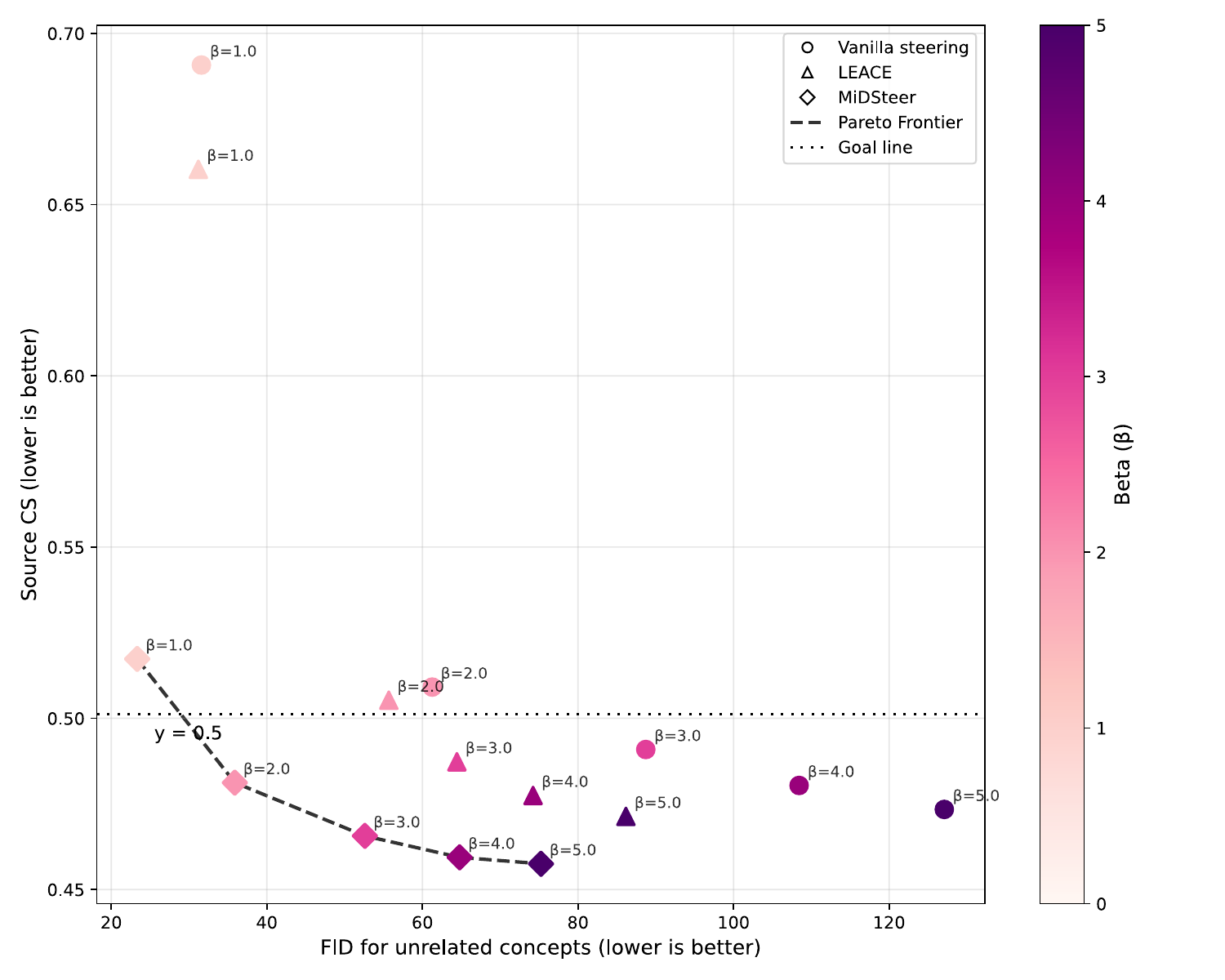}
  \caption{Unrelated vs FID}
  \label{fig:sdxl-flip-noclip-unrelated-fid-scs.svg}
\end{subfigure}
\caption{Pareto plot for concept flip on model SDXL (Source-CS axes)}
\label{fig:sdxl-flip-scs-grid}
\vspace{-0.3cm}
\end{figure}

\begin{figure}[t]
\centering
\begin{subfigure}[t]{0.49\linewidth}
  \centering
  \includegraphics[width=\linewidth]{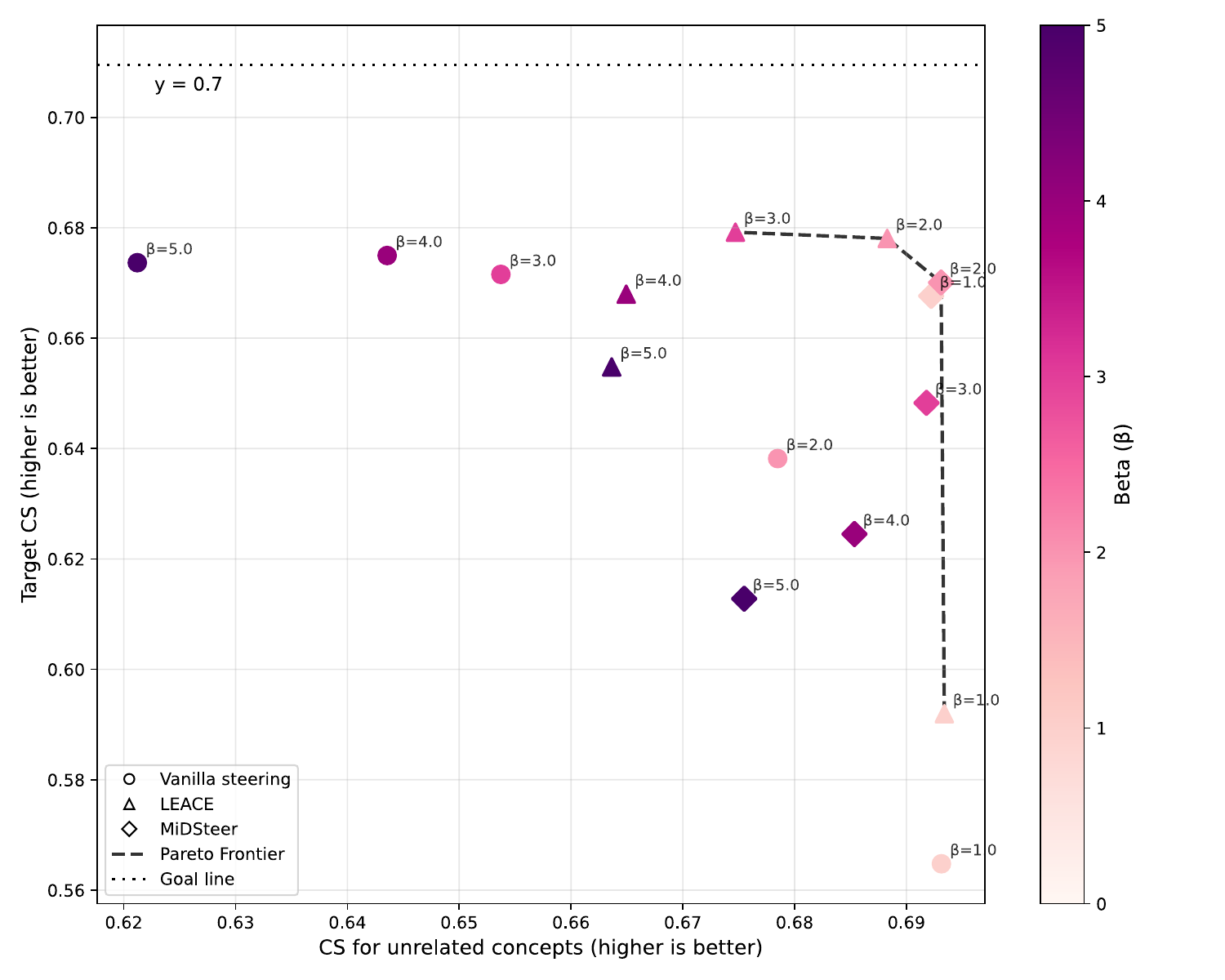}
  \caption{Unrelated vs CS}
  \label{fig:sana-flip-noclip-unrelated-cs-tcs.svg}
\end{subfigure}\hfill
\begin{subfigure}[t]{0.49\linewidth}
  \centering
  \includegraphics[width=\linewidth]{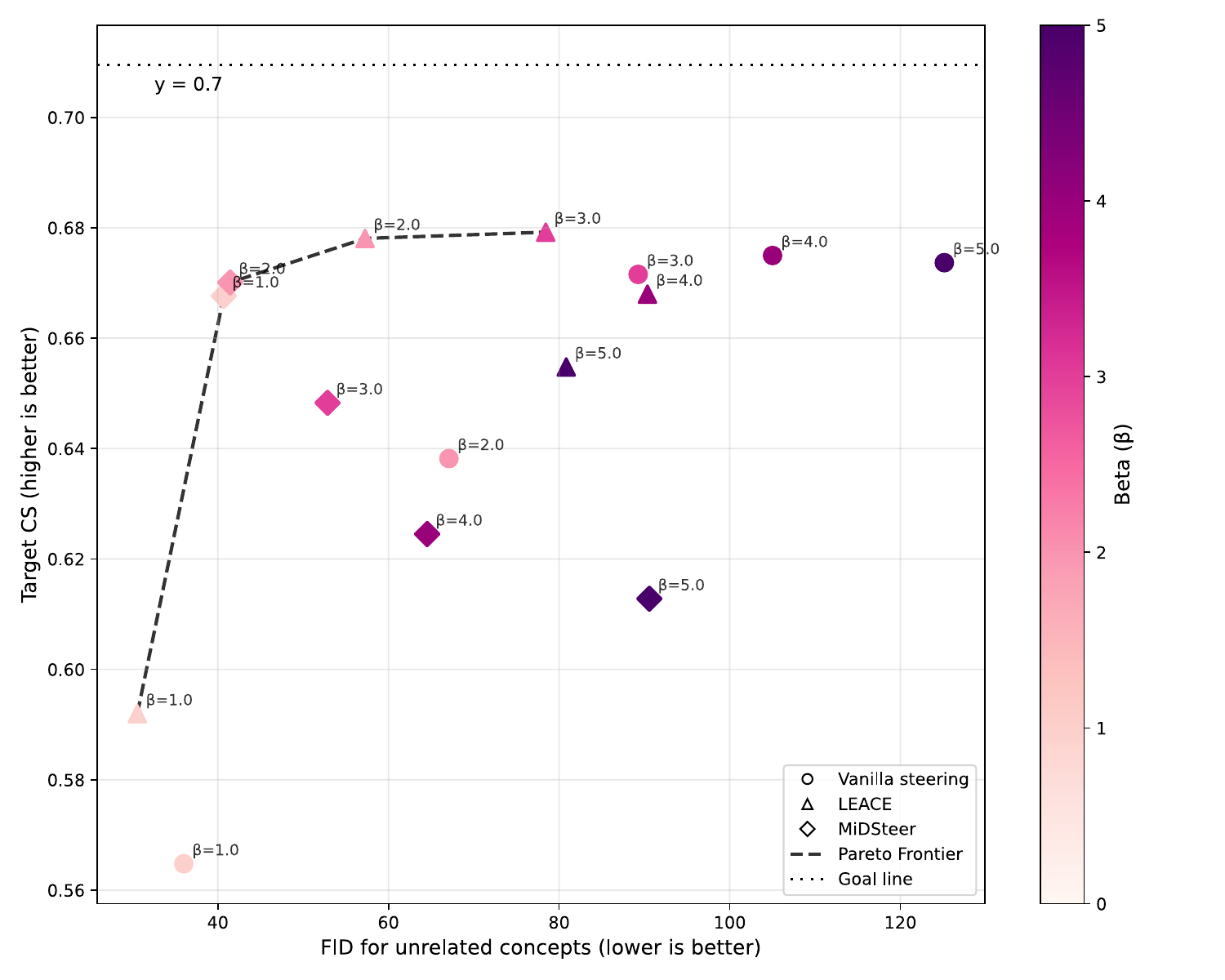}
  \caption{Unrelated vs FID}
  \label{fig:sana-flip-noclip-unrelated-fid-tcs.svg}
\end{subfigure}
\caption{Pareto plot for concept flip on model SANA (Target-CS axes)}
\label{fig:sana-flip-tcs-grid}
\vspace{-0.3cm}
\end{figure}

\begin{figure}[t]
\centering
\begin{subfigure}[t]{0.49\linewidth}
  \centering
  \includegraphics[width=\linewidth]{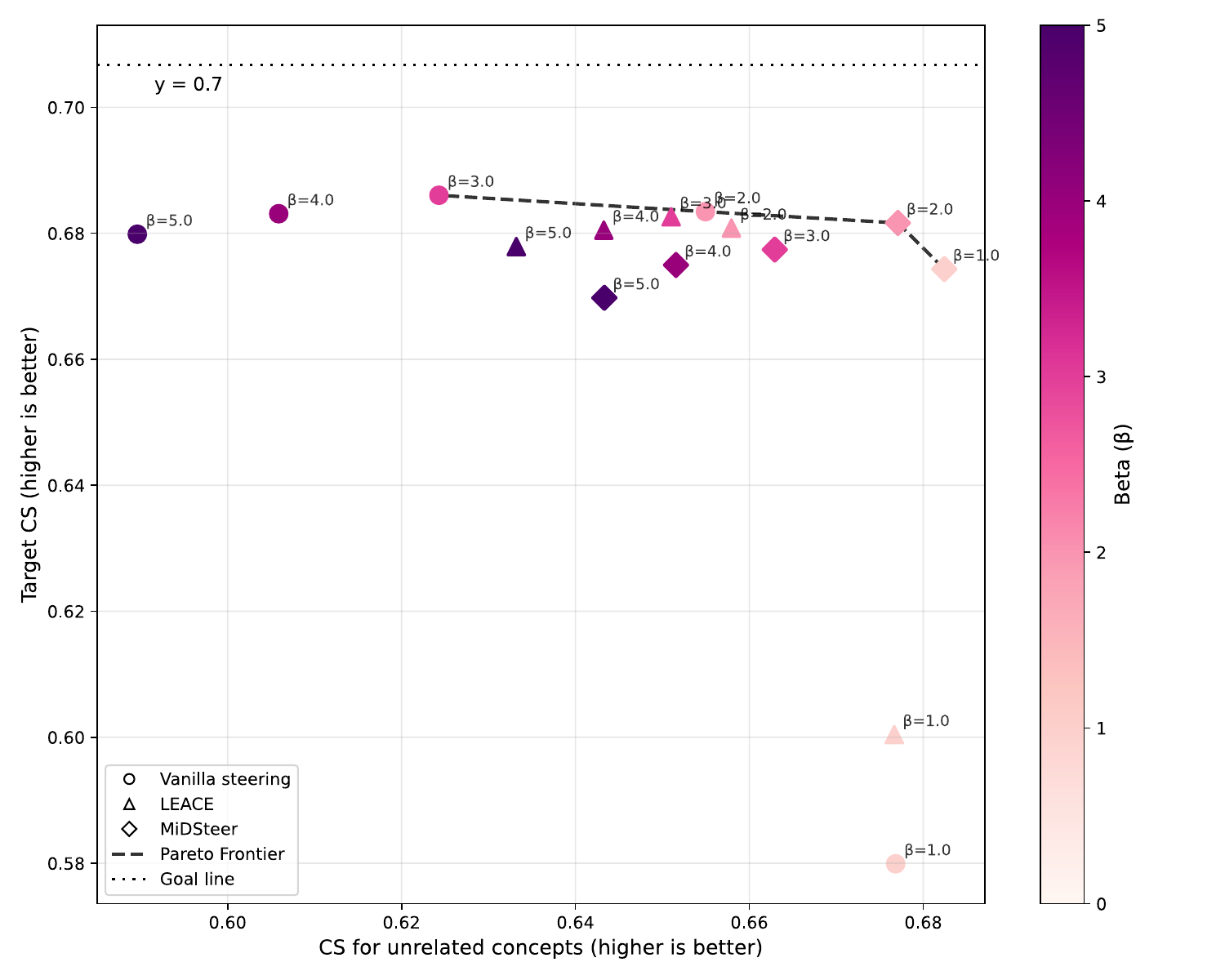}
  \caption{Unrelated vs CS}
  \label{fig:sdxl-flip-noclip-unrelated-cs-tcs.svg}
\end{subfigure}\hfill
\begin{subfigure}[t]{0.49\linewidth}
  \centering
  \includegraphics[width=\linewidth]{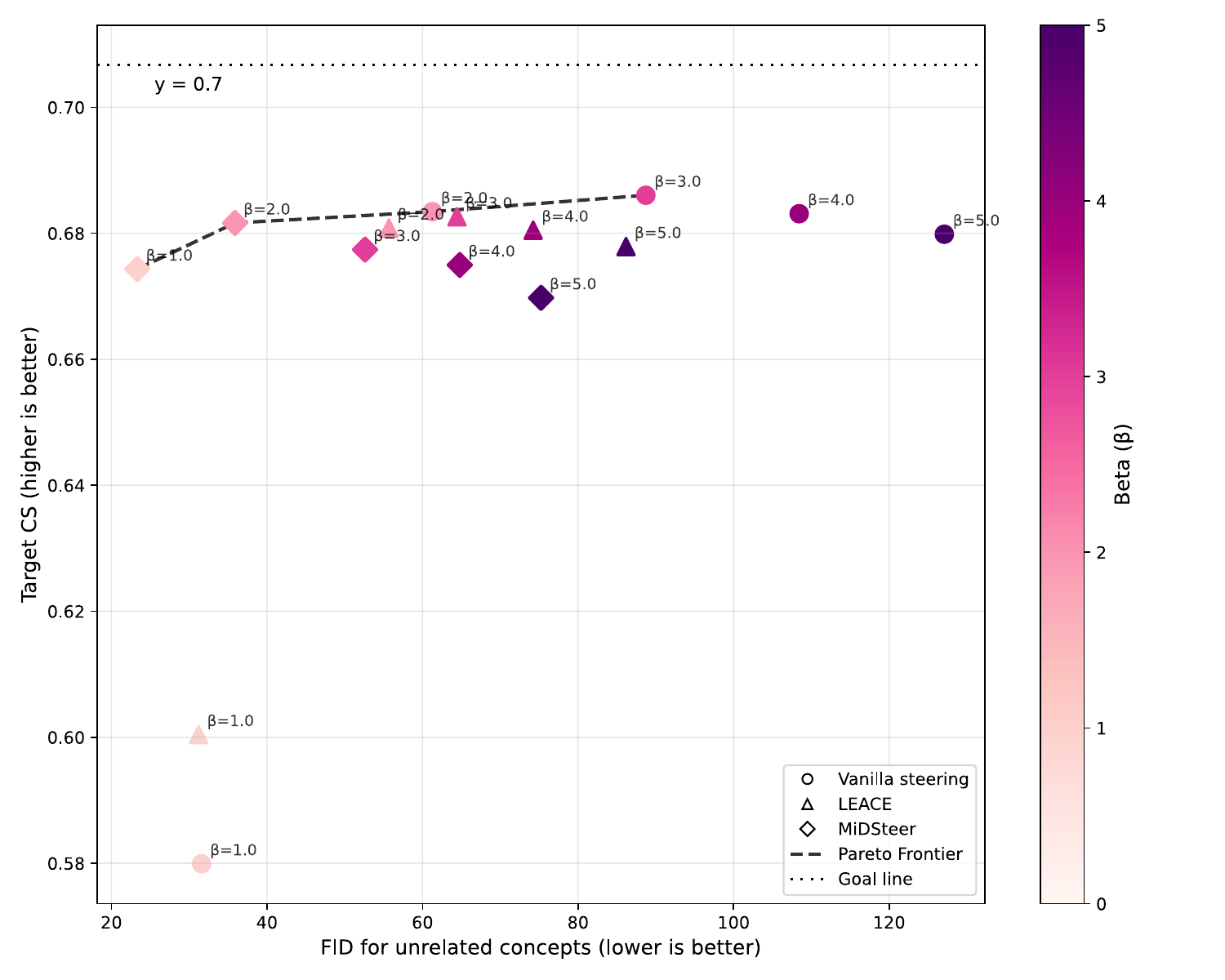}
  \caption{Unrelated vs FID}
  \label{fig:sdxl-flip-noclip-unrelated-fid-tcs.svg}
\end{subfigure}
\caption{Pareto plot for concept flip on model SDXL (Target-CS axes)}
\label{fig:sdxl-flip-tcs-grid}
\vspace{-0.3cm}
\end{figure}

\begin{figure}[t]
\centering
\begin{subfigure}[t]{0.49\linewidth}
  \centering
  \includegraphics[width=\linewidth]{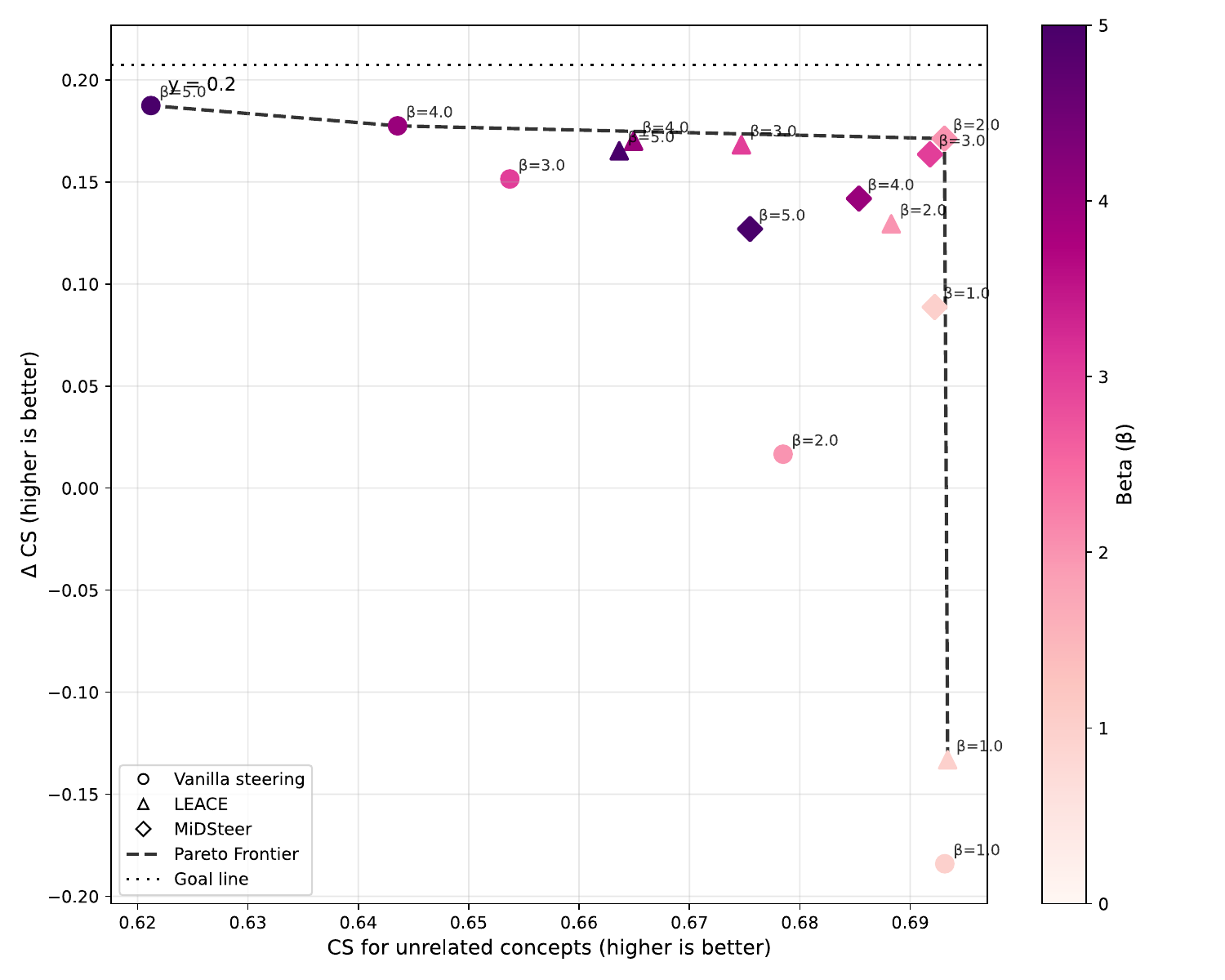}
  \caption{Unrelated vs CS}
  \label{fig:sana-flip-noclip-unrelated-cs.svg}
\end{subfigure}\hfill
\begin{subfigure}[t]{0.49\linewidth}
  \centering
  \includegraphics[width=\linewidth]{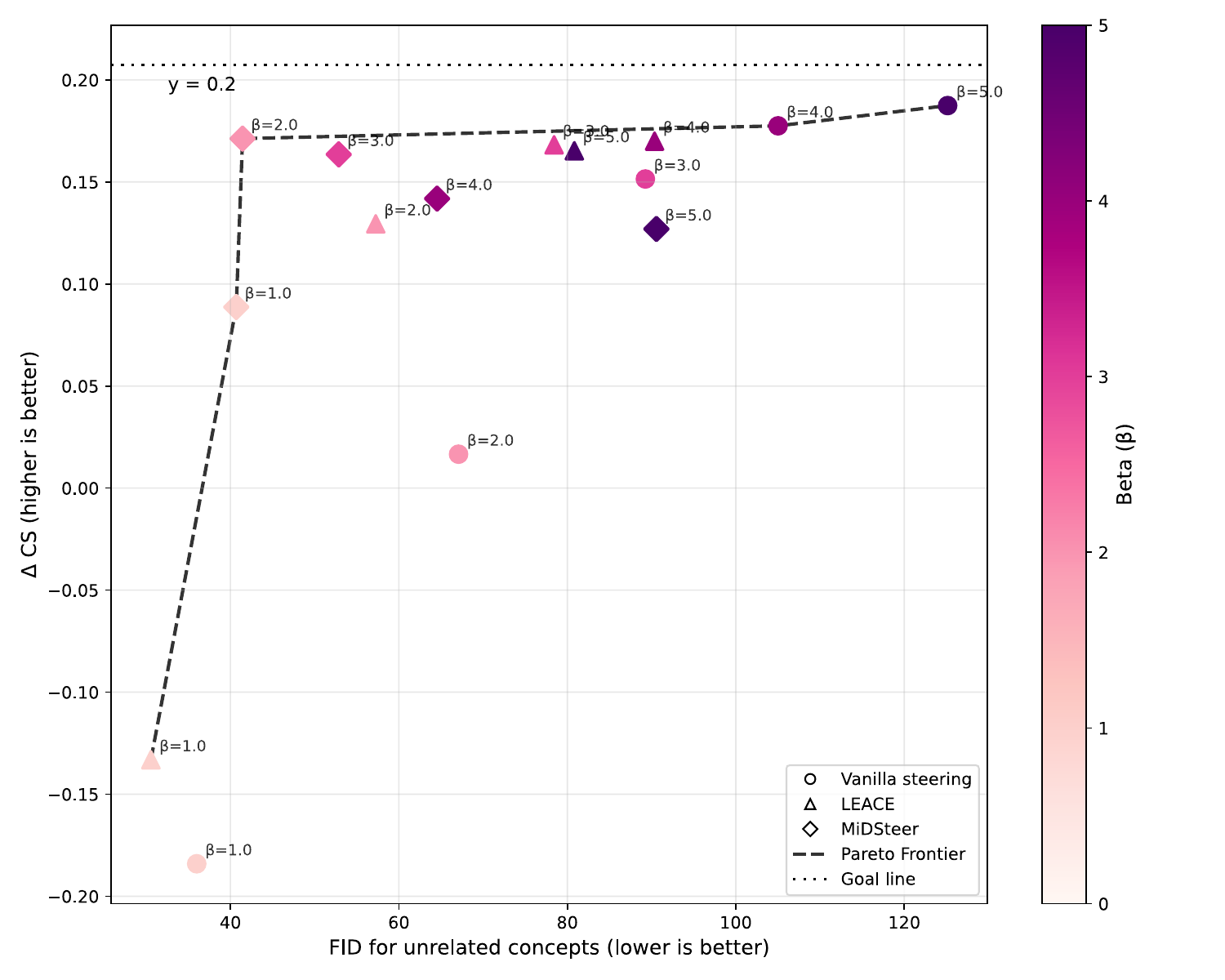}
  \caption{Unrelated vs FID}
  \label{fig:sana-flip-noclip-unrelated-fid.svg}
\end{subfigure}
\caption{Pareto plot for concept flip on model SANA (Other axes axes)}
\label{fig:sana-flip-base-grid}
\vspace{-0.3cm}
\end{figure}

\begin{figure}[t]
\centering
\begin{subfigure}[t]{0.49\linewidth}
  \centering
  \includegraphics[width=\linewidth]{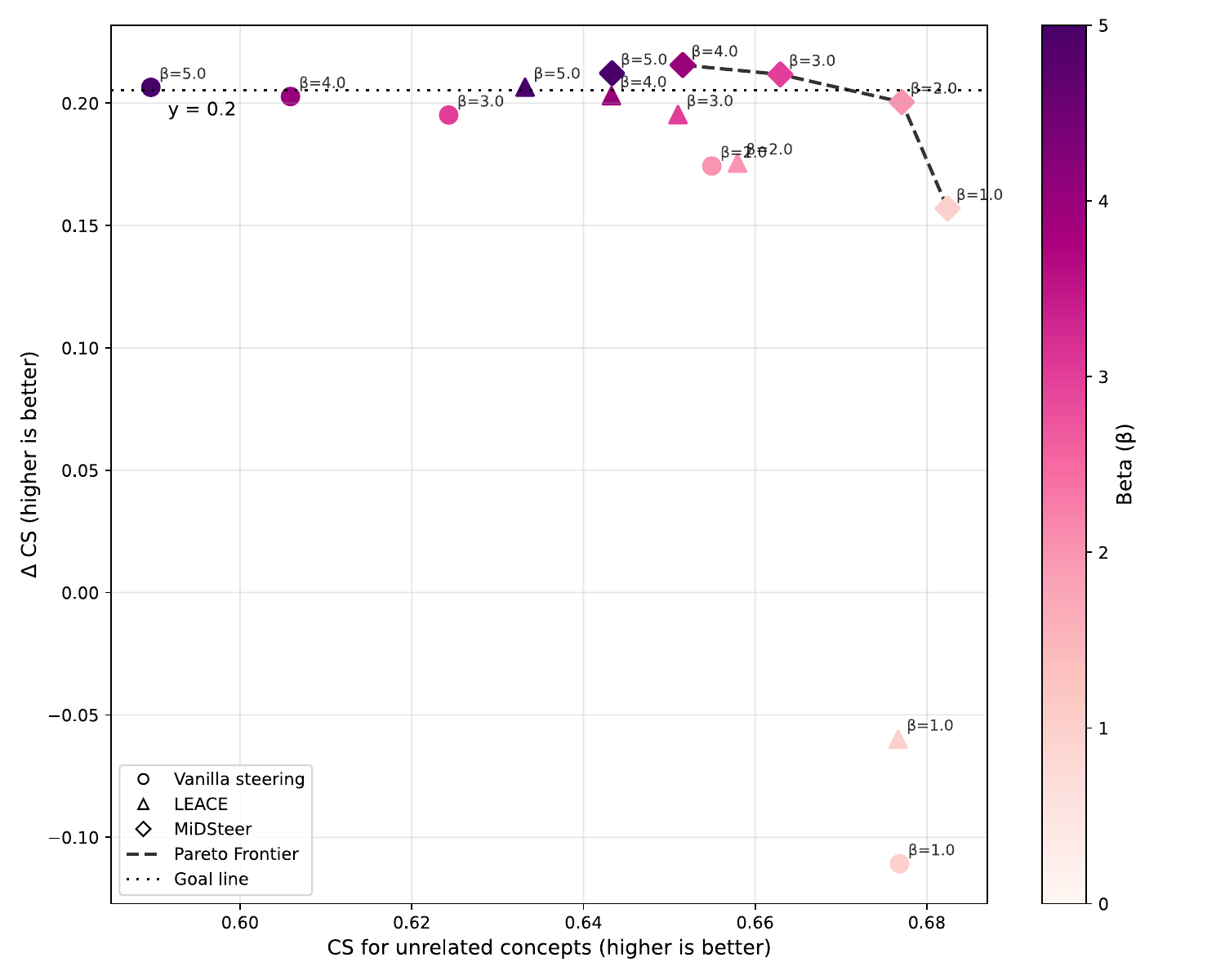}
  \caption{Unrelated vs CS}
  \label{fig:sdxl-flip-noclip-unrelated-cs.svg}
\end{subfigure}\hfill
\begin{subfigure}[t]{0.49\linewidth}
  \centering
  \includegraphics[width=\linewidth]{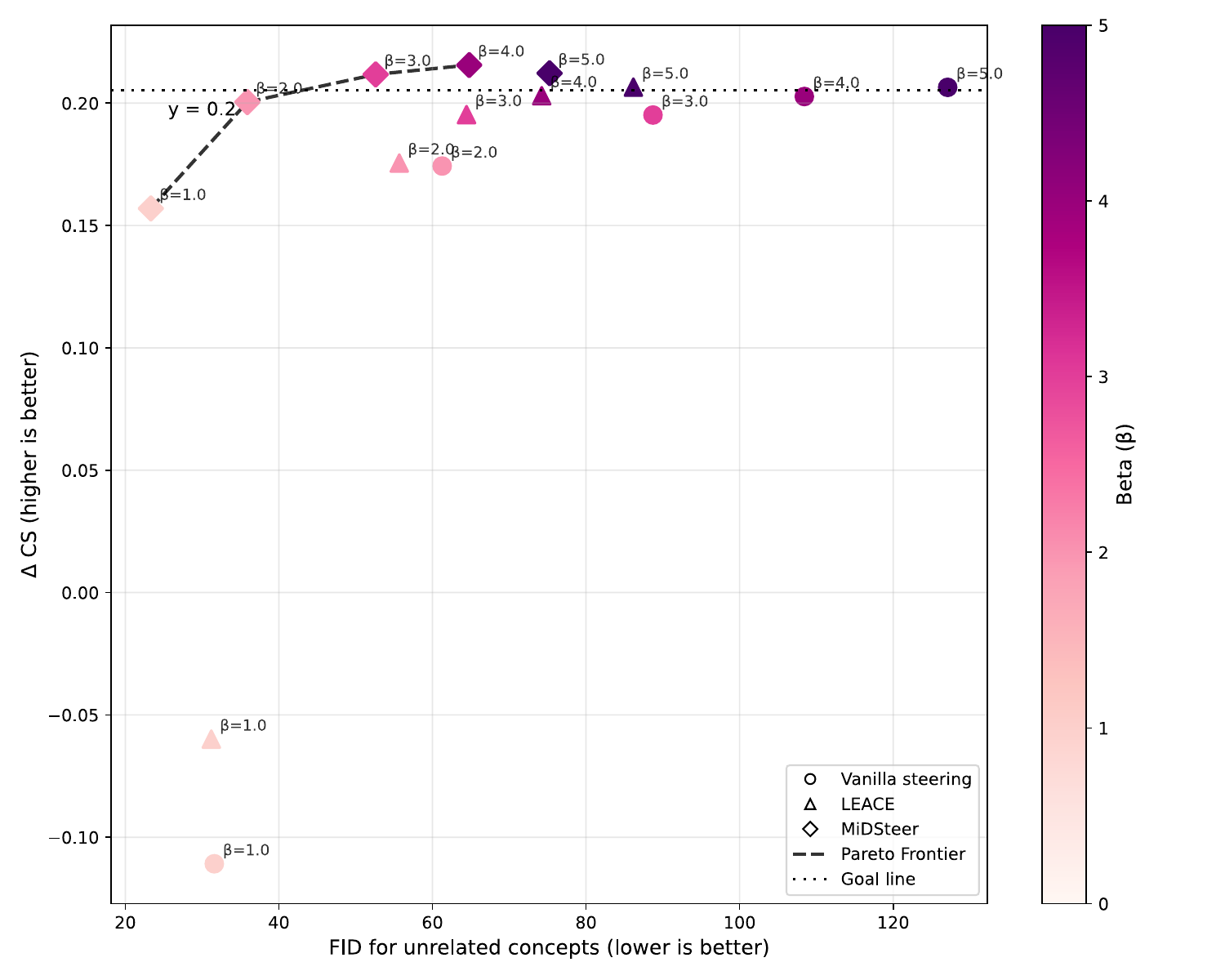}
  \caption{Unrelated vs FID}
  \label{fig:sdxl-flip-noclip-unrelated-fid.svg}
\end{subfigure}
\caption{Pareto plot for concept flip on model SDXL (Other axes axes)}
\label{fig:sdxl-flip-base-grid}
\vspace{-0.3cm}
\end{figure}

\clearpage
\subsection{Detailed results for LLM concept switching}
\label{sec:tables_llm_switch}

In this section, in tab.~\ref{tab:flip_llama2-7b_noclip_dogs_to_cats},\ref{tab:flip_llama2-7b_noclip_horses_to_motorcycles},\ref{tab:flip_qwen2.5-14b_noclip_dogs_to_cats},\ref{tab:flip_qwen2.5-14b_noclip_horses_to_motorcycles},\ref{tab:flip_qwen2.5-7b_noclip_dogs_to_cats},\ref{tab:flip_qwen2.5-7b_noclip_horses_to_motorcycles} we provide detailed breakdown of scores for all $\beta$ values and all concepts $c_s, c_t, c_i$. Pareto plots in sec.~\ref{sec:pareto_llm_switch} were created based on the scores provided in these tables.

\begin{table}
\caption{Model LLama2-7b, flipping from dogs to cats}
\label{tab:flip_llama2-7b_noclip_dogs_to_cats}
\centering
    \setlength{\tabcolsep}{3.0pt}
    \resizebox{\linewidth}{!}{
    \begin{tabular}{lc|cc|ccc|cccc|cccc|cccc|cccc}
\toprule
 &  & \multicolumn{2}{c}{dogs} & \multicolumn{3}{c}{cats} & \multicolumn{4}{c}{wolves} & \multicolumn{4}{c}{pigs} & \multicolumn{4}{c}{cows} & \multicolumn{4}{c}{legislators} \\
 \midrule
 method & strength & src-cs $\downarrow$ & tgt-cs $\uparrow$ & src-cs $\downarrow$ & tgt-cs $\uparrow$ & bertp $\uparrow$ & cs $\uparrow$ & src-cs $\downarrow$ & tgt-cs $\downarrow$ & bertp $\uparrow$ & cs $\uparrow$ & src-cs $\downarrow$ & tgt-cs $\downarrow$ & bertp $\uparrow$ & cs $\uparrow$ & src-cs $\downarrow$ & tgt-cs $\downarrow$ & bertp $\uparrow$ & cs $\uparrow$ & src-cs $\downarrow$ & tgt-cs $\downarrow$ & bertp $\uparrow$ \\
\midrule
\multirow{9}{*}{Steering} & 1.0 & 8.6 & 1.1 & 2.9 & 7.5 & 0.91 & 8.5 & 7.8 & 1.5 & 0.91 & 8.5 & 1.5 & 0.4 & 0.91 & 8.4 & 0.7 & 0.3 & 0.91 & 8.5 & 0.3 & 0.2 & 0.90 \\
 & 1.5 & 8.1 & 1.9 & 6.4 & 3.8 & 0.90 & 8.5 & 7.8 & 1.5 & 0.90 & 8.5 & 1.5 & 0.4 & 0.91 & 8.4 & 0.6 & 0.3 & 0.90 & 8.5 & 0.4 & 0.2 & 0.90 \\
 & 2.0 & 5.9 & 3.9 & 7.1 & 2.8 & 0.89 & 8.4 & 7.8 & 1.4 & 0.90 & 8.5 & 1.5 & 0.4 & 0.90 & 8.4 & 0.7 & 0.3 & 0.90 & 8.5 & 0.3 & 0.2 & 0.90 \\
 & 2.5 & 5.0 & 4.4 & 7.0 & 3.0 & 0.89 & 8.5 & 7.7 & 1.4 & 0.90 & 8.4 & 1.4 & 0.4 & 0.90 & 8.4 & 0.7 & 0.3 & 0.90 & 8.4 & 0.4 & 0.2 & 0.89 \\
 & 3.0 & 4.8 & 4.6 & 6.6 & 2.8 & 0.89 & 8.5 & 7.7 & 1.4 & 0.90 & 8.4 & 1.4 & 0.4 & 0.90 & 8.4 & 0.7 & 0.3 & 0.90 & 8.5 & 0.3 & 0.2 & 0.89 \\
 & 3.5 & 4.6 & 4.7 & 6.5 & 3.0 & 0.89 & 8.4 & 7.7 & 1.4 & 0.90 & 8.5 & 1.4 & 0.3 & 0.90 & 8.3 & 0.9 & 0.3 & 0.90 & 8.5 & 0.4 & 0.2 & 0.90 \\
 & 4.0 & 4.6 & 4.7 & 6.5 & 2.9 & 0.89 & 8.4 & 7.7 & 1.5 & 0.90 & 8.4 & 1.4 & 0.4 & 0.90 & 8.2 & 0.9 & 0.3 & 0.90 & 8.4 & 0.4 & 0.2 & 0.89 \\
 & 4.5 & 4.5 & 4.9 & 6.5 & 2.9 & 0.89 & 8.3 & 7.6 & 1.5 & 0.90 & 8.3 & 1.5 & 0.5 & 0.90 & 8.2 & 1.0 & 0.4 & 0.90 & 8.4 & 0.4 & 0.2 & 0.89 \\
 & 5.0 & 4.4 & 4.9 & 6.5 & 2.7 & 0.88 & 8.3 & 7.6 & 1.5 & 0.89 & 8.2 & 1.5 & 0.6 & 0.89 & 7.9 & 1.2 & 0.5 & 0.89 & 8.2 & 0.4 & 0.2 & 0.89 \\
\cline{1-23}
\multirow{9}{*}{LEACE} & 1.0 & 8.5 & 1.4 & 2.1 & 8.2 & 0.91 & 8.4 & 7.8 & 1.5 & 0.91 & 8.5 & 1.5 & 0.5 & 0.91 & 8.4 & 0.7 & 0.3 & 0.91 & 8.5 & 0.3 & 0.2 & 0.90 \\
 & 1.5 & 5.4 & 4.5 & 3.4 & 6.6 & 0.91 & 8.4 & 7.8 & 1.5 & 0.91 & 8.5 & 1.4 & 0.4 & 0.91 & 8.4 & 0.7 & 0.3 & 0.91 & 8.5 & 0.3 & 0.1 & 0.90 \\
 & 2.0 & 4.0 & 5.5 & 4.9 & 4.8 & 0.91 & 8.4 & 7.7 & 1.7 & 0.91 & 8.5 & 1.4 & 0.4 & 0.91 & 8.4 & 0.6 & 0.3 & 0.91 & 8.5 & 0.3 & 0.2 & 0.90 \\
 & 2.5 & 3.8 & 5.6 & 5.3 & 4.1 & 0.90 & 8.4 & 7.7 & 1.5 & 0.91 & 8.5 & 1.5 & 0.5 & 0.91 & 8.4 & 0.7 & 0.3 & 0.91 & 8.5 & 0.3 & 0.2 & 0.90 \\
 & 3.0 & 3.7 & 5.6 & 5.6 & 4.1 & 0.90 & 8.4 & 7.7 & 1.6 & 0.91 & 8.5 & 1.5 & 0.4 & 0.91 & 8.4 & 0.7 & 0.3 & 0.91 & 8.5 & 0.3 & 0.2 & 0.90 \\
 & 3.5 & 3.7 & 5.7 & 5.5 & 3.9 & 0.90 & 8.4 & 7.7 & 1.5 & 0.91 & 8.4 & 1.5 & 0.5 & 0.91 & 8.4 & 0.7 & 0.3 & 0.91 & 8.5 & 0.3 & 0.2 & 0.90 \\
 & 4.0 & 3.6 & 5.6 & 5.5 & 3.9 & 0.90 & 8.4 & 7.6 & 1.6 & 0.91 & 8.5 & 1.5 & 0.4 & 0.91 & 8.4 & 0.7 & 0.3 & 0.91 & 8.5 & 0.3 & 0.2 & 0.90 \\
 & 4.5 & 3.6 & 5.6 & 5.5 & 3.9 & 0.90 & 8.4 & 7.7 & 1.6 & 0.91 & 8.5 & 1.4 & 0.4 & 0.91 & 8.4 & 0.7 & 0.3 & 0.91 & 8.5 & 0.3 & 0.2 & 0.90 \\
 & 5.0 & 3.6 & 5.5 & 5.4 & 3.9 & 0.90 & 8.4 & 7.7 & 1.6 & 0.91 & 8.4 & 1.5 & 0.5 & 0.91 & 8.3 & 0.7 & 0.3 & 0.91 & 8.5 & 0.3 & 0.1 & 0.90 \\
\cline{1-23}
\multirow{9}{*}{MidSteer} & 1.0 & 8.0 & 1.8 & 1.6 & 8.6 & 0.92 & 8.4 & 7.8 & 1.6 & 0.91 & 8.5 & 1.4 & 0.4 & 0.91 & 8.4 & 0.7 & 0.3 & 0.91 & 8.5 & 0.3 & 0.1 & 0.91 \\
 & 1.5 & 4.1 & 6.1 & 1.5 & 8.6 & 0.91 & 8.5 & 7.8 & 1.6 & 0.91 & 8.4 & 1.5 & 0.5 & 0.91 & 8.4 & 0.6 & 0.3 & 0.91 & 8.5 & 0.4 & 0.2 & 0.90 \\
 & 2.0 & 2.6 & 7.7 & 1.5 & 8.6 & 0.91 & 8.4 & 7.7 & 1.7 & 0.91 & 8.5 & 1.5 & 0.5 & 0.91 & 8.4 & 0.6 & 0.3 & 0.91 & 8.5 & 0.3 & 0.2 & 0.90 \\
 & 2.5 & 2.1 & 8.2 & 1.5 & 8.6 & 0.91 & 8.3 & 7.5 & 1.8 & 0.91 & 8.4 & 1.4 & 0.5 & 0.91 & 8.2 & 0.7 & 0.4 & 0.91 & 8.4 & 0.3 & 0.2 & 0.90 \\
 & 3.0 & 1.9 & 8.4 & 1.4 & 8.7 & 0.91 & 7.9 & 7.0 & 2.2 & 0.91 & 8.3 & 1.5 & 0.8 & 0.91 & 8.0 & 0.7 & 0.8 & 0.91 & 8.5 & 0.3 & 0.2 & 0.90 \\
 & 3.5 & 1.7 & 8.5 & 1.3 & 8.7 & 0.91 & 7.4 & 6.5 & 3.2 & 0.90 & 7.8 & 1.6 & 1.5 & 0.90 & 7.7 & 0.9 & 1.4 & 0.91 & 8.4 & 0.3 & 0.2 & 0.90 \\
 & 4.0 & 1.6 & 8.5 & 1.4 & 8.6 & 0.90 & 6.3 & 5.6 & 4.2 & 0.90 & 7.3 & 1.7 & 2.3 & 0.90 & 6.8 & 1.1 & 2.3 & 0.90 & 8.4 & 0.4 & 0.4 & 0.90 \\
 & 4.5 & 1.6 & 8.6 & 1.3 & 8.7 & 0.90 & 5.4 & 4.8 & 5.1 & 0.89 & 6.5 & 1.8 & 3.2 & 0.90 & 5.8 & 1.2 & 3.8 & 0.90 & 8.3 & 0.5 & 0.9 & 0.90 \\
 & 5.0 & 1.6 & 8.6 & 1.4 & 8.7 & 0.90 & 4.6 & 4.1 & 5.8 & 0.89 & 5.5 & 2.0 & 4.4 & 0.89 & 4.6 & 1.2 & 4.8 & 0.89 & 8.1 & 0.6 & 1.4 & 0.90 \\
\bottomrule
\end{tabular}}
\end{table}

\begin{table}
\caption{Model LLama2-7b, flipping from horses to motorcycles}
\label{tab:flip_llama2-7b_noclip_horses_to_motorcycles}
\centering
    \setlength{\tabcolsep}{3.0pt}
    \resizebox{\linewidth}{!}{
    \begin{tabular}{lc|cc|ccc|cccc|cccc|cccc|cccc}
\toprule
 &  & \multicolumn{2}{c}{horses} & \multicolumn{3}{c}{motorcycles} & \multicolumn{4}{c}{cows} & \multicolumn{4}{c}{pigs} & \multicolumn{4}{c}{dogs} & \multicolumn{4}{c}{legislators} \\
 \midrule
 method & strength & src-cs $\downarrow$ & tgt-cs $\uparrow$ & src-cs $\downarrow$ & tgt-cs $\uparrow$ & bertp $\uparrow$ & cs $\uparrow$ & src-cs $\downarrow$ & tgt-cs $\downarrow$ & bertp $\uparrow$ & cs $\uparrow$ & src-cs $\downarrow$ & tgt-cs $\downarrow$ & bertp $\uparrow$ & cs $\uparrow$ & src-cs $\downarrow$ & tgt-cs $\downarrow$ & bertp $\uparrow$ & cs $\uparrow$ & src-cs $\downarrow$ & tgt-cs $\downarrow$ & bertp $\uparrow$ \\
\midrule
\multirow{9}{*}{Steering} & 1.0 & 8.6 & 0.7 & 1.3 & 8.5 & 0.92 & 8.4 & 1.7 & 0.1 & 0.91 & 8.5 & 1.2 & 0.1 & 0.91 & 8.6 & 1.5 & 0.2 & 0.92 & 8.5 & 0.3 & 0.3 & 0.90 \\
 & 1.5 & 6.5 & 2.9 & 1.4 & 8.5 & 0.92 & 8.4 & 1.7 & 0.1 & 0.91 & 8.4 & 1.2 & 0.1 & 0.91 & 8.6 & 1.5 & 0.2 & 0.91 & 8.5 & 0.3 & 0.3 & 0.90 \\
 & 2.0 & 2.7 & 6.9 & 2.1 & 7.9 & 0.91 & 8.3 & 1.6 & 0.1 & 0.91 & 8.4 & 1.2 & 0.1 & 0.91 & 8.6 & 1.5 & 0.2 & 0.91 & 8.5 & 0.3 & 0.3 & 0.90 \\
 & 2.5 & 2.6 & 7.2 & 3.7 & 5.5 & 0.91 & 8.1 & 1.4 & 0.2 & 0.90 & 8.3 & 1.2 & 0.1 & 0.91 & 8.5 & 1.5 & 0.2 & 0.91 & 8.5 & 0.3 & 0.3 & 0.90 \\
 & 3.0 & 2.9 & 6.8 & 4.4 & 4.2 & 0.90 & 7.8 & 1.3 & 0.3 & 0.90 & 8.2 & 1.2 & 0.1 & 0.90 & 8.6 & 1.4 & 0.2 & 0.91 & 8.5 & 0.3 & 0.2 & 0.90 \\
 & 3.5 & 3.0 & 6.5 & 4.7 & 3.6 & 0.90 & 7.2 & 1.3 & 0.3 & 0.90 & 8.1 & 1.1 & 0.1 & 0.90 & 8.5 & 1.3 & 0.2 & 0.91 & 8.4 & 0.3 & 0.3 & 0.90 \\
 & 4.0 & 3.0 & 6.1 & 4.6 & 3.4 & 0.89 & 6.9 & 1.1 & 0.4 & 0.90 & 8.1 & 1.1 & 0.2 & 0.90 & 8.3 & 1.4 & 0.3 & 0.90 & 8.5 & 0.3 & 0.3 & 0.90 \\
 & 4.5 & 3.1 & 5.8 & 5.0 & 3.2 & 0.89 & 6.5 & 1.1 & 0.4 & 0.89 & 7.9 & 1.1 & 0.2 & 0.90 & 8.2 & 1.4 & 0.3 & 0.90 & 8.5 & 0.4 & 0.4 & 0.90 \\
 & 5.0 & 3.0 & 5.7 & 4.8 & 3.1 & 0.89 & 6.4 & 1.1 & 0.5 & 0.89 & 7.7 & 1.0 & 0.3 & 0.89 & 8.0 & 1.3 & 0.3 & 0.90 & 8.4 & 0.4 & 0.4 & 0.90 \\
\cline{1-23}
\multirow{9}{*}{LEACE} & 1.0 & 8.5 & 0.7 & 1.2 & 8.5 & 0.92 & 8.4 & 1.7 & 0.1 & 0.91 & 8.4 & 1.2 & 0.1 & 0.91 & 8.6 & 1.5 & 0.1 & 0.92 & 8.5 & 0.3 & 0.3 & 0.90 \\
 & 1.5 & 8.2 & 1.0 & 1.7 & 8.2 & 0.92 & 8.4 & 1.6 & 0.2 & 0.91 & 8.4 & 1.3 & 0.1 & 0.91 & 8.6 & 1.5 & 0.2 & 0.92 & 8.5 & 0.4 & 0.3 & 0.90 \\
 & 2.0 & 5.4 & 3.5 & 4.0 & 5.4 & 0.91 & 8.3 & 1.7 & 0.2 & 0.91 & 8.4 & 1.3 & 0.1 & 0.91 & 8.6 & 1.5 & 0.2 & 0.92 & 8.5 & 0.3 & 0.2 & 0.90 \\
 & 2.5 & 4.0 & 4.9 & 4.9 & 3.9 & 0.90 & 8.3 & 1.6 & 0.2 & 0.91 & 8.4 & 1.2 & 0.1 & 0.91 & 8.6 & 1.5 & 0.2 & 0.91 & 8.5 & 0.3 & 0.2 & 0.91 \\
 & 3.0 & 3.7 & 5.2 & 5.5 & 3.2 & 0.90 & 8.2 & 1.5 & 0.2 & 0.91 & 8.4 & 1.2 & 0.1 & 0.91 & 8.6 & 1.5 & 0.2 & 0.91 & 8.5 & 0.3 & 0.3 & 0.90 \\
 & 3.5 & 3.6 & 5.3 & 5.4 & 3.0 & 0.90 & 8.1 & 1.4 & 0.2 & 0.91 & 8.3 & 1.2 & 0.1 & 0.91 & 8.6 & 1.4 & 0.2 & 0.91 & 8.5 & 0.3 & 0.2 & 0.90 \\
 & 4.0 & 3.5 & 5.5 & 5.5 & 2.9 & 0.90 & 8.0 & 1.3 & 0.2 & 0.90 & 8.3 & 1.1 & 0.1 & 0.91 & 8.6 & 1.5 & 0.2 & 0.91 & 8.5 & 0.4 & 0.3 & 0.90 \\
 & 4.5 & 3.4 & 5.2 & 5.5 & 2.8 & 0.90 & 7.7 & 1.2 & 0.2 & 0.90 & 8.2 & 1.2 & 0.1 & 0.91 & 8.5 & 1.5 & 0.2 & 0.91 & 8.5 & 0.3 & 0.2 & 0.90 \\
 & 5.0 & 3.4 & 5.3 & 5.6 & 2.9 & 0.90 & 7.6 & 1.3 & 0.3 & 0.90 & 8.2 & 1.1 & 0.1 & 0.91 & 8.5 & 1.5 & 0.2 & 0.91 & 8.5 & 0.4 & 0.3 & 0.90 \\
\cline{1-23}
\multirow{9}{*}{MidSteer} & 1.0 & 8.6 & 0.7 & 1.2 & 8.5 & 0.92 & 8.4 & 1.6 & 0.1 & 0.91 & 8.5 & 1.2 & 0.1 & 0.91 & 8.6 & 1.4 & 0.2 & 0.92 & 8.5 & 0.3 & 0.3 & 0.90 \\
 & 1.5 & 5.9 & 3.5 & 1.2 & 8.5 & 0.92 & 8.4 & 1.6 & 0.1 & 0.91 & 8.4 & 1.2 & 0.1 & 0.91 & 8.6 & 1.6 & 0.2 & 0.92 & 8.4 & 0.3 & 0.3 & 0.90 \\
 & 2.0 & 2.6 & 7.2 & 1.2 & 8.5 & 0.92 & 8.3 & 1.5 & 0.1 & 0.91 & 8.4 & 1.2 & 0.1 & 0.91 & 8.6 & 1.5 & 0.2 & 0.91 & 8.4 & 0.3 & 0.3 & 0.90 \\
 & 2.5 & 1.8 & 8.2 & 1.2 & 8.5 & 0.92 & 8.3 & 1.6 & 0.2 & 0.91 & 8.4 & 1.2 & 0.1 & 0.91 & 8.6 & 1.5 & 0.2 & 0.91 & 8.5 & 0.3 & 0.3 & 0.90 \\
 & 3.0 & 1.7 & 8.4 & 1.2 & 8.5 & 0.92 & 8.3 & 1.6 & 0.2 & 0.91 & 8.2 & 1.1 & 0.1 & 0.91 & 8.6 & 1.6 & 0.2 & 0.91 & 8.5 & 0.3 & 0.3 & 0.90 \\
 & 3.5 & 1.6 & 8.4 & 1.2 & 8.5 & 0.92 & 8.0 & 1.4 & 0.4 & 0.90 & 8.2 & 1.0 & 0.2 & 0.90 & 8.5 & 1.5 & 0.2 & 0.91 & 8.4 & 0.4 & 0.3 & 0.90 \\
 & 4.0 & 1.6 & 8.4 & 1.2 & 8.5 & 0.91 & 7.6 & 1.3 & 0.7 & 0.90 & 8.0 & 1.1 & 0.4 & 0.90 & 8.4 & 1.4 & 0.2 & 0.91 & 8.4 & 0.3 & 0.3 & 0.90 \\
 & 4.5 & 1.6 & 8.5 & 1.2 & 8.5 & 0.91 & 7.0 & 1.3 & 1.4 & 0.90 & 7.6 & 1.0 & 0.6 & 0.90 & 8.3 & 1.4 & 0.4 & 0.91 & 8.5 & 0.3 & 0.3 & 0.90 \\
 & 5.0 & 1.4 & 8.5 & 1.2 & 8.5 & 0.91 & 6.1 & 1.2 & 2.2 & 0.89 & 7.3 & 0.9 & 0.9 & 0.90 & 8.2 & 1.4 & 0.5 & 0.90 & 8.4 & 0.3 & 0.3 & 0.90 \\
\bottomrule
\end{tabular}}
\end{table}
\begin{table}
\caption{Model Qwen2.5-7b, flipping from dogs to cats}
\label{tab:flip_qwen2.5-7b_noclip_dogs_to_cats}
\centering
    \setlength{\tabcolsep}{3.0pt}
    \resizebox{\linewidth}{!}{
    \begin{tabular}{lc|cc|ccc|cccc|cccc|cccc|cccc}
\toprule
 &  & \multicolumn{2}{c}{dogs} & \multicolumn{3}{c}{cats} & \multicolumn{4}{c}{wolves} & \multicolumn{4}{c}{pigs} & \multicolumn{4}{c}{cows} & \multicolumn{4}{c}{legislators} \\
 \midrule
 method & strength & src-cs $\downarrow$ & tgt-cs $\uparrow$ & src-cs $\downarrow$ & tgt-cs $\uparrow$ & bertp $\uparrow$ & cs $\uparrow$ & src-cs $\downarrow$ & tgt-cs $\downarrow$ & bertp $\uparrow$ & cs $\uparrow$ & src-cs $\downarrow$ & tgt-cs $\downarrow$ & bertp $\uparrow$ & cs $\uparrow$ & src-cs $\downarrow$ & tgt-cs $\downarrow$ & bertp $\uparrow$ & cs $\uparrow$ & src-cs $\downarrow$ & tgt-cs $\downarrow$ & bertp $\uparrow$ \\
\midrule
\multirow{5}{*}{Steering} & 1.0 & 5.6 & 6.9 & 1.9 & 8.4 & 0.89 & 8.3 & 7.6 & 2.0 & 0.89 & 8.4 & 1.6 & 0.6 & 0.89 & 8.4 & 0.8 & 0.4 & 0.89 & 8.5 & 0.3 & 0.2 & 0.89 \\
 & 2.0 & 3.2 & 7.5 & 2.2 & 7.8 & 0.88 & 7.6 & 6.4 & 3.7 & 0.88 & 8.1 & 1.6 & 1.9 & 0.88 & 7.9 & 1.1 & 1.8 & 0.89 & 8.4 & 0.5 & 0.9 & 0.89 \\
 & 3.0 & 3.4 & 7.5 & 2.3 & 7.6 & 0.87 & 7.3 & 6.2 & 3.9 & 0.87 & 7.2 & 1.7 & 3.0 & 0.88 & 7.0 & 1.4 & 3.0 & 0.88 & 8.4 & 0.7 & 2.0 & 0.88 \\
 & 4.0 & 3.0 & 7.1 & 2.4 & 7.2 & 0.86 & 7.2 & 6.2 & 3.7 & 0.87 & 7.4 & 1.6 & 2.2 & 0.87 & 7.1 & 1.3 & 2.8 & 0.87 & 8.3 & 0.6 & 1.6 & 0.88 \\
 & 5.0 & 2.9 & 6.8 & 2.4 & 6.6 & 0.85 & 5.9 & 5.2 & 4.3 & 0.85 & 7.3 & 1.6 & 1.7 & 0.86 & 7.5 & 1.2 & 1.8 & 0.86 & 8.3 & 0.5 & 1.0 & 0.87 \\
\cline{1-23}
\multirow{5}{*}{LEACE} & 1.0 & 8.7 & 1.0 & 1.8 & 8.5 & 0.89 & 8.3 & 7.7 & 1.6 & 0.89 & 8.4 & 1.6 & 0.4 & 0.89 & 8.4 & 0.9 & 0.3 & 0.89 & 8.5 & 0.3 & 0.2 & 0.89 \\
 & 2.0 & 7.4 & 4.2 & 6.0 & 6.7 & 0.88 & 8.3 & 7.6 & 1.6 & 0.89 & 8.4 & 1.4 & 0.4 & 0.89 & 8.4 & 0.9 & 0.3 & 0.89 & 8.5 & 0.3 & 0.2 & 0.89 \\
 & 3.0 & 6.3 & 6.7 & 7.1 & 5.5 & 0.88 & 8.3 & 7.6 & 1.7 & 0.89 & 8.4 & 1.6 & 0.4 & 0.89 & 8.4 & 0.9 & 0.3 & 0.89 & 8.5 & 0.3 & 0.2 & 0.89 \\
 & 4.0 & 5.8 & 6.6 & 6.9 & 4.7 & 0.87 & 8.3 & 7.5 & 1.7 & 0.89 & 8.4 & 1.5 & 0.4 & 0.89 & 8.4 & 0.8 & 0.3 & 0.89 & 8.5 & 0.3 & 0.2 & 0.89 \\
 & 5.0 & 5.3 & 6.5 & 6.7 & 4.3 & 0.87 & 8.3 & 7.6 & 1.8 & 0.89 & 8.4 & 1.5 & 0.4 & 0.89 & 8.4 & 0.8 & 0.3 & 0.89 & 8.5 & 0.4 & 0.2 & 0.89 \\
\cline{1-23}
\multirow{5}{*}{MidSteer} & 1.0 & 8.7 & 1.0 & 1.7 & 8.5 & 0.89 & 8.3 & 7.6 & 1.6 & 0.89 & 8.5 & 1.5 & 0.5 & 0.89 & 8.4 & 0.8 & 0.3 & 0.89 & 8.5 & 0.4 & 0.2 & 0.89 \\
 & 2.0 & 7.3 & 5.0 & 1.7 & 8.5 & 0.89 & 8.4 & 7.6 & 1.7 & 0.89 & 8.5 & 1.5 & 0.4 & 0.89 & 8.4 & 0.8 & 0.3 & 0.89 & 8.5 & 0.3 & 0.2 & 0.89 \\
 & 3.0 & 5.7 & 7.7 & 1.7 & 8.5 & 0.89 & 8.3 & 7.5 & 2.0 & 0.89 & 8.4 & 1.4 & 0.6 & 0.89 & 8.4 & 0.8 & 0.5 & 0.89 & 8.5 & 0.3 & 0.2 & 0.89 \\
 & 4.0 & 5.0 & 8.2 & 1.7 & 8.5 & 0.89 & 8.1 & 7.4 & 2.5 & 0.89 & 8.3 & 1.6 & 1.0 & 0.89 & 8.3 & 1.0 & 0.7 & 0.89 & 8.5 & 0.4 & 0.3 & 0.89 \\
 & 5.0 & 4.3 & 8.3 & 1.7 & 8.5 & 0.89 & 7.8 & 6.9 & 3.2 & 0.89 & 8.3 & 1.7 & 1.1 & 0.89 & 8.2 & 1.1 & 1.0 & 0.89 & 8.4 & 0.4 & 0.3 & 0.89 \\
\bottomrule
\end{tabular}}
\end{table}
\begin{table}
\caption{Model Qwen2.5-7b, flipping from horses to motorcycles}
\label{tab:flip_qwen2.5-7b_noclip_horses_to_motorcycles}
\centering
    \setlength{\tabcolsep}{3.0pt}
    \resizebox{\linewidth}{!}{
    \begin{tabular}{lc|cc|ccc|cccc|cccc|cccc|cccc}
\toprule
 &  & \multicolumn{2}{c}{horses} & \multicolumn{3}{c}{motorcycles} & \multicolumn{4}{c}{cows} & \multicolumn{4}{c}{pigs} & \multicolumn{4}{c}{dogs} & \multicolumn{4}{c}{legislators} \\
 \midrule
method & strength & src-cs $\downarrow$ & tgt-cs $\uparrow$ & src-cs $\downarrow$ & tgt-cs $\uparrow$ & bertp $\uparrow$ & cs $\uparrow$ & src-cs $\downarrow$ & tgt-cs $\downarrow$ & bertp $\uparrow$ & cs $\uparrow$ & src-cs $\downarrow$ & tgt-cs $\downarrow$ & bertp $\uparrow$ & cs $\uparrow$ & src-cs $\downarrow$ & tgt-cs $\downarrow$ & bertp $\uparrow$ & cs $\uparrow$ & src-cs $\downarrow$ & tgt-cs $\downarrow$ & bertp $\uparrow$ \\
\midrule
\multirow{5}{*}{Steering} & 1.0 & 8.6 & 0.8 & 1.2 & 8.4 & 0.90 & 8.4 & 1.8 & 0.2 & 0.89 & 8.4 & 1.1 & 0.1 & 0.89 & 8.6 & 1.4 & 0.3 & 0.89 & 8.5 & 0.4 & 0.4 & 0.89 \\
 & 2.0 & 4.3 & 5.7 & 5.7 & 4.4 & 0.87 & 8.2 & 1.6 & 0.3 & 0.89 & 8.3 & 0.9 & 0.1 & 0.88 & 8.6 & 1.3 & 0.2 & 0.89 & 8.5 & 0.4 & 0.3 & 0.89 \\
 & 3.0 & 4.1 & 5.5 & 6.1 & 3.2 & 0.86 & 7.4 & 1.3 & 0.8 & 0.88 & 8.0 & 0.8 & 0.3 & 0.87 & 8.5 & 1.3 & 0.3 & 0.88 & 8.5 & 0.3 & 0.3 & 0.89 \\
 & 4.0 & 4.2 & 5.2 & 6.2 & 2.8 & 0.86 & 6.6 & 1.5 & 1.3 & 0.86 & 7.5 & 0.9 & 0.4 & 0.87 & 8.1 & 1.3 & 0.4 & 0.88 & 8.4 & 0.3 & 0.3 & 0.89 \\
 & 5.0 & 3.8 & 5.3 & 6.4 & 2.9 & 0.85 & 6.3 & 1.4 & 1.2 & 0.86 & 7.1 & 0.9 & 0.6 & 0.86 & 7.8 & 1.3 & 0.5 & 0.87 & 8.4 & 0.3 & 0.3 & 0.88 \\
\cline{1-23}
\multirow{5}{*}{LEACE} & 1.0 & 8.6 & 0.7 & 1.0 & 8.4 & 0.90 & 8.4 & 1.9 & 0.2 & 0.89 & 8.4 & 1.2 & 0.1 & 0.89 & 8.7 & 1.5 & 0.2 & 0.89 & 8.5 & 0.4 & 0.3 & 0.89 \\
 & 2.0 & 4.4 & 6.2 & 2.8 & 7.7 & 0.89 & 8.4 & 1.8 & 0.2 & 0.89 & 8.4 & 1.1 & 0.1 & 0.89 & 8.7 & 1.5 & 0.2 & 0.89 & 8.6 & 0.4 & 0.3 & 0.89 \\
 & 3.0 & 3.1 & 6.7 & 5.3 & 5.0 & 0.88 & 8.3 & 1.7 & 0.2 & 0.89 & 8.4 & 1.0 & 0.1 & 0.89 & 8.7 & 1.4 & 0.2 & 0.89 & 8.5 & 0.4 & 0.3 & 0.89 \\
 & 4.0 & 3.2 & 6.3 & 5.5 & 4.1 & 0.87 & 8.2 & 1.6 & 0.4 & 0.89 & 8.3 & 1.0 & 0.1 & 0.89 & 8.6 & 1.4 & 0.2 & 0.89 & 8.5 & 0.4 & 0.4 & 0.89 \\
 & 5.0 & 3.2 & 6.1 & 5.6 & 3.8 & 0.87 & 7.9 & 1.5 & 0.7 & 0.88 & 8.3 & 0.9 & 0.2 & 0.88 & 8.6 & 1.3 & 0.2 & 0.89 & 8.5 & 0.4 & 0.4 & 0.89 \\
\cline{1-23}
\multirow{5}{*}{MidSteer} & 1.0 & 8.6 & 0.8 & 0.9 & 8.5 & 0.90 & 8.3 & 1.9 & 0.2 & 0.89 & 8.4 & 1.1 & 0.1 & 0.89 & 8.7 & 1.5 & 0.2 & 0.89 & 8.5 & 0.3 & 0.3 & 0.89 \\
 & 2.0 & 2.9 & 7.1 & 0.8 & 8.5 & 0.90 & 8.4 & 1.8 & 0.2 & 0.89 & 8.4 & 1.1 & 0.1 & 0.89 & 8.7 & 1.4 & 0.2 & 0.89 & 8.5 & 0.3 & 0.3 & 0.89 \\
 & 3.0 & 1.4 & 8.2 & 0.9 & 8.5 & 0.90 & 8.2 & 1.7 & 0.3 & 0.89 & 8.4 & 1.0 & 0.2 & 0.89 & 8.6 & 1.4 & 0.3 & 0.89 & 8.5 & 0.3 & 0.3 & 0.89 \\
 & 4.0 & 1.3 & 8.4 & 0.8 & 8.5 & 0.89 & 7.7 & 1.7 & 1.0 & 0.89 & 8.2 & 0.9 & 0.3 & 0.88 & 8.6 & 1.2 & 0.3 & 0.89 & 8.4 & 0.3 & 0.4 & 0.89 \\
 & 5.0 & 1.2 & 8.4 & 0.9 & 8.6 & 0.90 & 6.3 & 1.5 & 2.8 & 0.88 & 7.8 & 1.0 & 1.0 & 0.88 & 8.2 & 1.4 & 0.8 & 0.89 & 8.5 & 0.3 & 0.4 & 0.89 \\
\bottomrule
\end{tabular}}
\end{table}
\begin{table}
\caption{Model Qwen2.5-14b, flipping from dogs to cats}
\label{tab:flip_qwen2.5-14b_noclip_dogs_to_cats}
\centering
    \setlength{\tabcolsep}{3.0pt}
    \resizebox{\linewidth}{!}{
    \begin{tabular}{lc|cc|ccc|cccc|cccc|cccc|cccc}
\toprule
 &  & \multicolumn{2}{c}{dogs} & \multicolumn{3}{c}{cats} & \multicolumn{4}{c}{wolves} & \multicolumn{4}{c}{pigs} & \multicolumn{4}{c}{cows} & \multicolumn{4}{c}{legislators} \\
 \midrule
 method & strength & src-cs $\downarrow$ & tgt-cs $\uparrow$ & src-cs $\downarrow$ & tgt-cs $\uparrow$ & bertp $\uparrow$ & cs $\uparrow$ & src-cs $\downarrow$ & tgt-cs $\downarrow$ & bertp $\uparrow$ & cs $\uparrow$ & src-cs $\downarrow$ & tgt-cs $\downarrow$ & bertp $\uparrow$ & cs $\uparrow$ & src-cs $\downarrow$ & tgt-cs $\downarrow$ & bertp $\uparrow$ & cs $\uparrow$ & src-cs $\downarrow$ & tgt-cs $\downarrow$ & bertp $\uparrow$ \\
\midrule
\multirow{5}{*}{Steering} & 1.0 & 8.7 & 0.9 & 2.7 & 8.2 & 0.89 & 8.4 & 7.7 & 1.3 & 0.89 & 8.5 & 1.4 & 0.4 & 0.89 & 8.4 & 0.8 & 0.3 & 0.90 & 8.5 & 0.3 & 0.2 & 0.89 \\
 & 2.0 & 7.8 & 4.5 & 7.3 & 4.9 & 0.86 & 8.4 & 7.7 & 1.4 & 0.89 & 8.5 & 1.4 & 0.4 & 0.89 & 8.4 & 0.8 & 0.2 & 0.89 & 8.5 & 0.3 & 0.2 & 0.89 \\
 & 3.0 & 7.0 & 5.9 & 7.3 & 4.3 & 0.86 & 8.4 & 7.6 & 1.3 & 0.89 & 8.5 & 1.3 & 0.3 & 0.89 & 8.4 & 0.8 & 0.3 & 0.89 & 8.5 & 0.4 & 0.2 & 0.89 \\
 & 4.0 & 6.8 & 6.2 & 7.2 & 4.6 & 0.86 & 8.4 & 7.6 & 1.3 & 0.89 & 8.5 & 1.3 & 0.4 & 0.89 & 8.4 & 0.8 & 0.3 & 0.89 & 8.5 & 0.4 & 0.2 & 0.89 \\
 & 5.0 & 6.8 & 6.1 & 7.3 & 5.1 & 0.86 & 8.3 & 7.5 & 1.3 & 0.89 & 8.5 & 1.3 & 0.4 & 0.89 & 8.3 & 0.8 & 0.2 & 0.89 & 8.5 & 0.4 & 0.2 & 0.89 \\
\cline{1-23}
\multirow{5}{*}{LEACE} & 1.0 & 8.7 & 1.0 & 1.9 & 8.4 & 0.89 & 8.4 & 7.7 & 1.3 & 0.89 & 8.5 & 1.5 & 0.4 & 0.89 & 8.4 & 0.8 & 0.3 & 0.89 & 8.5 & 0.3 & 0.2 & 0.89 \\
 & 2.0 & 7.9 & 4.2 & 6.0 & 7.3 & 0.88 & 8.4 & 7.7 & 1.3 & 0.89 & 8.5 & 1.4 & 0.4 & 0.89 & 8.4 & 0.8 & 0.3 & 0.89 & 8.5 & 0.3 & 0.2 & 0.89 \\
 & 3.0 & 6.9 & 6.5 & 7.0 & 5.4 & 0.86 & 8.4 & 7.6 & 1.4 & 0.89 & 8.5 & 1.3 & 0.3 & 0.89 & 8.4 & 0.8 & 0.3 & 0.90 & 8.5 & 0.3 & 0.1 & 0.89 \\
 & 4.0 & 6.2 & 6.7 & 6.9 & 4.2 & 0.86 & 8.4 & 7.7 & 1.4 & 0.89 & 8.5 & 1.4 & 0.4 & 0.89 & 8.4 & 0.8 & 0.3 & 0.90 & 8.5 & 0.3 & 0.2 & 0.89 \\
 & 5.0 & 6.0 & 6.6 & 6.9 & 3.7 & 0.85 & 8.4 & 7.7 & 1.4 & 0.89 & 8.4 & 1.4 & 0.4 & 0.89 & 8.4 & 0.8 & 0.2 & 0.89 & 8.5 & 0.3 & 0.2 & 0.89 \\
\cline{1-23}
\multirow{5}{*}{MidSteer} & 1.0 & 8.7 & 1.2 & 1.8 & 8.5 & 0.89 & 8.4 & 7.7 & 1.4 & 0.89 & 8.5 & 1.5 & 0.4 & 0.89 & 8.5 & 0.8 & 0.3 & 0.89 & 8.5 & 0.3 & 0.1 & 0.89 \\
 & 2.0 & 7.1 & 6.4 & 1.6 & 8.5 & 0.89 & 8.4 & 7.7 & 1.5 & 0.89 & 8.5 & 1.4 & 0.4 & 0.89 & 8.4 & 0.8 & 0.3 & 0.89 & 8.5 & 0.4 & 0.2 & 0.89 \\
 & 3.0 & 4.8 & 7.8 & 1.6 & 8.6 & 0.89 & 8.4 & 7.6 & 1.8 & 0.89 & 8.4 & 1.5 & 0.5 & 0.89 & 8.4 & 0.7 & 0.3 & 0.89 & 8.5 & 0.3 & 0.2 & 0.89 \\
 & 4.0 & 3.3 & 8.2 & 1.6 & 8.5 & 0.89 & 8.3 & 7.5 & 2.3 & 0.89 & 8.4 & 1.6 & 0.8 & 0.89 & 8.4 & 0.9 & 0.5 & 0.89 & 8.5 & 0.4 & 0.2 & 0.89 \\
 & 5.0 & 2.1 & 8.3 & 1.6 & 8.6 & 0.89 & 8.1 & 7.0 & 2.9 & 0.89 & 8.3 & 1.8 & 1.2 & 0.89 & 8.3 & 0.9 & 0.7 & 0.89 & 8.5 & 0.4 & 0.3 & 0.89 \\
\bottomrule
\end{tabular}}
\end{table}
\begin{table}
\caption{Model Qwen2.5-14b, flipping from horses to motorcycles}
\label{tab:flip_qwen2.5-14b_noclip_horses_to_motorcycles}
\centering
    \setlength{\tabcolsep}{3.0pt}
    \resizebox{\linewidth}{!}{
    \begin{tabular}{lc|cc|ccc|cccc|cccc|cccc|cccc}
\toprule
 &  & \multicolumn{2}{c}{horses} & \multicolumn{3}{c}{motorcycles} & \multicolumn{4}{c}{cows} & \multicolumn{4}{c}{pigs} & \multicolumn{4}{c}{dogs} & \multicolumn{4}{c}{legislators} \\
 \midrule
 method & strength & src-cs $\downarrow$ & tgt-cs $\uparrow$ & src-cs $\downarrow$ & tgt-cs $\uparrow$ & bertp $\uparrow$ & cs $\uparrow$ & src-cs $\downarrow$ & tgt-cs $\downarrow$ & bertp $\uparrow$ & cs $\uparrow$ & src-cs $\downarrow$ & tgt-cs $\downarrow$ & bertp $\uparrow$ & cs $\uparrow$ & src-cs $\downarrow$ & tgt-cs $\downarrow$ & bertp $\uparrow$ & cs $\uparrow$ & src-cs $\downarrow$ & tgt-cs $\downarrow$ & bertp $\uparrow$ \\
\midrule
\multirow{5}{*}{Steering} & 1.0 & 8.7 & 0.7 & 1.2 & 8.6 & 0.90 & 8.4 & 1.7 & 0.2 & 0.89 & 8.4 & 1.1 & 0.1 & 0.89 & 8.7 & 1.4 & 0.2 & 0.90 & 8.5 & 0.3 & 0.3 & 0.89 \\
 & 2.0 & 6.4 & 4.4 & 4.9 & 6.0 & 0.87 & 8.3 & 1.5 & 0.3 & 0.89 & 8.4 & 1.0 & 0.1 & 0.89 & 8.6 & 1.4 & 0.2 & 0.89 & 8.4 & 0.3 & 0.3 & 0.89 \\
 & 3.0 & 5.9 & 5.0 & 6.9 & 2.9 & 0.84 & 7.9 & 1.4 & 0.5 & 0.88 & 8.4 & 0.9 & 0.2 & 0.88 & 8.6 & 1.3 & 0.2 & 0.89 & 8.5 & 0.3 & 0.3 & 0.89 \\
 & 4.0 & 6.0 & 4.4 & 7.3 & 1.8 & 0.84 & 7.6 & 1.4 & 0.6 & 0.87 & 8.1 & 0.9 & 0.3 & 0.87 & 8.5 & 1.4 & 0.3 & 0.88 & 8.5 & 0.3 & 0.3 & 0.89 \\
 & 5.0 & 5.5 & 4.3 & 7.4 & 1.6 & 0.83 & 7.5 & 1.3 & 0.5 & 0.86 & 7.9 & 0.9 & 0.3 & 0.86 & 8.3 & 1.4 & 0.3 & 0.88 & 8.5 & 0.3 & 0.3 & 0.89 \\
\cline{1-23}
\multirow{5}{*}{LEACE} & 1.0 & 8.6 & 0.7 & 1.1 & 8.6 & 0.90 & 8.4 & 1.8 & 0.2 & 0.90 & 8.5 & 1.1 & 0.1 & 0.89 & 8.7 & 1.4 & 0.2 & 0.90 & 8.5 & 0.4 & 0.3 & 0.89 \\
 & 2.0 & 6.4 & 4.6 & 3.8 & 7.5 & 0.88 & 8.4 & 1.6 & 0.2 & 0.89 & 8.4 & 1.0 & 0.1 & 0.89 & 8.7 & 1.3 & 0.2 & 0.90 & 8.5 & 0.4 & 0.4 & 0.89 \\
 & 3.0 & 4.2 & 6.0 & 6.1 & 4.6 & 0.85 & 8.2 & 1.5 & 0.4 & 0.89 & 8.4 & 1.0 & 0.1 & 0.89 & 8.7 & 1.4 & 0.2 & 0.89 & 8.5 & 0.3 & 0.3 & 0.89 \\
 & 4.0 & 4.2 & 6.1 & 6.6 & 3.5 & 0.84 & 7.9 & 1.5 & 0.6 & 0.89 & 8.3 & 1.0 & 0.2 & 0.88 & 8.7 & 1.4 & 0.3 & 0.89 & 8.5 & 0.4 & 0.4 & 0.89 \\
 & 5.0 & 4.1 & 5.9 & 6.8 & 3.1 & 0.83 & 7.7 & 1.4 & 0.7 & 0.88 & 8.3 & 1.0 & 0.2 & 0.88 & 8.6 & 1.3 & 0.4 & 0.89 & 8.5 & 0.4 & 0.3 & 0.89 \\
\cline{1-23}
\multirow{5}{*}{MidSteer} & 1.0 & 8.7 & 0.7 & 0.9 & 8.6 & 0.90 & 8.4 & 1.8 & 0.1 & 0.89 & 8.5 & 1.1 & 0.1 & 0.89 & 8.7 & 1.4 & 0.2 & 0.90 & 8.5 & 0.4 & 0.3 & 0.89 \\
 & 2.0 & 3.9 & 6.5 & 0.9 & 8.6 & 0.90 & 8.3 & 1.6 & 0.2 & 0.89 & 8.4 & 1.1 & 0.1 & 0.89 & 8.7 & 1.3 & 0.2 & 0.90 & 8.5 & 0.3 & 0.3 & 0.89 \\
 & 3.0 & 1.4 & 8.2 & 0.9 & 8.6 & 0.90 & 8.3 & 1.5 & 0.2 & 0.89 & 8.4 & 1.0 & 0.1 & 0.89 & 8.7 & 1.4 & 0.3 & 0.89 & 8.5 & 0.3 & 0.3 & 0.89 \\
 & 4.0 & 1.2 & 8.2 & 0.9 & 8.6 & 0.90 & 8.3 & 1.5 & 0.2 & 0.89 & 8.4 & 1.0 & 0.2 & 0.89 & 8.7 & 1.3 & 0.3 & 0.89 & 8.4 & 0.4 & 0.4 & 0.89 \\
 & 5.0 & 1.2 & 8.2 & 0.8 & 8.6 & 0.90 & 8.1 & 1.6 & 0.7 & 0.89 & 8.3 & 0.9 & 0.4 & 0.89 & 8.6 & 1.3 & 0.3 & 0.89 & 8.5 & 0.3 & 0.3 & 0.89 \\
\bottomrule
\end{tabular}}
\end{table}

\clearpage
\subsection{Detailed results for for Diffusion Models concept switching}
\label{sec:tables_diffusion_switch}
In this section, in tab.~\ref{tab:flip_sana_noclip_horse_to_motorcycle},\ref{tab:flip_sana_noclip_chihuahua_to_muffin},\ref{tab:flip_sdxl_noclip_horse_to_motorcycle},\ref{tab:flip_sdxl_noclip_chihuahua_to_muffin} we provide detailed breakdown of scores for all $\beta$ values and all concepts $c_s, c_t, c_i$. Pareto plots in sec.~\ref{sec:pareto_llm_switch} were created based on the scores provided in these tables.

\begin{table}
\caption{Model SDXL, flipping from horse to motorcycle}
\label{tab:flip_sdxl_noclip_horse_to_motorcycle}
\centering
    \setlength{\tabcolsep}{3.0pt}
    \resizebox{\linewidth}{!}{
    \begin{tabular}{lc|cc|ccc|cccc|cccc|cccc|cccc}
\toprule
 &  & \multicolumn{2}{c}{horse} & \multicolumn{3}{c}{motorcycle} & \multicolumn{4}{c}{cow} & \multicolumn{4}{c}{pig} & \multicolumn{4}{c}{dog} & \multicolumn{4}{c}{legislator} \\
 \midrule
method & strength & src-cs $\downarrow$ & tgt-cs $\uparrow$ & src-cs $\downarrow$ & tgt-cs $\uparrow$ & fid $\downarrow$ & cs $\uparrow$ & src-cs $\downarrow$ & tgt-cs $\downarrow$ & fid $\downarrow$ & cs $\uparrow$ & src-cs $\downarrow$ & tgt-cs $\downarrow$ & fid $\downarrow$ & cs $\uparrow$ & src-cs $\downarrow$ & tgt-cs $\downarrow$ & fid $\downarrow$ & cs $\uparrow$ & src-cs $\downarrow$ & tgt-cs $\downarrow$ & fid $\downarrow$ \\
\midrule
No Steering & - & 71.0 & 49.1 & 51.8 & 70.7 & - & 72.7 & 54.6 & 41.5 & - & 71.8 & 49.5 & 43.6 & - & 66.3 & 52.4 & 44.9 & - & 60.8 & 44.8 & 42.4 & - \\
\cline{1-23}
\multirow{7}{*}{CASteer} & 1.0 & 70.0 & 50.8 & 52.6 & 71.3 & 27.8 & 72.3 & 54.4 & 42.4 & 21.9 & 71.8 & 49.0 & 43.8 & 13.3 & 66.2 & 52.0 & 45.0 & 20.0 & 60.9 & 44.8 & 42.5 & 16.3 \\
 & 1.5 & 53.4 & 68.3 & 60.1 & 63.7 & 121.3 & 72.0 & 54.4 & 43.0 & 28.7 & 71.9 & 48.9 & 44.1 & 16.2 & 66.1 & 51.9 & 45.1 & 24.7 & 60.9 & 44.8 & 42.5 & 21.0 \\
 & 2.0 & 52.1 & 69.5 & 68.3 & 52.9 & 212.4 & 70.9 & 54.4 & 44.9 & 42.7 & 71.9 & 48.9 & 44.4 & 18.9 & 66.1 & 51.8 & 45.4 & 28.6 & 60.9 & 44.8 & 42.6 & 24.6 \\
 & 2.5 & 51.7 & 69.4 & 69.4 & 51.7 & 213.0 & 62.5 & 54.2 & 55.8 & 105.2 & 72.0 & 48.7 & 44.6 & 22.0 & 66.1 & 51.7 & 45.6 & 33.0 & 60.9 & 44.9 & 42.7 & 27.0 \\
 & 3.0 & 51.4 & 69.1 & 69.9 & 50.9 & 210.0 & 52.4 & 53.2 & 66.2 & 186.9 & 72.0 & 48.5 & 44.9 & 25.8 & 66.0 & 51.6 & 45.7 & 37.2 & 60.9 & 44.9 & 42.7 & 29.7 \\
 & 4.0 & 51.0 & 68.7 & 70.6 & 49.9 & 207.9 & 48.6 & 52.3 & 69.5 & 222.6 & 72.0 & 48.5 & 46.5 & 37.2 & 65.7 & 51.4 & 46.7 & 46.5 & 60.9 & 44.9 & 42.8 & 35.0 \\
 & 5.0 & 50.7 & 68.5 & 70.9 & 49.5 & 207.4 & 47.8 & 51.8 & 69.6 & 231.6 & 70.5 & 49.1 & 50.6 & 63.7 & 64.3 & 51.4 & 49.0 & 61.6 & 61.0 & 45.1 & 43.0 & 39.0 \\
\cline{1-23}
\multirow{7}{*}{LEACE} & 1.0 & 65.0 & 56.5 & 52.7 & 71.2 & 28.6 & 72.5 & 54.4 & 42.0 & 17.0 & 71.7 & 49.3 & 43.6 & 8.9 & 66.2 & 52.3 & 44.8 & 14.1 & 60.7 & 44.8 & 42.5 & 21.1 \\
 & 1.5 & 52.1 & 68.6 & 57.1 & 67.0 & 84.6 & 72.2 & 54.4 & 42.3 & 21.3 & 71.7 & 49.3 & 43.7 & 10.9 & 66.2 & 52.3 & 44.8 & 17.7 & 60.7 & 44.8 & 42.5 & 24.9 \\
 & 2.0 & 51.2 & 68.8 & 67.6 & 53.3 & 207.6 & 72.2 & 54.5 & 42.7 & 25.2 & 71.7 & 49.3 & 43.7 & 12.6 & 66.1 & 52.3 & 44.8 & 20.8 & 60.6 & 44.8 & 42.4 & 28.2 \\
 & 2.5 & 50.8 & 68.5 & 69.0 & 51.5 & 213.3 & 71.9 & 54.5 & 43.1 & 30.0 & 71.7 & 49.4 & 43.7 & 14.0 & 66.1 & 52.3 & 44.8 & 22.6 & 60.6 & 44.9 & 42.5 & 31.0 \\
 & 3.0 & 50.5 & 68.2 & 69.6 & 50.5 & 210.6 & 71.4 & 54.4 & 43.8 & 37.0 & 71.7 & 49.4 & 43.8 & 15.1 & 66.0 & 52.2 & 44.8 & 24.5 & 60.6 & 44.8 & 42.5 & 33.2 \\
 & 4.0 & 50.2 & 68.0 & 70.4 & 49.6 & 206.7 & 64.6 & 54.2 & 52.6 & 85.7 & 71.8 & 49.4 & 43.9 & 17.2 & 66.0 & 52.2 & 44.7 & 28.1 & 60.5 & 44.8 & 42.6 & 37.3 \\
 & 5.0 & 49.9 & 67.7 & 70.8 & 49.1 & 204.4 & 54.4 & 53.2 & 63.6 & 166.2 & 71.8 & 49.4 & 44.0 & 19.4 & 65.9 & 52.2 & 44.7 & 31.2 & 60.4 & 44.8 & 42.7 & 42.0 \\
\cline{1-23}
\multirow{7}{*}{MidSteer} & 1.0 & 51.2 & 68.7 & 51.9 & 70.7 & 12.7 & 72.2 & 54.5 & 42.5 & 23.9 & 71.8 & 49.2 & 43.9 & 12.4 & 66.1 & 52.3 & 44.8 & 20.7 & 60.7 & 45.0 & 42.3 & 27.2 \\
 & 1.5 & 50.4 & 68.1 & 51.9 & 70.7 & 15.1 & 71.6 & 54.6 & 43.5 & 34.5 & 71.9 & 49.1 & 44.1 & 15.2 & 66.1 & 52.2 & 44.8 & 25.3 & 60.5 & 45.1 & 42.3 & 32.3 \\
 & 2.0 & 50.0 & 67.8 & 52.0 & 70.8 & 17.2 & 65.9 & 54.3 & 50.9 & 77.3 & 71.9 & 49.0 & 44.3 & 17.6 & 66.1 & 52.1 & 44.8 & 28.8 & 60.6 & 45.2 & 42.3 & 35.9 \\
 & 2.5 & 49.5 & 67.4 & 52.0 & 70.7 & 18.6 & 55.1 & 53.3 & 62.6 & 162.3 & 71.9 & 49.0 & 44.5 & 20.2 & 66.0 & 52.1 & 44.8 & 32.2 & 60.5 & 45.5 & 42.4 & 39.8 \\
 & 3.0 & 49.2 & 67.2 & 52.0 & 70.6 & 20.0 & 51.0 & 52.7 & 66.7 & 199.3 & 72.0 & 49.0 & 44.8 & 23.5 & 66.0 & 52.0 & 44.8 & 35.3 & 60.2 & 45.5 & 42.4 & 43.1 \\
 & 4.0 & 48.8 & 67.2 & 52.0 & 70.5 & 22.9 & 48.4 & 51.9 & 68.7 & 222.5 & 72.1 & 49.0 & 45.5 & 29.5 & 65.8 & 51.9 & 44.9 & 41.4 & 59.7 & 46.0 & 42.7 & 49.9 \\
 & 5.0 & 48.3 & 66.9 & 52.0 & 70.4 & 28.0 & 47.9 & 51.5 & 68.7 & 229.8 & 71.9 & 49.2 & 46.8 & 40.0 & 65.5 & 51.9 & 45.2 & 48.7 & 59.1 & 46.7 & 43.2 & 58.2 \\
\bottomrule
\end{tabular}}
\end{table}

\begin{table}
\caption{Model SDXL, flipping from chihuahua to muffin}
\label{tab:flip_sdxl_noclip_chihuahua_to_muffin}
\centering
    \setlength{\tabcolsep}{3.0pt}
    \resizebox{\linewidth}{!}{
    \begin{tabular}{lc|cc|ccc|cccc|cccc|cccc|cccc}
\toprule
 &  & \multicolumn{2}{c}{chihuahua} & \multicolumn{3}{c}{muffin} & \multicolumn{4}{c}{dog} & \multicolumn{4}{c}{wolf} & \multicolumn{4}{c}{cat} & \multicolumn{4}{c}{legislator} \\
 \midrule
method & strength & src-cs $\downarrow$ & tgt-cs $\uparrow$ & src-cs $\downarrow$ & tgt-cs $\uparrow$ & fid $\downarrow$ & cs $\uparrow$ & src-cs $\downarrow$ & tgt-cs $\downarrow$ & fid $\downarrow$ & cs $\uparrow$ & src-cs $\downarrow$ & tgt-cs $\downarrow$ & fid $\downarrow$ & cs $\uparrow$ & src-cs $\downarrow$ & tgt-cs $\downarrow$ & fid $\downarrow$ & cs $\uparrow$ & src-cs $\downarrow$ & tgt-cs $\downarrow$ & fid $\downarrow$ \\
\midrule
No Steering & - & 75.9 & 54.6 & 42.6 & 68.2 & - & 66.3 & 57.9 & 52.5 & - & 71.8 & 52.7 & 45.6 & - & 67.5 & 53.4 & 54.2 & - & 60.8 & 42.6 & 40.1 & - \\
\cline{1-23}
\multirow{7}{*}{CASteer} & 1.0 & 71.7 & 54.7 & 59.3 & 61.5 & 145.9 & 65.7 & 54.4 & 52.4 & 37.4 & 71.9 & 51.7 & 44.7 & 19.1 & 67.2 & 52.8 & 53.9 & 26.2 & 60.8 & 42.4 & 39.9 & 19.2 \\
 & 1.5 & 47.0 & 61.0 & 66.5 & 57.3 & 211.5 & 65.2 & 53.1 & 52.4 & 53.3 & 71.8 & 51.2 & 44.3 & 24.4 & 67.1 & 52.4 & 53.8 & 33.8 & 60.8 & 42.3 & 39.6 & 23.2 \\
 & 2.0 & 43.7 & 63.2 & 69.5 & 56.7 & 226.4 & 63.2 & 51.7 & 53.0 & 79.7 & 71.8 & 51.0 & 43.9 & 30.4 & 66.8 & 52.0 & 53.6 & 41.7 & 60.8 & 42.3 & 39.6 & 26.7 \\
 & 2.5 & 42.3 & 63.8 & 71.5 & 56.3 & 241.1 & 55.2 & 48.9 & 55.5 & 140.5 & 71.5 & 50.6 & 43.6 & 36.8 & 66.3 & 51.7 & 53.6 & 53.8 & 60.9 & 42.3 & 39.6 & 29.8 \\
 & 3.0 & 41.4 & 64.0 & 72.9 & 56.3 & 253.8 & 48.1 & 45.8 & 58.2 & 192.4 & 70.8 & 50.3 & 43.6 & 45.7 & 63.6 & 50.7 & 54.3 & 82.4 & 60.9 & 42.4 & 39.5 & 32.7 \\
 & 4.0 & 40.1 & 64.3 & 74.8 & 56.2 & 276.0 & 44.2 & 43.5 & 60.3 & 226.9 & 65.6 & 49.0 & 45.6 & 96.6 & 52.5 & 47.2 & 57.0 & 168.7 & 60.9 & 42.4 & 39.4 & 38.0 \\
 & 5.0 & 39.3 & 64.1 & 75.6 & 56.1 & 291.6 & 42.9 & 42.5 & 60.8 & 243.3 & 53.5 & 47.0 & 51.5 & 206.1 & 46.9 & 45.2 & 58.0 & 213.0 & 60.8 & 42.5 & 39.3 & 43.9 \\
\cline{1-23}
\multirow{7}{*}{LEACE} & 1.0 & 67.1 & 55.7 & 60.4 & 60.6 & 160.6 & 66.0 & 55.8 & 52.4 & 25.4 & 72.0 & 52.1 & 45.3 & 11.9 & 67.5 & 53.2 & 54.1 & 17.5 & 60.8 & 42.5 & 40.2 & 20.0 \\
 & 1.5 & 46.2 & 61.5 & 66.4 & 57.3 & 212.1 & 65.9 & 54.9 & 52.3 & 33.1 & 72.1 & 52.0 & 45.1 & 14.7 & 67.5 & 53.1 & 54.0 & 21.1 & 60.9 & 42.4 & 40.3 & 24.6 \\
 & 2.0 & 42.8 & 63.3 & 69.3 & 56.9 & 227.7 & 65.8 & 54.2 & 52.3 & 40.3 & 72.1 & 51.8 & 44.9 & 16.9 & 67.6 & 53.0 & 53.9 & 24.6 & 60.9 & 42.3 & 40.3 & 28.1 \\
 & 2.5 & 41.4 & 63.6 & 71.0 & 56.7 & 240.7 & 65.5 & 53.4 & 52.2 & 48.6 & 72.1 & 51.5 & 44.7 & 19.0 & 67.6 & 52.9 & 53.9 & 27.8 & 61.0 & 42.3 & 40.4 & 31.3 \\
 & 3.0 & 40.6 & 63.9 & 72.3 & 56.7 & 250.8 & 65.1 & 52.5 & 52.3 & 57.1 & 72.2 & 51.4 & 44.6 & 21.5 & 67.6 & 52.7 & 53.8 & 30.4 & 61.0 & 42.5 & 40.5 & 34.0 \\
 & 4.0 & 39.3 & 63.8 & 73.7 & 56.6 & 269.4 & 63.8 & 51.3 & 52.6 & 76.7 & 72.2 & 51.0 & 44.2 & 25.7 & 67.5 & 52.6 & 53.8 & 35.4 & 61.0 & 42.7 & 40.6 & 39.4 \\
 & 5.0 & 38.5 & 63.7 & 74.6 & 56.5 & 282.5 & 60.3 & 49.7 & 53.7 & 104.1 & 72.2 & 50.7 & 44.0 & 29.8 & 67.4 & 52.4 & 53.7 & 41.1 & 61.0 & 43.0 & 41.1 & 43.4 \\
\cline{1-23}
\multirow{7}{*}{MidSteer} & 1.0 & 44.3 & 62.5 & 42.0 & 68.3 & 23.9 & 65.9 & 54.7 & 52.4 & 35.1 & 72.2 & 51.7 & 44.9 & 16.9 & 67.6 & 53.0 & 54.1 & 22.7 & 60.8 & 42.5 & 40.1 & 23.9 \\
 & 1.5 & 41.5 & 63.9 & 41.9 & 68.2 & 28.3 & 65.5 & 53.5 & 52.4 & 48.8 & 72.2 & 51.2 & 44.6 & 21.2 & 67.7 & 52.8 & 54.1 & 27.7 & 60.8 & 42.4 & 40.2 & 29.0 \\
 & 2.0 & 39.8 & 64.0 & 41.7 & 68.2 & 32.2 & 64.5 & 52.2 & 52.7 & 65.6 & 72.2 & 50.8 & 44.1 & 25.4 & 67.7 & 52.6 & 54.1 & 31.6 & 60.9 & 42.5 & 40.4 & 32.5 \\
 & 2.5 & 38.9 & 64.3 & 41.6 & 68.1 & 35.7 & 63.2 & 51.3 & 53.1 & 83.0 & 72.2 & 50.6 & 44.0 & 29.0 & 67.6 & 52.4 & 54.0 & 35.7 & 61.0 & 42.4 & 40.5 & 36.3 \\
 & 3.0 & 38.0 & 64.2 & 41.5 & 68.0 & 39.5 & 59.4 & 49.7 & 54.4 & 109.8 & 72.2 & 50.3 & 43.8 & 33.4 & 67.5 & 52.2 & 54.0 & 40.1 & 60.9 & 42.4 & 40.7 & 39.4 \\
 & 4.0 & 37.5 & 64.0 & 41.3 & 67.9 & 48.2 & 48.3 & 44.8 & 58.7 & 178.3 & 71.6 & 50.3 & 44.2 & 43.9 & 67.0 & 51.8 & 54.3 & 50.7 & 60.6 & 42.4 & 41.1 & 45.1 \\
 & 5.0 & 37.9 & 63.6 & 41.3 & 67.5 & 58.9 & 43.4 & 42.2 & 61.1 & 209.8 & 70.2 & 50.4 & 44.8 & 63.1 & 64.5 & 50.7 & 54.5 & 69.6 & 60.4 & 42.5 & 41.5 & 51.4 \\
\bottomrule
\end{tabular}}
\end{table}

\begin{table}
\caption{Model SDXL, flipping from snoopy to mickey}
\label{tab:flip_sdxl_noclip_snoopy_to_mickey}
\centering
    \setlength{\tabcolsep}{3.0pt}
    \resizebox{\linewidth}{!}{
    \begin{tabular}{lc|cc|ccc|cccc|cccc|cccc|cccc}
\toprule
 &  & \multicolumn{2}{c}{snoopy} & \multicolumn{3}{c}{mickey} & \multicolumn{4}{c}{pikachu} & \multicolumn{4}{c}{spongebob} & \multicolumn{4}{c}{dog} & \multicolumn{4}{c}{legislator} \\
 \midrule
method & strength & src-cs $\downarrow$ & tgt-cs $\uparrow$ & src-cs $\downarrow$ & tgt-cs $\uparrow$ & fid $\downarrow$ & cs $\uparrow$ & src-cs $\downarrow$ & tgt-cs $\downarrow$ & fid $\downarrow$ & cs $\uparrow$ & src-cs $\downarrow$ & tgt-cs $\downarrow$ & fid $\downarrow$ & cs $\uparrow$ & src-cs $\downarrow$ & tgt-cs $\downarrow$ & fid $\downarrow$ & cs $\uparrow$ & src-cs $\downarrow$ & tgt-cs $\downarrow$ & fid $\downarrow$ \\
\midrule
No Steering & - & 74.3 & 58.7 & 56.0 & 73.1 & - & 72.6 & 41.3 & 51.2 & - & 75.1 & 49.0 & 52.5 & - & 66.3 & 56.1 & 52.0 & - & 60.8 & 41.6 & 45.0 & - \\
\cline{1-23}
\multirow{7}{*}{CASteer} & 1.0 & 65.6 & 68.4 & 58.9 & 70.8 & 47.5 & 72.7 & 41.4 & 51.2 & 13.8 & 75.1 & 48.9 & 52.5 & 25.0 & 66.4 & 55.0 & 52.1 & 25.4 & 60.8 & 41.6 & 45.0 & 14.9 \\
 & 1.5 & 59.8 & 71.2 & 61.7 & 68.6 & 59.6 & 72.7 & 41.4 & 51.2 & 17.1 & 75.1 & 48.9 & 52.6 & 29.7 & 66.3 & 54.4 & 52.1 & 33.9 & 60.7 & 41.6 & 45.0 & 18.4 \\
 & 2.0 & 57.0 & 72.4 & 66.8 & 65.3 & 72.3 & 72.7 & 41.5 & 51.1 & 19.4 & 75.1 & 48.9 & 52.6 & 33.5 & 66.4 & 53.9 & 52.2 & 40.6 & 60.8 & 41.5 & 45.0 & 21.1 \\
 & 2.5 & 55.4 & 72.7 & 69.6 & 61.3 & 83.9 & 72.7 & 41.6 & 51.2 & 21.2 & 75.0 & 48.8 & 52.5 & 36.5 & 66.4 & 53.3 & 52.2 & 48.5 & 60.9 & 41.6 & 45.0 & 23.3 \\
 & 3.0 & 54.5 & 72.7 & 70.3 & 59.4 & 91.8 & 72.7 & 41.6 & 51.2 & 22.9 & 75.1 & 48.9 & 52.6 & 39.1 & 66.4 & 52.8 & 52.3 & 54.8 & 60.8 & 41.6 & 45.0 & 25.6 \\
 & 4.0 & 53.0 & 71.9 & 70.8 & 57.4 & 106.3 & 72.7 & 41.7 & 51.2 & 25.9 & 75.2 & 48.7 & 52.6 & 43.3 & 66.4 & 52.1 & 52.6 & 66.5 & 60.7 & 41.6 & 45.0 & 28.9 \\
 & 5.0 & 52.0 & 71.3 & 71.2 & 56.4 & 117.5 & 72.7 & 41.7 & 51.2 & 28.9 & 75.0 & 48.7 & 52.7 & 47.7 & 66.3 & 51.3 & 52.8 & 79.9 & 60.7 & 41.5 & 44.9 & 31.5 \\
\cline{1-23}
\multirow{7}{*}{LEACE} & 1.0 & 65.9 & 67.9 & 58.8 & 70.6 & 50.0 & 72.8 & 41.4 & 51.1 & 18.5 & 75.0 & 48.9 & 52.5 & 33.5 & 66.3 & 55.8 & 52.0 & 18.6 & 60.9 & 41.7 & 45.1 & 22.2 \\
 & 1.5 & 60.1 & 71.0 & 61.0 & 68.1 & 62.6 & 72.8 & 41.5 & 51.1 & 22.0 & 75.0 & 48.8 & 52.5 & 39.0 & 66.3 & 55.8 & 52.1 & 23.9 & 60.9 & 41.7 & 45.0 & 26.4 \\
 & 2.0 & 57.5 & 72.2 & 64.3 & 64.6 & 77.0 & 72.9 & 41.5 & 51.1 & 24.5 & 75.1 & 48.9 & 52.5 & 43.6 & 66.3 & 55.6 & 52.1 & 28.4 & 60.9 & 41.9 & 45.2 & 29.7 \\
 & 2.5 & 56.0 & 72.7 & 67.3 & 60.7 & 89.9 & 72.9 & 41.6 & 51.1 & 26.7 & 75.2 & 48.9 & 52.5 & 47.7 & 66.3 & 55.5 & 52.2 & 32.0 & 61.0 & 41.9 & 45.1 & 31.9 \\
 & 3.0 & 55.1 & 72.7 & 68.2 & 58.2 & 101.5 & 73.0 & 41.7 & 51.2 & 29.1 & 75.2 & 48.8 & 52.5 & 51.3 & 66.4 & 55.4 & 52.3 & 35.7 & 61.0 & 41.9 & 45.1 & 34.8 \\
 & 4.0 & 53.7 & 72.3 & 68.7 & 55.9 & 118.6 & 73.2 & 41.9 & 51.2 & 33.4 & 75.0 & 49.0 & 52.6 & 57.7 & 66.5 & 55.3 & 52.5 & 43.4 & 60.9 & 42.1 & 45.2 & 38.9 \\
 & 5.0 & 53.0 & 72.0 & 69.5 & 55.2 & 134.6 & 73.4 & 42.4 & 51.4 & 37.6 & 74.8 & 49.1 & 52.7 & 64.0 & 66.5 & 55.2 & 52.7 & 50.0 & 60.9 & 42.3 & 45.3 & 42.2 \\
\cline{1-23}
\multirow{7}{*}{MidSteer} & 1.0 & 59.7 & 71.0 & 55.5 & 72.8 & 30.1 & 72.8 & 41.3 & 51.3 & 19.0 & 74.7 & 48.9 & 52.6 & 36.5 & 66.2 & 55.9 & 52.1 & 20.4 & 60.9 & 41.7 & 45.0 & 24.5 \\
 & 1.5 & 56.0 & 72.5 & 55.3 & 72.7 & 34.8 & 72.9 & 41.2 & 51.2 & 22.3 & 74.6 & 48.6 & 52.6 & 42.7 & 66.2 & 55.8 & 52.1 & 25.9 & 60.8 & 41.8 & 45.0 & 29.1 \\
 & 2.0 & 54.5 & 72.7 & 55.0 & 72.4 & 38.9 & 72.9 & 41.3 & 51.3 & 25.3 & 74.5 & 48.6 & 52.6 & 46.8 & 66.3 & 55.6 & 52.1 & 30.6 & 60.7 & 42.0 & 45.1 & 32.6 \\
 & 2.5 & 53.4 & 72.4 & 54.8 & 72.4 & 42.2 & 73.0 & 41.2 & 51.3 & 27.3 & 74.3 & 48.4 & 52.6 & 50.8 & 66.3 & 55.6 & 52.3 & 34.4 & 60.7 & 42.0 & 45.1 & 35.2 \\
 & 3.0 & 52.5 & 71.8 & 54.7 & 72.2 & 44.7 & 73.1 & 41.3 & 51.3 & 30.0 & 74.2 & 48.5 & 52.7 & 55.2 & 66.4 & 55.5 & 52.5 & 38.3 & 60.7 & 42.1 & 45.3 & 37.6 \\
 & 4.0 & 51.5 & 71.2 & 54.6 & 71.9 & 49.6 & 73.2 & 41.4 & 51.4 & 34.5 & 73.4 & 48.4 & 53.0 & 64.7 & 66.4 & 55.3 & 52.7 & 47.2 & 60.4 & 42.3 & 45.4 & 43.3 \\
 & 5.0 & 51.0 & 70.4 & 54.4 & 71.5 & 55.1 & 73.3 & 41.3 & 51.4 & 39.1 & 72.7 & 48.6 & 53.4 & 72.9 & 66.5 & 55.1 & 52.9 & 55.3 & 60.3 & 42.5 & 45.6 & 48.8 \\
\bottomrule
\end{tabular}}
\end{table}

\begin{table}
\caption{Model SANA, flipping from horse to motorcycle}
\label{tab:flip_sana_noclip_horse_to_motorcycle}
\centering
    \setlength{\tabcolsep}{3.0pt}
    \resizebox{\linewidth}{!}{
    \begin{tabular}{lc|cc|ccc|cccc|cccc|cccc|cccc}
\toprule
 &  & \multicolumn{2}{c}{horse} & \multicolumn{3}{c}{motorcycle} & \multicolumn{4}{c}{cow} & \multicolumn{4}{c}{pig} & \multicolumn{4}{c}{dog} & \multicolumn{4}{c}{legislator} \\
 \midrule
method & strength & src-cs $\downarrow$ & tgt-cs $\uparrow$ & src-cs $\downarrow$ & tgt-cs $\uparrow$ & fid $\downarrow$ & cs $\uparrow$ & src-cs $\downarrow$ & tgt-cs $\downarrow$ & fid $\downarrow$ & cs $\uparrow$ & src-cs $\downarrow$ & tgt-cs $\downarrow$ & fid $\downarrow$ & cs $\uparrow$ & src-cs $\downarrow$ & tgt-cs $\downarrow$ & fid $\downarrow$ & cs $\uparrow$ & src-cs $\downarrow$ & tgt-cs $\downarrow$ & fid $\downarrow$ \\
\midrule
No Steering & - & 72.1 & 50.6 & 50.9 & 70.5 & - & 73.8 & 55.3 & 42.5 & - & 73.5 & 48.3 & 44.5 & - & 68.1 & 51.7 & 46.1 & - & 60.4 & 45.8 & 43.9 & - \\
\cline{1-23}
\multirow{7}{*}{CASteer} & 1.0 & 71.0 & 52.0 & 52.2 & 71.3 & 48.1 & 73.6 & 54.8 & 43.0 & 22.1 & 73.8 & 48.0 & 44.8 & 13.8 & 68.1 & 51.6 & 46.2 & 15.7 & 60.2 & 46.0 & 44.3 & 15.9 \\
 & 2.0 & 65.5 & 60.4 & 67.5 & 56.0 & 229.0 & 73.1 & 54.2 & 43.7 & 37.9 & 74.0 & 47.7 & 45.0 & 22.6 & 68.1 & 51.4 & 46.3 & 24.1 & 60.0 & 46.0 & 44.5 & 21.5 \\
 & 3.0 & 54.9 & 68.7 & 69.5 & 51.8 & 234.6 & 72.3 & 54.0 & 45.0 & 55.8 & 74.2 & 47.4 & 45.3 & 30.5 & 68.1 & 51.3 & 46.5 & 32.2 & 59.7 & 46.3 & 44.9 & 25.9 \\
 & 4.0 & 52.6 & 70.3 & 70.2 & 50.7 & 228.7 & 69.3 & 55.1 & 53.0 & 107.1 & 74.5 & 47.4 & 45.9 & 39.9 & 68.2 & 51.2 & 46.7 & 40.4 & 59.5 & 46.5 & 45.3 & 31.1 \\
 & 5.0 & 51.8 & 71.0 & 70.5 & 50.3 & 224.9 & 62.3 & 54.8 & 61.7 & 167.0 & 74.7 & 47.5 & 46.9 & 48.9 & 68.2 & 51.2 & 47.0 & 51.3 & 59.3 & 46.7 & 45.8 & 36.8 \\
\cline{1-23}
\multirow{7}{*}{LEACE} & 1.0 & 68.7 & 56.2 & 51.6 & 71.6 & 36.3 & 73.7 & 55.2 & 42.6 & 10.2 & 73.6 & 48.2 & 44.7 & 8.0 & 68.1 & 51.8 & 46.0 & 8.1 & 60.4 & 45.7 & 43.9 & 11.6 \\
 & 2.0 & 53.1 & 70.7 & 58.7 & 69.4 & 125.0 & 73.7 & 55.2 & 42.8 & 16.2 & 73.7 & 48.1 & 44.7 & 12.6 & 68.1 & 51.8 & 46.0 & 13.1 & 60.5 & 45.7 & 43.8 & 17.0 \\
 & 3.0 & 51.9 & 71.3 & 65.7 & 58.2 & 229.3 & 73.7 & 55.1 & 43.1 & 21.0 & 69.8 & 57.6 & 52.1 & 220.6 & 68.0 & 51.8 & 46.0 & 17.7 & 60.7 & 45.8 & 43.8 & 21.6 \\
 & 4.0 & 51.9 & 71.1 & 67.4 & 53.7 & 242.8 & 73.7 & 55.0 & 43.3 & 25.9 & 73.9 & 48.0 & 44.9 & 20.4 & 68.0 & 51.8 & 46.0 & 21.5 & 55.1 & 52.2 & 51.9 & 294.6 \\
 & 5.0 & 51.7 & 70.8 & 68.4 & 52.6 & 236.2 & 73.7 & 54.9 & 43.6 & 31.8 & 74.0 & 48.0 & 45.0 & 24.0 & 68.0 & 51.8 & 45.9 & 25.0 & 60.8 & 45.8 & 43.8 & 30.1 \\
\cline{1-23}
\multirow{7}{*}{MidSteer} & 1.0 & 55.5 & 68.9 & 50.9 & 70.4 & 8.7 & 73.6 & 55.1 & 42.7 & 14.7 & 73.7 & 48.2 & 44.7 & 10.4 & 68.0 & 51.8 & 46.1 & 9.3 & 60.5 & 45.8 & 44.0 & 13.0 \\
 & 2.0 & 51.7 & 71.1 & 50.9 & 70.4 & 11.7 & 73.6 & 55.0 & 43.1 & 23.2 & 73.9 & 48.1 & 44.8 & 16.5 & 68.0 & 51.7 & 46.1 & 15.0 & 60.6 & 45.8 & 43.9 & 19.1 \\
 & 3.0 & 51.4 & 70.7 & 50.9 & 70.4 & 13.9 & 73.4 & 54.7 & 43.4 & 32.5 & 74.1 & 48.0 & 45.0 & 22.2 & 67.9 & 51.7 & 46.1 & 19.8 & 60.8 & 45.9 & 44.0 & 23.9 \\
 & 4.0 & 51.4 & 70.9 & 50.9 & 70.4 & 15.9 & 73.1 & 54.5 & 43.9 & 43.8 & 74.3 & 48.1 & 45.3 & 27.4 & 67.9 & 51.6 & 46.1 & 24.3 & 60.8 & 45.9 & 44.1 & 28.9 \\
 & 5.0 & 51.5 & 71.4 & 51.0 & 70.4 & 18.0 & 69.8 & 55.2 & 50.0 & 78.2 & 74.5 & 48.1 & 45.7 & 34.3 & 67.9 & 51.6 & 46.2 & 28.4 & 61.0 & 46.0 & 44.2 & 34.3 \\
\bottomrule
\end{tabular}}
\end{table}

\begin{table}
\caption{Model SANA, flipping from chihuahua to muffin}
\label{tab:flip_sana_noclip_chihuahua_to_muffin}
\centering
    \setlength{\tabcolsep}{3.0pt}
    \resizebox{\linewidth}{!}{
    \begin{tabular}{lc|cc|ccc|cccc|cccc|cccc|cccc}
\toprule
 &  & \multicolumn{2}{c}{chihuahua} & \multicolumn{3}{c}{muffin} & \multicolumn{4}{c}{dog} & \multicolumn{4}{c}{wolf} & \multicolumn{4}{c}{cat} & \multicolumn{4}{c}{legislator} \\
 \midrule
method & strength & src-cs $\downarrow$ & tgt-cs $\uparrow$ & src-cs $\downarrow$ & tgt-cs $\uparrow$ & fid $\downarrow$ & cs $\uparrow$ & src-cs $\downarrow$ & tgt-cs $\downarrow$ & fid $\downarrow$ & cs $\uparrow$ & src-cs $\downarrow$ & tgt-cs $\downarrow$ & fid $\downarrow$ & cs $\uparrow$ & src-cs $\downarrow$ & tgt-cs $\downarrow$ & fid $\downarrow$ & cs $\uparrow$ & src-cs $\downarrow$ & tgt-cs $\downarrow$ & fid $\downarrow$ \\
\midrule
No Steering & - & 76.4 & 55.0 & 43.4 & 66.3 & - & 68.1 & 62.0 & 52.7 & - & 73.2 & 52.8 & 46.1 & - & 68.5 & 53.4 & 53.0 & - & 60.4 & 42.7 & 40.8 & - \\
\cline{1-23}
\multirow{7}{*}{CASteer} & 1.0 & 75.8 & 55.8 & 45.3 & 65.6 & 46.6 & 67.7 & 58.4 & 52.9 & 39.4 & 73.1 & 52.7 & 45.8 & 16.3 & 68.3 & 52.8 & 53.5 & 24.5 & 60.2 & 42.8 & 41.0 & 13.8 \\
 & 2.0 & 55.3 & 60.1 & 62.0 & 60.4 & 200.7 & 67.1 & 56.3 & 54.1 & 78.8 & 72.9 & 52.4 & 45.6 & 28.4 & 68.3 & 52.1 & 54.3 & 39.2 & 60.1 & 42.7 & 41.1 & 18.9 \\
 & 3.0 & 45.4 & 60.5 & 60.0 & 47.6 & 278.5 & 50.6 & 49.0 & 59.2 & 198.9 & 72.4 & 52.4 & 45.8 & 46.2 & 67.9 & 51.0 & 55.0 & 61.7 & 59.9 & 42.8 & 41.3 & 23.4 \\
 & 4.0 & 44.7 & 60.7 & 70.6 & 54.9 & 250.7 & 45.1 & 46.0 & 60.8 & 223.3 & 68.6 & 52.0 & 46.6 & 113.8 & 65.5 & 50.1 & 55.5 & 109.2 & 59.9 & 42.7 & 41.4 & 27.6 \\
 & 5.0 & 44.1 & 60.7 & 72.1 & 54.2 & 260.7 & 43.4 & 45.3 & 61.1 & 229.3 & 58.9 & 51.1 & 50.3 & 216.2 & 56.9 & 48.0 & 57.6 & 179.1 & 59.9 & 42.7 & 41.6 & 31.3 \\
\cline{1-23}
\multirow{7}{*}{LEACE} & 1.0 & 72.3 & 58.3 & 46.1 & 65.0 & 57.7 & 67.9 & 59.9 & 52.7 & 21.0 & 73.2 & 52.9 & 46.1 & 5.8 & 68.4 & 53.2 & 52.9 & 8.9 & 60.4 & 42.7 & 40.8 & 9.1 \\
 & 2.0 & 48.9 & 61.4 & 59.8 & 63.0 & 185.8 & 67.8 & 58.3 & 52.8 & 35.1 & 73.2 & 52.9 & 46.1 & 9.4 & 68.4 & 53.1 & 53.0 & 13.6 & 60.4 & 42.5 & 40.8 & 13.7 \\
 & 3.0 & 45.5 & 60.7 & 65.7 & 57.2 & 227.5 & 67.7 & 57.1 & 52.8 & 48.4 & 73.2 & 52.9 & 46.0 & 12.5 & 68.5 & 53.1 & 53.0 & 17.5 & 60.4 & 42.7 & 40.7 & 16.9 \\
 & 4.0 & 44.6 & 60.5 & 68.1 & 55.8 & 240.4 & 67.7 & 56.3 & 53.0 & 60.8 & 73.2 & 52.9 & 46.0 & 15.4 & 68.5 & 53.0 & 53.0 & 21.0 & 60.4 & 42.7 & 40.7 & 19.4 \\
 & 5.0 & 44.2 & 60.1 & 69.4 & 55.1 & 246.3 & 67.4 & 55.9 & 53.6 & 76.7 & 73.1 & 52.9 & 46.0 & 18.1 & 68.6 & 52.9 & 53.0 & 24.0 & 60.4 & 42.9 & 40.6 & 21.5 \\
\cline{1-23}
\multirow{7}{*}{MidSteer} & 1.0 & 53.3 & 60.5 & 43.4 & 66.2 & 7.8 & 66.2 & 62.5 & 48.3 & 216.4 & 73.2 & 52.8 & 45.9 & 7.1 & 68.4 & 53.2 & 53.1 & 12.1 & 60.4 & 42.6 & 40.8 & 10.7 \\
 & 2.0 & 44.4 & 58.7 & 43.4 & 66.0 & 13.1 & 67.6 & 56.8 & 53.0 & 57.9 & 73.1 & 52.8 & 45.8 & 11.2 & 68.4 & 52.9 & 53.1 & 17.8 & 60.4 & 42.5 & 40.8 & 15.8 \\
 & 3.0 & 44.1 & 56.4 & 43.5 & 65.8 & 17.7 & 65.8 & 55.4 & 54.4 & 96.5 & 73.0 & 52.8 & 45.7 & 14.5 & 68.5 & 52.7 & 53.2 & 23.0 & 60.3 & 42.4 & 40.8 & 19.8 \\
 & 4.0 & 45.0 & 53.8 & 43.6 & 65.6 & 22.3 & 56.4 & 51.6 & 56.3 & 154.4 & 73.0 & 52.9 & 45.6 & 17.3 & 68.5 & 52.5 & 53.2 & 27.2 & 60.2 & 42.5 & 40.8 & 22.9 \\
 & 5.0 & 46.1 & 51.8 & 43.6 & 65.5 & 27.3 & 49.3 & 48.5 & 57.2 & 193.2 & 73.0 & 52.9 & 45.5 & 19.9 & 68.5 & 52.2 & 53.4 & 31.3 & 60.1 & 42.4 & 40.6 & 25.9 \\
\bottomrule
\end{tabular}}
\end{table}

\begin{table}
\caption{Model SANA, flipping from snoopy to mickey}
\label{tab:flip_sana_noclip_snoopy_to_mickey}
\centering
    \setlength{\tabcolsep}{3.0pt}
    \resizebox{\linewidth}{!}{
    \begin{tabular}{lc|cc|ccc|cccc|cccc|cccc|cccc}
\toprule
 &  & \multicolumn{2}{c}{snoopy} & \multicolumn{3}{c}{mickey} & \multicolumn{4}{c}{pikachu} & \multicolumn{4}{c}{spongebob} & \multicolumn{4}{c}{dog} & \multicolumn{4}{c}{legislator} \\
 \midrule
method & strength & src-cs $\downarrow$ & tgt-cs $\uparrow$ & src-cs $\downarrow$ & tgt-cs $\uparrow$ & fid $\downarrow$ & cs $\uparrow$ & src-cs $\downarrow$ & tgt-cs $\downarrow$ & fid $\downarrow$ & cs $\uparrow$ & src-cs $\downarrow$ & tgt-cs $\downarrow$ & fid $\downarrow$ & cs $\uparrow$ & src-cs $\downarrow$ & tgt-cs $\downarrow$ & fid $\downarrow$ & cs $\uparrow$ & src-cs $\downarrow$ & tgt-cs $\downarrow$ & fid $\downarrow$ \\
\midrule
No Steering & - & 79.7 & 58.0 & 56.3 & 76.1 & - & 74.0 & 41.5 & 50.9 & - & 79.0 & 50.7 & 53.8 & - & 67.3 & 57.0 & 57.1 & - & 60.4 & 42.9 & 46.6 & - \\
\cline{1-23}
\multirow{7}{*}{CASteer} & 1.0 & 77.9 & 61.7 & 57.8 & 76.3 & 31.8 & 74.1 & 41.5 & 50.8 & 8.6 & 79.0 & 50.7 & 53.8 & 14.5 & 68.1 & 54.0 & 52.0 & 218.8 & 60.3 & 43.0 & 46.8 & 12.5 \\
 & 2.0 & 65.6 & 71.0 & 61.0 & 76.5 & 53.8 & 74.1 & 41.6 & 50.8 & 13.9 & 78.9 & 50.8 & 53.8 & 19.8 & 68.2 & 53.5 & 52.0 & 218.3 & 60.2 & 43.0 & 46.9 & 17.7 \\
 & 3.0 & 55.8 & 72.3 & 71.7 & 74.8 & 78.8 & 74.2 & 41.7 & 50.7 & 18.6 & 78.9 & 50.8 & 53.8 & 23.9 & 68.1 & 53.2 & 52.1 & 218.3 & 60.1 & 43.0 & 46.8 & 21.4 \\
 & 4.0 & 52.0 & 71.4 & 78.5 & 67.6 & 102.9 & 74.3 & 41.8 & 50.7 & 22.7 & 78.9 & 50.8 & 53.8 & 27.9 & 68.2 & 52.8 & 52.1 & 218.5 & 60.1 & 43.0 & 47.0 & 25.4 \\
 & 5.0 & 50.0 & 70.4 & 79.8 & 61.9 & 121.0 & 74.5 & 42.0 & 50.7 & 26.7 & 79.0 & 50.8 & 53.8 & 31.8 & 68.2 & 52.4 & 52.2 & 218.0 & 60.1 & 43.0 & 47.2 & 28.5 \\
\cline{1-23}
\multirow{7}{*}{LEACE} & 1.0 & 76.6 & 63.0 & 58.2 & 76.4 & 36.0 & 74.1 & 41.6 & 51.0 & 3.2 & 79.0 & 50.7 & 53.7 & 10.8 & 68.1 & 54.2 & 52.0 & 219.3 & 60.2 & 42.9 & 46.6 & 5.9 \\
 & 2.0 & 62.6 & 71.3 & 57.9 & 72.9 & 170.9 & 74.1 & 41.7 & 51.0 & 5.5 & 78.9 & 50.7 & 53.7 & 15.1 & 68.0 & 54.0 & 52.0 & 220.0 & 60.2 & 42.9 & 46.6 & 9.6 \\
 & 3.0 & 55.9 & 71.7 & 75.1 & 73.4 & 88.9 & 74.2 & 41.7 & 51.0 & 7.5 & 78.9 & 50.7 & 53.7 & 18.2 & 68.0 & 53.7 & 52.1 & 218.7 & 60.1 & 42.9 & 46.7 & 12.2 \\
 & 4.0 & 52.9 & 68.8 & 79.2 & 66.4 & 111.0 & 74.2 & 41.8 & 51.1 & 9.5 & 78.9 & 50.8 & 53.8 & 21.1 & 68.0 & 53.6 & 52.1 & 218.8 & 60.1 & 42.9 & 46.7 & 14.6 \\
 & 5.0 & 51.0 & 65.5 & 79.6 & 60.8 & 130.5 & 74.3 & 41.9 & 51.1 & 11.2 & 78.9 & 50.8 & 53.8 & 24.1 & 68.0 & 53.3 & 52.2 & 219.7 & 59.9 & 42.9 & 46.7 & 16.4 \\
\cline{1-23}
\multirow{7}{*}{MidSteer} & 1.0 & 64.9 & 70.8 & 56.0 & 76.2 & 16.0 & 74.1 & 41.5 & 51.0 & 5.8 & 79.1 & 50.6 & 53.8 & 12.3 & 68.1 & 54.0 & 52.0 & 219.2 & 60.2 & 42.9 & 46.7 & 8.3 \\
 & 2.0 & 53.5 & 71.1 & 55.6 & 76.2 & 25.2 & 74.2 & 41.5 & 51.1 & 10.0 & 79.2 & 50.6 & 53.9 & 17.4 & 68.1 & 53.6 & 52.1 & 218.2 & 60.1 & 42.9 & 46.7 & 12.4 \\
 & 3.0 & 50.0 & 67.4 & 55.2 & 76.2 & 34.8 & 74.4 & 41.5 & 51.2 & 13.5 & 79.2 & 50.6 & 54.0 & 21.6 & 68.0 & 53.2 & 52.2 & 218.6 & 59.9 & 42.8 & 46.7 & 15.6 \\
 & 4.0 & 48.4 & 62.6 & 54.9 & 76.4 & 45.4 & 74.5 & 41.6 & 51.3 & 16.8 & 79.2 & 50.5 & 54.0 & 25.8 & 67.9 & 52.6 & 52.3 & 216.6 & 59.8 & 42.8 & 46.7 & 17.8 \\
 & 5.0 & 48.1 & 60.7 & 54.5 & 76.6 & 55.3 & 74.6 & 41.5 & 51.3 & 20.0 & 79.2 & 50.3 & 54.1 & 29.6 & 68.0 & 52.3 & 52.4 & 215.1 & 55.0 & 47.5 & 52.8 & 271.2 \\
\bottomrule
\end{tabular}}
\end{table}


\section{Concept erasure}
\label{sec:exp_concept_erasure}

In this section, we provide experiments for concept erasure. The reason for such is to show that unified MidSteer framework, that also includes erasure in case $Z_2$ is constant, performs favourably on a variety of tasks. Note that in case of erasure, both MidSteer and LEACE provide the same solutions, so we will label the results as LEACE / MidSteer in charts.

For concept erasure, we use evaluation setup seimilar to that of concept switching. For LLM, We test erasure of the concepts $c_s \in$ ("Horse", $c_2$ ="Dog"). Corresponding testing concepts are $t_i = \{c_i\}^5_{i=1}$: $t_1$ = ("Motorcycle", "Cow",  "Dog", "Pig", "Legislator"), $t_2$ = ("Cat", "Cow",  "Wolf", "Pig", "Legislator"). For Diffusion Models, We test erasure of the concepts $c_s \in$ ("Horse", $c_2$ ="Chichuahua"). Corresponding testing concepts are $t_i = \{c_i\}^5_{i=1}$: $t_1$ = ("Motorcycle", "Cow",  "Dog", "Pig", "Legislator"), $t_2$ = ("Muffin", "Cat",  "Dog", "Wolf", "Legislator"),

The setup for erasure is similar to those of main paper experiments (Sec. \ref{sec:experiments}). The only difference is that there is no target concept (i.e. it is a dummy concept). We utilise similar pairs of prompts $(c_1, c_2)$ as in switching experiments, with the goal of removing $c_1$.

Fig.~\ref{fig:pareto_erase_bert_llama2},\ref{fig:pareto_erase_fid_sdxl},\ref{fig:pareto_erase_cs_llama2},\ref{fig:pareto_erase_cs_sdxl} present Pareto plots for concept erasure similar to that for concept switching in sec.~\ref{sec:experiments}. More precisely, we compare results on concept switching by applying vanilla steering, LEACE (which is equivalent to MidSteer in the case of erasure) with different values of $\beta$. We present results on LLama-2-7b and SDXL models aggregated for all three erase concepts. It can be clearly seen, that in each case, MidSteer achieves much better balance between level of concept switch between c1 and c2 and preservation of other concepts across different values of $\beta$.

Next, in sec.~\ref{sec:pareto_llm_erase},~\ref{sec:pareto_diffusion_erase} we provide detailed Pareto plots for each model, and in sec.~\ref{sec:tables_llm_erase},\ref{sec:tables_diffusion_erase} we provide Tables with detailed breakdown of scores for all $\beta$ values and all erased concepts.

\begin{figure}[htbp]
	\begin{subfigure}{0.47\textwidth}
		\includegraphics[width=1.0\linewidth]{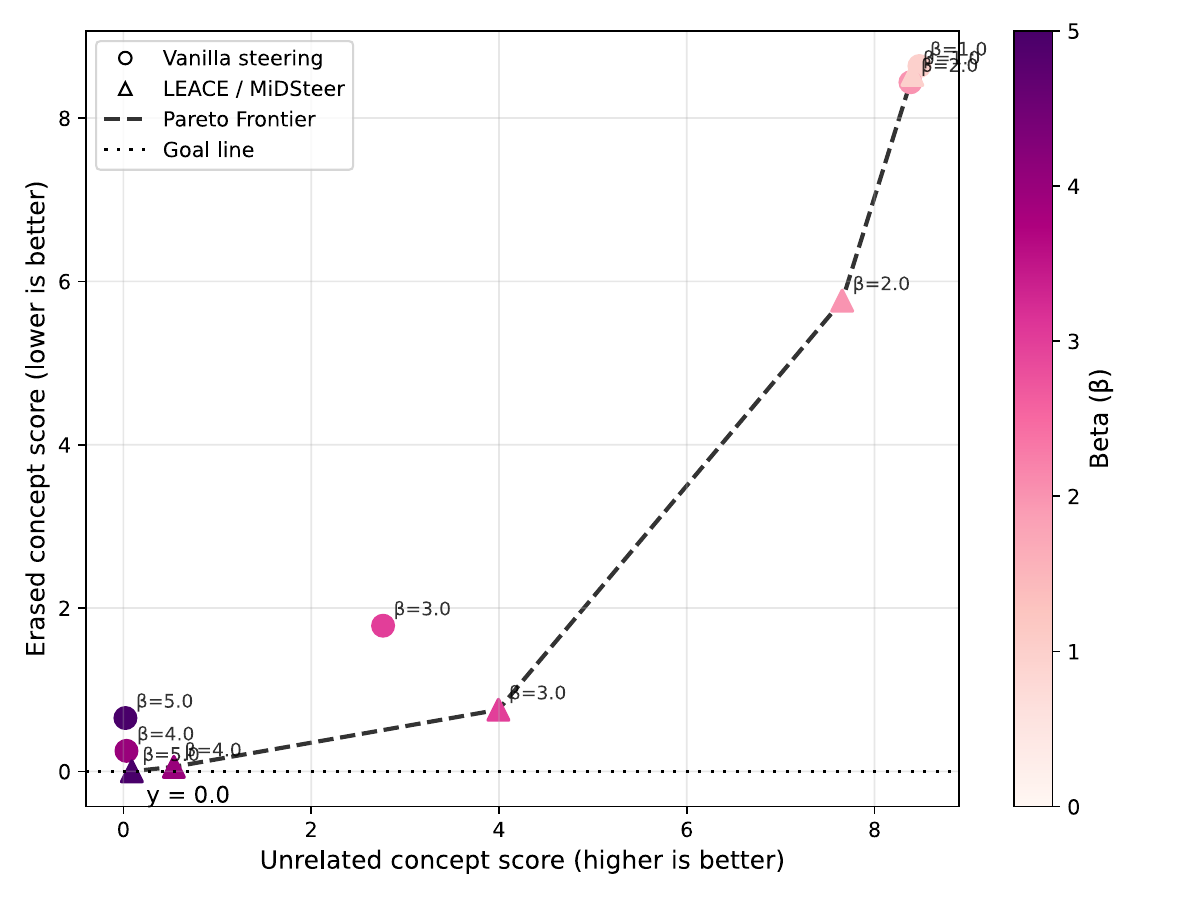}
		\caption{Erased concept score vs CS of unrelated concepts on Llama-2-7b model.}
		\label{fig:pareto_erase_cs_llama2}
	\end{subfigure}
	\begin{subfigure}{0.47\textwidth}
		\includegraphics[width=1.0\linewidth]{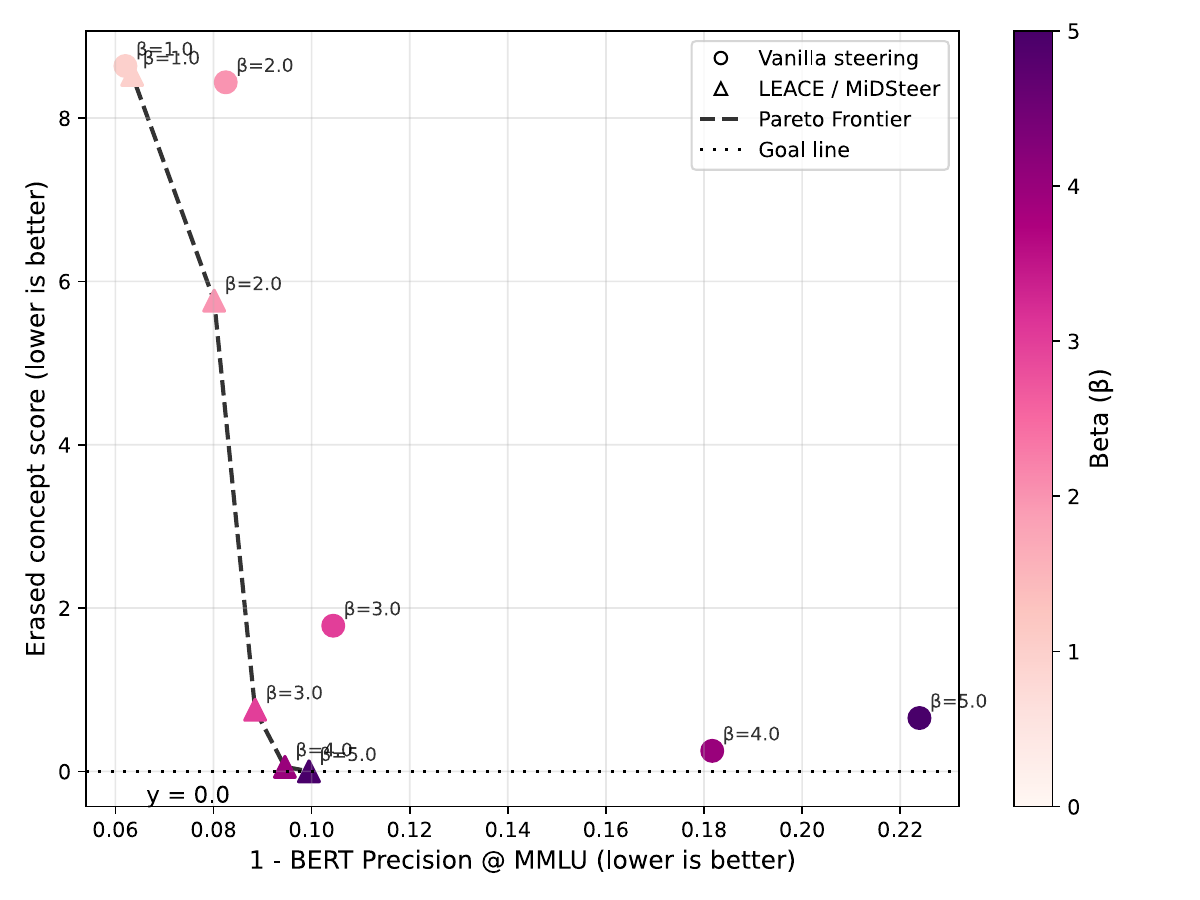}
		\caption{Erased concept score vs 1 - BERT Precision on MMLU on Llama-2-7b model.}
		\label{fig:pareto_erase_bert_llama2}
	\end{subfigure}
	
	\begin{subfigure}{0.47\textwidth}
		\includegraphics[width=1.0\linewidth]{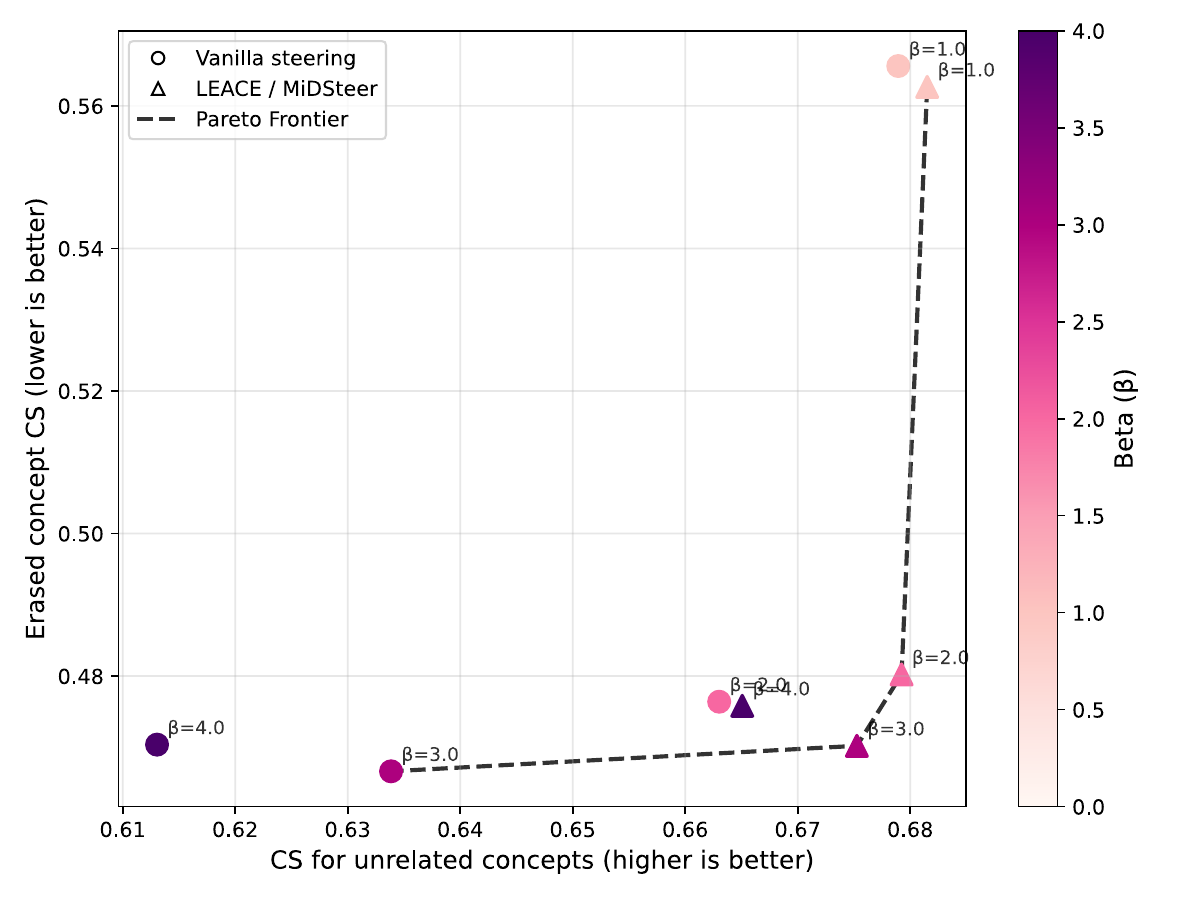}
		\caption{Erased concept score vs CS of unrelated concepts on SDXL model.}
		\label{fig:pareto_erase_cs_sdxl}
	\end{subfigure}
	\begin{subfigure}{0.47\textwidth}
		\includegraphics[width=1.0\linewidth]{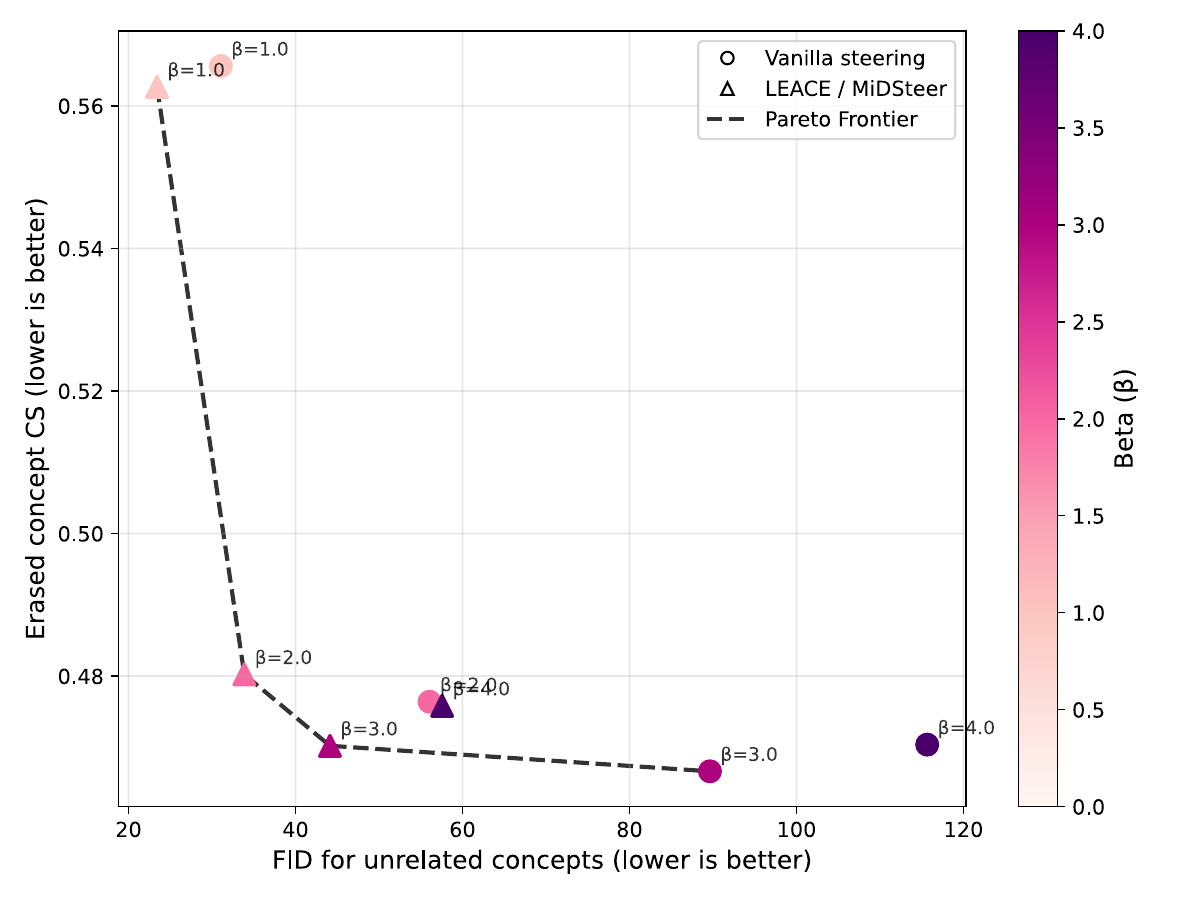}
		\caption{Erased concept score vs FID of unrelated concepts on SDXL model.}
		\label{fig:pareto_erase_fid_sdxl}
	\end{subfigure}
	
	\caption{Pareto efficiency frontiers for concept \textit{erasure} experiments with vanilla steering and LEACE / MidSteer highlighting different $\beta$.}
	\label{fig:pareto_erase}
\end{figure}

\clearpage
\subsection{Pareto charts for LLM concept erasure}
\label{sec:pareto_llm_erase}

In this section, we provide more Pareto charts for LLM concept erasure. When erasing concept $c_s$, for each LLM model we provide 3 types of Pareto plots: 
\begin{itemize}
    \item 1 - BERT Precision score for unrelated concepts (horizontal axis) vs Concept Score (CS) for the erased $c_s$ concept (vertical axis) (fig.~\ref{fig:llama2-7b-erasure-noclip-unrelated-bertp.svg},\ref{fig:qwen-14b-erasure-unrelated-bertp.svg},\ref{fig:qwen-7b-erasure-noclip-mmlu-bertp.svg})
    \item 1 - BERT Precision score for MMLU (horizontal axis) vs Concept Score (CS) for the erased $c_s$ concept (vertical axis) (fig.~\ref{fig:llama2-7b-erasure-noclip-mmlu-bertp.svg},\ref{fig:qwen-14b-erasure-mmlu-bertp.svg},\ref{fig:qwen-7b-erasure-noclip-mmlu-bertp.svg})
    \item Average Concept Score (CS) for unrelated concepts $c_i$ (horizontal axis) vs Concept Score (CS) for the erased $c_s$ concept (vertical axis)  (fig.~\ref{fig:llama2-7b-erasure-noclip-unrelated-cs.svg},\ref{fig:qwen-14b-erasure-unrelated-cs.svg},\ref{fig:qwen-7b-erasure-noclip-unrelated-cs.svg})
\end{itemize}

In each case, we see clear superiority of LEACE/MidSteer over other steering approaches.

We additionally provide detailed breakdown of scores for all $\beta$ values and all concepts $c_s, c_i$ in the tables in sec.~\ref{sec:tables_llm_erase}

\begin{figure}[t]
\centering
\begin{subfigure}[t]{0.49\linewidth}
  \centering
  \includegraphics[width=\linewidth]{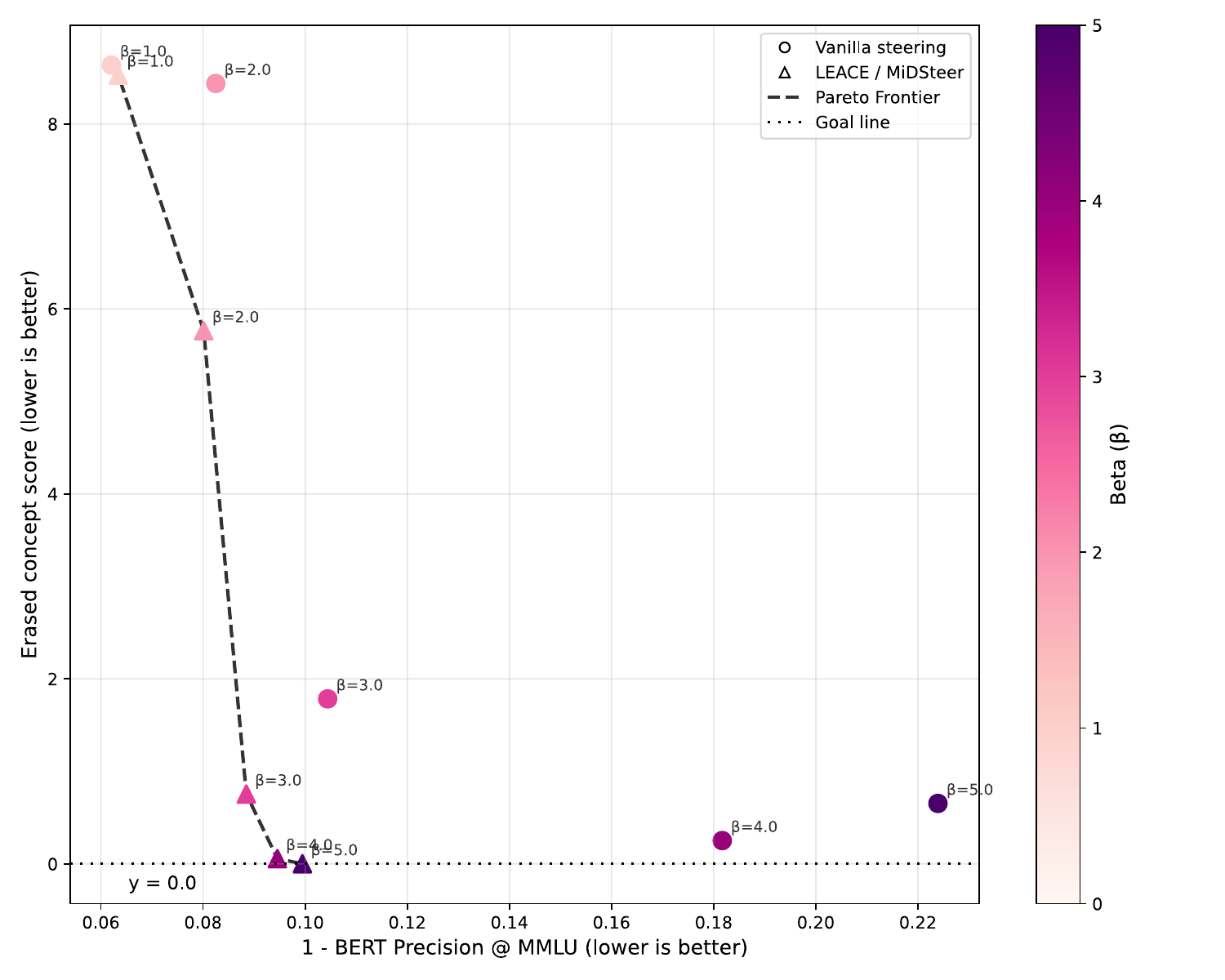}
  \caption{MMLU vs BERTP}
  \label{fig:llama2-7b-erasure-noclip-mmlu-bertp.svg}
\end{subfigure}\hfill
\begin{subfigure}[t]{0.49\linewidth}
  \centering
  \includegraphics[width=\linewidth]{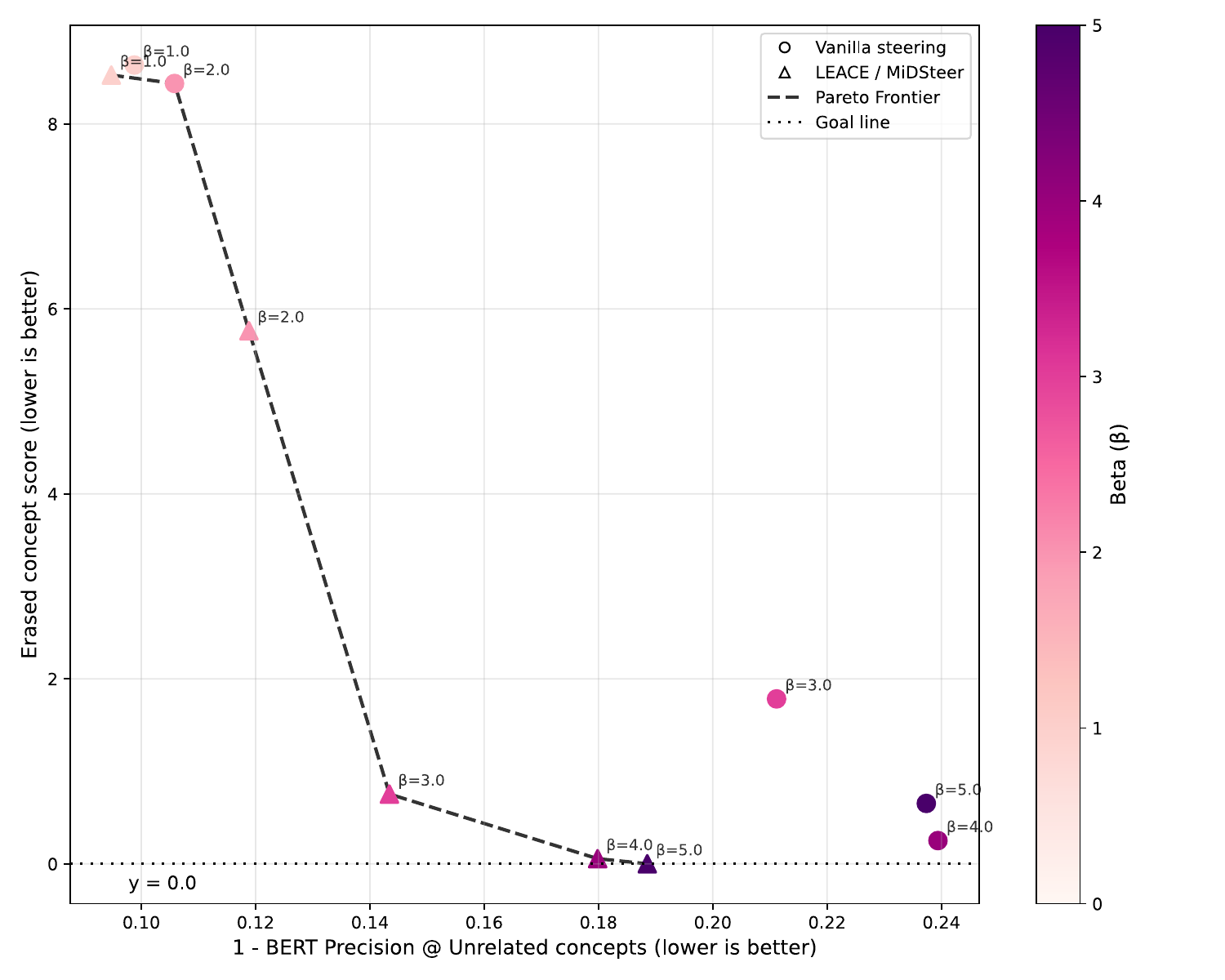}
  \caption{Unrelated vs BERTP}
  \label{fig:llama2-7b-erasure-noclip-unrelated-bertp.svg}
\end{subfigure}

\vspace{0.4em}

\begin{subfigure}[t]{0.49\linewidth}
  \centering
  \includegraphics[width=\linewidth]{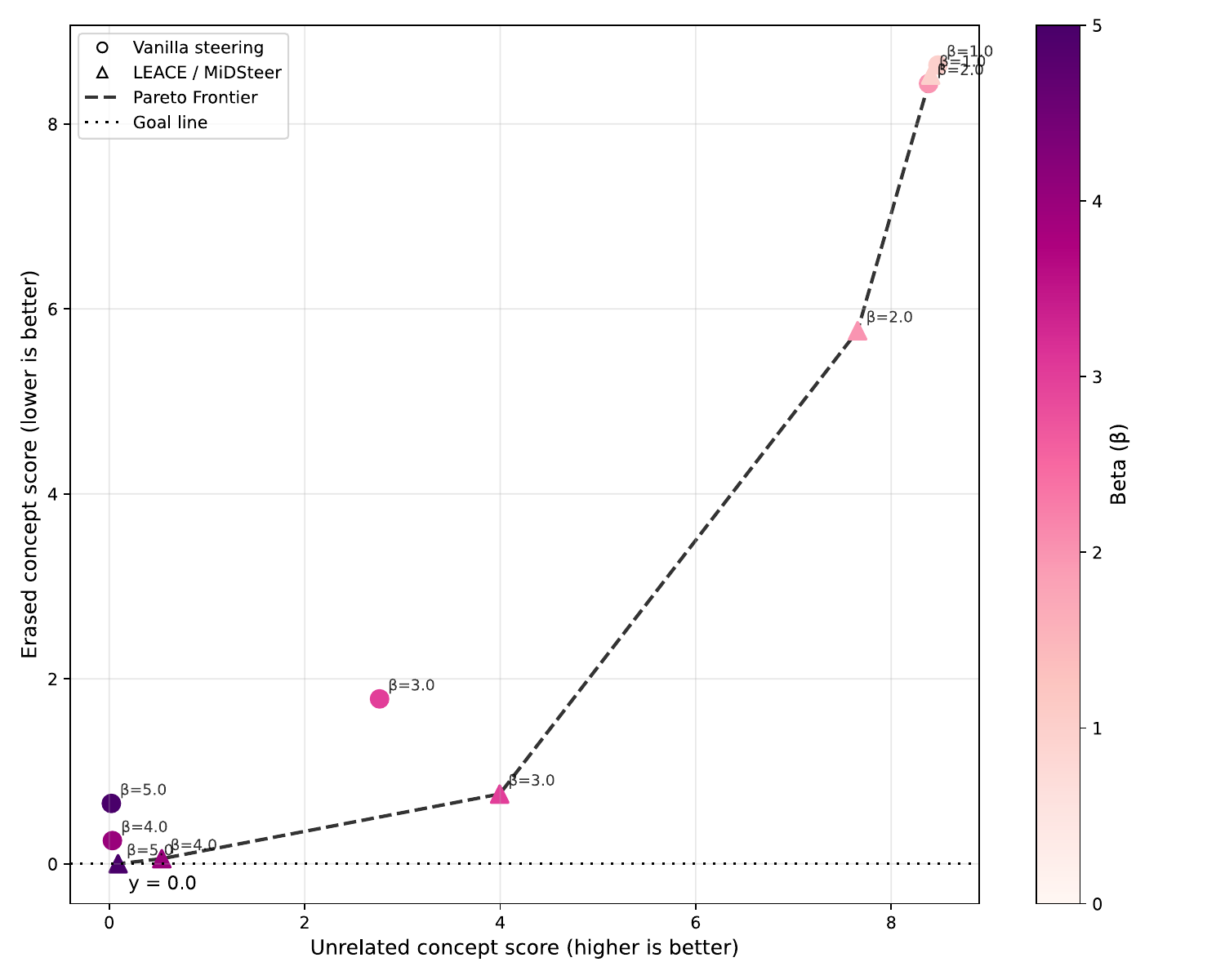}
  \caption{Unrelated vs CS}
  \label{fig:llama2-7b-erasure-noclip-unrelated-cs.svg}
\end{subfigure}\hfill
\begin{subfigure}[t]{0.49\linewidth}
  \centering
\end{subfigure}
\caption{Pareto plot for concept erasure on model llama2-7b}
\label{fig:llama2-7b-erasure-grid.svg}
\vspace{-0.3cm}
\end{figure}

\begin{figure}[t]
\centering
\begin{subfigure}[t]{0.49\linewidth}
  \centering
  \includegraphics[width=\linewidth]{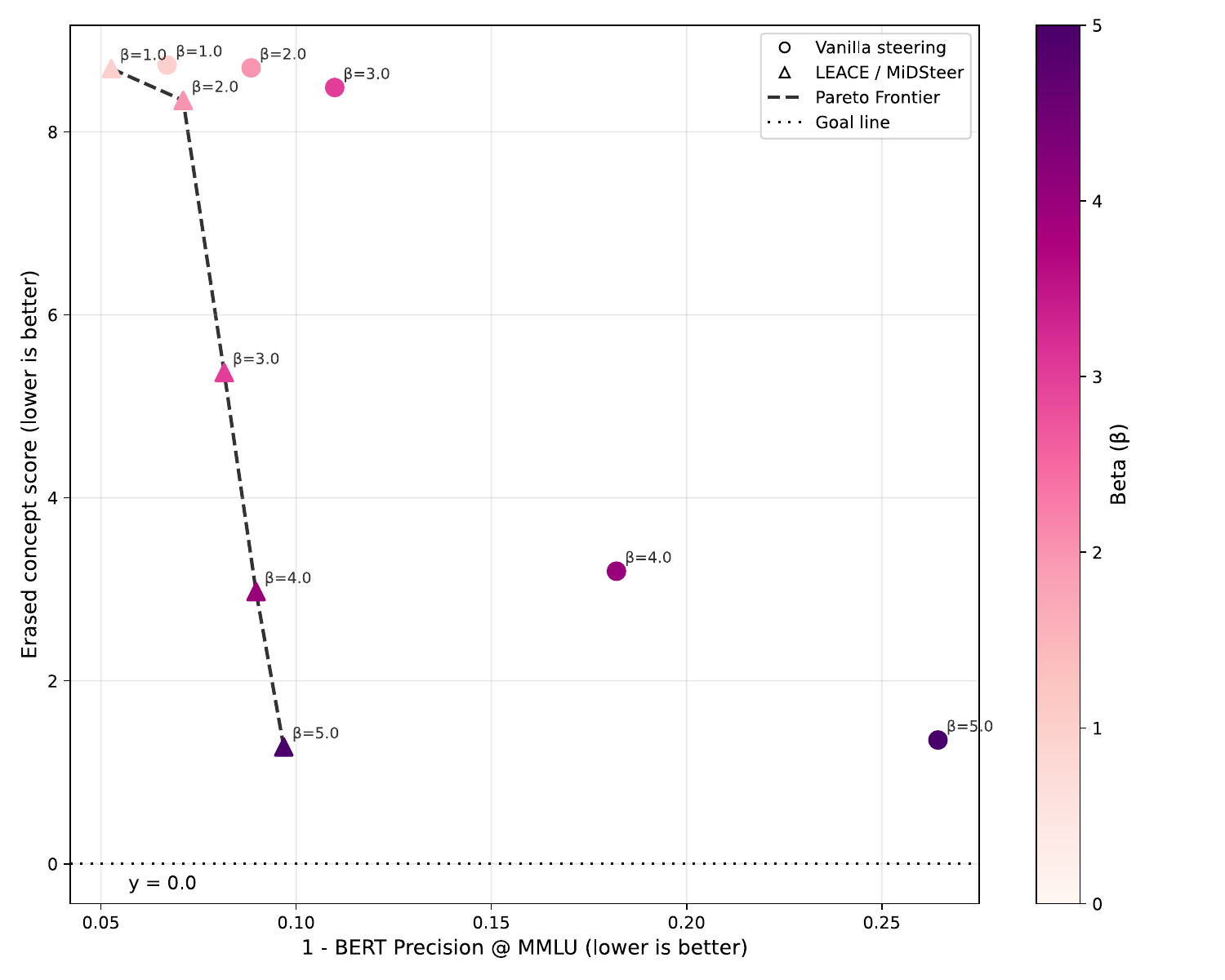}
  \caption{MMLU vs BERTP}
  \label{fig:qwen-14b-erasure-mmlu-bertp.svg}
\end{subfigure}\hfill
\begin{subfigure}[t]{0.49\linewidth}
  \centering
  \includegraphics[width=\linewidth]{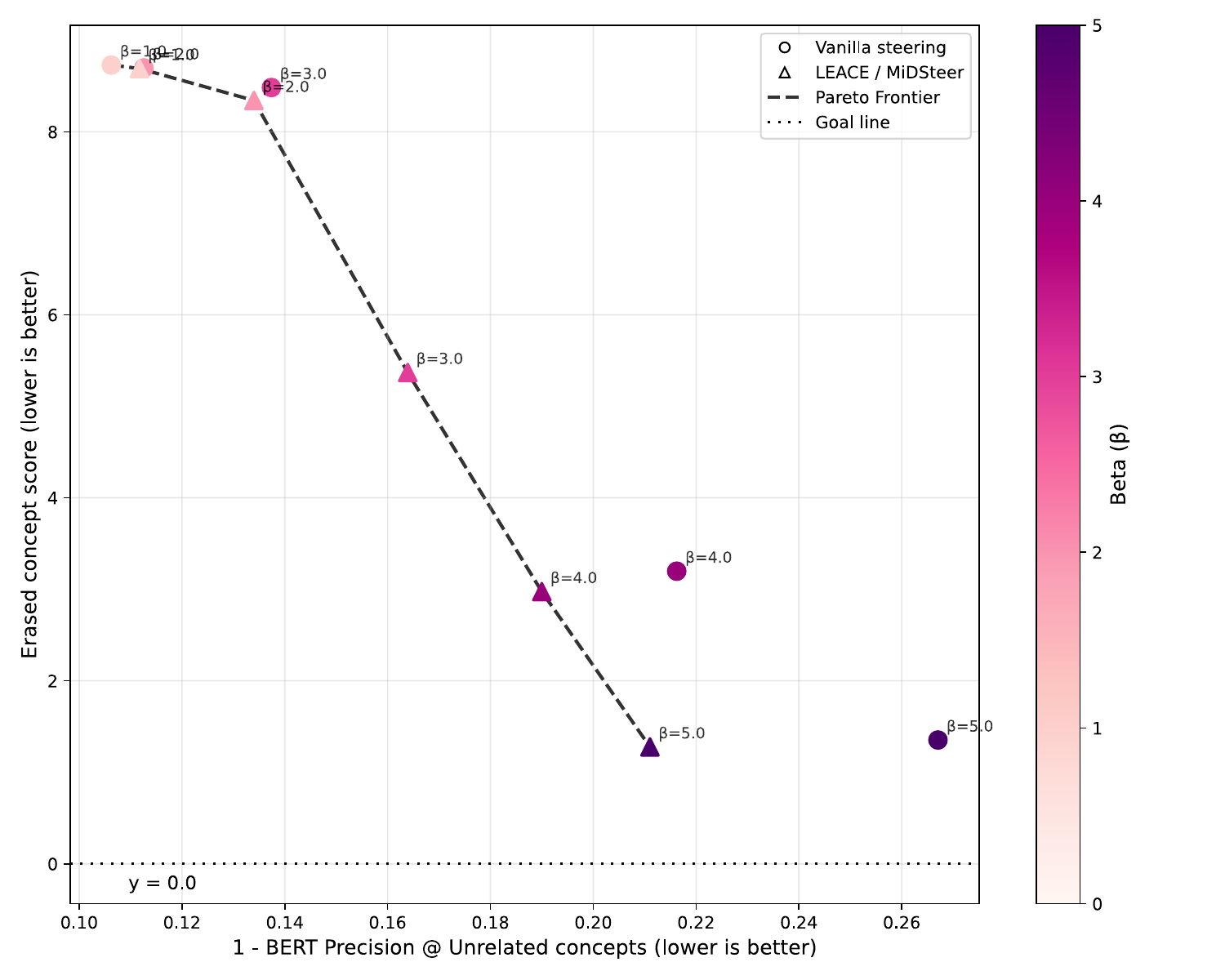}
  \caption{Unrelated vs BERTP}
  \label{fig:qwen-14b-erasure-unrelated-bertp.svg}
\end{subfigure}

\vspace{0.4em}

\begin{subfigure}[t]{0.49\linewidth}
  \centering
  \includegraphics[width=\linewidth]{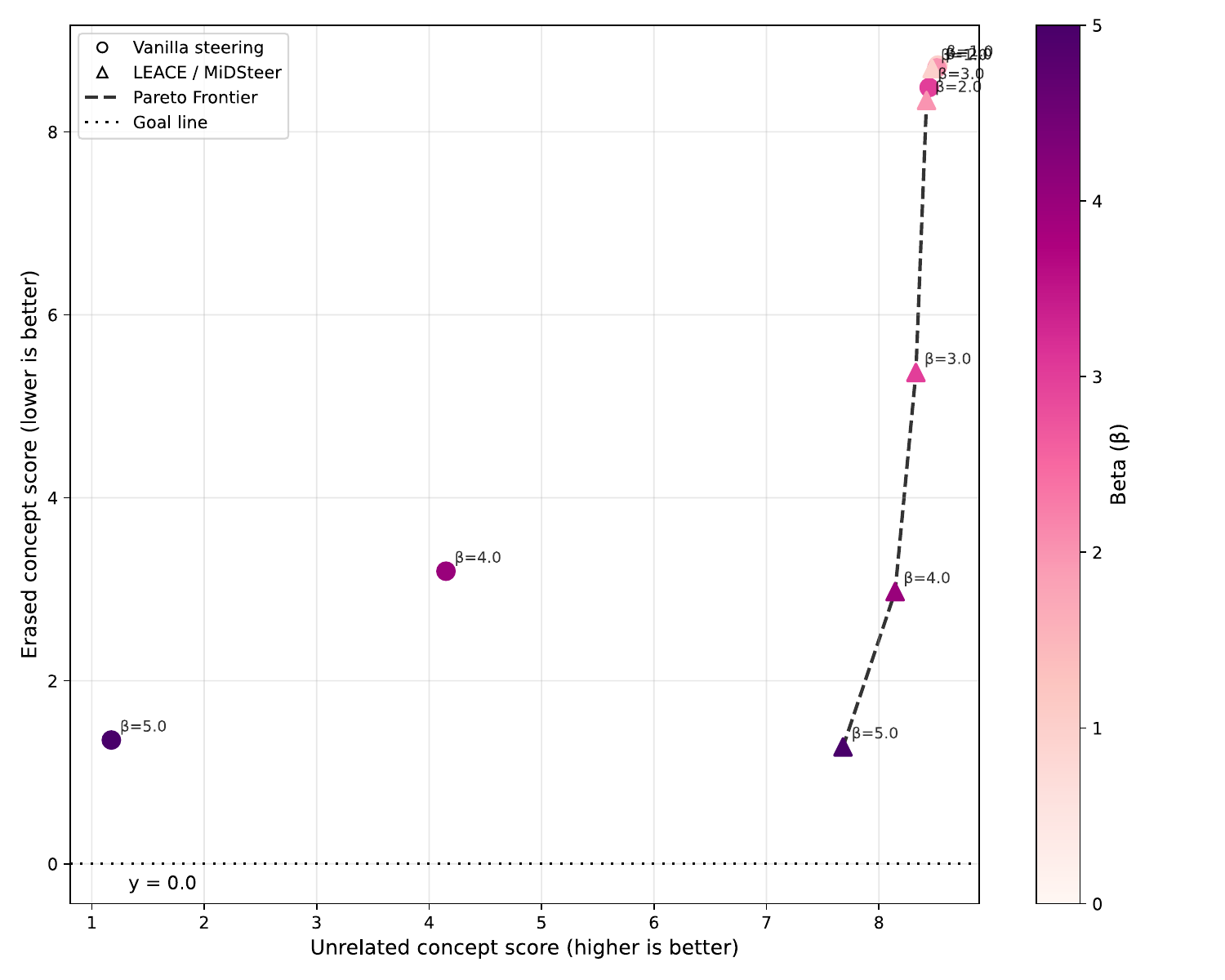}
  \caption{Unrelated vs CS}
  \label{fig:qwen-14b-erasure-unrelated-cs.svg}
\end{subfigure}\hfill
\begin{subfigure}[t]{0.49\linewidth}
  \centering
\end{subfigure}
\caption{Pareto plot for concept erasure on model qwen-14b}
\label{fig:qwen-14b-erasure-grid.svg}
\vspace{-0.3cm}
\end{figure}

\begin{figure}[t]
\centering
\begin{subfigure}[t]{0.49\linewidth}
  \centering
  \includegraphics[width=\linewidth]{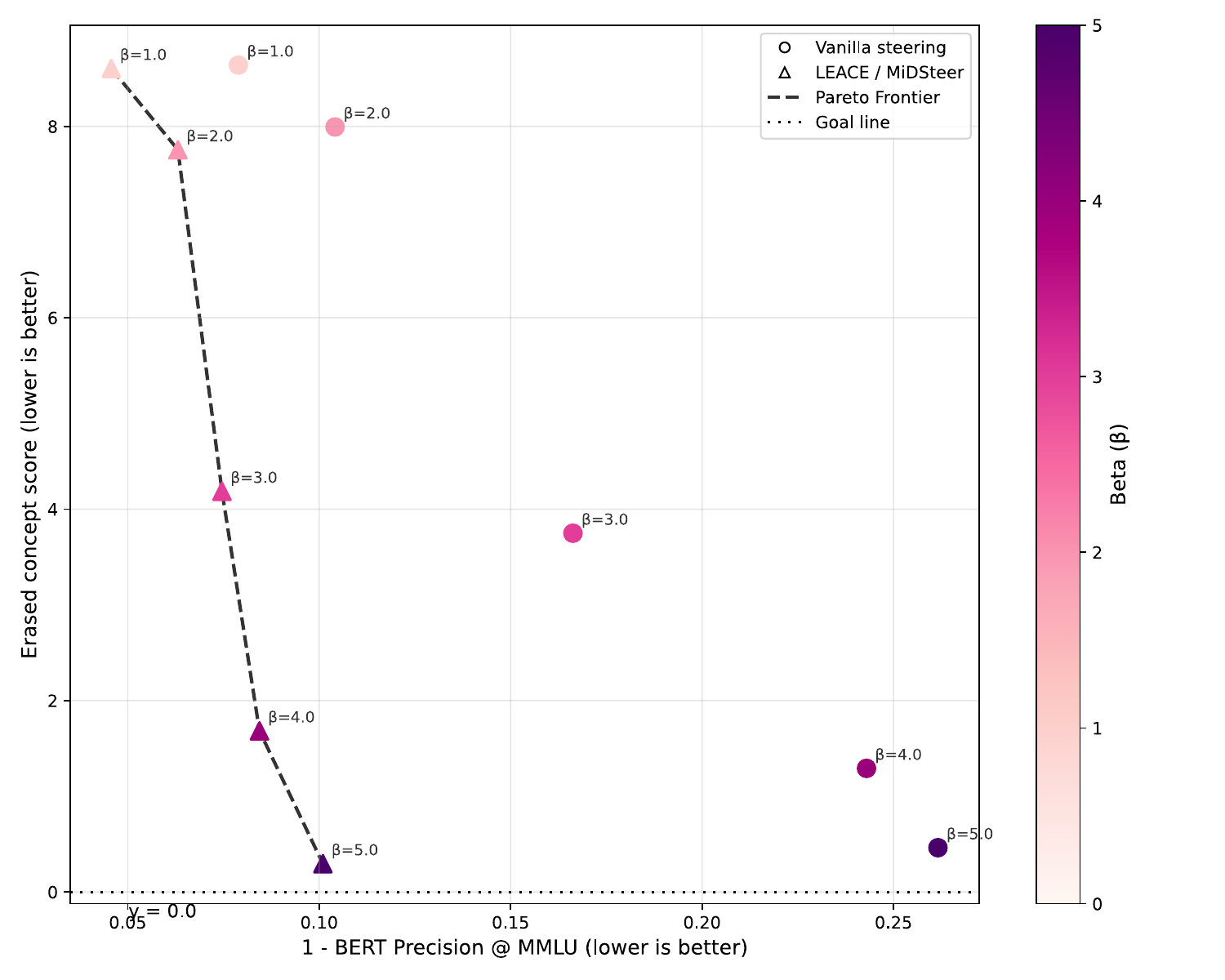}
  \caption{MMLU vs BERTP}
  \label{fig:qwen-7b-erasure-noclip-mmlu-bertp.svg}
\end{subfigure}\hfill
\begin{subfigure}[t]{0.49\linewidth}
  \centering
  \includegraphics[width=\linewidth]{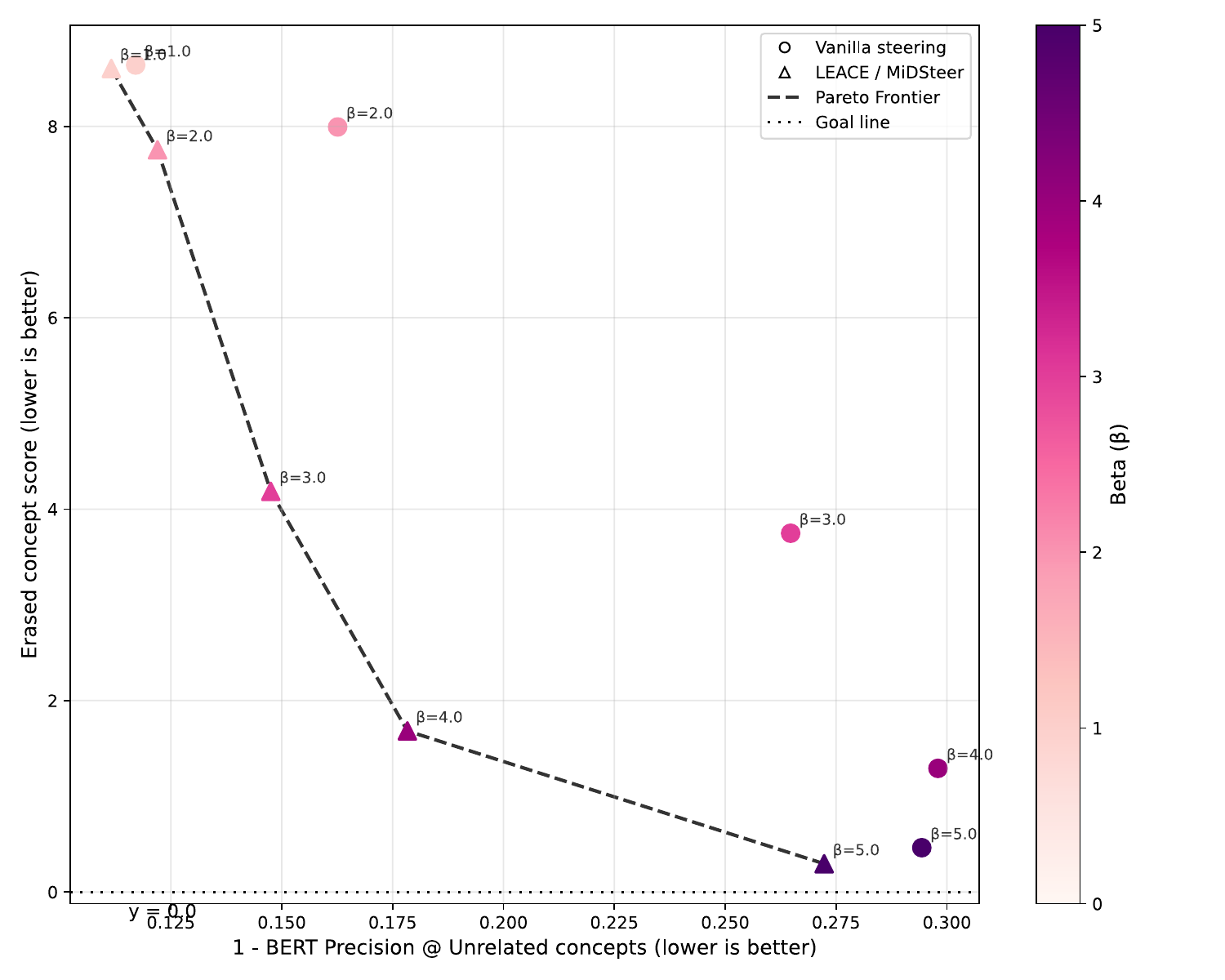}
  \caption{Unrelated vs BERTP}
  \label{fig:qwen-7b-erasure-noclip-unrelated-bertp.svg}
\end{subfigure}

\vspace{0.4em}

\begin{subfigure}[t]{0.49\linewidth}
  \centering
  \includegraphics[width=\linewidth]{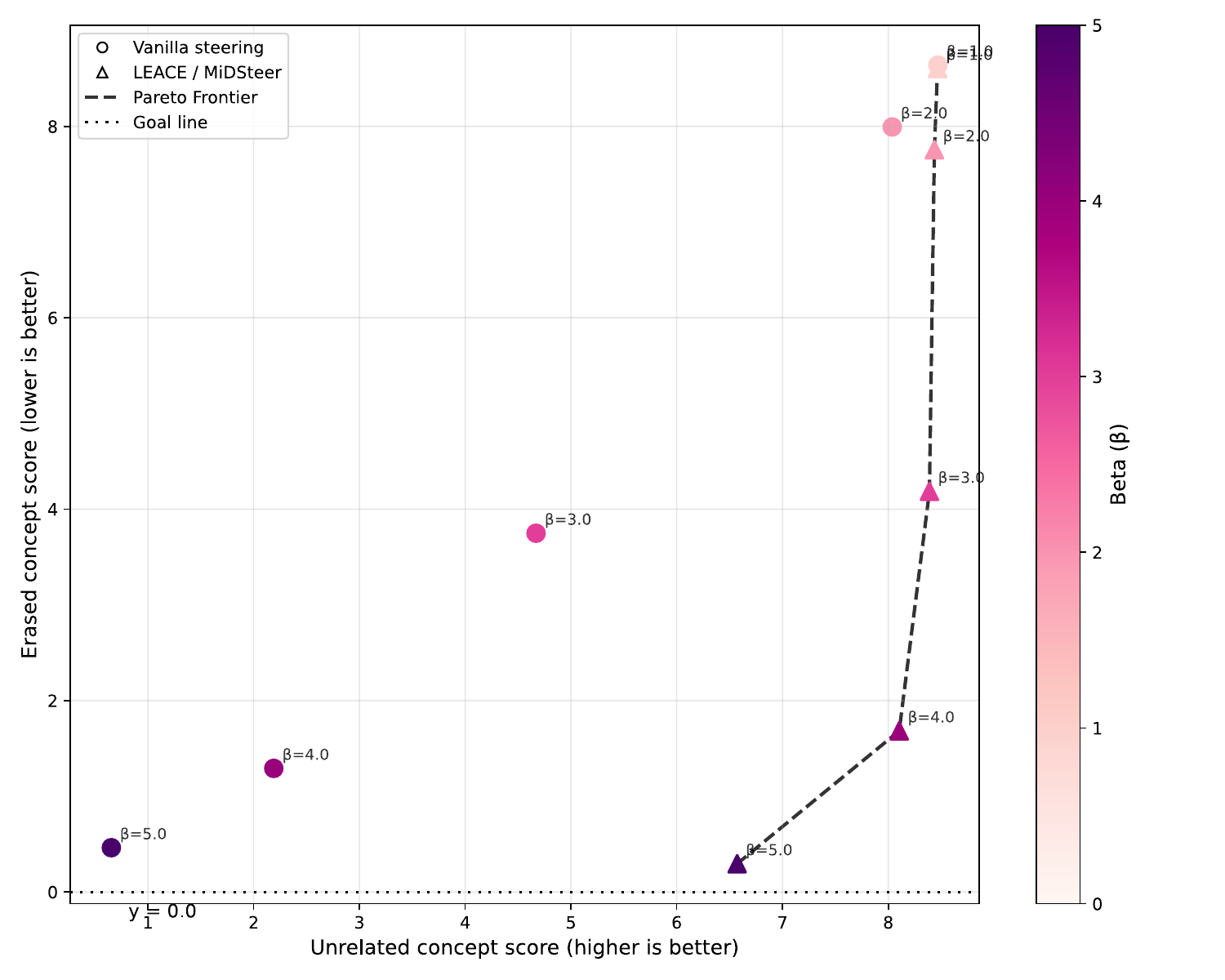}
  \caption{Unrelated vs CS}
  \label{fig:qwen-7b-erasure-noclip-unrelated-cs.svg}
\end{subfigure}\hfill
\begin{subfigure}[t]{0.49\linewidth}
  \centering
\end{subfigure}
\caption{Pareto plot for concept erasure on model qwen-7b}
\label{fig:qwen-7b-erasure-grid.svg}
\vspace{-0.3cm}
\end{figure}

\clearpage
\subsection{Pareto charts for image diffusion concept erasure}
\label{sec:pareto_diffusion_erase}

In this section, we provide more Pareto charts for LLM concept erasure. When erasing concept $c_s$, for each Diffusion model we provide 2 types of Pareto plots: 
\begin{itemize}
    \item FID score for unrelated concepts (horizontal axis) vs Concept Score (CS) for the erased $c_s$ concept (vertical axis) (fig.~\ref{fig:sdxl-erase-noclip-unrelated-fid.svg},\ref{fig:sdxl-erase-noclip-unrelated-fid.svg})
    \item Average Concept Score (CS) for unrelated concepts $c_i$ (horizontal axis) vs Concept Score (CS) for the erased $c_s$ concept (vertical axis)  (fig.~\ref{fig:sdxl-erase-noclip-unrelated-cs.svg},\ref{fig:sana-erase-noclip-unrelated-cs.svg})
\end{itemize}

In each case, we see clear superiority of LEACE/MidSteer over other steering approaches.

We additionally provide detailed breakdown of scores for all $\beta$ values and all concepts $c_s, c_i$ in the tables in sec.~\ref{sec:tables_diffusion_erase}

\begin{figure}[t]
\centering
\begin{subfigure}[t]{0.49\linewidth}
  \centering
  \includegraphics[width=\linewidth]{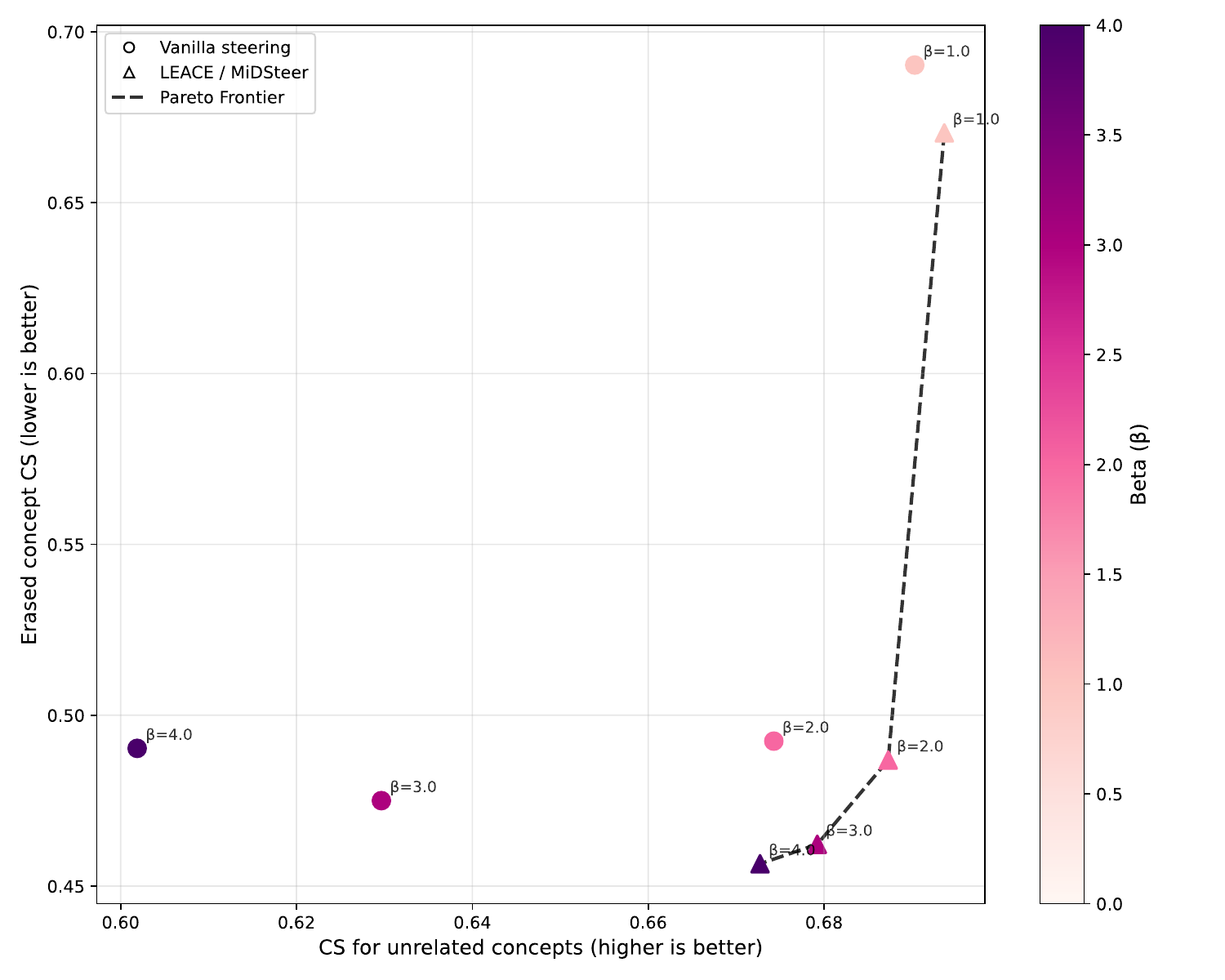}
  \caption{Unrelated vs CS}
  \label{fig:sana-erase-noclip-unrelated-cs.svg}
\end{subfigure}\hfill
\begin{subfigure}[t]{0.49\linewidth}
  \centering
  \includegraphics[width=\linewidth]{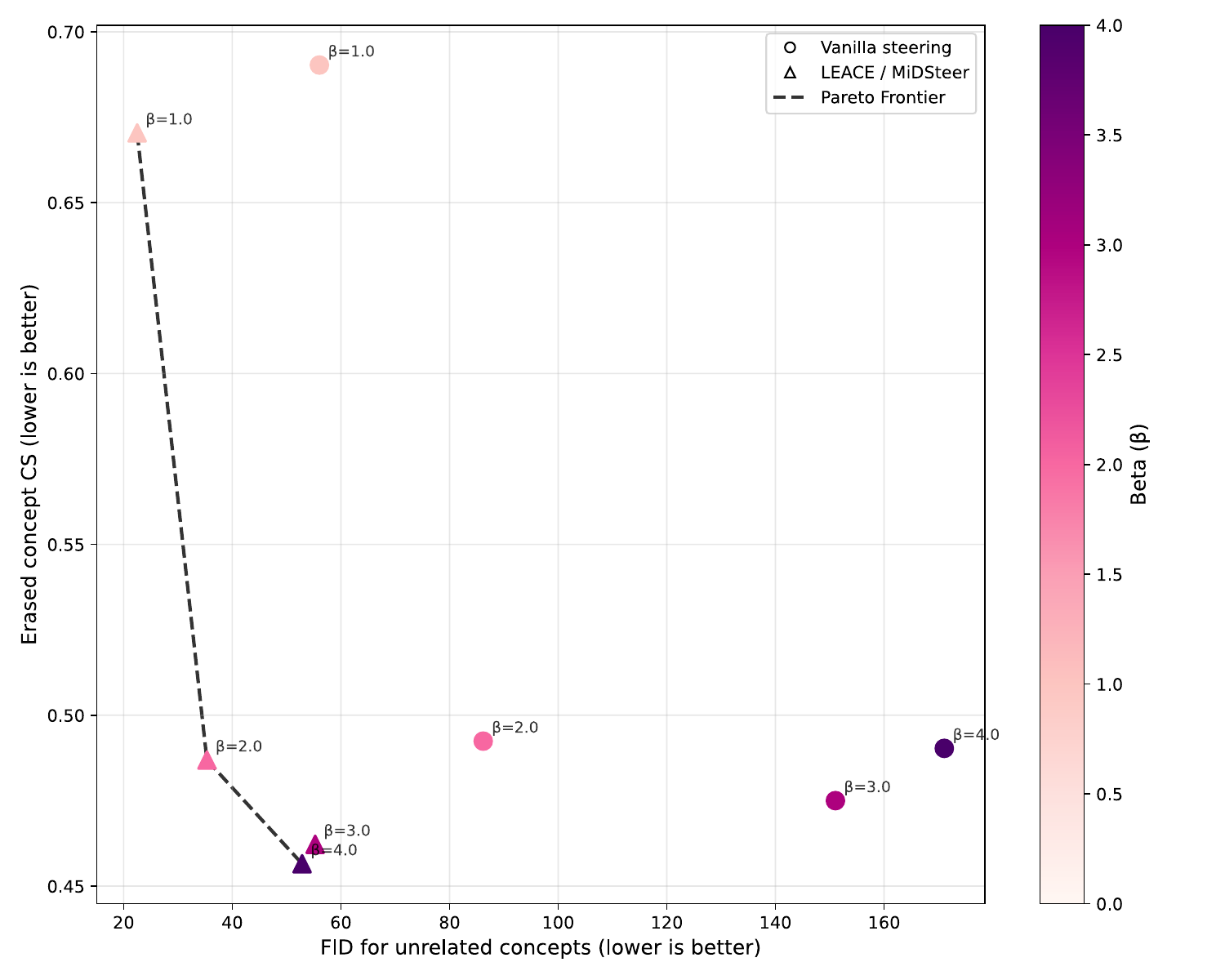}
  \caption{Unrelated vs FID}
  \label{fig:sana-erase-noclip-unrelated-fid.svg}
\end{subfigure}
\caption{Pareto plot for concept erase on model sana}
\label{fig:sana-erase-grid}
\vspace{-0.3cm}
\end{figure}

\begin{figure}[t]
\centering
\begin{subfigure}[t]{0.49\linewidth}
  \centering
  \includegraphics[width=\linewidth]{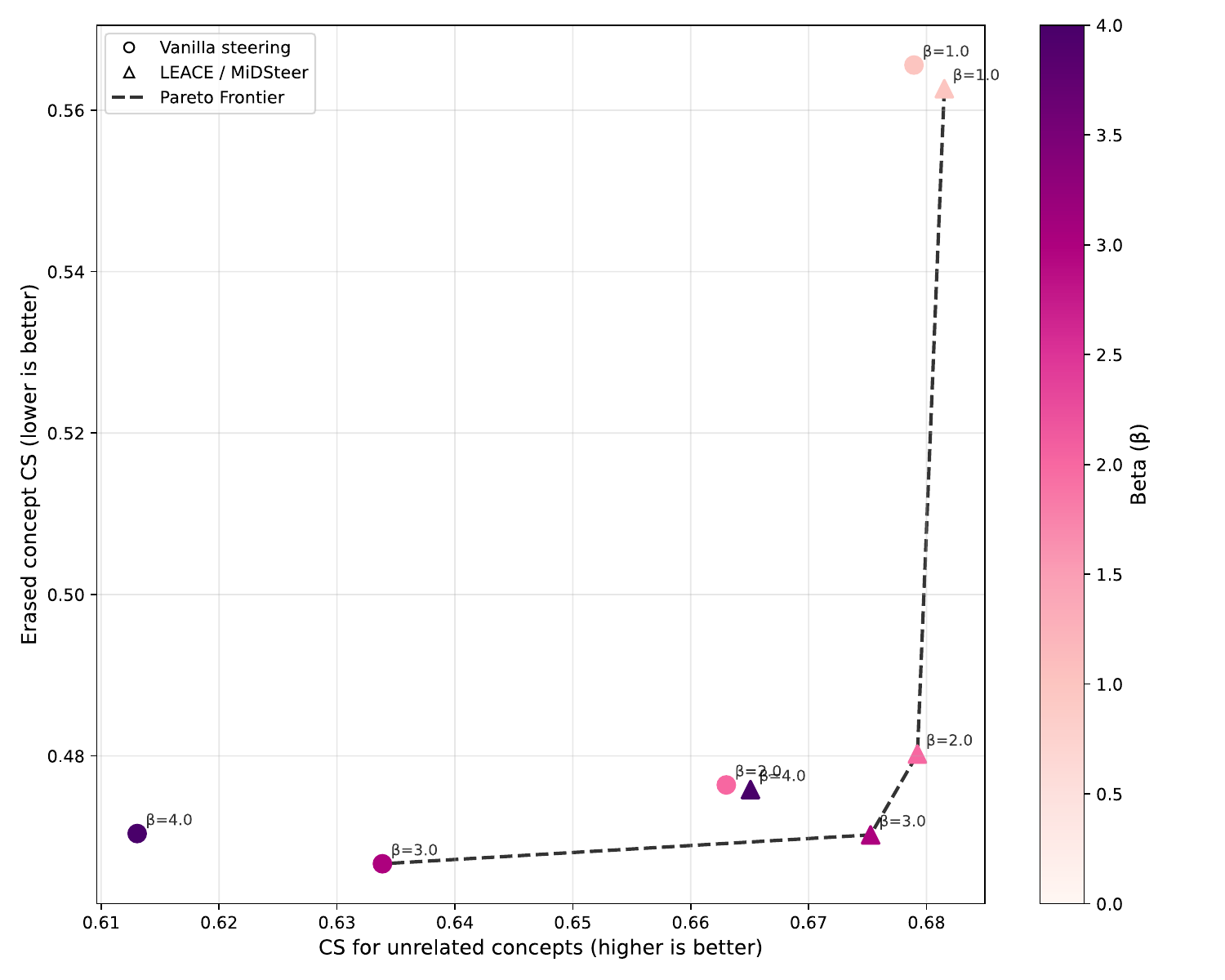}
  \caption{Unrelated vs CS}
  \label{fig:sdxl-erase-noclip-unrelated-cs.svg}
\end{subfigure}\hfill
\begin{subfigure}[t]{0.49\linewidth}
  \centering
  \includegraphics[width=\linewidth]{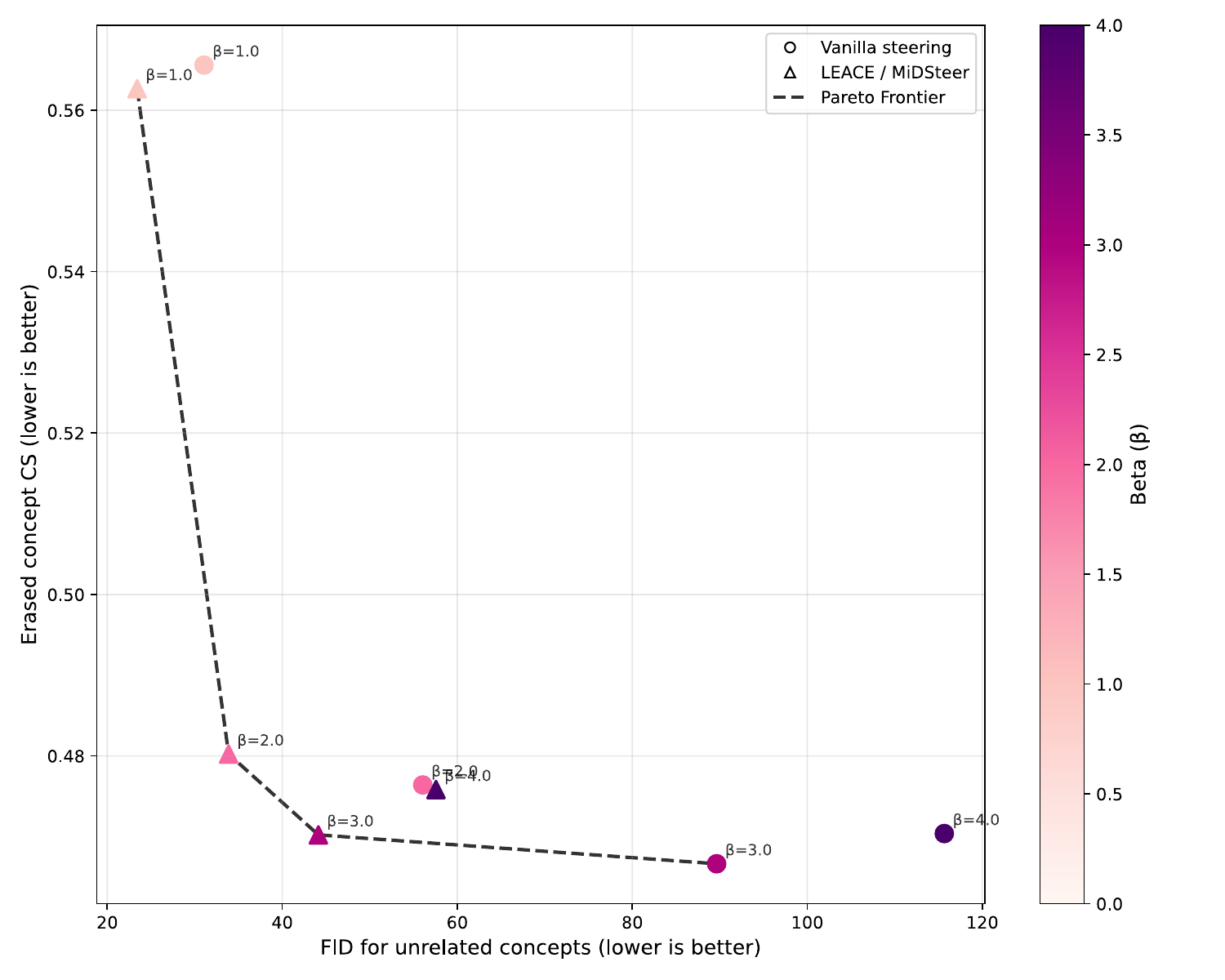}
  \caption{Unrelated vs FID}
  \label{fig:sdxl-erase-noclip-unrelated-fid.svg}
\end{subfigure}
\caption{Pareto plot for concept erase on model sdxl}
\label{fig:sdxl-erase-grid}
\vspace{-0.3cm}
\end{figure}

\clearpage
\subsection{Results for LLM concept erasure}
\label{sec:tables_llm_erase}

In this section, in tab.~\ref{tab:erase_llama2-7b_noclip_horses},\ref{tab:erase_llama2-7b_noclip_dogs},\ref{tab:erase_qwen2.5-14b_noclip_horses},\ref{tab:erase_qwen2.5-14b_noclip_dogs},\ref{tab:erase_qwen2.5-14b_noclip_horses},\ref{tab:erase_qwen2.5-7b_noclip_dogs}we provide detailed breakdown of scores for all $\beta$ values and all concepts $c_s, c_i$. Pareto plots in sec.~\ref{sec:pareto_llm_erase} were created based on the scores provided in these tables.

\begin{table}
\caption{Model LLama2-7b, erasure of horses}
\label{tab:erase_llama2-7b_noclip_horses}
\centering
\setlength{\tabcolsep}{3.0pt}
\resizebox{\linewidth}{!}{
\begin{tabular}{lc|c|cc|cc|cc|cc|cc}
\toprule
 &  & \multicolumn{1}{c}{horses} & \multicolumn{2}{c}{motorcycles} & \multicolumn{2}{c}{cows} & \multicolumn{2}{c}{pigs} & \multicolumn{2}{c}{dogs} & \multicolumn{2}{c}{legislators} \\
 method & strength & cs $\downarrow$ & cs $\uparrow$ & bertp $\uparrow$ & cs $\uparrow$ & bertp $\uparrow$ & cs $\uparrow$ & bertp $\uparrow$ & cs $\uparrow$ & bertp $\uparrow$ & cs $\uparrow$ & bertp $\uparrow$ \\
\midrule
No Steering & - & 8.6 & 8.5 & - & 8.4 & - & 8.5 & - & 8.7 & - & 8.4 & - \\
\cline{1-13}
\multirow{5}{*}{Steering} & 1.0 & 8.6 & 8.5 & 0.91 & 8.4 & 0.90 & 8.4 & 0.90 & 8.6 & 0.90 & 8.5 & 0.89 \\
 & 2.0 & 8.4 & 8.5 & 0.90 & 8.4 & 0.89 & 8.3 & 0.89 & 8.5 & 0.90 & 8.3 & 0.89 \\
 & 3.0 & 1.5 & 3.4 & 0.80 & 2.4 & 0.78 & 1.4 & 0.78 & 2.9 & 0.79 & 2.6 & 0.79 \\
 & 4.0 & 0.4 & 0.0 & 0.76 & 0.1 & 0.76 & 0.0 & 0.77 & 0.0 & 0.77 & 0.0 & 0.77 \\
 & 5.0 & 1.0 & 0.0 & 0.76 & 0.1 & 0.77 & 0.0 & 0.77 & 0.0 & 0.77 & 0.0 & 0.77 \\
\cline{1-13}
\multirow{5}{*}{LEACE} & 1.0 & 8.5 & 8.5 & 0.91 & 8.3 & 0.91 & 8.4 & 0.90 & 8.6 & 0.91 & 8.3 & 0.90 \\
 & 2.0 & 5.0 & 8.3 & 0.89 & 7.3 & 0.87 & 7.4 & 0.88 & 8.0 & 0.89 & 7.8 & 0.88 \\
 & 3.0 & 0.2 & 4.6 & 0.86 & 3.3 & 0.85 & 3.7 & 0.85 & 4.0 & 0.86 & 4.6 & 0.86 \\
 & 4.0 & 0.0 & 1.0 & 0.82 & 0.2 & 0.81 & 0.2 & 0.81 & 0.4 & 0.82 & 1.1 & 0.82 \\
 & 5.0 & 0.0 & 0.1 & 0.81 & 0.0 & 0.81 & 0.0 & 0.81 & 0.1 & 0.81 & 0.2 & 0.81 \\
\bottomrule
\end{tabular}}
\end{table}

\begin{table}
\caption{Model LLama2-7b, erasure of dogs}
\label{tab:erase_llama2-7b_noclip_dogs}
\centering
\setlength{\tabcolsep}{3.0pt}
\resizebox{\linewidth}{!}{
\begin{tabular}{lc|c|cc|cc|cc|cc|cc}
\toprule
 &  & \multicolumn{1}{c}{dogs} & \multicolumn{2}{c}{cats} & \multicolumn{2}{c}{wolves} & \multicolumn{2}{c}{cows} & \multicolumn{2}{c}{pigs} & \multicolumn{2}{c}{legislators} \\
 method & strength & cs $\downarrow$ & cs $\uparrow$ & bertp $\uparrow$ & cs $\uparrow$ & bertp $\uparrow$ & cs $\uparrow$ & bertp $\uparrow$ & cs $\uparrow$ & bertp $\uparrow$ & cs $\uparrow$ & bertp $\uparrow$ \\
\midrule
No Steering & - & 8.6 & 8.6 & - & 8.5 & - & 8.4 & - & 8.4 & - & 8.5 & - \\
\cline{1-13}
\multirow{5}{*}{Steering} & 1.0 & 8.6 & 8.5 & 0.90 & 8.4 & 0.90 & 8.4 & 0.90 & 8.4 & 0.90 & 8.5 & 0.90 \\
 & 2.0 & 8.5 & 8.5 & 0.90 & 8.3 & 0.89 & 8.3 & 0.89 & 8.3 & 0.89 & 8.4 & 0.89 \\
 & 3.0 & 2.1 & 3.1 & 0.79 & 3.6 & 0.80 & 3.0 & 0.79 & 1.9 & 0.78 & 3.4 & 0.79 \\
 & 4.0 & 0.1 & 0.0 & 0.76 & 0.1 & 0.76 & 0.0 & 0.75 & 0.0 & 0.75 & 0.0 & 0.76 \\
 & 5.0 & 0.3 & 0.0 & 0.76 & 0.0 & 0.75 & 0.0 & 0.76 & 0.0 & 0.76 & 0.0 & 0.75 \\
\cline{1-13}
\multirow{5}{*}{LEACE} & 1.0 & 8.5 & 8.5 & 0.91 & 8.4 & 0.91 & 8.3 & 0.91 & 8.4 & 0.91 & 8.4 & 0.90 \\
 & 2.0 & 6.5 & 7.8 & 0.88 & 7.6 & 0.88 & 7.2 & 0.88 & 7.5 & 0.88 & 7.7 & 0.88 \\
 & 3.0 & 1.3 & 3.8 & 0.86 & 4.2 & 0.86 & 3.4 & 0.85 & 3.7 & 0.85 & 4.6 & 0.86 \\
 & 4.0 & 0.1 & 0.4 & 0.82 & 0.2 & 0.83 & 0.5 & 0.82 & 0.3 & 0.82 & 1.0 & 0.82 \\
 & 5.0 & 0.0 & 0.1 & 0.81 & 0.1 & 0.82 & 0.1 & 0.82 & 0.1 & 0.81 & 0.1 & 0.81 \\
\bottomrule
\end{tabular}}
\end{table}

\begin{table}
\caption{Model Qwen2.5-7b, erasure of horses}
\label{tab:erase_qwen2.5-7b_noclip_horses}
\centering
\setlength{\tabcolsep}{3.0pt}
\resizebox{\linewidth}{!}{
\begin{tabular}{lc|c|cc|cc|cc|cc|cc}
\toprule
 &  & \multicolumn{1}{c}{horses} & \multicolumn{2}{c}{motorcycles} & \multicolumn{2}{c}{cows} & \multicolumn{2}{c}{pigs} & \multicolumn{2}{c}{dogs} & \multicolumn{2}{c}{legislators} \\
 method & strength & cs $\downarrow$ & cs $\uparrow$ & bertp $\uparrow$ & cs $\uparrow$ & bertp $\uparrow$ & cs $\uparrow$ & bertp $\uparrow$ & cs $\uparrow$ & bertp $\uparrow$ & cs $\uparrow$ & bertp $\uparrow$ \\
\midrule
No Steering & - & 8.7 & 8.6 & - & 8.4 & - & 8.4 & - & 8.7 & - & 8.5 & - \\
\cline{1-13}
\multirow{5}{*}{Steering} & 1.0 & 8.6 & 8.5 & 0.89 & 8.4 & 0.88 & 8.4 & 0.88 & 8.7 & 0.88 & 8.6 & 0.88 \\
 & 2.0 & 8.0 & 7.8 & 0.84 & 8.1 & 0.84 & 7.8 & 0.83 & 8.1 & 0.83 & 8.1 & 0.83 \\
 & 3.0 & 3.6 & 4.0 & 0.72 & 5.1 & 0.73 & 4.5 & 0.73 & 4.7 & 0.73 & 3.9 & 0.72 \\
 & 4.0 & 1.2 & 2.8 & 0.69 & 2.6 & 0.70 & 1.9 & 0.70 & 2.6 & 0.70 & 2.0 & 0.69 \\
 & 5.0 & 0.6 & 1.0 & 0.70 & 0.7 & 0.70 & 0.3 & 0.70 & 0.7 & 0.70 & 0.9 & 0.69 \\
\cline{1-13}
\multirow{5}{*}{LEACE} & 1.0 & 8.5 & 8.5 & 0.89 & 8.4 & 0.89 & 8.4 & 0.89 & 8.7 & 0.89 & 8.5 & 0.89 \\
 & 2.0 & 6.9 & 8.5 & 0.89 & 8.4 & 0.88 & 8.4 & 0.88 & 8.7 & 0.88 & 8.5 & 0.88 \\
 & 3.0 & 0.6 & 8.5 & 0.87 & 8.3 & 0.85 & 8.3 & 0.85 & 8.6 & 0.86 & 8.4 & 0.86 \\
 & 4.0 & 0.2 & 8.3 & 0.85 & 8.1 & 0.83 & 8.1 & 0.82 & 8.4 & 0.83 & 8.2 & 0.83 \\
 & 5.0 & 0.0 & 7.3 & 0.76 & 6.7 & 0.73 & 6.2 & 0.73 & 6.9 & 0.73 & 7.2 & 0.74 \\
\bottomrule
\end{tabular}}
\end{table}

\begin{table}
\caption{Model Qwen2.5-7b, erasure of dogs}
\label{tab:erase_qwen2.5-7b_noclip_dogs}
\centering
\setlength{\tabcolsep}{3.0pt}
\resizebox{\linewidth}{!}{
\begin{tabular}{lc|c|cc|cc|cc|cc|cc}
\toprule
 &  & \multicolumn{1}{c}{dogs} & \multicolumn{2}{c}{cats} & \multicolumn{2}{c}{wolves} & \multicolumn{2}{c}{cows} & \multicolumn{2}{c}{pigs} & \multicolumn{2}{c}{legislators} \\
 method & strength & cs $\downarrow$ & cs $\uparrow$ & bertp $\uparrow$ & cs $\uparrow$ & bertp $\uparrow$ & cs $\uparrow$ & bertp $\uparrow$ & cs $\uparrow$ & bertp $\uparrow$ & cs $\uparrow$ & bertp $\uparrow$ \\
\midrule
No Steering & - & 8.7 & 8.5 & - & 8.3 & - & 8.4 & - & 8.4 & - & 8.5 & - \\
\cline{1-13}
\multirow{5}{*}{Steering} & 1.0 & 8.6 & 8.5 & 0.88 & 8.3 & 0.88 & 8.4 & 0.88 & 8.4 & 0.88 & 8.5 & 0.88 \\
 & 2.0 & 8.0 & 8.1 & 0.84 & 8.0 & 0.84 & 8.1 & 0.84 & 7.9 & 0.84 & 8.3 & 0.84 \\
 & 3.0 & 3.9 & 5.0 & 0.74 & 4.6 & 0.74 & 5.4 & 0.75 & 4.8 & 0.75 & 4.6 & 0.75 \\
 & 4.0 & 1.4 & 2.2 & 0.71 & 1.7 & 0.71 & 2.4 & 0.71 & 1.6 & 0.71 & 2.1 & 0.71 \\
 & 5.0 & 0.3 & 0.7 & 0.71 & 0.4 & 0.71 & 0.7 & 0.71 & 0.4 & 0.72 & 0.8 & 0.71 \\
\cline{1-13}
\multirow{5}{*}{LEACE} & 1.0 & 8.7 & 8.5 & 0.89 & 8.4 & 0.88 & 8.4 & 0.89 & 8.4 & 0.88 & 8.5 & 0.89 \\
 & 2.0 & 8.7 & 8.5 & 0.88 & 8.3 & 0.87 & 8.4 & 0.88 & 8.4 & 0.87 & 8.5 & 0.88 \\
 & 3.0 & 7.8 & 8.5 & 0.85 & 8.2 & 0.84 & 8.3 & 0.85 & 8.3 & 0.85 & 8.4 & 0.86 \\
 & 4.0 & 3.2 & 8.2 & 0.83 & 7.7 & 0.79 & 8.1 & 0.83 & 7.9 & 0.81 & 8.2 & 0.82 \\
 & 5.0 & 0.6 & 6.5 & 0.72 & 5.6 & 0.71 & 6.6 & 0.72 & 5.9 & 0.72 & 6.9 & 0.72 \\
\bottomrule
\end{tabular}}
\end{table}

\begin{table}
\caption{Model Qwen2.5-14b, erasure of horses}
\label{tab:erase_qwen2.5-14b_noclip_horses}
\centering
\setlength{\tabcolsep}{3.0pt}
\resizebox{\linewidth}{!}{
\begin{tabular}{lc|c|cc|cc|cc|cc|cc}
\toprule
 &  & \multicolumn{1}{c}{horses} & \multicolumn{2}{c}{motorcycles} & \multicolumn{2}{c}{cows} & \multicolumn{2}{c}{pigs} & \multicolumn{2}{c}{dogs} & \multicolumn{2}{c}{legislators} \\
 method & strength & cs $\downarrow$ & cs $\uparrow$ & bertp $\uparrow$ & cs $\uparrow$ & bertp $\uparrow$ & cs $\uparrow$ & bertp $\uparrow$ & cs $\uparrow$ & bertp $\uparrow$ & cs $\uparrow$ & bertp $\uparrow$ \\
\midrule
No Steering & - & 8.7 & 8.6 & - & 8.4 & - & 8.4 & - & 8.7 & - & 8.5 & - \\
\cline{1-13}
\multirow{5}{*}{Steering} & 1.0 & 8.7 & 8.6 & 0.90 & 8.4 & 0.90 & 8.5 & 0.89 & 8.7 & 0.90 & 8.5 & 0.89 \\
 & 2.0 & 8.7 & 8.6 & 0.89 & 8.4 & 0.89 & 8.5 & 0.89 & 8.7 & 0.89 & 8.5 & 0.89 \\
 & 3.0 & 8.5 & 8.5 & 0.86 & 8.4 & 0.86 & 8.4 & 0.86 & 8.6 & 0.86 & 8.4 & 0.87 \\
 & 4.0 & 2.9 & 4.2 & 0.78 & 5.0 & 0.78 & 5.0 & 0.78 & 4.3 & 0.79 & 4.3 & 0.79 \\
 & 5.0 & 0.8 & 1.5 & 0.73 & 1.2 & 0.73 & 2.0 & 0.74 & 1.4 & 0.74 & 0.6 & 0.73 \\
\cline{1-13}
\multirow{5}{*}{LEACE} & 1.0 & 8.7 & 8.6 & 0.90 & 8.4 & 0.89 & 8.4 & 0.88 & 8.7 & 0.89 & 8.5 & 0.88 \\
 & 2.0 & 8.0 & 8.5 & 0.88 & 8.3 & 0.87 & 8.3 & 0.86 & 8.7 & 0.87 & 8.4 & 0.86 \\
 & 3.0 & 3.0 & 8.5 & 0.85 & 8.3 & 0.84 & 8.3 & 0.83 & 8.6 & 0.84 & 8.3 & 0.83 \\
 & 4.0 & 1.2 & 8.3 & 0.82 & 8.1 & 0.81 & 8.0 & 0.80 & 8.4 & 0.81 & 8.1 & 0.80 \\
 & 5.0 & 0.3 & 7.8 & 0.79 & 7.5 & 0.79 & 7.5 & 0.78 & 7.9 & 0.79 & 7.7 & 0.79 \\
\bottomrule
\end{tabular}}
\end{table}

\begin{table}
\caption{Model Qwen2.5-14b, erasure of dogs}
\label{tab:erase_qwen2.5-14b_noclip_dogs}
\centering
\setlength{\tabcolsep}{3.0pt}
\resizebox{\linewidth}{!}{
\begin{tabular}{lc|c|cc|cc|cc|cc|cc}
\toprule
 &  & \multicolumn{1}{c}{dogs} & \multicolumn{2}{c}{cats} & \multicolumn{2}{c}{wolves} & \multicolumn{2}{c}{cows} & \multicolumn{2}{c}{pigs} & \multicolumn{2}{c}{legislators} \\
 method & strength & cs $\downarrow$ & cs $\uparrow$ & bertp $\uparrow$ & cs $\uparrow$ & bertp $\uparrow$ & cs $\uparrow$ & bertp $\uparrow$ & cs $\uparrow$ & bertp $\uparrow$ & cs $\uparrow$ & bertp $\uparrow$ \\
\midrule
No Steering & - & 8.7 & 8.5 & - & 8.4 & - & 8.4 & - & 8.5 & - & 8.5 & - \\
\cline{1-13}
\multirow{5}{*}{Steering} & 1.0 & 8.7 & 8.5 & 0.90 & 8.4 & 0.89 & 8.5 & 0.89 & 8.5 & 0.89 & 8.6 & 0.89 \\
 & 2.0 & 8.7 & 8.5 & 0.89 & 8.4 & 0.88 & 8.4 & 0.89 & 8.5 & 0.89 & 8.5 & 0.89 \\
 & 3.0 & 8.4 & 8.5 & 0.86 & 8.3 & 0.86 & 8.4 & 0.86 & 8.4 & 0.86 & 8.4 & 0.87 \\
 & 4.0 & 3.5 & 3.7 & 0.79 & 3.1 & 0.77 & 4.2 & 0.78 & 4.2 & 0.78 & 3.4 & 0.79 \\
 & 5.0 & 1.9 & 0.9 & 0.73 & 1.6 & 0.73 & 0.9 & 0.73 & 1.2 & 0.73 & 0.4 & 0.73 \\
\cline{1-13}
\multirow{5}{*}{LEACE} & 1.0 & 8.7 & 8.5 & 0.89 & 8.4 & 0.88 & 8.4 & 0.89 & 8.4 & 0.89 & 8.5 & 0.89 \\
 & 2.0 & 8.7 & 8.4 & 0.87 & 8.4 & 0.86 & 8.3 & 0.87 & 8.4 & 0.86 & 8.4 & 0.86 \\
 & 3.0 & 7.7 & 8.4 & 0.84 & 8.2 & 0.83 & 8.3 & 0.84 & 8.2 & 0.83 & 8.3 & 0.83 \\
 & 4.0 & 4.7 & 8.3 & 0.82 & 7.9 & 0.81 & 8.2 & 0.82 & 8.0 & 0.80 & 8.1 & 0.80 \\
 & 5.0 & 2.3 & 7.9 & 0.79 & 7.5 & 0.79 & 7.7 & 0.80 & 7.5 & 0.78 & 7.7 & 0.79 \\
\bottomrule
\end{tabular}}
\end{table}

\clearpage
\subsection{Results for image diffusion concept erasure}
\label{sec:tables_diffusion_erase}

In this section, in tab.~\ref{tab:erase_sdxl_noclip_horse},\ref{tab:erase_sana_noclip_chihuahua},\ref{tab:erase_sdxl_noclip_chihuahua},\ref{tab:erase_sana_noclip_chihuahua} we provide detailed breakdown of scores for all $\beta$ values and all concepts $c_s, c_i$. Pareto plots in sec.~\ref{sec:pareto_diffusion_erase} were created based on the scores provided in these tables.

\begin{table}
\caption{Model SDXL, erasure of snoopy}
\label{tab:erase_sdxl_noclip_snoopy}
\centering
\setlength{\tabcolsep}{3.0pt}
\resizebox{\linewidth}{!}{
\begin{tabular}{lc|c|cc|cc|cc|cc|cc}
\toprule
 &  & \multicolumn{1}{c}{snoopy} & \multicolumn{2}{c}{mickey} & \multicolumn{2}{c}{pikachu} & \multicolumn{2}{c}{spongebob} & \multicolumn{2}{c}{dog} & \multicolumn{2}{c}{legislator} \\
method & strength & cs $\downarrow$ & cs $\uparrow$ & fid $\downarrow$ & cs $\uparrow$ & fid $\downarrow$ & cs $\uparrow$ & fid $\downarrow$ & cs $\uparrow$ & fid $\downarrow$ & cs $\uparrow$ & fid $\downarrow$ \\
\midrule
No Steering & - & 74.3 & 73.1 & - & 72.6 & - & 75.1 & - & 66.3 & - & 60.8 & - \\
\cline{1-13}
\multirow{7}{*}{CASteer} & 1.0 & 55.8 & 70.1 & 54.9 & 72.5 & 30.3 & 73.9 & 50.9 & 66.2 & 30.6 & 60.9 & 22.6 \\
 & 1.5 & 49.9 & 67.9 & 71.8 & 72.5 & 39.9 & 72.8 & 66.0 & 66.2 & 39.4 & 60.9 & 27.5 \\
 & 2.0 & 47.0 & 65.2 & 90.5 & 72.6 & 51.2 & 71.0 & 85.2 & 66.2 & 48.1 & 60.8 & 31.7 \\
 & 2.5 & 45.6 & 62.2 & 111.0 & 72.5 & 65.7 & 68.6 & 109.4 & 66.1 & 58.1 & 60.9 & 35.3 \\
 & 3.0 & 45.3 & 58.8 & 132.1 & 72.2 & 83.7 & 65.3 & 138.2 & 66.2 & 68.0 & 60.8 & 38.7 \\
 & 4.0 & 45.3 & 53.5 & 169.0 & 71.6 & 123.4 & 59.0 & 189.5 & 66.1 & 83.3 & 60.9 & 45.8 \\
 & 5.0 & 45.9 & 50.7 & 195.3 & 69.3 & 153.0 & 55.7 & 218.2 & 65.6 & 99.9 & 61.0 & 52.9 \\
\cline{1-13}
\multirow{7}{*}{LEACE} & 1.0 & 56.7 & 72.2 & 35.7 & 72.9 & 21.3 & 74.1 & 42.2 & 66.3 & 20.7 & 60.6 & 26.9 \\
 & 1.5 & 51.2 & 71.7 & 42.3 & 73.0 & 25.8 & 73.7 & 50.1 & 66.3 & 26.5 & 60.5 & 32.5 \\
 & 2.0 & 48.3 & 71.0 & 48.2 & 73.2 & 29.6 & 73.3 & 57.8 & 66.3 & 31.6 & 60.4 & 36.4 \\
 & 2.5 & 46.5 & 70.3 & 53.9 & 73.3 & 33.0 & 72.8 & 66.5 & 66.4 & 36.5 & 60.2 & 41.3 \\
 & 3.0 & 45.8 & 69.6 & 59.8 & 73.5 & 36.7 & 72.1 & 75.3 & 66.4 & 40.8 & 60.0 & 46.5 \\
 & 4.0 & 45.8 & 67.9 & 72.2 & 73.7 & 44.8 & 70.8 & 91.8 & 66.5 & 49.5 & 59.4 & 56.1 \\
 & 5.0 & 47.0 & 66.1 & 85.9 & 73.7 & 53.8 & 69.4 & 114.1 & 66.4 & 57.1 & 58.6 & 69.2 \\
\bottomrule
\end{tabular}}
\end{table}

\begin{table}
\caption{Model SDXL, erasure of chihuahua}
\label{tab:erase_sdxl_noclip_chihuahua}
\centering
\setlength{\tabcolsep}{3.0pt}
\resizebox{\linewidth}{!}{
\begin{tabular}{lc|c|cc|cc|cc|cc|cc}
\toprule
 &  & \multicolumn{1}{c}{chihuahua} & \multicolumn{2}{c}{muffin} & \multicolumn{2}{c}{dog} & \multicolumn{2}{c}{wolf} & \multicolumn{2}{c}{cat} & \multicolumn{2}{c}{legislator} \\
 method & strength & cs $\downarrow$ & cs $\uparrow$ & fid $\downarrow$ & cs $\uparrow$ & fid $\downarrow$ & cs $\uparrow$ & fid $\downarrow$ & cs $\uparrow$ & fid $\downarrow$ & cs $\uparrow$ & fid $\downarrow$ \\
\midrule
No Steering & - & 75.9 & 68.2 & - & 66.3 & - & 71.8 & - & 67.5 & - & 60.8 & - \\
\cline{1-13}
\multirow{7}{*}{CASteer} & 1.0 & 54.6 & 68.1 & 19.7 & 65.0 & 58.2 & 72.5 & 25.9 & 67.0 & 35.2 & 60.9 & 22.7 \\
 & 1.5 & 48.5 & 68.2 & 24.0 & 61.2 & 99.9 & 72.6 & 34.0 & 66.5 & 48.9 & 60.8 & 27.8 \\
 & 2.0 & 47.6 & 68.0 & 27.3 & 54.1 & 155.5 & 72.6 & 44.1 & 64.6 & 69.0 & 60.9 & 31.9 \\
 & 2.5 & 47.2 & 67.9 & 31.1 & 50.7 & 177.8 & 72.2 & 61.2 & 60.5 & 102.6 & 60.8 & 35.8 \\
 & 3.0 & 46.9 & 67.9 & 34.6 & 49.7 & 187.7 & 70.0 & 96.2 & 55.8 & 141.5 & 60.8 & 39.4 \\
 & 4.0 & 47.8 & 67.7 & 42.2 & 49.0 & 198.2 & 62.2 & 191.6 & 50.7 & 186.3 & 60.7 & 45.7 \\
 & 5.0 & 49.7 & 67.6 & 49.5 & 48.9 & 209.3 & 57.8 & 228.4 & 49.4 & 201.5 & 60.7 & 52.1 \\
\cline{1-13}
\multirow{7}{*}{LEACE} & 1.0 & 55.0 & 68.2 & 20.0 & 65.8 & 35.2 & 72.3 & 17.1 & 67.4 & 22.1 & 60.9 & 21.8 \\
 & 1.5 & 48.5 & 68.1 & 25.0 & 65.5 & 47.6 & 72.5 & 21.3 & 67.3 & 27.1 & 60.8 & 27.0 \\
 & 2.0 & 47.4 & 68.1 & 29.0 & 65.0 & 61.1 & 72.6 & 25.0 & 67.3 & 31.4 & 60.9 & 31.0 \\
 & 2.5 & 47.0 & 68.2 & 32.6 & 64.1 & 74.8 & 72.8 & 28.6 & 67.2 & 35.1 & 60.8 & 34.2 \\
 & 3.0 & 47.2 & 68.1 & 36.0 & 62.7 & 92.4 & 72.9 & 32.4 & 67.1 & 38.5 & 60.8 & 36.5 \\
 & 4.0 & 48.6 & 68.0 & 42.3 & 57.6 & 131.4 & 73.1 & 39.4 & 66.9 & 45.1 & 60.7 & 41.9 \\
 & 5.0 & 50.2 & 68.0 & 49.4 & 53.7 & 162.6 & 73.1 & 48.3 & 66.5 & 52.5 & 60.5 & 48.3 \\
\bottomrule
\end{tabular}}
\end{table}

\begin{table}
\caption{Model SDXL, erasure of horse}
\label{tab:erase_sdxl_noclip_horse}
\centering
\setlength{\tabcolsep}{3.0pt}
\resizebox{\linewidth}{!}{
\begin{tabular}{lc|c|cc|cc|cc|cc|cc}
\toprule
 &  & \multicolumn{1}{c}{horse} & \multicolumn{2}{c}{motorcycle} & \multicolumn{2}{c}{cow} & \multicolumn{2}{c}{pig} & \multicolumn{2}{c}{dog} & \multicolumn{2}{c}{legislator} \\
 method & strength & cs $\downarrow$ & cs $\uparrow$ & fid $\downarrow$ & cs $\uparrow$ & fid $\downarrow$ & cs $\uparrow$ & fid $\downarrow$ & cs $\uparrow$ & fid $\downarrow$ & cs $\uparrow$ & fid $\downarrow$ \\
\midrule
No Steering & - & 71.0 & 70.7 & - & 72.7 & - & 71.8 & - & 66.3 & - & 60.8 & - \\
\cline{1-13}
\multirow{7}{*}{CASteer} & 1.0 & 59.3 & 70.7 & 12.9 & 71.9 & 30.1 & 71.8 & 20.8 & 65.9 & 29.9 & 61.0 & 21.3 \\
 & 1.5 & 49.8 & 70.7 & 15.6 & 71.2 & 46.6 & 71.8 & 27.6 & 65.8 & 36.5 & 61.1 & 26.2 \\
 & 2.0 & 48.3 & 70.6 & 17.4 & 69.2 & 79.8 & 71.9 & 36.5 & 65.7 & 42.2 & 61.0 & 30.3 \\
 & 2.5 & 47.9 & 70.7 & 19.5 & 62.1 & 152.5 & 72.0 & 45.9 & 65.4 & 48.4 & 60.9 & 33.6 \\
 & 3.0 & 47.8 & 70.7 & 21.7 & 54.7 & 211.1 & 72.0 & 60.1 & 65.0 & 54.7 & 60.9 & 37.0 \\
 & 4.0 & 48.0 & 70.7 & 26.9 & 51.1 & 227.9 & 71.8 & 92.3 & 63.9 & 69.4 & 60.8 & 43.3 \\
 & 5.0 & 49.3 & 70.8 & 35.1 & 50.4 & 238.4 & 69.8 & 138.4 & 62.2 & 89.4 & 60.6 & 49.4 \\
\cline{1-13}
\multirow{7}{*}{LEACE} & 1.0 & 57.1 & 70.6 & 11.5 & 72.3 & 20.5 & 71.8 & 11.6 & 66.1 & 19.7 & 60.7 & 25.1 \\
 & 1.5 & 49.6 & 70.6 & 14.0 & 72.0 & 26.0 & 71.9 & 14.1 & 66.1 & 23.7 & 60.5 & 29.6 \\
 & 2.0 & 48.4 & 70.6 & 15.9 & 71.8 & 33.1 & 71.9 & 16.2 & 66.0 & 28.0 & 60.4 & 34.1 \\
 & 2.5 & 48.0 & 70.6 & 17.5 & 71.4 & 39.9 & 72.0 & 17.3 & 66.1 & 30.9 & 60.3 & 38.2 \\
 & 3.0 & 48.1 & 70.6 & 19.3 & 70.7 & 53.5 & 72.0 & 19.1 & 66.1 & 33.6 & 60.3 & 41.6 \\
 & 4.0 & 48.4 & 70.5 & 23.0 & 64.5 & 115.1 & 72.1 & 23.1 & 66.1 & 38.6 & 59.9 & 49.2 \\
 & 5.0 & 49.7 & 70.3 & 27.4 & 56.4 & 197.4 & 72.2 & 27.8 & 66.1 & 44.6 & 59.4 & 58.8 \\
\bottomrule
\end{tabular}}
\end{table}

\begin{table}
\caption{Model SDXL, erasure of horse}
\label{tab:erase_sana_noclip_horse}
\centering
\setlength{\tabcolsep}{3.0pt}
\resizebox{\linewidth}{!}{
\begin{tabular}{lc|c|cc|cc|cc|cc|cc}
\toprule
 &  & \multicolumn{1}{c}{horse} & \multicolumn{2}{c}{motorcycle} & \multicolumn{2}{c}{cow} & \multicolumn{2}{c}{pig} & \multicolumn{2}{c}{dog} & \multicolumn{2}{c}{legislator} \\
 method & strength & cs $\downarrow$ & cs $\uparrow$ & fid $\downarrow$ & cs $\uparrow$ & fid $\downarrow$ & cs $\uparrow$ & fid $\downarrow$ & cs $\uparrow$ & fid $\downarrow$ & cs $\uparrow$ & fid $\downarrow$ \\
\midrule
No Steering & - & 72.1 & 70.5 & - & 73.8 & - & 73.5 & - & 68.1 & - & 60.4 & - \\
\cline{1-13}
\multirow{5}{*}{CASteer} & 1.0 & 70.8 & 70.1 & 21.3 & 74.1 & 36.4 & 73.7 & 28.2 & 67.8 & 29.2 & 60.2 & 21.2 \\
 & 2.0 & 52.2 & 70.9 & 45.3 & 72.0 & 93.0 & 74.1 & 49.3 & 67.4 & 44.2 & 59.8 & 36.3 \\
 & 3.0 & 51.3 & 69.3 & 105.0 & 61.5 & 216.9 & 65.9 & 261.3 & 65.5 & 71.9 & 59.3 & 58.4 \\
 & 4.0 & 56.7 & 62.3 & 186.2 & 59.0 & 242.8 & 67.3 & 175.3 & 62.5 & 118.4 & 58.9 & 90.1 \\
 & 5.0 & 52.5 & 60.4 & 221.0 & 58.7 & 249.9 & 65.0 & 205.0 & 59.3 & 161.3 & 58.4 & 130.3 \\
\cline{1-13}
\multirow{5}{*}{LEACE} & 1.0 & 71.1 & 70.5 & 7.5 & 73.8 & 14.4 & 73.4 & 11.4 & 68.0 & 9.2 & 60.4 & 11.1 \\
 & 2.0 & 52.0 & 70.4 & 10.0 & 73.9 & 20.9 & 73.5 & 16.7 & 68.0 & 14.2 & 60.4 & 15.6 \\
 & 3.0 & 49.7 & 70.3 & 12.2 & 74.0 & 25.8 & 73.5 & 21.4 & 67.9 & 18.4 & 60.3 & 19.1 \\
 & 4.0 & 48.8 & 70.4 & 13.8 & 74.2 & 30.9 & 73.5 & 25.6 & 67.8 & 21.3 & 60.2 & 21.8 \\
 & 5.0 & 53.7 & 70.3 & 15.2 & 74.1 & 36.2 & 73.6 & 29.7 & 67.8 & 24.4 & 60.1 & 24.2 \\
\bottomrule
\end{tabular}}
\end{table}

\begin{table}
\caption{Model SANA, erasure of snoopy}
\label{tab:erase_sana_noclip_snoopy}
\centering
\setlength{\tabcolsep}{3.0pt}
\resizebox{\linewidth}{!}{
\begin{tabular}{lc|c|cc|cc|cc|cc|cc}
\toprule
 &  & \multicolumn{1}{c}{snoopy} & \multicolumn{2}{c}{mickey} & \multicolumn{2}{c}{pikachu} & \multicolumn{2}{c}{spongebob} & \multicolumn{2}{c}{dog} & \multicolumn{2}{c}{legislator} \\
 method & strength & cs $\downarrow$ & cs $\uparrow$ & fid $\downarrow$ & cs $\uparrow$ & fid $\downarrow$ & cs $\uparrow$ & fid $\downarrow$ & cs $\uparrow$ & fid $\downarrow$ & cs $\uparrow$ & fid $\downarrow$ \\
\midrule
No Steering & - & 79.7 & 76.1 & - & 74.0 & - & 79.0 & - & 68.1 & - & 60.4 & - \\
\cline{1-13}
\multirow{5}{*}{CASteer} & 1.0 & 60.6 & 75.3 & 64.0 & 74.1 & 41.3 & 79.0 & 43.9 & 68.0 & 42.1 & 60.8 & 23.3 \\
 & 2.0 & 46.0 & 70.5 & 168.3 & 74.3 & 103.6 & 74.7 & 146.6 & 68.0 & 74.3 & 61.1 & 38.2 \\
 & 3.0 & 42.4 & 64.2 & 189.7 & 72.0 & 164.1 & 63.4 & 222.2 & 67.7 & 100.9 & 60.8 & 55.3 \\
 & 4.0 & 40.9 & 58.5 & 202.0 & 62.6 & 204.2 & 55.7 & 258.7 & 66.8 & 116.9 & 60.4 & 74.8 \\
 & 5.0 & 40.9 & 55.4 & 208.1 & 55.5 & 231.6 & 52.9 & 276.5 & 65.1 & 127.6 & 60.0 & 94.8 \\
\cline{1-13}
\multirow{5}{*}{LEACE} & 1.0 & 57.0 & 76.1 & 18.2 & 74.1 & 6.7 & 79.0 & 13.9 & 68.1 & 17.3 & 60.3 & 9.1 \\
 & 2.0 & 44.8 & 76.2 & 30.5 & 74.1 & 11.7 & 78.9 & 19.3 & 68.1 & 25.3 & 60.2 & 13.6 \\
 & 3.0 & 41.6 & 76.1 & 49.0 & 74.2 & 16.4 & 75.2 & 200.5 & 68.0 & 32.2 & 60.2 & 16.8 \\
 & 4.0 & 40.9 & 75.6 & 73.4 & 74.2 & 21.3 & 78.7 & 29.0 & 68.0 & 38.0 & 60.1 & 19.5 \\
 & 5.0 & 41.4 & 74.4 & 109.4 & 74.2 & 26.1 & 78.7 & 34.1 & 68.1 & 44.1 & 60.0 & 22.0 \\
\bottomrule
\end{tabular}}
\end{table}

\begin{table}
\caption{Model SANA, erasure of chihuahua}
\label{tab:erase_sana_noclip_chihuahua}
\centering
\setlength{\tabcolsep}{3.0pt}
\resizebox{\linewidth}{!}{
\begin{tabular}{lc|c|cc|cc|cc|cc|cc}
\toprule
 &  & \multicolumn{1}{c}{chihuahua} & \multicolumn{2}{c}{muffin} & \multicolumn{2}{c}{dog} & \multicolumn{2}{c}{wolf} & \multicolumn{2}{c}{cat} & \multicolumn{2}{c}{legislator} \\
 method & strength & cs $\downarrow$ & cs $\uparrow$ & fid $\downarrow$ & cs $\uparrow$ & fid $\downarrow$ & cs $\uparrow$ & fid $\downarrow$ & cs $\uparrow$ & fid $\downarrow$ & cs $\uparrow$ & fid $\downarrow$ \\
\midrule
No Steering & - & 76.4 & 66.3 & - & 68.1 & - & 73.2 & - & 68.5 & - & 60.4 & - \\
\cline{1-13}
\multirow{5}{*}{CASteer} & 1.0 & 75.6 & 66.6 & 19.8 & 67.4 & 49.4 & 73.6 & 25.6 & 68.3 & 32.5 & 55.3 & 268.2 \\
 & 2.0 & 49.5 & 66.8 & 30.8 & 59.9 & 143.6 & 73.4 & 53.4 & 66.4 & 65.2 & 60.5 & 33.7 \\
 & 3.0 & 48.9 & 67.1 & 44.1 & 52.6 & 214.4 & 64.6 & 265.3 & 58.6 & 151.2 & 60.4 & 48.6 \\
 & 4.0 & 49.5 & 67.0 & 58.0 & 52.3 & 223.8 & 62.0 & 263.8 & 54.6 & 205.0 & 60.2 & 71.9 \\
 & 5.0 & 50.4 & 66.3 & 76.7 & 53.2 & 233.5 & 59.8 & 282.5 & 53.6 & 220.9 & 59.8 & 102.3 \\
\cline{1-13}
\multirow{5}{*}{LEACE} & 1.0 & 73.0 & 66.3 & 5.8 & 68.0 & 28.1 & 73.2 & 6.4 & 68.5 & 10.4 & 60.4 & 9.9 \\
 & 2.0 & 49.3 & 66.2 & 9.7 & 67.8 & 47.1 & 73.3 & 9.7 & 68.5 & 16.0 & 60.4 & 14.7 \\
 & 3.0 & 47.3 & 66.2 & 12.6 & 67.6 & 68.1 & 73.3 & 12.3 & 68.6 & 20.6 & 60.4 & 17.9 \\
 & 4.0 & 47.2 & 66.1 & 15.2 & 67.1 & 88.7 & 73.3 & 14.7 & 68.6 & 24.6 & 60.3 & 20.9 \\
 & 5.0 & 48.5 & 66.1 & 17.7 & 66.2 & 113.6 & 73.3 & 16.8 & 68.7 & 27.5 & 60.3 & 23.0 \\
\bottomrule
\end{tabular}}
\end{table}

\end{document}